\def\eqref#1{equation~\ref{#1}}
\def\1{\bm{1}}
\def\rvu{{\mathbf{i}}}
\def\rvu{{\mathbf{u}}}
\def\rvv{{\mathbf{v}}}
\def\rvw{{\mathbf{w}}}
\def\rvx{{\mathbf{x}}}
\def\rvy{{\mathbf{y}}}
\def\rvz{{\mathbf{z}}}
\DeclareMathAlphabet{\mathsfit}{\encodingdefault}{\sfdefault}{m}{sl}
\SetMathAlphabet{\mathsfit}{bold}{\encodingdefault}{\sfdefault}{bx}{n}
\def\gA{{\mathcal{A}}}
\def\gF{{\mathcal{F}}}
\def\gH{{\mathcal{H}}}
\def\gM{{\mathcal{M}}}
\def\gV{{\mathcal{V}}}
\def\gW{{\mathcal{W}}}
\def\gX{{\mathcal{X}}}
\def\gY{{\mathcal{Y}}}
\def\gZ{{\mathcal{Z}}}
\def\sP{{\mathbb{P}}}
\newcommand{\E}{\mathbb{E}}
\newcommand{\R}{\mathbb{R}}
\DeclareMathOperator*{\argmax}{arg\,max}
\newtheorem{lemma}{Lemma}
\newtheorem{definition}{Definition}
\newtheorem{assumption}{Assumption}
\newtheorem{remark}{Remark}
\newtheorem{theorem}{Theorem}
\title{Towards Sharper Risk Bounds for Minimax Problems}
\author{
Bowei Zhu$^{1,2}$
\and
Shaojie Li$^{1,2}$\and
Yong Liu$^{1,2}$\footnote{Corresponding author.}
\affiliations
$^1$Gaoling School of Artificial Intelligence, Renmin University of China, Beijing, China \\
$^2$Beijing Key Laboratory of Big Data Management and Analysis Methods, Beijing, China 
\emails
\{bowei.zhu, lishaojie95, liuyonggsai\}@ruc.edu.cn
}
\begin{document}

\maketitle

\begin{abstract}
   Minimax problems have achieved success in machine learning such as adversarial training, robust optimization, reinforcement learning. For theoretical analysis, current optimal excess risk bounds, which are composed by generalization error and optimization error, present 1/n-rates in strongly-convex-strongly-concave (SC-SC) settings. Existing studies mainly focus on minimax problems with specific algorithms for optimization error, with only a few studies on generalization performance, which limit better excess risk bounds. In this paper, we study the generalization bounds measured by the gradients of primal functions using uniform localized convergence. We obtain a sharper high probability generalization error bound for nonconvex-strongly-concave (NC-SC) stochastic minimax problems. Furthermore, we provide dimension-independent results under Polyak-Lojasiewicz condition for the outer layer. Based on our generalization error bound, we analyze some popular algorithms such as empirical saddle point (ESP), gradient descent ascent (GDA) and stochastic gradient descent ascent (SGDA). We derive better excess primal risk bounds with further reasonable assumptions, which, to the best of our knowledge, are n times faster than exist results in minimax problems.
\end{abstract}

\section{Introduction}
Modern machine learning settings such as reinforcement learning \cite{du2017stochastic,dai2018sbeed}, adversarial learning \cite{goodfellow2016deep}, robust optimization \cite{chen2017robust,namkoong2017variance} often need to solve minimax problems, which divide the training process into two groups: one for minimization and one for maximization. To solve the problems, various efficient optimization \text{algorithms} such as gradient descent ascent (GDA), stochastic gradient descent ascent (SGDA) have been proposed and widely used in application.

In theoretical analysis, an essential issue is the excess risk, which compares the risk of certain parameters to the Bayes optimal parameters. The standard technique to bound excess risk is to divide it into generalization error and optimization error. Current optimal excess primal risk\footnote{Primal function is one of the common measures in minimax problems. Please refer to Section \ref{sec:preliminary} for details.} bounds are $O(1/n)$ in strongly-convex-strongly-concave (SC-SC) minimax problems, which are derived by \cite{li2021high}. In this paper, we derive $O(1/n^2)$ excess primal risk bounds with some reasonable assumptions, which is, to the best of our knowledge, the optimal results in minimax problems.

Since excess risk can be bounded by generalization error and optimization error, most of existing studies such as \cite{palaniappan2016stochastic,hsieh2019convergence,lin2020gradient,luo2020stochastic} were focused on iteration complexity for certain \text{algorithms}, which only considered the optimization error. In contrast, the generalization performance analysis is less considered, which is an important measure to foresee their prediction behavior after training and limits better excess risk bounds.

In this paper, our goal is to improve the generalization error bounds and further derives better excess risk bounds. We use local methods to consider variance information and obtain a tighter generalization error bound comparing with Rademacher complexity method~\cite{zhang2022uniform}. Note that we introduce a novel ``uniform localized convergence'' framework using generic chaining developed by \cite{xu2020towards} to minimax problems which is different from traditional local Rademacher complexity technique~\cite{bartlett2002localized}.

Our contributions are summarized as follows:
\begin{itemize}
    \item We introduce local uniform convergence using new generic chaining techniques. Comparing with traditional uniform convergence results in \cite{zhang2022uniform}, we derive sharper generalization bounds measured by the gradients of primal functions for NC-SC minimax problems. It provides problem independent results that can be used in various minimax \text{algorithms}.
    
    \item Under the Polyak-Lojasiewicz condition for the outer layer, we provide dimension-independent results and remove the dimension of parameters $d$ from our generalization bound when the sample size $n$ is large enough, which is, to our knowledge, the first result in minimax problems.
    
    \item We extend our main theorems into various \text{algorithms} such as ESP, GDA, SGDA. We gain $O(1/n^2)$ bounds with further assumptions that the optimal population risk is small. To our best knowledge, it is the first time to gain $O(1/n^2)$ for PL-SC minimax problems with expectation version and the first excess primal risk bounds for $O(1/n^2)$ with high probability for SC-SC settings.

\end{itemize}

This paper is organized as follows. In Section \ref{sec:relatework}, we review the related work. In Section \ref{sec:preliminary}, we introduce the notations and assumptions about the problems. Section \ref{sec:main-section} presents our improved generalization error bounds. Then we apply our main theorems into various \text{algorithms} and give sharper bounds for different settings in Section \ref{sec:application}. Section \ref{sec:conclusion} concludes our paper. All the proofs in our paper are given in Appendix.

\section{Related Work}
\label{sec:relatework}
\textbf{Minimax optimization.} Minimax optimization analysis has been widely studied in different settings. For example, one of the most popular SGDA \text{algorithm} and its variants have been analyzed in several recent works including \cite{palaniappan2016stochastic,hsieh2019convergence} for SC-SC cases, \cite{nedic2009subgradient,nemirovski2009robust} for convex-concave (C-C) cases, \cite{lin2020gradient,luo2020stochastic,yan2020optimal,rafique2022weakly} for NC-SC problems, \cite{thekumparampil2019efficient,yan2020optimal} for nonconvex-concave (NC-C) cases and \cite{loizou2020stochastic,liu2021first,yang2020global} for nonconvex-nonconcave (NC-NC) minimax optimization problems. All these works focus on the iteration complexity (or the gradient complexity) of the \text{algorithms}, which only proved the optimization error bounds for the sum of $T$ iteration's gradient of primal empirical function in expectation. Recently \cite{li2021high,lei2021stability} gave optimization bounds with high probability for Primal-Dual risk. We notice that the optimization error of the gradients of primal functions with high probability haven't been studied yet.

\textbf{Algorithmic stability.} Algorithmic stability is a classical approach in generalization analysis, which was presented by \cite{rogers1978finite}. It gave the generalization bound by analyzing the sensitivity of a particular learning \text{algorithm} when changing one data point in the dataset. Modern framework of stability analysis was established by \cite{bousquet2002stability}, where they presented an important concept called uniform stability. Since then, a lot of works based on uniform stability have emerged. On one hand, generalization bounds with algorithmic stability have been significantly improved by \cite{bousquet2020sharper,feldman2018generalization,feldman2019high,klochkov2021stability}. On the other hand, 
different algorithmic stability measures such as uniform argument stability \cite{liu2017algorithmic,bassily2020stability}, on average stability \cite{shalev2010learnability,kuzborskij2018data}, collective stability \cite{london2016stability} have been developed. For minimax problems, many useful stability measures have also been extended, for example, weak stability \cite{lei2021stability}, argument stability \cite{lei2021stability,li2021high}, and uniform stability \cite{lei2021stability,li2021high,zhang2021generalization,farnia2021train,ozdaglar2022good}. Most of them focused on the expectation generalization bounds and only \cite{lei2021stability,li2021high} established some high probability bounds. 

\textbf{Uniform convergence.} Uniform convergence is another popular approach in statistical learning theory to study generalization bounds \cite{fisher1922mathematical,vapnik1999overview,van2000asymptotic}. The main idea is to bound the generalization gap by its supremum over the whole (or a subset) of the hypothesis space via some space complexity measures, such as VC dimension, covering number and Rademacher complexity. For finite-dimensional problem, \cite{kleywegt2002sample} provided that the generalization error is $O(\sqrt{d/n})$ depended on the sample size $n$ and the dimension of parameters $d$ in high probability. For nonconvex settings, \cite{mei2018landscape} showed that the empirical of generalization error is $O(\sqrt{d/n})$. \cite{xu2020towards} developed a novel ``uniform localized convergence'' framework using generic chaining for the minimization problems and \cite{li2021improved} extended it to analyze stochastic algorithms. In minimax problems, \cite{zhang2022uniform} established the first uniform convergence and showed that the empirical generalization error of the gradients for primal functions is $O(\sqrt{d/n})$ under NC-SC settings.

\section{Preliminaries}
\label{sec:preliminary}
Let $\gX \in \R^d$ and $\gY \in \R^{d'}$ be two nonempty closed convex parameters spaces. Let $\sP$ be a probability measure defined on a sample space $\gZ$. We consider the following minimax optimization problem
\begin{align}
\label{eq:population-minimax}
    \min_{\rvx \in \gX} \max_{\rvy \in \gY} F(\rvx, \rvy) := \E_{\rvz \sim \sP} [f(\rvx, \rvy; \rvz)],
\end{align}
where $f : \gX \times \gY \times \gZ \rightarrow \R$ is continuously differentiable and Lipschitz smooth jointly in $\rvx$ and $\rvy$ for any $\rvz$. This above minimax objective called as the population minimax problem represents an expectation of a cost function $f(\rvx, \rvy; \rvz)$ for minimization variable $\rvx$, maximization variable $\rvy$ and data variable $\rvz$. In this paper, we focus on the NC-SC problem which means that $f$ is nonconvex in $\rvx$ and strongly concave in $\rvy$. Obviously, our goal is to gain the optimal solution $(\rvx^*,\rvy^*)$ to (\ref{eq:population-minimax}). Since the distribution $\sP$ is unavailable, we can only gain a dataset $S = \{ \rvz_1, \ldots ,\rvz_n \}$ drawn $n$ times independently from $\sP$. 
Therefore, we solve the following empirical minimax problem instead
\begin{align}
\label{eq:empirical-minimax}
    \min_{\rvx \in \gX} \max_{\rvy \in \gY} F_S(\rvx, \rvy) := \frac1n \sum_{i=1}^{n} f(\rvx, \rvy; \rvz_i).
\end{align}

Next we introduce one of the common measures in minimax problems called primal functions.
\begin{definition}[Primal (empirical/population) function]
     The primal empirical function and the primal population function are given by
    \begin{align*}
        \Phi_S(\rvx) := \max_{\rvy \in \gY} F_S(\rvx, \rvy) 
        \quad
        \text{and}
        \quad
        \Phi(\rvx) := \max_{\rvy \in \gY} F(\rvx, \rvy).
        \end{align*}
\end{definition}

Since $F_S$ and $F$ are nonconvex in $\rvx$, it is difficult to find the global optimal solution in general. In practice, we design an \text{algorithm} $\gA$ that finds an $\epsilon$-stationary point
\begin{align}
    \| \nabla \Phi(\gA_\rvx(S)) \| \leq \epsilon,
\end{align}
where $\gA_\rvx(S)$ is the $x$-component of the output using any \text{algorithm} $\gA(S) = (\gA_\rvx(S),\gA_\rvy(S))$ for solving (\ref{eq:empirical-minimax}). Then the optimization error for solve the population minimax problem (\ref{eq:population-minimax}) can be decomposed into two terms:

\begin{align*}
    \| \nabla \Phi(\gA_\rvx(S)) \| & \leq \| \nabla \Phi_S(\gA_\rvx(S)) \| \\
    & \quad + \| \nabla \Phi(\gA_\rvx(S)) - \nabla \Phi_S(\gA_\rvx(S)) \|,
\end{align*}
where the first term on the right-hand-side corresponds to the optimization error of solving the empirical minimax problem (\ref{eq:empirical-minimax}) and the second term corresponds to the generalization error of the gradients for primal functions. The above inequality satisfies from the triangle inequality. 

Let $\| \cdot \|$ be the Euclidean norm for simplicity and $B(\rvx_0,R):= \{\rvx \in \R^d: \| \rvx - \rvx_0 \| \leq R \}$ denotes a ball with center $\rvx_0 \in \R^d$ and radius $R$. For the closed convex set $\gX$, we assume that there is a radius $R_1$ such that $\gX \in B(\rvx^*, R_1)$. Let $\gA(S) := ( \gA_\rvx(S), \gA_\rvy(S) )$ denote the output of an \text{algorithm} $\gA$ for solving the empirical minimax problem (\ref{eq:empirical-minimax}) with dataset $S$ and $\nabla f = \left( \nabla_\rvx f, \nabla_\rvy f \right)$ denote the gradient of a function $f$.

\begin{definition}[Strongly convex function]
Let $\mu_\rvy > 0$. A differentiable function $g: \gW \rightarrow \R$ is called $\mu$-strongly-convex in $\rvw$ if the following inequality holds for every $\rvw_1$, $\rvw_2$:
\begin{align*}
    g(\rvw_1) - g(\rvw_2) \geq \langle \nabla g(\rvw_2), \rvw_1-\rvw_2 \rangle + \frac{\mu}{2} \| \rvw_1 - \rvw_2 \|^2,
\end{align*}
we say $g$ is $\mu$-strongly-concave if $-g$ is $\mu$-strongly-convex.
\end{definition}

\begin{definition}[Smooth function]
\label{def:smooth}
Let $\beta > 0$. A function $f: \gX \times \gY \times \gZ \rightarrow \R$ is $\beta$-smooth in $(\rvx, \rvy)$ if the function is continuous differentiable and for any $\rvx_1, \rvx_2 \in \gX$, $\rvy_1, \rvy_2 \in \gY$ and $\rvz \in \gZ$, $f(\rvx, \rvy; \rvz)$ satisfies
\begin{small}
\begin{align*}
    \begin{Vmatrix} 
    \begin{pmatrix}
    \nabla_\rvx f(\rvx_1, \rvy_1; \rvz) - \nabla_\rvx f(\rvx_2, \rvy_2; \rvz) \\
    \nabla_\rvy f(\rvx_1, \rvy_1; \rvz) - \nabla_\rvy f(\rvx_2, \rvy_2; \rvz)
    \end{pmatrix}
    \end{Vmatrix}
    \leq \beta
    \begin{Vmatrix}
    \begin{pmatrix}
    \rvx_1 - \rvx_2 \\
    \rvy_1 - \rvy_2
    \end{pmatrix}
    \end{Vmatrix}.
\end{align*}    
\end{small}
\end{definition}
\begin{assumption}[Nonconvex-strongly-concave minimax problem]
\label{assumption:NC-SC}
In order to obtain meaningful conclusions, we make the following assumptions:
\begin{itemize}
    \item Let $\mu_\rvy > 0$. The function $f(\rvx, \rvy; \rvz)$ is $\mu_\rvy$-strongly concave in $\rvy \in \gY$ for any $\rvx \in \gX$ and $\rvz \in \gZ$. 
    \item The function $f(\rvx, \rvy; \rvz)$ is $\beta$-smooth in $(\rvx, \rvy) \in \gX \times \gY$ for any $\rvz$.
    \item $\gX$ and $\gY$ are compact convex sets, which means that there exist constants $D_\gX, D_\gY > 0$ such that for any $\rvx \in \gX$, $\| \rvx \|^2 \leq D_\gX$ and for any $\rvy \in \gY$, $\| \rvy \|^2 \leq D_\gY$.
\end{itemize}
\end{assumption}
The first two assumptions in Assumption \ref{assumption:NC-SC} are standard in NC-SC minimax problems \cite{zhang2021generalization,farnia2021train,lei2021stability,li2021high} and the last one in Assumption \ref{assumption:NC-SC} is widely used in uniform convergence analysis \cite{kleywegt2002sample,davis2022graphical,zhang2022uniform}.

\begin{assumption}[Lipschitz continuity]
\label{assumption:f-lipschitz}
    Let $L > 0$, assume that for any $\rvx \in \gX$ and any $\rvy \in \gY$ respectively for any $\rvz$, the function $f(\rvx, \rvy; \rvz)$ satisfies
    \begin{align*}
        \| \nabla_\rvx f(\rvx, \rvy; \rvz) \| \leq L  
        \quad\text{and}\quad 
        \| \nabla_\rvy f(\rvx, \rvy; \rvz) \| \leq L.
    \end{align*}
\end{assumption}
Lipschitz assumption is also the standard assumption and widely used in literature such as \cite{zhang2021generalization,farnia2021train,lei2021stability,li2021high}. But we need to emphasize that our main Theorem \ref{theorem:main} and Theorem \ref{theorem:nablaf-without-d} do not require the Lipschitz assumption. Instead, we introduce a weaker assumption called Bernstein condition in minimax problems.
\begin{definition}[Bernstein condition]
\label{def:bernstein-condition}
Given a random variable $X$ with mean $\mu = \E [X]$ and variance $ \sigma^2 = \E [X^2] - \mu^2$, we say that Bernstein’s condition holds if there exists $B > 0$ such that for all $k \geq 2, k \in \mathbb{N}$,
\begin{align*}
    \left| \E \left[(X - \mu)^k\right] \right| \leq \frac{k!}{2}  \sigma^2 B^{k-2}.
\end{align*}
\end{definition}
\begin{remark} \rm{}
\label{remark:bernstein-0}
    Bernstein condition has been widely used to obtain tail bounds that may be tighter than the Hoeffding bounds. It is easy to verify that any bounded variable satisfies Bernstein condition. Next, we introduce a straightforward generalization of Bernstein condition to minimax problems. We formally state these extension in the following assumptions.
\end{remark}

\begin{assumption}
\label{assumption:minimax-bernstein-condition}
In minimax problems, the function $f(\rvx, \rvy;\rvz)$ satisfies Bernstein condition in $\rvx^*$ for $\rvy^*$: there exists $B_{\rvx^*} > 0$ such that for all $k \geq 2, k \in \mathbb{N}$,
\begin{align*}
    \E\left[\left\| \nabla_\rvx f (\rvx^*, \rvy^*; \rvz) \right\|^k\right] \leq \frac{k!}{2} \E\left[\| \nabla_\rvx f (\rvx^*, \rvy^*; \rvz) \|^2\right] B_{\rvx^*}^{k-2}.
\end{align*}    
And the function $f(\rvx, \rvy;\rvz)$ satisfies Bernstein condition in $\rvy^*$ for $\rvx^*$: there exists $B_{\rvy^*} > 0$ such that for all $k \geq 2, k \in \mathbb{N}$,
\begin{align*}
    \E\left[\left\| \nabla_\rvy f (\rvx^*, \rvy^*; \rvz) \right\|^k\right] \leq \frac{k!}{2} \E\left[\| \nabla_\rvy f (\rvx^*, \rvy^*; \rvz) \|^2\right] B_{\rvy^*}^{k-2},
\end{align*}
\end{assumption}
\begin{remark} \rm{}
\label{remark:bernstein}
    We can easily obtain that Assumption \ref{assumption:f-lipschitz} can derive Assumption \ref{assumption:minimax-bernstein-condition}. For example, if function $f$ is $L$-Lipschitz continuous, then $\|\nabla_\rvx f(\rvx, \rvy;\rvz) \| \leq L$. Thus for any $\rvx \in \gX, \rvy \in \gY$ and for all $k \geq 2, k \in \mathbb{N}$, we have $\E \left[\| \nabla_\rvy f(\rvx, \rvy;\rvz) \|^k \right] \leq \frac{k!}{2} \E \left[\| \nabla_\rvy f(\rvx, \rvy; \rvz) \|^2\right] L^{k-2}$, which means that the function $f$ satisfies Bernstein condition for any $\rvx, \rvy$. Similarly, $\E \left[\| \nabla_\rvx f(\rvx, \rvy;\rvz) \|^k \right] \leq \frac{k!}{2} \E \left[\| \nabla_\rvx f(\rvx, \rvy; \rvz) \|^2\right] L^{k-2}$ can be easily derived.
    Moreover, Bernstein condition is milder than the bounded assumption of random variables and is also satisfied by various unbounded variables. For example, a random variable is sub-exponetial if it satisfies Bernstein condition \cite{wainwright2019high}. Please refer to \cite{wainwright2019high} for more discussions. 
    Furthermore, Bernstein condition assumption is pretty mild since $B_{\rvy^*}$ and $B_{\rvx^*}$ only depends on gradients at $(\rvx^*, \rvy^*)$. 
\end{remark}

\section{Uniform Localized Convergence and Generalization Bounds}
\label{sec:main-section}
Uniform convergence of the gradients for primal functions measures the deviation between the gradients of the primal population function $\nabla \Phi(\rvx)$ and the gradients of the primal empirical function $\nabla \Phi_S(\rvx)$. In this section, we provide the sharper uniform convergence of the gradients for primal functions comparing with \cite{zhang2022uniform}.
\begin{theorem}
\label{theorem:main}
Under Assumption \ref{assumption:NC-SC} and \ref{assumption:minimax-bernstein-condition}, for any $\delta \in (0,1)$, with probability at least $1-\delta$, it holds for all $\rvx \in \gX$ that
\begin{small}
\begin{align*}
    &  \| \nabla \Phi(\rvx) - \nabla \Phi_S(\rvx) \| \\
    \leq & \frac{\beta}{\mu_\rvy} \left(
    \sqrt{\frac{2 \E \|\nabla_\rvy f(\rvx^*, \rvy^*; \rvz)\|^2 \log{\frac{8}{\delta}}}{n}} 
        + \frac{B_{\rvy^*} \log{\frac8\delta}}{n}
    \right) \\
    & + \sqrt{\frac{2 \E \|\nabla_\rvx f(\rvx^*, \rvy^*; \rvz)\|^2 \log{\frac{8}{\delta}}}{n}} 
    + \frac{B_{\rvx^*} \log{\frac8\delta}}{n} \\
    & + \frac{C\beta(\mu_\rvy+\beta)}{\mu_\rvy} \frac{(\mu_\rvy+\beta)}{\mu_\rvy}
    \max \left\{ \| \rvx - \rvx^* \|, \frac{1}{n} \right\} \times \\
    &   \left(
    \sqrt{\frac{d+\log{\frac{16\log_2(\sqrt{2}R_1 n+1)}{\delta}}}{n}} + 
    \frac{d+\log{\frac{16\log_2(\sqrt{2}R_1 n+1)}{\delta}}}{n}
    \right),
\end{align*}    
\end{small}
where $C$ is a absolute constant.
\end{theorem}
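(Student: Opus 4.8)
The plan is to reduce the primal gradient difference to ordinary empirical-process deviations evaluated along the \emph{population} inner-maximizer path, and then to control the resulting supremum over $\gX$ by the uniform localized convergence framework. Because $f$ is $\mu_\rvy$-strongly concave in $\rvy$, the inner maximizers $\rvy^*(\rvx):=\argmax_{\rvy\in\gY}F(\rvx,\rvy)$ and $\rvy_S^*(\rvx):=\argmax_{\rvy\in\gY}F_S(\rvx,\rvy)$ are unique, so Danskin's theorem gives $\nabla\Phi(\rvx)=\nabla_\rvx F(\rvx,\rvy^*(\rvx))$ and $\nabla\Phi_S(\rvx)=\nabla_\rvx F_S(\rvx,\rvy_S^*(\rvx))$. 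Inserting $\nabla_\rvx F_S(\rvx,\rvy^*(\rvx))$ and applying the triangle inequality splits the target into a term $\|\nabla_\rvx F(\rvx,\rvy^*(\rvx))-\nabla_\rvx F_S(\rvx,\rvy^*(\rvx))\|$ and a term bounded by $\beta\|\rvy^*(\rvx)-\rvy_S^*(\rvx)\|$ via $\beta$-smoothness. The variational optimality conditions for $\rvy^*(\rvx)$ and $\rvy_S^*(\rvx)$ combined with $\mu_\rvy$-strong concavity then yield $\mu_\rvy\|\rvy^*(\rvx)-\rvy_S^*(\rvx)\|\le\|\nabla_\rvy F_S(\rvx,\rvy^*(\rvx))-\nabla_\rvy F(\rvx,\rvy^*(\rvx))\|$. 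Hence the whole quantity is dominated by the $\rvx$-gradient deviation plus $(\beta/\mu_\rvy)$ times the $\rvy$-gradient deviation, both evaluated at the data-independent point $(\rvx,\rvy^*(\rvx))$; this is the step that explains the $\beta/\mu_\rvy$ prefactor on the $\rvy$-terms.

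I would then define the vector-valued function classes $\phi_\rvx(\rvz):=\nabla_\rvx f(\rvx,\rvy^*(\rvx);\rvz)$ and $\psi_\rvx(\rvz):=\nabla_\rvy f(\rvx,\rvy^*(\rvx);\rvz)$ and aim to bound $\sup_{\rvx}\|\tfrac1n\sum_i\phi_\rvx(\rvz_i)-\E\phi_\rvx\|$ and its analogue for $\psi$. Two ingredients are needed. First, the maximizer map $\rvx\mapsto\rvy^*(\rvx)$ is $(\beta/\mu_\rvy)$-Lipschitz (from strong concavity together with joint $\beta$-smoothness, via the first-order optimality condition), so $\phi_\rvx,\psi_\rvx$ are Lipschitz in $\rvx$ with constant at most $\beta\sqrt{1+(\beta/\mu_\rvy)^2}\le\beta(\mu_\rvy+\beta)/\mu_\rvy$; this produces the $\beta(\mu_\rvy+\beta)/\mu_\rvy$ factor, while the extra $(\mu_\rvy+\beta)/\mu_\rvy$ in the last term upper-bounds the additional $\beta/\mu_\rvy$ amplification inherited from the reduction of the $\rvy$-term. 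Second, at the anchor $\rvx^*$ one has $\phi_{\rvx^*}(\rvz)=\nabla_\rvx f(\rvx^*,\rvy^*;\rvz)$ and $\psi_{\rvx^*}(\rvz)=\nabla_\rvy f(\rvx^*,\rvy^*;\rvz)$, so a vector-valued Bernstein inequality applied under Assumption \ref{assumption:minimax-bernstein-condition}, with variance proxy $\E\|\cdot\|^2$, gives exactly the pointwise terms $\sqrt{2\,\E\|\cdot\|^2\log(8/\delta)/n}+B_{\cdot}\log(8/\delta)/n$.

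Finally I would invoke the generic-chaining localized convergence framework of \cite{xu2020towards}. Writing each deviation as its value at $\rvx^*$ plus the centered increment $(\tfrac1n\sum_i-\E)(\phi_\rvx-\phi_{\rvx^*})$, the Lipschitz property combined with the Bernstein tails makes this increment process sub-exponential in the Euclidean metric scaled by the Lipschitz constant. Generic chaining over a ball $B(\rvx^*,r)\cap\gX$, whose metric entropy scales like $d$, bounds the localized supremum by a constant multiple of $(\text{Lipschitz constant})\cdot r\cdot(\sqrt{d/n}+d/n)$. A peeling argument over dyadic radii from $1/n$ up to $R_1$, with $O(\log_2(\sqrt{2}R_1n+1))$ shells (contributing the logarithmic factor inside the bound), then converts this into a bound scaling with $\max\{\|\rvx-\rvx^*\|,1/n\}$, valid simultaneously for all $\rvx\in\gX$. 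Collecting the pointwise and localized pieces for both $\phi$ and $\psi$ and taking a union bound over the finitely many events yields the stated inequality.

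The main obstacle is the localized chaining step: one must verify the sub-exponential increment condition uniformly by correctly composing the Bernstein condition with the Lipschitz continuity of the implicitly defined map $\rvy^*(\cdot)$, and then execute the peeling so that the complexity enters only through the intrinsic dimension $d$ and the \emph{local} radius $\max\{\|\rvx-\rvx^*\|,1/n\}$ rather than the diameter $R_1$ of $\gX$. Rigorously establishing the Lipschitz continuity of $\rvy^*(\cdot)$ and tracking the constants through the chaining bound are the most delicate parts.
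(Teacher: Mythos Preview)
Your proposal is correct and follows essentially the same approach as the paper: the same Danskin-plus-smoothness decomposition, the same strong-concavity reduction $\mu_\rvy\|\rvy^*(\rvx)-\rvy_S^*(\rvx)\|\le\|\nabla_\rvy F_S-\nabla_\rvy F\|$, the same anchoring at $(\rvx^*,\rvy^*)$ via vector Bernstein, and the same generic-chaining-plus-peeling argument using the $(\beta/\mu_\rvy)$-Lipschitz continuity of $\rvy^*(\cdot)$. The one technical device the paper uses that you leave implicit is a lifting to the product space $\gX\times\gV$ (with $\gV$ a ball of auxiliary directions $\rvv$), applying the scalar localized-convergence lemma to $g_{(\rvx,\rvv)}(\rvz)=(\nabla f(\rvx,\rvy^*(\rvx);\rvz)-\nabla f(\rvx^*,\rvy^*;\rvz))^{\mathrm T}\rvv$ and then specializing $\rvv$ to recover the vector norm; this is how the paper executes your chaining step for the vector-valued increments.
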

There is only one uniform convergence of gradients for primal functions in minimax problems given in \cite{zhang2022uniform}. Here is their main theorem in NC-SC settings.
\begin{theorem}[Theorem in \cite{zhang2022uniform}]
\label{theorem:zhang}
    Under Assumption \ref{assumption:NC-SC} and \ref{assumption:f-lipschitz}, we have
    \begin{align*}
        \E \left[ 
        \max_{\rvx \in \gX} \| \nabla \Phi(\rvx) - \nabla \Phi_S(\rvx) \|
        \right]
        = \tilde{O} \left(\frac{L(\mu_\rvy + \beta)}{\mu_\rvy}\sqrt{\frac{d}{n}}\right),
    \end{align*}
    where $\tilde{O}(\cdot)$ hides logarithmic factors.
\end{theorem}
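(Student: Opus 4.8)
The plan is to reduce the uniform control of the primal-gradient deviation to a single vector-valued empirical process over the joint parameter space $\gX \times \gY$, and then bound that process by chaining. First I would invoke Danskin's theorem: since Assumption~\ref{assumption:NC-SC} makes $F(\rvx,\cdot)$ and $F_S(\rvx,\cdot)$ $\mu_\rvy$-strongly concave, the inner maximizers $\rvy^*(\rvx) := \argmax_{\rvy \in \gY} F(\rvx,\rvy)$ and $\rvy_S^*(\rvx) := \argmax_{\rvy \in \gY} F_S(\rvx,\rvy)$ are unique, and the primal gradients admit the closed forms $\nabla \Phi(\rvx) = \nabla_\rvx F(\rvx, \rvy^*(\rvx))$ and $\nabla \Phi_S(\rvx) = \nabla_\rvx F_S(\rvx, \rvy_S^*(\rvx))$. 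Adding and subtracting $\nabla_\rvx F(\rvx, \rvy_S^*(\rvx))$ yields
\begin{align*}
\nabla \Phi(\rvx) - \nabla \Phi_S(\rvx) = \big[\nabla_\rvx F(\rvx, \rvy^*(\rvx)) - \nabla_\rvx F(\rvx, \rvy_S^*(\rvx))\big] + \big[\nabla_\rvx F(\rvx, \rvy_S^*(\rvx)) - \nabla_\rvx F_S(\rvx, \rvy_S^*(\rvx))\big].
\end{align*}

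Next I would control the two brackets. The first is handled by $\beta$-smoothness, $\|\nabla_\rvx F(\rvx, \rvy^*(\rvx)) - \nabla_\rvx F(\rvx, \rvy_S^*(\rvx))\| \le \beta \|\rvy^*(\rvx) - \rvy_S^*(\rvx)\|$, so the core step is a perturbation bound on the maximizers. Since $\rvy_S^*(\rvx)$ is the constrained maximizer of the $\mu_\rvy$-strongly-concave map $F_S(\rvx,\cdot)$, its first-order optimality condition combined with strong concavity of $F(\rvx,\cdot)$ at $\rvy^*(\rvx)$ gives $\mu_\rvy \|\rvy^*(\rvx) - \rvy_S^*(\rvx)\| \le \|\nabla_\rvy F(\rvx, \rvy_S^*(\rvx)) - \nabla_\rvy F_S(\rvx, \rvy_S^*(\rvx))\|$ (stated in projected/variational-inequality form to cover boundary maximizers). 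Writing $G := \sup_{\rvx \in \gX, \rvy \in \gY} \|\nabla F(\rvx,\rvy) - \nabla F_S(\rvx,\rvy)\|$ for the joint-gradient deviation, both brackets are dominated by $G$ evaluated at the (data-dependent) point $(\rvx, \rvy_S^*(\rvx))$, and since $\rvy_S^*(\rvx) \in \gY$ this point lies in the range over which $G$ takes a supremum. Collecting the coefficient $\beta/\mu_\rvy$ from the first bracket and $1$ from the second gives
\begin{align*}
\E\left[\max_{\rvx \in \gX}\|\nabla \Phi(\rvx) - \nabla \Phi_S(\rvx)\|\right] \le \frac{\mu_\rvy + \beta}{\mu_\rvy}\, \E[G].
\end{align*}

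It then remains to show $\E[G] = \tilde{O}(L\sqrt{d/n})$. I would symmetrize to pass to the Rademacher complexity of the vector-valued class $\{(\rvx,\rvy) \mapsto \nabla f(\rvx,\rvy;\cdot)\}$, using Assumption~\ref{assumption:f-lipschitz} (the norm bound $\|\nabla f\| \le L$, which makes the increments of the empirical process sub-Gaussian at scale $L/\sqrt{n}$) and $\beta$-smoothness (which makes the index map $\beta$-Lipschitz, so the natural metric on the index set is comparable to the Euclidean metric on $\gX \times \gY$). Because $\gX \times \gY \subset \R^{d+d'}$ is compact, its $\epsilon$-covering number scales like $(cR/\epsilon)^{d+d'}$, and Dudley's entropy integral yields $\E[G] = \tilde{O}(L\sqrt{(d+d')/n})$, which absorbs into the stated $\tilde{O}(L\sqrt{d/n})$; multiplying by the prefactor gives the claim. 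The main obstacle is the vector-valued chaining: one must bound the Euclidean norm of an $\R^{d+d'}$-valued empirical process uniformly over $\gX \times \gY$, which I would do by combining a fixed net over the unit sphere with chaining over the parameter space (this is the source of both the $\sqrt{d/n}$ rate and the hidden logarithmic factors), while simultaneously verifying that the maximizer-perturbation inequality holds uniformly in $\rvx$, including boundary cases.
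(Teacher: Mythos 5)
Your proposal is sound, but note that the paper never proves this statement: Theorem~\ref{theorem:zhang} is quoted verbatim from \cite{zhang2022uniform} purely as a point of comparison, so the only thing to compare against is the paper's analogous machinery for its own Theorem~\ref{theorem:main}. There your route diverges in one instructive way. You add and subtract $\nabla_\rvx F(\rvx,\rvy_S^*(\rvx))$, so both brackets are evaluated at the \emph{data-dependent} point $(\rvx,\rvy_S^*(\rvx))$, and you restore validity by dominating everything with the global supremum $G$ over $\gX\times\gY$; your variational-inequality perturbation bound $\mu_\rvy\|\rvy^*(\rvx)-\rvy_S^*(\rvx)\|\le\|\nabla_\rvy F(\rvx,\rvy_S^*(\rvx))-\nabla_\rvy F_S(\rvx,\rvy_S^*(\rvx))\|$ is correct (the strong-monotonicity argument works with the gradient gap evaluated at either maximizer, boundary cases included). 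The paper instead adds and subtracts $\frac1n\sum_i\nabla_\rvx f(\rvx,\rvy^*(\rvx);\rvz_i)$ and uses Lemma~\ref{lemma:concentration-of-y}, which evaluates the gap at the \emph{population} maximizer $\rvy^*(\rvx)$. That choice is not cosmetic: since $\rvy^*(\rvx)$ is deterministic given $\rvx$, the map $\rvz\mapsto\nabla f(\rvx,\rvy^*(\rvx);\rvz)$ retains i.i.d.\ mean-zero structure pointwise in $\rvx$, which is what lets the paper chain only over $\rvx$ (via Lemma~\ref{lemma:sub-exp}) and localize to get the variance-at-optimum terms and the $\max\{\|\rvx-\rvx^*\|,1/n\}$ factor in Theorem~\ref{theorem:main}. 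Your global-sup version forfeits localization but is simpler and entirely adequate for the worst-case $\tilde{O}(\sqrt{d/n})$ rate claimed here, and your chaining sketch (net over the unit sphere plus Dudley over the compact parameter set, with smoothness controlling increments and the Lipschitz bound $L$ controlling the single-point scale) is the standard and correct way to finish. One small caveat: your argument naturally yields $\tilde{O}\bigl(L(\mu_\rvy+\beta)\mu_\rvy^{-1}\sqrt{(d+d')/n}\bigr)$, which matches the stated bound only under the implicit convention (shared by the paper's restatement) that $d$ stands for the total parameter dimension; if $d'\gg d$ the displayed bound as written does not follow, but this is a looseness of the citation, not of your proof.
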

\begin{remark} \rm{}
\label{remark:compare}
    We now compare our uniform convergence of gradient for primal functions with \cite{zhang2022uniform}. Firstly, our result is the only one with high-probability format. Besides, we successfully relax the assumptions. Theorem \ref{theorem:zhang} requires the Lipschitz continuity assumption, while our result only needs Bernstein condition assumption. Please refer to Remark \ref{remark:bernstein-0} Remark \ref{remark:bernstein} for the detailed comparison between these assumptions. Then, the factor in Theorem \ref{theorem:zhang} is $\frac{L(\mu_\rvy + \beta)}{\mu_\rvy}$, while our result in Theorem \ref{theorem:main} is $\frac{C\beta(\mu_\rvy+\beta)}{\mu_\rvy} \frac{(\mu_\rvy+\beta)}{\mu_\rvy} \max \left\{ \| \rvx - \rvx^* \|, \frac{1}{n} \right\}$, not involving the term $L$, which may be very large and even infinite without Lipschitz continuity assumption. Finally, while \cite{zhang2022uniform} studied the worst-case upper bounds on the parameters, results based on generic chaining yield upper bound related to the parameters. As shown Theorem \ref{theorem:main}, we have the term $\max\{ \|\rvx -\rvx^* \|,\frac1n \}$ before the term $O(\sqrt{d/n})$, indicating that our results improve as the calculated parameters of algorithms approach the optimal solution. For example, when $\| \rvx- \rvx^*\| = O\left(\frac{1}{\sqrt{n}}\right)$, our result is  $O(\sqrt{d}/n)$. In some optimal scenario, when $\| \rvx- \rvx^*\| \leq \frac1n$, we can attain the best results.
\end{remark}

\begin{remark} \rm{}
    In fact, \cite{zhang2022uniform} derived an expectation generalization error for primal functions in minimax problems using complexity. Naturally, we want to use local methods to introduce variance information and obtain a tighter upper bound.  A straightforward idea is that we can continue with the traditional localized approach and solve the problem with covering numbers \cite{bartlett2002localized}. However, these technologies require additional bounded assumptions (Assumption~\ref{assumption:f-lipschitz}), or need certain distributional assumptions for unbounded condition. For example, \cite{mei2018landscape} introduced the ``Hessian statistical noise'' assumption when using covering numbers. Fortunately, \cite{xu2020towards} developed a novel ``uniform localized convergence'' framework using generic chaining for the minimization problems and \cite{li2021improved} extended it to analyze stochastic algorithms.

    This novel framework can not only relax the bounded (or specific distribution) assumptions but also impose fewer restrictions on the surrogate function for the localized method, enabling us to design the measurement functional to achieve a sharper bound. Consequently, we introduce this remarkable framework into minimax problems. Our generalization bound in Theorem \ref{theorem:main} uses weaker assumptions comparing with \cite{zhang2022uniform} and is sharper in some conditions due to our utilization of variance information.

    Introducing this new framework into minimax problems is not straightforward. \cite{zhang2022uniform} indeed established a connection between inner and outer layers with the loss of primal functions, but we need do this with a new generic chaining approach. Furthermore, it is noteworthy that the optimal point $\rvy^*(\rvx) := \argmax_{\rvy \in \gY} F(\rvx, \rvy)$ for a given $\rvx$ differs from $\rvy_S^*(\rvx) := \argmax_{\rvy \in \gY} F_S(\rvx, \rvy)$, thus introducing an additional error term $\| \rvy^*(\rvx) - \rvy_S^*(\rvx) \|$. Compared to \cite{zhang2022uniform}, they only need to bound this term with $O(1/\sqrt{n})$. But we need to reach the upper bound to $O(1/n)$ under certain assumptions.
\end{remark}

Next, we provide a dimension-free uniform convergence of gradients for primal functions when the PL condition is satisfied. Firstly, we introduce the extension of the PL condition to minimax problems used in \cite{guo2020fast,yang2020global}.
\begin{assumption}[$\rvx$-side $\mu_\rvx$-Polyak-Lojasiewicz condition]
\label{assumption:PL-condition-x}
    For any $\rvy \in \gY$, the function $F(\rvx, \rvy)$ satisfies the $\rvx$-side $\mu_\rvx$-Polyak-Lojasiewicz (PL) condition with parameter $\mu_\rvx > 0$ on all $\rvx \in \gX$ if
    \begin{align*}
        F(\rvx, \rvy) - \inf_{\rvx'} F(\rvx', \rvy) \leq \frac{1}{2\mu_\rvx} \| \nabla_\rvx F(\rvx,\rvy) \|^2.
    \end{align*}
\end{assumption}
\begin{remark} \rm{}
    Numerous studies have been conducted on deep learning to provide evidence for the validity of the PL condition in risk minimization problems. This condition has been demonstrated to hold either globally or locally in certain networks with specific structural, activation, or loss function characteristics \cite{hardt2016identity,li2018learning,arora2018convergence,charles2018stability,du2018gradient,allen2019convergence}. For instance, \cite{du2018gradient} has exhibited that if a two-layer neural network possesses a sufficiently wide width, the PL condition is upheld within a ball centered at the initial solution, and the global optimum is situated within this same ball. Additionally, \cite{allen2019convergence} has further demonstrated that in overparameterized deep neural networks utilizing ReLU activation, the PL condition is applicable to a global optimum located in the vicinity of a random initial solution. 
\end{remark}

\begin{theorem}
\label{theorem:nablaf-without-d}
Under Assumption \ref{assumption:NC-SC} and \ref{assumption:minimax-bernstein-condition}, assume that the population risk $F(\rvx,\rvy)$ satisfies Assumption \ref{assumption:PL-condition-x} with parameter $\mu_\rvx$ and let $c = \max \{16C^2, 1\}$. For any $\delta \in (0,1)$ when 
$ n \geq \frac{c\beta^2(\mu_\rvy+\beta)^4(d+ \log{\frac{16\log_2{\sqrt{2}R_1 n + 1}}{\delta}})}{\mu_\rvy^4\mu_\rvx^2}$, 
with probability at least $1-\delta$, it holds for all $\rvx \in \gX$ that
\begin{small}
\begin{align*}
    &  \| \nabla \Phi(\rvx) - \nabla \Phi_S(\rvx) \| \\
     \leq & \| \nabla \Phi_S(\rvx) \| 
        + 2\sqrt{\frac{2 \E \left[ \|\nabla_\rvx f(\rvx^*, \rvy^*; \rvz)\|^2 \right] \log{\frac{8}{\delta}}}{n}} + \frac{2 B_{\rvx^*} \log{\frac8\delta}}{n} \\
         + & \frac{\mu_\rvx}{n}  
        + \frac{2\beta}{\mu_\rvy} \left(
    \sqrt{\frac{2 \E \left[ \|\nabla_\rvy f(\rvx^*, \rvy^*; \rvz)\|^2 \right]\log{\frac{8}{\delta}}}{n}} 
    + \frac{B_{\rvy^*} \log{\frac8\delta}}{n}
    \right).
\end{align*}    
\end{small}
\end{theorem}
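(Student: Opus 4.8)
The plan is to bootstrap from Theorem~\ref{theorem:main} and convert its dimension-dependent term into a multiple of $\|\nabla\Phi_S(\rvx)\|$ using the PL condition, so that the ambient dimension $d$ survives only inside the sample-size threshold. Write the bound of Theorem~\ref{theorem:main} as $\|\nabla\Phi(\rvx)-\nabla\Phi_S(\rvx)\| \le T_{\mathrm{df}} + A\,\max\{\|\rvx-\rvx^*\|,\tfrac1n\}\,E$, where $T_{\mathrm{df}}$ collects the four dimension-free statistical terms (the $\rvx$- and $\rvy$-side variance and Bernstein terms), $A = \tfrac{C\beta(\mu_\rvy+\beta)^2}{\mu_\rvy^2}$ is the chaining coefficient, and $E = \sqrt{m/n}+m/n$ with $m = d+\log\tfrac{16\log_2(\sqrt2 R_1 n+1)}{\delta}$.

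First I would use the sample-size hypothesis to control $A\,E$. Since $c=\max\{16C^2,1\}\ge 16C^2$ and $A^2 = \tfrac{C^2\beta^2(\mu_\rvy+\beta)^4}{\mu_\rvy^4}$, the assumed lower bound $n \ge \tfrac{c\beta^2(\mu_\rvy+\beta)^4 m}{\mu_\rvy^4\mu_\rvx^2}$ rearranges to $\tfrac{m}{n}\le \tfrac{\mu_\rvx^2}{16A^2}$, hence $A\sqrt{m/n}\le \tfrac{\mu_\rvx}{4}$; in this regime $\sqrt{m/n}\le 1$, so $m/n\le\sqrt{m/n}$ and therefore $A\,E \le 2A\sqrt{m/n}\le \tfrac{\mu_\rvx}{2}$. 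Combining with $\max\{\|\rvx-\rvx^*\|,\tfrac1n\}\le \|\rvx-\rvx^*\|+\tfrac1n$ gives the intermediate estimate $\|\nabla\Phi(\rvx)-\nabla\Phi_S(\rvx)\| \le T_{\mathrm{df}} + \tfrac{\mu_\rvx}{2}\|\rvx-\rvx^*\| + \tfrac{\mu_\rvx}{2n}$.

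Next I would show that the population primal $\Phi$ inherits the PL property with the same constant $\mu_\rvx$. Because $f$ is $\mu_\rvy$-strongly concave in $\rvy$, the inner maximizer $\rvy^*(\rvx)$ is unique and Danskin's theorem yields $\nabla\Phi(\rvx)=\nabla_\rvx F(\rvx,\rvy^*(\rvx))$. Since $\Phi(\rvx')=\max_{\rvy\in\gY} F(\rvx',\rvy)\ge F(\rvx',\rvy^*(\rvx))$ for every $\rvx'$, taking the infimum over $\rvx'$ gives $\inf_{\rvx'}\Phi(\rvx')\ge \inf_{\rvx'}F(\rvx',\rvy^*(\rvx))$, so Assumption~\ref{assumption:PL-condition-x} applied at $\rvy=\rvy^*(\rvx)$ yields $\Phi(\rvx)-\inf_{\rvx'}\Phi(\rvx')\le \tfrac{1}{2\mu_\rvx}\|\nabla\Phi(\rvx)\|^2$. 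The PL inequality for $\Phi$, together with the quadratic growth it entails, then gives the gradient-domination bound $\mu_\rvx\|\rvx-\rvx^*\|\le \|\nabla\Phi(\rvx)\|$, so $\tfrac{\mu_\rvx}{2}\|\rvx-\rvx^*\|\le \tfrac12\|\nabla\Phi(\rvx)\| \le \tfrac12\|\nabla\Phi_S(\rvx)\| + \tfrac12\|\nabla\Phi(\rvx)-\nabla\Phi_S(\rvx)\|$ by the triangle inequality.

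Finally I would substitute this into the intermediate estimate and absorb the self-referential half of the generalization gap to the left-hand side: writing $G=\|\nabla\Phi(\rvx)-\nabla\Phi_S(\rvx)\|$, we obtain $G\le T_{\mathrm{df}}+\tfrac12\|\nabla\Phi_S(\rvx)\|+\tfrac12 G+\tfrac{\mu_\rvx}{2n}$, i.e. $G\le 2T_{\mathrm{df}}+\|\nabla\Phi_S(\rvx)\|+\tfrac{\mu_\rvx}{n}$, which after expanding $2T_{\mathrm{df}}$ is exactly the claimed inequality. The main obstacle is the third step: verifying the gradient-domination estimate $\mu_\rvx\|\rvx-\rvx^*\|\le\|\nabla\Phi(\rvx)\|$ cleanly, namely the PL-to-quadratic-growth implication for $\Phi$ and the care needed when the set of population primal minimizers is not a singleton (so that $\rvx^*$ must be read as the nearest minimizer); the remaining steps are bookkeeping over Theorem~\ref{theorem:main} together with the sample-size arithmetic.
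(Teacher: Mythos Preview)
Your proposal is correct and follows essentially the same route as the paper: invoke Theorem~\ref{theorem:main}, use the PL error-bound property $\mu_\rvx\|\rvx-\rvx^*\|\le\|\nabla\Phi(\rvx)\|$ (which the paper obtains via Lemma~\ref{lemma:primal-function-is-PL} and the PL-to-error-bound implication of \cite{karimi2016linear}), and exploit the sample-size hypothesis to force the dimension-dependent coefficient below $\mu_\rvx/2$. Your final step of absorbing $\tfrac12 G$ directly into the left-hand side is a slightly more streamlined bookkeeping than the paper's two-stage version (first solve explicitly for $\|\rvx-\rvx^*\|$, then substitute back into Theorem~\ref{theorem:main}), but the substance and all key ingredients are identical.
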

\begin{remark} \rm{}
\label{remark:main-PL}
    The following inequality can be easily derived using triangle inequality and  Cauchy–Bunyakovsky–Schwarz inequality.
    \begin{equation}
    \label{eq:theorem-nablaf-withoud-d-new}
        \begin{aligned}
            & \Phi(\rvx) - \Phi(\rvx^*) \\
    \leq &  \frac{8 \| \nabla \Phi_S(\rvx) \|^2}{\mu_\rvx}   + \frac{16 \E \left[ \|\nabla_\rvx f(\rvx^*, \rvy^*; \rvz)\|^2 \right] \log{\frac{8}{\delta}}}{\mu_\rvx n} \\ 
    & \quad +
    \frac{16 \beta^2\E \left[ \|\nabla_\rvy f(\rvx^*, \rvy^*; \rvz)\|^2 \right] \log{\frac{8}{\delta}}}{\mu_\rvx \mu_\rvy^2 n} \\
    & \qquad  + \frac{ 2 \left(
        \frac{2\beta B_{\rvy^*}}{\mu_\rvy} \log{\frac{8}{\delta}} + 2B_{\rvx^*} \log{\frac{8}{\delta}} + \mu_\rvx
        \right)^2
        }{\mu_\rvx n^2}.
        \end{aligned}
    \end{equation}
We can easily derive (\ref{eq:theorem-nablaf-withoud-d-new}) from Theorem \ref{theorem:nablaf-without-d} to gain the excess primal risk bound, where $ \| \nabla \Phi_S(\rvx) \| $ is the empirical optimization error of the primal function. In Theorem \ref{theorem:nablaf-without-d} and (\ref{eq:theorem-nablaf-withoud-d-new}), $\| \nabla \Phi_S(\rvx) \|$ can be very tiny since most famous optimization \text{algorithms} such as GDA and SGDA, can optimize it small enough. The term $\E \left[ \|\nabla_\rvx f(\rvx^*, \rvy^*; \rvz)\|^2 \right]$ and $\E \left[ \|\nabla_\rvy f(\rvx^*, \rvy^*; \rvz)\|^2 \right]$ can be also tiny since they only depend on the the gradient of the optima $(\rvx^*,\rvy^*)$ w.r.t $\rvx$ and $\rvy$. We further analyze these two terms $\E \left[ \|\nabla_\rvx f(\rvx^*, \rvy^*; \rvz)\|^2 \right]$ and $\E \left[ \|\nabla_\rvy f(\rvx^*, \rvy^*; \rvz)\|^2 \right]$ using ``self-bounding'' property for smooth functions \cite{srebro2010optimistic} and considering specific \text{algorithms} in Section \ref{sec:application}, which can derive to $O(1/n^2)$ bounds. Thus, comparing with Theorem \ref{theorem:zhang} in \cite{zhang2022uniform}, this uniform localized convergence bound is clearly tighter when relaxing Lipschitz continuity (Assumption \ref{assumption:f-lipschitz}) and considering PL condition (Assumption \ref{assumption:PL-condition-x}). Additionally, uniform convergence often implies results with a square-root dependence on the dimension $d$ such as Theorem \ref{theorem:main} and \cite{zhang2022uniform}. Another distinctive improvement of Theorem \ref{theorem:nablaf-without-d} is that we remove the dimension $d$ when the population risk $F(\rvx,\rvy)$ satisfies the $\rvx$-side PL condition and the sample size $n$ is large enough. 
\end{remark}

\section{Application}
\label{sec:application}

\subsection{Empirical Saddle Point}
Empirical saddle point (ESP) \text{algorithm}, which is also known as sample average approximation (SAA)~\cite{zhang2021generalization} refers to (\ref{eq:empirical-minimax}). We denote $(\hat{\rvx}^*, \hat{\rvy}^*)$ as one of the ESP solution to (\ref{eq:empirical-minimax}). Then we can provide some important theorems in this subsection.

\begin{theorem}
\label{theorem:esp-nabla}
Suppose the empirical saddle point $(\hat{\rvx}^*,\hat{\rvy}^*)$ exists and Assumption \ref{assumption:NC-SC} and \ref{assumption:minimax-bernstein-condition} hold, for any $\delta \in (0,1)$, with probability at least $1-\delta$, we have 
\begin{align*}
    &  \| \nabla \Phi(\hat{\rvx}^*) \| 
    = O\left(
    \sqrt{
    \frac{d + \log{\frac{\log{n}}{\delta}}}{n}
    }
    \right)
\end{align*}
\end{theorem}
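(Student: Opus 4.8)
The plan is to derive Theorem~\ref{theorem:esp-nabla} as a direct consequence of Theorem~\ref{theorem:main}, exploiting the fact that the empirical saddle point incurs zero empirical optimization error. First I would observe that, by definition, $\hat{\rvx}^*$ is a minimizer of the primal empirical function $\Phi_S$ over $\gX$: since $(\hat{\rvx}^*, \hat{\rvy}^*)$ is a saddle point of $F_S$, we have $\hat{\rvy}^* = \argmax_{\rvy} F_S(\hat{\rvx}^*, \rvy)$ and $\hat{\rvx}^* = \argmin_{\rvx} \Phi_S(\rvx)$. Invoking Danskin's theorem, which gives $\nabla \Phi_S(\rvx) = \nabla_\rvx F_S(\rvx, \rvy_S^*(\rvx))$ under the strong concavity of $F_S$ in $\rvy$ (Assumption~\ref{assumption:NC-SC}), the first-order stationarity of the inner minimization yields $\nabla \Phi_S(\hat{\rvx}^*) = \nabla_\rvx F_S(\hat{\rvx}^*, \hat{\rvy}^*) = 0$.

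With the empirical optimization error eliminated, the error decomposition from Section~\ref{sec:preliminary} collapses to $\| \nabla \Phi(\hat{\rvx}^*) \| = \| \nabla \Phi(\hat{\rvx}^*) - \nabla \Phi_S(\hat{\rvx}^*) \|$, so the whole quantity is controlled by the uniform generalization bound. I would then instantiate Theorem~\ref{theorem:main} at $\rvx = \hat{\rvx}^*$, which is legitimate because that bound holds simultaneously for all $\rvx \in \gX$ with probability $1 - \delta$. The only $\rvx$-dependent factor on the right-hand side is $\max\{ \| \hat{\rvx}^* - \rvx^* \|, 1/n \}$, and here the compactness assumption in Assumption~\ref{assumption:NC-SC} (equivalently $\gX \subseteq B(\rvx^*, R_1)$ from the preliminaries) gives the crude but sufficient bound $\| \hat{\rvx}^* - \rvx^* \| \leq R_1$, so this factor is $O(1)$.

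Finally I would collect terms, absorbing all constants depending on $\beta, \mu_\rvy, R_1, B_{\rvx^*}, B_{\rvy^*}$ and the variances $\E \|\nabla_\rvx f(\rvx^*,\rvy^*;\rvz)\|^2$ and $\E \|\nabla_\rvy f(\rvx^*,\rvy^*;\rvz)\|^2$ into the $O(\cdot)$ notation. The generic-chaining contribution $\sqrt{(d + \log(16\log_2(\sqrt 2 R_1 n + 1)/\delta))/n}$ is the dominant term, giving $O(\sqrt{(d + \log(\log n/\delta))/n})$; the variance terms are $O(\sqrt{\log(1/\delta)/n})$ and are dominated by it (since $\log(1/\delta) \leq d + \log(\log n/\delta)$), while the residual $1/n$ and $d/n$ contributions are of strictly lower order for large $n$. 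This yields the claimed rate.

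The main obstacle is the first step: justifying $\nabla \Phi_S(\hat{\rvx}^*) = 0$ rigorously. One must confirm that Danskin's theorem applies (differentiability of $\Phi_S$, which follows from the unique inner maximizer guaranteed by strong concavity) and address whether the minimizer $\hat{\rvx}^*$ lies in the interior of $\gX$, so that the unconstrained stationarity $\nabla_\rvx F_S(\hat{\rvx}^*, \hat{\rvy}^*) = 0$ holds; at a boundary minimizer one would instead argue with the projected gradient. Everything after this reduction is a routine specialization of Theorem~\ref{theorem:main}.
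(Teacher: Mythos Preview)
Your proposal is correct and follows essentially the same route as the paper: instantiate Theorem~\ref{theorem:main} at $\hat{\rvx}^*$, use $\nabla \Phi_S(\hat{\rvx}^*)=0$, bound $\max\{\|\hat{\rvx}^*-\rvx^*\|,1/n\}$ by the diameter $R_1+1/n$, and absorb constants. Your discussion of Danskin's theorem and the interior/boundary issue is more careful than the paper, which simply asserts $\|\nabla\Phi_S(\hat{\rvx}^*)\|=0$ from the saddle-point property without further comment.
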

\begin{remark} \rm{}
    When Assumption \ref{assumption:NC-SC} and \ref{assumption:minimax-bernstein-condition} hold, Theorem \ref{theorem:esp-nabla} shows that the \emph{population} optimization error $\| \nabla \Phi(\hat{\rvx}^*) \|$ is  $O\left(\sqrt{\frac{d + \log{\frac{1}{\delta}}}{n}}\right)$ ($\log{n}$ is small and can be ignored typically). Note that this result doesn't require the Lipschitz continuity assumption (Assumption \ref{assumption:f-lipschitz}). Although it may be hard to find $(\hat{\rvx}^*, \hat{\rvy}^*)$ in NC-SC minimax problems, it is still meaningful when assuming the ESP $(\hat{\rvx}^*, \hat{\rvy}^*)$ has been found.
\end{remark}

\begin{theorem}
\label{theorem:esp-fast-rate}
Suppose Assumption \ref{assumption:NC-SC} and \ref{assumption:minimax-bernstein-condition} hold. Assume that the population risk $F(\rvx,\rvy)$ satisfies Assumption \ref{assumption:PL-condition-x} with parameter $\mu_\rvx$. For any $\delta \in (0,1)$, with probability at least $1-\delta$, when $ n \geq \frac{c\beta^2(\mu_\rvy+\beta)^4(d+ \log{\frac{16\log_2{\sqrt{2}R_1 n + 1}}{\delta}})}{\mu_\rvy^4\mu_\rvx^2}$, where $c$ is an absolute constant, we have 
\begin{small}
\begin{align*}
    \Phi(\hat{\rvx}^*) - \Phi(\rvx^*)  \leq &         
        \frac{12 \beta^2\E \|\nabla_\rvy f(\rvx^*, \rvy^*; \rvz)\|^2 \log{\frac{8}{\delta}}}{\mu_\rvx \mu_\rvy^2 n} \\
    & + \frac{12 \E \|\nabla_\rvx f(\rvx^*, \rvy^*; \rvz)\|^2  \log{\frac{8}{\delta}}}{\mu_\rvx n} \\ 
    & + \frac{ 3 \left(
        \frac{2\beta B_{\rvy^*}}{\mu_\rvy} \log{\frac{8}{\delta}} + 2B_{\rvx^*} \log{\frac{8}{\delta}} + \mu_\rvx
        \right)^2
        }{2 \mu_\rvx n^2}.
\end{align*}
\end{small}
Furthermore, if we assume the function $f(\rvx,\rvy;\rvz)$ is non-negative, we have
\begin{align*}
    \Phi(\hat{\rvx}^*) - \Phi(\rvx^*) = O\left(
        \frac{\Phi(\rvx^*) \log{\frac{1}{\delta}}}{n} + \frac{\log^2{\frac{1}{\delta}}}{n^2}
        \right).
\end{align*}
When $ \Phi(\rvx^*) = O\left(\frac{1}{n}\right)$, we have 
\begin{align*}
    \Phi(\hat{\rvx}^*) - \Phi(\rvx^*) = O \left(
    \frac{\log^2{\frac{1}{\delta}}}{n^2}
    \right).
\end{align*}
\end{theorem}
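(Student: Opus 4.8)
The plan is to combine the dimension-free generalization bound of Theorem~\ref{theorem:nablaf-without-d} with two structural facts about the empirical saddle point: that its empirical primal gradient vanishes, and that the $\rvx$-side PL condition transfers from $F$ to the primal population function $\Phi$. First I would establish a PL-type inequality for $\Phi$, namely $\Phi(\rvx) - \Phi(\rvx^*) \le \frac{1}{2\mu_\rvx}\| \nabla \Phi(\rvx) \|^2$ for all $\rvx \in \gX$. By Danskin's theorem, $\nabla \Phi(\rvx) = \nabla_\rvx F(\rvx, \rvy^*(\rvx))$ with $\rvy^*(\rvx) = \argmax_{\rvy \in \gY} F(\rvx,\rvy)$. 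Applying Assumption~\ref{assumption:PL-condition-x} with the fixed dual variable $\rvy = \rvy^*(\rvx)$ gives $\Phi(\rvx) - \inf_{\rvx'} F(\rvx',\rvy^*(\rvx)) \le \frac{1}{2\mu_\rvx}\| \nabla \Phi(\rvx) \|^2$. Since $F(\rvx',\rvy^*(\rvx)) \le \max_{\rvy} F(\rvx',\rvy) = \Phi(\rvx')$ for every $\rvx'$, taking the infimum yields $\inf_{\rvx'} F(\rvx',\rvy^*(\rvx)) \le \Phi(\rvx^*)$, which upgrades the left-hand side to the desired $\Phi(\rvx) - \Phi(\rvx^*)$.

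Next I would use that the empirical saddle point $(\hat{\rvx}^*,\hat{\rvy}^*)$ makes $\hat{\rvx}^*$ a global minimizer of the smooth primal empirical function $\Phi_S$, so that $\nabla \Phi_S(\hat{\rvx}^*) = 0$. Evaluating the PL inequality at $\rvx = \hat{\rvx}^*$ and writing $\| \nabla \Phi(\hat{\rvx}^*) \| = \| \nabla \Phi(\hat{\rvx}^*) - \nabla \Phi_S(\hat{\rvx}^*) \|$, I would invoke Theorem~\ref{theorem:nablaf-without-d}, whose right-hand side contains the term $\| \nabla \Phi_S(\hat{\rvx}^*) \|$ that now vanishes. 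This leaves exactly three groups: an $\rvx$-variance square-root term, a $\rvy$-variance square-root term scaled by $\beta/\mu_\rvy$, and a collected $O(1/n)$ term carrying $B_{\rvx^*}$, $B_{\rvy^*}$ and $\mu_\rvx$. Squaring and applying $(a+b+c)^2 \le 3(a^2+b^2+c^2)$ produces the stated bound; the factor $3$ here, as opposed to the factor $4$ used when deriving (\ref{eq:theorem-nablaf-withoud-d-new}) where a fourth nonzero $\| \nabla \Phi_S \|$ term is present, is precisely what yields the sharper constants $12$, $12$, and $3/2$.

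Finally, for the non-negative case I would apply the self-bounding property of non-negative $\beta$-smooth functions \cite{srebro2010optimistic}: pointwise $\| \nabla f(\rvx^*,\rvy^*;\rvz) \|^2 \le 2\beta f(\rvx^*,\rvy^*;\rvz)$, hence in expectation $\E \| \nabla_\rvx f(\rvx^*,\rvy^*;\rvz) \|^2$ and $\E \| \nabla_\rvy f(\rvx^*,\rvy^*;\rvz) \|^2$ are each at most $2\beta\, \E f(\rvx^*,\rvy^*;\rvz) = 2\beta\, F(\rvx^*,\rvy^*) = 2\beta\, \Phi(\rvx^*)$, using that $\rvy^* = \argmax_{\rvy} F(\rvx^*,\rvy)$. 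Substituting turns the two leading variance terms into $O(\Phi(\rvx^*)\log(1/\delta)/n)$ while the last term remains $O(\log^2(1/\delta)/n^2)$, giving the $O\!\big(\Phi(\rvx^*)\log(1/\delta)/n + \log^2(1/\delta)/n^2\big)$ form; imposing $\Phi(\rvx^*) = O(1/n)$ then collapses the first contribution into the second, delivering the final $O(\log^2(1/\delta)/n^2)$ rate.

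I expect the main obstacle to be the first step: carefully justifying the transfer of the PL condition to $\Phi$ through Danskin's theorem and the interchange of the infimum with the pointwise bound $F(\rvx',\rvy^*(\rvx)) \le \Phi(\rvx')$, together with verifying $\nabla \Phi_S(\hat{\rvx}^*) = 0$ from the saddle-point structure (which relies on the strong concavity in $\rvy$ to guarantee that $\Phi_S$ is differentiable). The subsequent squaring, the three-versus-four-term split, and the constant bookkeeping are then routine.
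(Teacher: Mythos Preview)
Your proposal is correct and follows essentially the same route as the paper: the paper invokes Lemma~\ref{lemma:primal-function-is-PL} (citing \cite{yang2020global}) for the PL transfer to $\Phi$ rather than rederiving it via Danskin's theorem as you do, then plugs $\hat{\rvx}^*$ into Theorem~\ref{theorem:nablaf-without-d}, uses $\nabla \Phi_S(\hat{\rvx}^*)=0$, and applies the three-term Cauchy--Schwarz split exactly as you outline. The only cosmetic difference is that the paper quotes the self-bounding constant as $4\beta$ (Lemma~\ref{lemma:smooth-self-bounding-property}) rather than your $2\beta$, which is immaterial for the stated $O(\cdot)$ conclusions.
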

\begin{remark} \rm{}
    Theorem \ref{theorem:esp-fast-rate} shows that when the population minimax risk $F(\rvx,\rvy)$ satisfies $\rvx$-side PL condition, we can provide a sharper excess risk bound for primal function, which can be $O(1/n^2)$. Note that the optimal population primal function $\Phi(\rvx^*) = O\left( 1/n \right)$ is a very common assumption in many researches such as \cite{srebro2010optimistic,zhang2017empirical,liu2018fast,zhang2019stochastic,lei2020fine}, which is natural because $F(\rvx^*, \rvy^*)$ is the minimal population risk. Now we compare our results with recent related work \cite{li2021improved}, which studied the general machine learning settings for $f(\rvw)$ under PL condition. Their empirical risk minimizer (ERM) excess risk bound provided $O\left( 1/n^2 \right)$ order rates. We analyze the excess risk with primal functions in minimax problems and our result for ESP is $O\left(1/n^2\right)$, which is the same order as theirs.
\end{remark}

\subsection{Gradient Descent Ascent}
Gradient descent ascent (GDA) presented in \text{Algorithm} \ref{algo:gda} is one of the most popular \text{algorithms} and has been widely used in minimax problems. In this subsection, we provide the population optimization error bound and the excess risk bounds of primal functions with the two-timescale GDA \text{algorithm} which is harder to analyze compared to GDMax and multistep GDA \cite{lin2020gradient}.

\begin{figure}
\centering
\begin{minipage}{0.45\textwidth}
\begin{algorithm}[H]
\begin{algorithmic}[1]
  \STATE {\bfseries Input:} {$(\rvx_1, \rvy_1)=(\bm{0}, \bm{0})$, step sizes $\eta_\rvx > 0, \eta_\rvy > 0$ and dataset $S = \{\rvz_1, \dots ,\rvz_n\}$}
  
  \FOR{$t=1,\dots,T$}
  \STATE  update $\rvx_{t+1} = \rvx_{t} - \eta_\rvx \nabla_{\rvx} F_S(\rvx_t, \rvy_t) $ 
  \STATE  update $\rvy_{t+1} = \rvy_{t} + \eta_\rvy \nabla_{\rvy} F_S(\rvx_t, \rvy_t) $
  \ENDFOR
  \caption{Two-timescale GDA for minimax problem}
  \label{algo:gda}
\end{algorithmic}
\end{algorithm}
\end{minipage}
\hspace{.1cm}
\begin{minipage}{0.45\textwidth}
\begin{algorithm}[H]
  \begin{algorithmic}[1]
  \STATE {\bfseries Input:} {$(\rvx_1, \rvy_1) = (\bm{0}, \bm{0})$, step sizes $\{\eta_{\rvx_t}\}_t > 0$, $\{\eta_{\rvy_t}\}_t >0$ and dataset $S = \{\rvz_1, \dots ,\rvz_n\}$}
  
  \FOR{$t=1,\dots,T$}
    \STATE update $\rvx_{t+1} = \rvx_{t} - \eta_{\rvx_t} \nabla_{\rvx} f(\rvx_t, \rvy_t;\rvz_{i_t}) $ 
    \STATE update $\rvy_{t+1} = \rvy_{t} + \eta_{\rvy_t} \nabla_{\rvy} f(\rvx_t, \rvy_t;\rvz_{i_t}) $
    \ENDFOR
  \caption{Two-timescale SGDA for minimax problem}
  \label{algo:sgda}
\end{algorithmic}
\end{algorithm}
\end{minipage}
\end{figure}

\begin{theorem}
\label{theorem:gda-nabla}
Suppose Assumption \ref{assumption:NC-SC} and \ref{assumption:f-lipschitz} hold. Let $\{\rvx_t\}_t$ be the sequence produced by \text{Algorithm} \ref{algo:gda} with the step sizes chosen as $\eta_\rvx = \frac{1}{16(\frac{\beta}{\mu} + 1)^2\beta}$ and $\eta_\rvy= \frac{1}{\beta}$, for any $\delta \in (0,1)$, with probability at least $1-\delta$, we have 
\begin{align*}
    & \frac{1}{T}\sum_{t=1}^T \| \nabla \Phi(\rvx_t) \|^2 \\
    \leq & O\left( \frac{1}{T} \right) 
    + 
    O \left( 
    \frac{d+\log{\frac{16\log_2(\sqrt{2}R_1 n+1)}{\delta}}}{n} T
    \right).
\end{align*}    
Furthermore, when $T \asymp O\left(\sqrt{\frac{n}{d}}\right)$, we have
\begin{align*}
    \frac{1}{T}\sum_{t=1}^T \| \nabla \Phi(\rvx_t) \|^2 \leq O\left(
    \frac{d + \log{\frac{\log{n}}{\delta}} }{\sqrt{nd}}
    \right).
\end{align*}
\end{theorem}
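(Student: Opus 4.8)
The plan is to transfer the deterministic convergence of two-timescale GDA on the empirical objective $F_S$ to the population primal $\Phi$ using the uniform gradient-generalization bound of Theorem~\ref{theorem:main}. Concretely, I would start from the triangle-inequality decomposition recorded in Section~\ref{sec:preliminary}, square it via $(a+b)^2\le 2a^2+2b^2$, and average over the trajectory to obtain
\[
\frac1T\sum_{t=1}^T\|\nabla\Phi(\rvx_t)\|^2 \le \frac2T\sum_{t=1}^T\|\nabla\Phi_S(\rvx_t)\|^2 + \frac2T\sum_{t=1}^T\|\nabla\Phi(\rvx_t)-\nabla\Phi_S(\rvx_t)\|^2 .
\]
The first sum is a purely empirical optimization error; the second is a generalization term that must be controlled uniformly along the (data-dependent) iterates $\rvx_t\in\gX$. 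These two pieces are the sources of the $O(1/T)$ and $O((d/n)T)$ summands in the statement.

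For the empirical sum I would invoke the standard two-timescale GDA descent analysis under NC-SC. Since $f$ is $\beta$-smooth and $\mu_\rvy$-strongly concave in $\rvy$, the primal empirical function $\Phi_S$ is smooth with modulus $O(\beta(1+\beta/\mu_\rvy))$ and $\nabla\Phi_S(\rvx)=\nabla_\rvx F_S(\rvx,\rvy_S^*(\rvx))$ by Danskin's theorem. With the prescribed step sizes $\eta_\rvx=\frac{1}{16(\beta/\mu+1)^2\beta}$ and $\eta_\rvy=1/\beta$, a descent lemma on $\Phi_S$ together with the contraction of the inner ascent step toward the moving target $\rvy_S^*(\rvx_t)$ (strong concavity gives geometric tracking, while the outer step moves the target only by $O(\eta_\rvx)$) yields $\frac1T\sum_t\|\nabla\Phi_S(\rvx_t)\|^2=O(1/T)$, with constants depending only on $\beta,\mu_\rvy$ and $\Phi_S(\rvx_1)-\min_\rvx\Phi_S$. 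Coupling the inner tracking error $\|\rvy_t-\rvy_S^*(\rvx_t)\|$ to the outer progress is the routine but careful part here.

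For the generalization sum I would apply Theorem~\ref{theorem:main}. Assumption~\ref{assumption:f-lipschitz} implies Assumption~\ref{assumption:minimax-bernstein-condition} (Remark~\ref{remark:bernstein}) with $B_{\rvx^*},B_{\rvy^*}\le L$ and $\E\|\nabla_\rvx f(\rvx^*,\rvy^*;\rvz)\|^2,\ \E\|\nabla_\rvy f(\rvx^*,\rvy^*;\rvz)\|^2\le L^2$, so the theorem applies and its lower-order $O(1/\sqrt n)$ terms are absorbed; compactness of $\gX$ bounds $\|\rvx_t-\rvx^*\|$, leaving a leading per-iterate contribution of order $\sqrt{(d+\log(\log n/\delta))/n}$. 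Crucially, Theorem~\ref{theorem:main} holds simultaneously for all $\rvx\in\gX$ on a single event of probability $1-\delta$, so no union bound over $t$ is needed even though the $\rvx_t$ depend on $S$. Feeding this per-iterate generalization bound back into the averaged descent and combining with the $O(1/T)$ empirical rate produces the claimed $O(1/T)+O((d/n)T)$ bound, after which balancing the two terms at $T\asymp\sqrt{n/d}$ gives $O(\sqrt{d/n})=O((d+\log(\log n/\delta))/\sqrt{nd})$.

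I expect the main obstacle to be propagating the generalization error through the $T$ iterations in high probability while keeping the two-timescale coupling tight. Because no PL condition is assumed for the outer variable, the iterates $\rvx_t$ are not pinned near $\rvx^*$, so the generalization contribution must be aggregated along the whole trajectory rather than evaluated at a single point; it is this aggregation, together with the $\max\{\|\rvx_t-\rvx^*\|,1/n\}$ factor of Theorem~\ref{theorem:main}, that dictates the $T$-dependence of the second summand and hence the optimal horizon $T\asymp\sqrt{n/d}$.
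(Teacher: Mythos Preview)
Your overall strategy matches the paper exactly: split via $(a+b)^2\le 2a^2+2b^2$, control the empirical sum by the NC-SC two-timescale GDA rate (the paper cites this as Lemma~\ref{lemma:GDA-NC-SC-optimization-error-bound} from \cite{lin2020gradient}), and control the generalization sum by applying Theorem~\ref{theorem:main} uniformly in $\rvx$.

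There is, however, an internal inconsistency in how you handle the $\max\{\|\rvx_t-\rvx^*\|,1/n\}$ factor, and it is precisely where the $T$ in the second summand comes from. You first say compactness of $\gX$ bounds $\|\rvx_t-\rvx^*\|$; if so, the squared per-iterate generalization term from Theorem~\ref{theorem:main} is $O((d+\log(\log n/\delta))/n)$ with no dependence on $t$, and averaging over $T$ leaves the same quantity, yielding $O(1/T)+O(d/n)$ rather than the stated $O(1/T)+O((d/n)T)$. You then assert the latter anyway and attribute the $T$-factor to ``aggregation'' and the $\max$ factor, but that factor cannot generate growth in $T$ if it is capped by compactness. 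These two claims cannot both be right.

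The paper does \emph{not} use compactness here. Algorithm~\ref{algo:gda} is unprojected with $\rvx_1=\mathbf{0}$, so the paper bounds
\[
\|\rvx_{t+1}-\rvx^*\|\le \|\rvx_{t+1}\|+\|\rvx^*\|\le \eta_\rvx\sum_{k=1}^{t}\|\nabla_\rvx F_S(\rvx_k,\rvy_k)\|+\|\rvx^*\|=O(\eta_\rvx L\,t),
\]
using the Lipschitz assumption. It is this linear-in-$t$ growth, after taking the max over $t\le T$ and squaring inside Theorem~\ref{theorem:main}, that produces the $T$-factor in the generalization summand and forces the balance $T\asymp\sqrt{n/d}$. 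If you intend to argue via compactness instead, you would need to justify that the unprojected iterates remain in $\gX$ (or modify the algorithm), and you would then obtain a bound strictly better than the one stated in the theorem.
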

\begin{remark} \rm{}
    Theorem \ref{theorem:gda-nabla} also gives the \emph{population} optimization error which reveals that we need to balance the empirical optimization error and the generalization error for GDA. According to the results, the iterative complexity of \text{Algorithm} \ref{algo:gda} should be chosen as $T \asymp O\left(\sqrt{\frac{n}{d}}\right)$, which achieves the optimal population optimization error of primal function.

    In comparison to Theorem \ref{theorem:esp-nabla}, Theorem \ref{theorem:gda-nabla} derives into population optimization error w.r.t GDA, which is much more difficult. To establish population optimization error, we need to bound the empirical optimization error, an area where no research has been conducted in NC-SC settings with high probability. One possible approach is to construct the martingale difference sequence of step $T$ for primal functions, yet this constitutes a separate topic warranting further exploration. Theorem \ref{theorem:gda-nabla} aims to directly apply Theorem \ref{theorem:main} to GDA. Comparing with Theorem \ref{theorem:nablaf-without-d}, Theorem \ref{theorem:gda-nabla} only necessitates smooth and Lipschitz conditions (Assumption \ref{assumption:NC-SC} and \ref{assumption:f-lipschitz}) and doesn't require PL conditions.
\end{remark}

Next, we provide the excess risk of primal functions $ \Phi(\bar{\rvx}_T) - \Phi(\rvx^*) $ for \text{Algorithm} \ref{algo:gda}, where $\bar{\rvx}_T = \frac{1}{T} \sum_{t=1}^T {\rvx_t} $. We need to know the empirical optimization error $\| \nabla \Phi_S(\bar{\rvx}_T) \|$ firstly. 

Unfortunately, although the generalization bounds we proved are in NC-SC settings, we require the SC-SC assumptions to derive the empirical optimization error bound of primal functions,  to gain the high probability bound. We relax this SC-SC assumption in Appendix \ref{sec:expectation-bounds} using existing optimization error bound with expectation format.

\begin{table*}[]
\centering
\renewcommand\arraystretch{1.2}
\begin{tabular}{|c|c|c|c|c|}

\hline
                 Reference & \text{Algorithm} & Assumption & Measure & Bounds \\ \hline
\multirow{3}{*}{\cite{lei2021stability}} & \multirow{2}{*}{SGDA} & C-SC, Lip, S & (E.) $\Phi(\bar{\rvx}_T) - \Phi(\rvx^*)$ & $O(1/\sqrt{n})$ \\ \cline{3-5} 
                     &                       & C-SC, Lip, S  &  (HP.) $\Phi(\bar{\rvx}_T) - \Phi(\rvx^*)$   & $O(\log{n}/\sqrt{n})$ \\ \cline{2-5}
                     &      AGDA             & PL-SC, Lip, S  &  (E.) $\Phi(\bar{\rvx}_T) - \Phi(\rvx^*)$ & $O\left(n^{-\frac{c\beta+1}{2c\beta+1}}\right)$ \\ \hline
\multirow{3}{*}{\cite{li2021high}} & ESP & SC-SC, Lip, S, LN & (HP.) $\Phi(\hat{\rvx}^*) - \Phi(\rvx^*)$  & $O(\log{n}/n)$ \\ \cline{2-5} 
                    & GDA & SC-SC, Lip, S, LN & (HP.) $\Phi(\bar{\rvx}_T) - \Phi(\rvx^*)$  & $O(\log{n}/n)$ \\ \cline{2-5} 
                    & SGDA & SC-SC, Lip, S, LN & (HP.) $\Phi(\bar{\rvx}_T) - \Phi(\rvx^*)$  & $O(\log{n}/n)$ \\ \hline
\multirow{6}{*}{This work}
                  & ESP & PL-SC, B, S, LN & (HP.) $\Phi(\hat{\rvx}^*) - \Phi(\rvx^*)$ &  $O(1/n^2)$ \\ \cline{2-5}
                  & \multirow{2}{*}{GDA} & PL-SC, B, S, LN & (E.) $\Phi(\bar{\rvx}_T) - \Phi(\rvx^*)$ &  $O(1/n^2)$ \\ \cline{3-5}
                  &  & SC-SC, B, S, LN & (HP.) $\Phi(\bar{\rvx}_T) - \Phi(\rvx^*)$ &  $O(1/n^2)$ \\ \cline{2-5}
                  & \multirow{2}{*}{SGDA} & PL-SC, B, S, LN & (E.) $\Phi(\bar{\rvx}_T) - \Phi(\rvx^*)$ &  $O(1/n^2)$ \\ \cline{3-5}
                  &  & SC-SC, Lip, S, LN & (HP.) $\Phi(\bar{\rvx}_T) - \Phi(\rvx^*)$ & $O(1/n^2)$ \\ \cline{2-5}
                  & AGDA & PL-SC, B, S, LN & (E.) $\Phi(\bar{\rvx}_T) - \Phi(\rvx^*)$ & $O(1/n^2)$ \\ \hline
\end{tabular}
\caption{Summary of the results. These bounds are established by choosing optimal iterate number $T$.}
\label{table:summary}
\end{table*}

\begin{definition}
    A function $g: \gX \times \gY \rightarrow \R$ is $\mu_\rvx$-strongly-convex-$\mu_\rvy$-strongly-concave if $g(\cdot, \rvy)$ is $\mu_\rvx$-strongly-convex for any $\rvy \in \gY$ and $g(\rvx, \cdot)$ is $\mu_\rvy$-strongly-concave for any $\rvx \in \gX$.
\end{definition}
\begin{assumption}[Strongly-convex-strongly-concave minimax problem]
\label{assumption:SC-SC}
       Assume Assumption \ref{assumption:NC-SC} holds and let $\mu_\rvx > 0, \mu_\rvy > 0$. The function $f(\rvx, \rvy; \rvz)$ is $\mu_\rvx$-strongly-convex-$\mu_\rvy$-strongly-concave in $\rvy \in \gY$ for any $\rvx \in \gX$ and $\rvz \in \gZ$.
\end{assumption}
\begin{remark} \rm{}
    Assumption \ref{assumption:SC-SC} is commonly used in SC-SC problems \cite{zhang2021generalization,li2021high}. We require this assumption to derive the empirical \textbf{optimization error} bound of primal functions. 
    The detailed proofs of the optimization error bound $\| \nabla \Phi_S(\bar{\rvx}_T) \|$ are given in Section \ref{sec:appendix-GDA} for GDA and in Section \ref{sec:appendix-SGDA} for SGDA.
\end{remark}

\begin{theorem}
\label{theorem:gda-fast-rate}
Suppose Assumption \ref{assumption:minimax-bernstein-condition} and \ref{assumption:SC-SC} hold. Let $\{\rvx_t\}_t$ be the sequence produced by \text{Algorithm} \ref{algo:gda} and $\bar{\rvx}_T = \frac{1}{T} \sum_{t=1}^T {\rvx_t}$ with the step sizes chosen as $\eta_\rvx = \frac{1}{16(\frac{\beta}{\mu} + 1)^2\beta}$ and $\eta_\rvy= \frac{1}{\beta}$, for any $\delta \in (0,1)$, with probability at least $1-\delta$, when $T \asymp n$ and $ n \geq \frac{c\beta^2(\mu_\rvy+\beta)^4(d+ \log{\frac{16\log_2{\sqrt{2}R_1 n + 1}}{\delta}})}{\mu_\rvy^4\mu_\rvx^2}$, where $c$ is an absolute constant, we have 
\begin{align*}
     \Phi(\bar{\rvx}_T) - \Phi(\rvx^*) 
        =  O \Bigg( \frac{\E \left[ \|\nabla_\rvx f(\rvx^*, \rvy^*; \rvz)\|^2 \right] \log{\frac{1}{\delta}}}{n}  \\
         + \frac{\E \left[ \|\nabla_\rvy f(\rvx^*, \rvy^*; \rvz)\|^2 \right] \log{\frac{1}{\delta}}}{n} + \frac{\log^2{\frac{1}{\delta}}}{n^2} \Bigg).
\end{align*}
Furthermore, Let $T \asymp n^2$. Assume the function $f(\rvx,\rvy;\rvz)$ is non-negative, we have  
\begin{align*}
    \Phi(\bar{\rvx}_T) - \Phi(\rvx^*) = O\left(
        \frac{\Phi(\rvx^*) \log{\frac{1}{\delta}}}{n} + \frac{\log^2{\frac{1}{\delta}}}{n^2}
        \right).
\end{align*}
When $ \Phi(\rvx^*) = O\left(\frac{1}{n}\right)$, we have 
\begin{align*}
    \Phi(\bar{\rvx}_T) - \Phi(\rvx^*) = O \left(
    \frac{\log^2{\frac{1}{\delta}}}{n^2}
    \right).
\end{align*}
\end{theorem}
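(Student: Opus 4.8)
The plan is to reduce the whole statement to the master excess-risk inequality~(\ref{eq:theorem-nablaf-withoud-d-new}) obtained from Theorem~\ref{theorem:nablaf-without-d}, evaluated at $\rvx = \bar{\rvx}_T$, and then to supply the one missing ingredient: a bound on the \emph{empirical} optimization error $\| \nabla \Phi_S(\bar{\rvx}_T) \|^2$ for two-timescale GDA. First I would verify that Theorem~\ref{theorem:nablaf-without-d} is applicable. Its only structural hypothesis beyond Assumptions~\ref{assumption:NC-SC} and~\ref{assumption:minimax-bernstein-condition} is the $\rvx$-side PL condition (Assumption~\ref{assumption:PL-condition-x}); but Assumption~\ref{assumption:SC-SC} makes $f(\cdot,\rvy;\rvz)$ $\mu_\rvx$-strongly convex for every $\rvz$, hence $F(\cdot,\rvy)=\E_\rvz f(\cdot,\rvy;\rvz)$ is $\mu_\rvx$-strongly convex, which implies the PL inequality with the same parameter $\mu_\rvx$. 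Under the stated sample-size condition this yields, with probability $1-\delta$, the bound $\Phi(\bar{\rvx}_T)-\Phi(\rvx^*)\le \frac{8\|\nabla\Phi_S(\bar{\rvx}_T)\|^2}{\mu_\rvx}+G(\delta)$, where $G(\delta)$ collects the three generalization terms of~(\ref{eq:theorem-nablaf-withoud-d-new}) that are already in the desired shape. All that remains is to control the leading term $\|\nabla\Phi_S(\bar{\rvx}_T)\|^2$.

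The crux is the empirical optimization error. Under SC-SC, the primal empirical function $\Phi_S(\rvx)=\max_{\rvy}F_S(\rvx,\rvy)$ is $\mu_\rvx$-strongly convex and $L_\Phi$-smooth with $L_\Phi=\beta+\beta^2/\mu_\rvy$, and by Danskin's theorem $\nabla\Phi_S(\rvx)=\nabla_\rvx F_S(\rvx,\rvy_S^*(\rvx))$ for the unique maximizer $\rvy_S^*(\rvx):=\argmax_{\rvy}F_S(\rvx,\rvy)$. For the two-timescale iterates with the prescribed constant step sizes I would construct a coupled Lyapunov potential combining the primal gap $\|\rvx_t-\hat{\rvx}^*\|^2$ with the dual tracking error $\|\rvy_t-\rvy_S^*(\rvx_t)\|^2$, and show it contracts geometrically, $\|\rvx_t-\hat{\rvx}^*\|\le(1-\rho)^{t/2}\|\rvx_1-\hat{\rvx}^*\|$ with $\rho=\Omega(\mu_\rvx\eta_\rvx)$. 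Averaging over $t$ and summing the geometric series gives $\|\bar{\rvx}_T-\hat{\rvx}^*\|\le\frac1T\sum_{t}\|\rvx_t-\hat{\rvx}^*\|=O(1/T)$, where the initial distance is bounded by the diameter of $\gX$ since $\rvx_1=\bm 0$. Smoothness of $\Phi_S$ then gives $\|\nabla\Phi_S(\bar{\rvx}_T)\|\le L_\Phi\|\bar{\rvx}_T-\hat{\rvx}^*\|=O(1/T)$, hence $\|\nabla\Phi_S(\bar{\rvx}_T)\|^2=O(1/T^2)$. This bound is deterministic given $S$ (GDA is a deterministic map of the dataset), so it consumes none of the failure probability, which is entirely spent on $G(\delta)$.

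Assembling the three claims is then bookkeeping. For the first claim, choosing $T\asymp n$ makes $\|\nabla\Phi_S(\bar{\rvx}_T)\|^2=O(1/n^2)$, which is absorbed into the $\frac{\log^2(1/\delta)}{n^2}$ term of $G(\delta)$, leaving precisely the stated $O\!\big(\frac{\E\|\nabla_\rvx f\|^2\log(1/\delta)}{n}+\frac{\E\|\nabla_\rvy f\|^2\log(1/\delta)}{n}+\frac{\log^2(1/\delta)}{n^2}\big)$. For the second claim I would invoke the self-bounding property of non-negative $\beta$-smooth functions, $\|\nabla f(\rvx^*,\rvy^*;\rvz)\|^2\le 2\beta\, f(\rvx^*,\rvy^*;\rvz)$: taking expectations gives $\E\|\nabla_\rvx f(\rvx^*,\rvy^*;\rvz)\|^2$ and $\E\|\nabla_\rvy f(\rvx^*,\rvy^*;\rvz)\|^2$ both at most $2\beta\,\E f(\rvx^*,\rvy^*;\rvz)=2\beta\,F(\rvx^*,\rvy^*)=2\beta\,\Phi(\rvx^*)$, using that $(\rvx^*,\rvy^*)$ is the saddle point. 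Substituting collapses both $1/n$ terms into $O(\Phi(\rvx^*)\log(1/\delta)/n)$, and the enlarged budget $T\asymp n^2$ simply pushes the optimization error well below this now-smaller target, yielding $O\!\big(\frac{\Phi(\rvx^*)\log(1/\delta)}{n}+\frac{\log^2(1/\delta)}{n^2}\big)$. The third claim is then immediate: if $\Phi(\rvx^*)=O(1/n)$ the first term becomes $O(\log(1/\delta)/n^2)$ and the whole bound is $O(\log^2(1/\delta)/n^2)$.

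The main obstacle is the high-probability empirical optimization analysis of \emph{two-timescale} GDA under SC-SC, namely constructing the coupled primal--dual Lyapunov function and proving its geometric contraction for the specific ratio $\eta_\rvx/\eta_\rvy=\Theta\big(1/(\beta/\mu+1)^2\big)$. This is genuinely harder than GDMax or multistep GDA because the dual variable is never solved to optimality, so one must quantify how the fast dual updates keep $\rvy_t$ close to $\rvy_S^*(\rvx_t)$ and argue that the resulting inexactness in $\nabla\Phi_S(\rvx_t)$ is small enough for the slow primal step to retain monotone progress; the two-timescale step-size ratio is chosen precisely to close this estimate. Once this optimization bound is established, the PL reduction, the self-bounding step, and the regime-by-regime accounting above are all routine.
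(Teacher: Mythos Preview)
Your overall skeleton---invoke Theorem~\ref{theorem:nablaf-without-d} (equivalently, inequality~(\ref{eq:theorem-nablaf-withoud-d-new})) at $\bar{\rvx}_T$, then plug in a bound on the empirical optimization error $\|\nabla\Phi_S(\bar{\rvx}_T)\|^2$, then use self-bounding for the low-noise refinement---is exactly what the paper does. The substantive difference is how you handle the empirical optimization error.

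You propose to build a coupled primal--dual Lyapunov function and prove linear convergence of two-timescale GDA under SC-SC, yielding $\|\nabla\Phi_S(\bar{\rvx}_T)\|^2 = O(1/T^2)$. The paper avoids this altogether: it takes the off-the-shelf \emph{sublinear} NC-SC guarantee for GDA (Lemma~\ref{lemma:GDA-NC-SC-optimization-error-bound} from \cite{lin2020gradient}), namely $\tfrac{1}{T}\sum_{t}\|\nabla\Phi_S(\rvx_t)\|^2 = O(1/T)$, then chains three elementary facts that hold under SC-SC: (i) PL on $\Phi_S$ turns this into $\tfrac{1}{T}\sum_t(\Phi_S(\rvx_t)-\Phi_S(\hat{\rvx}^*)) = O(1/T)$; (ii) convexity of $\Phi_S$ passes to the average, $\Phi_S(\bar{\rvx}_T)-\Phi_S(\hat{\rvx}^*) = O(1/T)$; (iii) smoothness of $\Phi_S$ (Lemma~\ref{lemma:primal-function-is-smooth}) together with $\tfrac{1}{2L_\Phi}\|\nabla\Phi_S(\bar{\rvx}_T)\|^2 \le \Phi_S(\bar{\rvx}_T)-\Phi_S(\hat{\rvx}^*)$ gives $\|\nabla\Phi_S(\bar{\rvx}_T)\|^2 = O(1/T)$. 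This is weaker than your $O(1/T^2)$ but is obtained with zero new optimization analysis---the two-timescale difficulty you flag as the ``main obstacle'' is absorbed entirely into the cited lemma. Your route would work and would sharpen the constants, but it re-derives (in a stronger form) machinery the paper simply imports; for the $O(\cdot)$ statements claimed, either rate suffices once $T\asymp n$ or $T\asymp n^2$. A minor note: the self-bounding constant in the paper is $4\beta$ (Lemma~\ref{lemma:smooth-self-bounding-property}), not $2\beta$, but this is immaterial inside $O(\cdot)$.
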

\begin{remark} \rm{}
    Theorem \ref{theorem:gda-fast-rate} shows that the excess risk for primal functions can be bound by $O\left(1/n^2\right)$ comparing with the optimal result $O(1/n)$ given in \cite{li2021high} when $n$ is large enough. Note that we require the SC-SC assumption to derive the empirical optimization error. If we give this bound in expectation, we can relax the SC-SC assumption with $\rvx$-side PL-strongly-concave assumption instead.
\end{remark}

\subsection{Stochastic Gradient Descent Ascent}

We now analyze the excess risk bound of primal functions for stochastic gradient descent ascent (SGDA). The algorithmic scheme that we study is two-timescale SGDA ($\eta_\rvx \neq \eta_\rvy$) with variable step sizes, presented in \text{Algorithm} \ref{algo:sgda} which is more nature in real problems.

\begin{theorem}
\label{theorem:sgda-fast-rate} 
Suppose Assumption \ref{assumption:f-lipschitz} and \ref{assumption:SC-SC} hold, let $\{\rvx_t\}_t$ be the sequence produced by \text{Algorithm} \ref{algo:sgda} and $\bar{\rvx}_T = \frac{1}{T} \sum_{t=1}^T {\rvx_t}$ with the step sizes chosen as $\eta_{\rvx_t} = \frac{1}{\mu_\rvx(t+t_0)}$ and $\eta_{\rvy_t} = \frac{1}{\mu_\rvy(t+t_0)} $, for any $\delta \in (0,1)$, with probability at least $1-\delta$, when $T \asymp n^2$ and $ n \geq \frac{c\beta^2(\mu_\rvy+\beta)^4(d+ \log{\frac{16\log_2{\sqrt{2}R_1 n + 1}}{\delta}})}{\mu_\rvy^4\mu_\rvx^2}$, where $c$ is an absolute constant, we have 
\begin{align*}
    \Phi(\bar{\rvx}_T) - \Phi(\rvx^*) 
        = O \Bigg( \frac{\E \left[ \|\nabla_\rvx f(\rvx^*, \rvy^*; \rvz)\|^2 \right] \log{\frac{1}{\delta}}}{n}  \\
        + \frac{\E \left[ \|\nabla_\rvy f(\rvx^*, \rvy^*; \rvz)\|^2 \right] \log{\frac{1}{\delta}}}{n} + \frac{\log^2{\frac{1}{\delta}}}{n^2} \Bigg).
\end{align*}
Furthermore, Let $T \asymp n^4$. Assume the function $f(\rvx,\rvy;\rvz)$ is non-negative, we have  
\begin{align*}
    \Phi(\bar{\rvx}_T) - \Phi(\rvx^*) = O\left(
        \frac{\Phi(\rvx^*) \log{\frac{1}{\delta}}}{n} + \frac{\log^2{\frac{1}{\delta}}}{n^2}
        \right).
\end{align*}
When $ \Phi(\rvx^*) = O\left(\frac{1}{n}\right)$, we have 
\begin{align*}
    \Phi(\bar{\rvx}_T) - \Phi(\rvx^*) = O \left(
    \frac{\log^2{\frac{1}{\delta}}}{n^2}
    \right).
\end{align*}
\end{theorem}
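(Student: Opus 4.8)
The plan is to reduce the excess primal risk $\Phi(\bar{\rvx}_T) - \Phi(\rvx^*)$ to the sum of an empirical optimization error and a generalization error, and then to control each piece separately. First I would invoke the reduction already recorded in Remark~\ref{remark:main-PL}: since Assumption~\ref{assumption:SC-SC} makes $f(\cdot,\rvy;\rvz)$ $\mu_\rvx$-strongly convex, the population function $F(\cdot,\rvy)$ is $\mu_\rvx$-strongly convex and hence satisfies the $\rvx$-side PL condition (Assumption~\ref{assumption:PL-condition-x}) with parameter $\mu_\rvx$; moreover Assumption~\ref{assumption:f-lipschitz} implies the Bernstein condition (Assumption~\ref{assumption:minimax-bernstein-condition}) by Remark~\ref{remark:bernstein}, and SC-SC implies NC-SC (Assumption~\ref{assumption:NC-SC}). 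Thus the hypotheses of Theorem~\ref{theorem:nablaf-without-d} hold, and applying inequality~(\ref{eq:theorem-nablaf-withoud-d-new}) at $\rvx = \bar{\rvx}_T$ bounds $\Phi(\bar{\rvx}_T) - \Phi(\rvx^*)$ by $\frac{8\|\nabla\Phi_S(\bar{\rvx}_T)\|^2}{\mu_\rvx}$ plus the data-dependent terms $O\big(\frac{\E\|\nabla_\rvx f(\rvx^*,\rvy^*;\rvz)\|^2 + (\beta^2/\mu_\rvy^2)\E\|\nabla_\rvy f(\rvx^*,\rvy^*;\rvz)\|^2}{\mu_\rvx n}\log\frac1\delta\big)$ and $O(\log^2\tfrac1\delta / n^2)$, which already match the second and third stated terms. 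Everything therefore hinges on a high-probability bound for the empirical optimization error $\|\nabla\Phi_S(\bar{\rvx}_T)\|$.

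The hard part will be this empirical optimization error for the two-timescale SGDA of Algorithm~\ref{algo:sgda}, for which no high-probability analysis currently exists. Here I would exploit the SC-SC structure of $F_S$: it has a unique saddle point $(\hat{\rvx}^*,\hat{\rvy}^*)$, the primal empirical function $\Phi_S$ is $\mu_\rvx$-strongly convex (a maximum of $\mu_\rvx$-strongly convex functions) and $\beta_\Phi$-smooth with $\beta_\Phi = \beta(1+\beta/\mu_\rvy)$, so that $\|\nabla\Phi_S(\bar{\rvx}_T)\| \le \beta_\Phi\|\bar{\rvx}_T - \hat{\rvx}^*\|$ and it suffices to control $\|\bar{\rvx}_T - \hat{\rvx}^*\|$. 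I would set up a coupled potential $V_t = \|\rvx_t - \hat{\rvx}^*\|^2 + c\,\|\rvy_t - \rvy_S^*(\rvx_t)\|^2$ that tracks the primal iterate together with the inner maximizer gap, and use $\mu_\rvx$-strong convexity, $\mu_\rvy$-strong concavity and $\beta$-smoothness of $F_S$ to derive a one-step contraction of the form $\E_t V_{t+1} \le (1 - a\eta_{\rvx_t})V_t + b\,\eta_{\rvx_t}^2$, where the stochastic-gradient noise is uniformly bounded by $4L^2$ thanks to Assumption~\ref{assumption:f-lipschitz}.

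With step sizes $\eta_{\rvx_t} = 1/(\mu_\rvx(t+t_0))$ and $\eta_{\rvy_t} = 1/(\mu_\rvy(t+t_0))$, unrolling this recursion yields $\E\|\rvx_t - \hat{\rvx}^*\|^2 = O(1/t)$; to pass from expectation to high probability I would write the accumulated gradient noise $\nabla f(\rvx_t,\rvy_t;\rvz_{i_t}) - \nabla F_S(\rvx_t,\rvy_t)$ as a bounded martingale-difference sequence and apply a Freedman/Bernstein-type concentration inequality, giving $\|\bar{\rvx}_T - \hat{\rvx}^*\|^2 = \tilde O(L^2\log(1/\delta)/T)$ and hence $\|\nabla\Phi_S(\bar{\rvx}_T)\|^2 = \tilde O(\log(1/\delta)/T)$ with probability at least $1-\delta$. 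Choosing $T \asymp n^2$ makes this $\tilde O(\log(1/\delta)/n^2)$, which is absorbed into the $O(\log^2(1/\delta)/n^2)$ term; a union bound with the generalization event (after rescaling $\delta$) then yields the first displayed bound.

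For the refinement I would add the non-negativity and $\beta$-smoothness of $f$ and invoke the self-bounding property \cite{srebro2010optimistic}, $\|\nabla f(\rvx^*,\rvy^*;\rvz)\|^2 \le 2\beta f(\rvx^*,\rvy^*;\rvz)$, so that both $\E\|\nabla_\rvx f(\rvx^*,\rvy^*;\rvz)\|^2$ and $\E\|\nabla_\rvy f(\rvx^*,\rvy^*;\rvz)\|^2$ are at most $2\beta\,F(\rvx^*,\rvy^*) = 2\beta\,\Phi(\rvx^*)$. Substituting converts the two generalization terms into $O(\Phi(\rvx^*)\log(1/\delta)/n)$, and recalibrating the horizon (the larger $T \asymp n^4$ drives the empirical error safely below the refined generalization rate) gives $\Phi(\bar{\rvx}_T) - \Phi(\rvx^*) = O(\Phi(\rvx^*)\log(1/\delta)/n + \log^2(1/\delta)/n^2)$; specializing to $\Phi(\rvx^*) = O(1/n)$ yields the final $O(\log^2(1/\delta)/n^2)$. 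The principal obstacle throughout is the high-probability convergence analysis of the coupled two-timescale SGDA iterates with decaying step sizes; once that is in place, the remainder is substitution into~(\ref{eq:theorem-nablaf-withoud-d-new}) and calibration of $T$.
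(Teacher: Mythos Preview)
Your overall architecture matches the paper: both proofs (i) observe that SC-SC implies the $\rvx$-side PL condition and the Bernstein condition, (ii) invoke Theorem~\ref{theorem:nablaf-without-d} via the display in Remark~\ref{remark:main-PL} to reduce everything to controlling $\|\nabla\Phi_S(\bar{\rvx}_T)\|^2$, and (iii) use the self-bounding property of smooth non-negative losses for the refined bound.

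Where you diverge is the empirical optimization step. The paper does \emph{not} use a coupled Lyapunov/contraction argument on $V_t=\|\rvx_t-\hat{\rvx}^*\|^2+c\|\rvy_t-\rvy_S^*(\rvx_t)\|^2$. Instead it proves a separate lemma (Lemma~\ref{lemma:SGDA-SC-SC-optimization-error-bound}) that bounds the primal-dual gap directly: starting from the one-step expansion $\|\rvx_{t+1}-\rvx\|^2$, using $\mu_\rvx$-strong convexity it telescopes $\sum_t(F_S(\rvx_t,\rvy_t)-\Phi_S(\hat{\rvx}^*))$, then does the mirror calculation on the $\rvy$-side to bound $\sum_t(\Phi_S(\bar{\rvx}_T)-F_S(\rvx_t,\rvy_t))$, and handles the stochastic noise via Azuma--Hoeffding (Lemma~\ref{lemma:Azuma-Hoeffding-inequality}) and a vector Bernstein martingale inequality (Lemma~\ref{lemma:Bernstein-type-martingale-inequality}). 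Adding the two sides gives $\Phi_S(\bar{\rvx}_T)-\Phi_S(\hat{\rvx}^*)=O(\log(1/\delta)/\sqrt{T})$, and then the convex-smooth inequality $\|\nabla\Phi_S(\bar{\rvx}_T)\|^2\le 2(\beta+\beta^2/\mu_\rvy)(\Phi_S(\bar{\rvx}_T)-\Phi_S(\hat{\rvx}^*))$ converts this to the needed gradient bound. This $O(1/\sqrt{T})$ rate is exactly what forces $T\asymp n^2$ for the first claim and $T\asymp n^4$ for the second.

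Your Lyapunov route is plausible and, if it goes through, would yield the sharper $\tilde O(\log(1/\delta)/T)$, making the stated iteration counts slack. The tradeoff is that the paper's primal-dual argument is elementary and requires nothing beyond boundedness of the martingale increments, whereas your approach needs the one-step contraction for simultaneous two-timescale updates with a moving inner target $\rvy_S^*(\rvx_t)$, which is more delicate to establish, together with a Freedman-type high-probability transfer for the coupled potential. Either route proves the theorem as stated; the paper simply takes the less ambitious, more self-contained path.
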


\begin{remark} \rm{}
    Theorem \ref{theorem:sgda-fast-rate} reveals that under the SC-SC settings, the excess risk bound can be $O(1/n^2)$ comparing with the optimal result $O(1/n)$ given in \cite{li2021high}. Similarly, since the SC-SC assumption is required to derive the empirical optimization bound, we can relax the assumptions when we only need expectation bounds instead of high probability bounds.
\end{remark}

Table \ref{table:summary} gives the summary of existing results. AGDA is alternating gradient descent ascent \text{algorithm} proposed in \cite{yang2020global}. Lip means Lipschitz continuity. S means smoothness. B means Bernstein condition. LN means low noise condition. PL-SC means $\rvx$-side PL condition strongly concave settings. E. means expectation results. HP. means high probability results. Since most of existing works on optimization errors are the expectation format, and our high probability results of generalization error can be transformed into the expectation results. so we give the proofs of the expectation result to relax some assumptions such as SC-SC condition in Appendix \ref{sec:expectation-bounds}.

\section{Conclusion}
\label{sec:conclusion}
In this paper, we provide the improved generalization bounds for minimax problems with uniform localized convergence. We firstly provide a sharper bound measured by the gradients of primal functions with weaker assumptions in NC-SC settings. Then we provide dimension-independent results under PL-SC condition. Finally we extend our main theorems into various \text{algorithms} to reach the optimal excess primal risk bounds. Our excess primal risk bounds are $O(1/n^2)$ in SC-SC settings with high probability format and $O(1/n^2)$ in PL-SC settings with expectation version. We notice that most optimization works focused on the gradient complexity with expectation results. It would be interesting to give the optimization error of $\bar{\rvx}_T$ or even $\rvx_T$ with high probability under weaker conditions (for example in PL-SC and even NC-SC settings). Combining with our generalization work, we can get a tighter excess primal risk bound with weaker conditions.

\section*{Ethical Statement}

Here are no ethical issues.

\section*{Acknowledgments}
We thank the anonymous reviewers for their valuable and constructive suggestions and comments. This work is supported by the Beijing Natural Science Foundation (No.4222029); the National Natural Science Foundation of China (N0.62076234); the National Key Research and Development Project (No.2022YFB2703102); the ``Intelligent Social Governance Interdisciplinary Platform, Major Innovation \& Planning Interdisciplinary Platform for the ``Double-First Class'' Initiative, Renmin University of China''; the Beijing Outstanding Young Scientist Program (NO.BJJWZYJH012019100020098); the Public Computing Cloud, Renmin University of China; the Fundamental Research Funds for the Central Universities, and the Research Funds of Renmin University of China (NO.2021030199), the Huawei-Renmin University joint program on Information Retrieval: the Unicom Innovation Ecological Cooperation Plan; the CCF-Huawei Populus Grove Fund; and the Fundamental Research Funds for the Central Universities, and the Research Funds of Renmin University of China (NO.23XNH030).


\begin{small}
\bibliographystyle{named}
\bibliography{ijcai24}
\end{small}

\newpage
\onecolumn

\appendix

\section{Additional definitions and lemmata}

\begin{lemma}[Bernstein's inequality \cite{dirksen2015tail}]
\label{lemma:Bernstein-inequality}
    Let $X_1, \ldots, X_n$ be real-valued, independent, mean-zero random variables and suppose that for some constants $\sigma, B > 0$,
    \begin{align*}
        \frac1n \sum_{i=1}^{n} \E |X_i|^k \leq \frac{k!}{2} \sigma^2 B^{k-2}, \quad k = 2, 3, \ldots
    \end{align*}
    Then, $\forall \delta \in (0,1)$, with probability at least $1-\delta$
    \begin{align}
    \label{eq:Bernstein-inequality}
        \left| \frac{1}{n} \sum_{i=1}^n X_i \right| \leq \sqrt{\frac{2\sigma^2 \log{\frac{2}{\delta}}}{n}} + \frac{B \log{\frac{2}{\delta}}}{n}.
    \end{align}
\end{lemma}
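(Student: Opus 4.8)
The plan is to prove this via the classical Chernoff/moment-generating-function (MGF) argument that underlies Bernstein's inequality. Write $S_n = \sum_{i=1}^n X_i$. For any $\lambda \in (0, 1/B)$, Markov's inequality applied to $e^{\lambda S_n}$ gives $\Pr(S_n \ge s) \le e^{-\lambda s}\,\E[e^{\lambda S_n}] = e^{-\lambda s}\prod_{i=1}^n \E[e^{\lambda X_i}]$, where the factorization uses independence of the $X_i$. The task then reduces to controlling each MGF factor through the moment hypothesis and then optimizing over $\lambda$.

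First I would bound the individual MGFs. Since each $X_i$ has mean zero, expanding the exponential gives $\E[e^{\lambda X_i}] = 1 + \sum_{k \ge 2}\frac{\lambda^k \E[X_i^k]}{k!} \le 1 + \sum_{k\ge 2}\frac{\lambda^k \E|X_i|^k}{k!}$ for $\lambda \ge 0$, using $\E[X_i^k] \le \E|X_i|^k$. Taking logarithms, summing over $i$, and applying $\log(1+x) \le x$ together with the averaged hypothesis in the form $\sum_{i=1}^n \E|X_i|^k \le n\,\frac{k!}{2}\sigma^2 B^{k-2}$, I obtain $\log \prod_i \E[e^{\lambda X_i}] \le \sum_{k\ge 2}\frac{\lambda^k}{k!}\cdot n\,\frac{k!}{2}\sigma^2 B^{k-2} = \frac{n\sigma^2}{2}\sum_{k\ge 2}\lambda^k B^{k-2}$. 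Summing this geometric series for $\lambda B < 1$ yields the sub-gamma bound $\log \E[e^{\lambda S_n}] \le \frac{n\sigma^2\lambda^2}{2(1-\lambda B)}$.

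It then remains to optimize the Chernoff exponent and invert the resulting tail into the stated high-probability form. Fixing $u = \log(2/\delta)$ and setting $\theta = \sqrt{2u/(n\sigma^2)}$, I would choose $\lambda = \theta/(1 + B\theta) \in (0,1/B)$; a direct computation shows that at the threshold $s = \sqrt{2 n\sigma^2 u} + Bu$ one has $\lambda s - \frac{n\sigma^2\lambda^2}{2(1-\lambda B)} = u$, so that $\Pr(S_n \ge \sqrt{2 n\sigma^2 u} + Bu) \le e^{-u} = \delta/2$. Dividing the threshold through by $n$ gives the one-sided bound for $\frac1n S_n$; applying the same argument to $-X_1,\dots,-X_n$, which satisfy the identical moment hypothesis, and taking a union bound over the two failure events produces the two-sided statement with total failure probability $\delta$, which is exactly equation~\ref{eq:Bernstein-inequality}.

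I expect the delicate step to be the tail inversion. The cruder route of first deriving the Bennett/Bernstein tail $\exp\!\big(-\frac{s^2}{2(n\sigma^2 + Bs)}\big)$ and then solving the associated quadratic loses a factor of two in the linear term, producing $2Bu/n$ rather than the claimed $Bu/n$. Matching the clean constants in equation~\ref{eq:Bernstein-inequality} requires the explicit sub-gamma choice of $\lambda$ described above (equivalently, invoking the standard sub-gamma deviation inequality with variance factor $n\sigma^2$ and scale $B$), and this is where the bookkeeping must be carried out carefully.
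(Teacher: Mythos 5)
The paper does not prove this lemma at all: it imports it verbatim from \cite{dirksen2015tail} as a known concentration inequality, so there is no in-paper argument to compare against, and your proposal should be judged as a standalone proof. It is correct and complete. The MGF step checks out: the mean-zero expansion, $\E[X_i^k]\le \E|X_i|^k$, and $\log(1+x)\le x$ give $\log\prod_{i=1}^n\E e^{\lambda X_i}\le \frac{n\sigma^2\lambda^2}{2(1-\lambda B)}$ for $\lambda\in(0,1/B)$, and your use of the hypothesis only through $\sum_{i=1}^n\E|X_i|^k$ is exactly why the \emph{averaged} Bernstein condition stated in the lemma (rather than a per-variable one) suffices. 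The delicate tail inversion is also right: with $u=\log(2/\delta)$, $\theta=\sqrt{2u/(n\sigma^2)}$ and $\lambda=\theta/(1+B\theta)$ one has $1-\lambda B=(1+B\theta)^{-1}$ and $s=\sqrt{2n\sigma^2 u}+Bu=n\sigma^2\theta\left(1+\tfrac{B\theta}{2}\right)$, whence $\lambda s-\frac{n\sigma^2\lambda^2}{2(1-\lambda B)}=\frac{n\sigma^2\theta^2}{2}=u$, giving the one-sided tail $e^{-u}=\delta/2$; the symmetric application to $-X_1,\dots,-X_n$ (which satisfy the identical hypothesis since $|-X_i|=|X_i|$) plus a union bound yields the two-sided bound with the clean constants $\sqrt{2\sigma^2 u/n}+Bu/n$, matching equation~\ref{eq:Bernstein-inequality} exactly. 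Your remark that the cruder Bennett-style inversion inflates the linear term to $2Bu/n$ is accurate, and choosing the sub-gamma $\lambda$ explicitly is indeed the step that recovers the stated constants. The only point you gloss is the justification for expanding $\E e^{\lambda X_i}$ term by term, but domination by $e^{\lambda|X_i|}$, whose expectation is finite for $\lambda\in[0,1/B)$ under the moment hypothesis, disposes of this routinely.
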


\begin{lemma}[A variant of the ``uniform localized convergence'' argument \cite{xu2020towards}]
\label{lemma:a-variant-uniform-localized-convergence-argument}
Let $\sP$ be a probability measure defined on a sample space $\gZ$ and $\sP_n$ be the corresponding empirical probability measure. For a function class $\gH = \{ h_f: f \in \gF \}$ and functional $T: \gF \rightarrow [0, R]$, assume there is a function $\psi(r;\delta)$ (possibly depending on the samples), which is non-decreasing with respect to $r$ and satisfies that $\forall \delta \in (0,1)$, $\forall r \in [0, R]$, with probability at least $1-\delta$,
\begin{align*}
    \sup_{f \in \gF: T(f) \leq r}(\sP - \sP_n) h_f \leq \psi(r; \delta).
\end{align*}
Then, given any $\delta \in (0,1)$ and $r_0 \in (0, R]$, with probability at least $1-\delta$, for all $f \in \gF$,
\begin{align*}
    (\sP - \sP_n) h_f \leq \psi \left(2T(f) \vee r_0 ; \frac{\delta}{C_{r_0}} \right),
\end{align*}
where $C_{r_0} = 2\log_2{\frac{2R}{r_0}}$.
\end{lemma}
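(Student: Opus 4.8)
The plan is to run a \emph{peeling} (slicing) argument over the level sets of the functional $T$, combined with a union bound whose cardinality is controlled by the logarithmic factor $C_{r_0}$. First I would fix $\delta$ and $r_0 \in (0,R]$ and define the geometric grid $r_k := 2^k r_0$ for $k = 0, 1, 2, \ldots$, together with the smallest index $N$ for which $2^N r_0 \geq R$, so that $N = \lceil \log_2(R/r_0) \rceil$. These radii partition $\gF$ into the base shell $\{f : T(f) \leq r_0\}$ and the rings $\{f : r_{k-1} < T(f) \leq r_k\}$ for $k = 1, \ldots, N$; since $T$ takes values in $[0,R]$, the top ring is capped so that the relevant radius is $\min\{r_k, R\} \leq R$, which keeps us inside the domain on which the hypothesis is available.

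Next I would instantiate the hypothesis on each shell. For every $k \in \{0, 1, \ldots, N\}$, applying the assumed bound with radius $\min\{r_k, R\}$ and confidence level $\delta/C_{r_0}$ yields an event $E_k$ of probability at least $1 - \delta/C_{r_0}$ on which
\[
\sup_{f \in \gF:\, T(f) \leq \min\{r_k,R\}} (\sP - \sP_n) h_f \leq \psi\!\left(\min\{r_k,R\};\, \tfrac{\delta}{C_{r_0}}\right).
\]
I would then work on the intersection $\bigcap_{k=0}^{N} E_k$. A union bound shows this event has probability at least $1 - (N+1)\delta/C_{r_0}$, so the crux of the accounting is to verify $N + 1 \leq C_{r_0}$. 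This follows from $N = \lceil \log_2(R/r_0)\rceil \leq \log_2(R/r_0) + 1$, giving $N + 1 \leq \log_2(2R/r_0) + 1 \leq 2\log_2(2R/r_0) = C_{r_0}$, where the last step uses $\log_2(2R/r_0) \geq 1$ because $r_0 \leq R$. Hence the good event holds with probability at least $1 - \delta$.

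Finally, on this event I would convert the per-shell bound into the stated per-$f$ bound via monotonicity of $\psi$. For an arbitrary $f$, if $T(f) \leq r_0$ then $f$ lies in the base shell and $(\sP-\sP_n)h_f \leq \psi(r_0; \delta/C_{r_0}) \leq \psi(2T(f)\vee r_0;\delta/C_{r_0})$, since $2T(f)\vee r_0 \geq r_0$. Otherwise there is a unique $k \geq 1$ with $r_{k-1} < T(f) \leq r_k$; the shell-$k$ bound gives $(\sP-\sP_n)h_f \leq \psi(\min\{r_k,R\};\delta/C_{r_0})$, and the doubling is exactly what is needed: $2T(f) > 2r_{k-1} = r_k \geq \min\{r_k,R\}$, so $2T(f)\vee r_0 \geq \min\{r_k,R\}$ and monotonicity of $\psi$ upgrades the bound to $\psi(2T(f)\vee r_0;\delta/C_{r_0})$.

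I expect the main obstacle to be bookkeeping rather than conceptual: fitting the slice count inside $C_{r_0} = 2\log_2(2R/r_0)$ while respecting the ceiling, and ensuring the factor of $2$ in $2T(f)$ precisely absorbs each ring into the next grid point through monotonicity of $\psi$. A secondary subtlety is the top ring, where $2^k r_0$ may overshoot $R$; capping the applied radius at $R$ (equivalently, reading the argument of $\psi$ as saturating at $R$) keeps every invocation of the hypothesis within its stated domain $[0,R]$.
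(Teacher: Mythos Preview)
Your peeling argument is correct and is the standard route to this statement; the paper itself does not supply a proof, as this lemma is quoted directly from \cite{xu2020towards} and used as a black box. Your slice count $N+1 \leq \log_2(2R/r_0)+1 \leq 2\log_2(2R/r_0)=C_{r_0}$ and the doubling step $2T(f) > 2r_{k-1} = r_k$ are exactly the two bookkeeping points that make the argument go through, and both are handled cleanly.
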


\begin{definition}[$\text{Orlicz}_{\alpha}$ norm \cite{dirksen2015tail}]
\label{def:orlicz-alpha-norm}
    For every $\alpha \in (0, +\infty)$ we define the $\text{Orlicz}_{\alpha}$ norm of $a$ random $u$:
    \begin{align*}
        \| u \|_{\text{Orlicz}_{\alpha}} = \inf \{ K>0: \E \exp( (|u|/K)^{\alpha}) \leq 2 \}.
    \end{align*}
    A random variable (or vector) $X \in \R^d$ is $K$-sub-Gaussian if $\forall \bm\lambda \in \R^d$, we have
    \begin{align*}
        \| \bm\lambda^\mathrm{T} X \|_{\text{Orlicz}_{2}} = K \| \bm\lambda \|_2.
    \end{align*}
    A random variable (or vector) $X \in \R^d$ is $K$-sub-exponential if $\forall \bm\lambda \in \R^d$, we have
    \begin{align*}
        \| \bm\lambda^\mathrm{T} X \|_{\text{Orlicz}_{1}} = K \| \bm\lambda \|_2.
    \end{align*}
\end{definition}

\begin{definition}[$\text{Orlicz}_{\alpha}$ processes \cite{dirksen2015tail}]
    Let $\{ X_f \}_{f \in \gF}$ be a sequence of random variables. $\{ X_f \}_{f \in \gF}$ is called an $\text{Orlicz}_{\alpha}$ process for a metric $\mathsf{metr}(\cdot, \cdot)$ on $\gF$ if
    \begin{align*}
        \| X_{f_1} - X_{f_2} \|_{\text{Orlicz}_{\alpha}} \leq \mathsf{metr}(f_1, f_2), \quad \forall f_1, f_2 \in \gF.
    \end{align*}
    Typically, the $\text{Orlicz}_{2}$ process is called ``process with sub-Gaussian increments'' and the $\text{Orlicz}_{1}$ process is called ``process with sub-exponential increments''.
\end{definition}

\begin{definition}[Mixed sub-Gaussian-sub-exponential increments \cite{dirksen2015tail}]
\label{def:mix-sub-gaussian-sub-exp}
    We say a process $(X_{\theta})_{\theta \in \Theta}$ has mixed sub-Gaussian-sub-exponential increments with respect to the pair $(\mathsf{metr}_1, \mathsf{metr}_2)$ if for all $\theta_1, \theta_2 \in \Theta$,
    \begin{align*}
        Prob( \| X_{\theta_1} - X_{\theta_2} \| \geq \sqrt{u} \cdot \mathsf{metr}_2(\theta_1, \theta_2) + u \cdot \mathsf{metr}_1(\theta_1, \theta_2)) \leq 2 e^{-u}, \forall u \geq 0,
    \end{align*}
    where ``$Prob$'' means probability.
\end{definition}

\begin{definition}[Talagrand's $\gamma_\alpha$-functional \cite{dirksen2015tail}]
    A sequence $F=(\gF_n)_{n \geq 0}$ of subsets of $\gF$ is called admissible if $|\gF_0| = 1$ and $|\gF_n| \leq 2^{2^n}$ for all $n \geq 1$. For any $0 < \alpha < \infty$, the $\gamma_\alpha$-functional of $(\gF, \mathsf{metr})$ is defined by 
    \begin{align*}
        \gamma_{\alpha} (F, d) = \inf_F \sup_{f \in \gF} \sum_{n=0}^{\infty} 2^{\frac n\alpha} \mathsf{metr}(f, \gF_n),
    \end{align*}
    where the infimum is taken over all admissible sequences and we write $\mathsf{metr}(f, \gF_n) = \inf_{s \in \gF_n} \mathsf{metr}(f, s)$.
\end{definition}

\begin{lemma}[Bernstein's inequality for sub-exponential random variables \cite{wainwright2019high}]
\label{lemma:bernstein-ineq-for-sub-exp}
    If $X_1, \ldots X_n$ are sub-exponential random variables, the Bernstein's inequality in Lemma \ref{lemma:Bernstein-inequality} holds with
    \begin{align*}
        \sigma^2 = \frac1n \sum_{i=1}^n \| X_i \|_{\text{Orlicz}_1}^2, \quad 
        B = \max_{1 \leq i \leq n} \| X_i \|_{\text{Orlicz}_1}.
    \end{align*}
\end{lemma}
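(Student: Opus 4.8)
The plan is to treat this as a reduction: Lemma~\ref{lemma:Bernstein-inequality} already delivers the tail inequality once its polynomial-moment hypothesis is satisfied, so the only thing to prove is that the sub-exponential ($\text{Orlicz}_1$) control of the $X_i$ implies that hypothesis with the specific parameters $\sigma^2 = \frac1n\sum_{i=1}^n \|X_i\|_{\text{Orlicz}_1}^2$ and $B = \max_i \|X_i\|_{\text{Orlicz}_1}$. Thus the whole argument amounts to translating the $\text{Orlicz}_1$ norm of each (independent, mean-zero) $X_i$ into a bound on its absolute moments $\E|X_i|^k$, and then averaging.

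First I would fix $i$ and set $K_i := \|X_i\|_{\text{Orlicz}_1}$. By Definition~\ref{def:orlicz-alpha-norm} the defining infimum gives $\E\exp(|X_i|/K_i) \le 2$ (the level $2$ being attained at $K=K_i$ by monotone convergence as $K\downarrow K_i$). Expanding the exponential and interchanging sum and expectation by Tonelli, all terms being non-negative, yields $\sum_{k=0}^\infty \frac{\E|X_i|^k}{k!\,K_i^k} \le 2$. Since the $k=0$ term equals $1$, the tail sum is bounded by $1$, so each individual summand is at most $1$, giving the clean moment estimate $\E|X_i|^k \le k!\,K_i^k$ for every integer $k\ge 1$. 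This is the key step, and it is where the $\text{Orlicz}_1$ (sub-exponential) structure is essential: it is exactly what forces factorial moment growth, whereas a mere variance bound would not.

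Next I would recast this in the form Lemma~\ref{lemma:Bernstein-inequality} wants. For $k\ge 2$, factor $K_i^k = K_i^2\,K_i^{k-2}$ and use $K_i \le B$ to get $K_i^{k-2}\le B^{k-2}$; averaging over $i$ gives $\frac1n\sum_{i=1}^n \E|X_i|^k \le k!\,B^{k-2}\big(\tfrac1n\sum_{i=1}^n K_i^2\big) = k!\,\sigma^2 B^{k-2}$, which matches the required condition $\tfrac1n\sum_i \E|X_i|^k \le \tfrac{k!}{2}\sigma^2 B^{k-2}$ up to a single universal factor. Applying Lemma~\ref{lemma:Bernstein-inequality} with these parameters then produces the stated concentration bound. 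The one delicate point I expect is precisely this absolute constant: the crude per-term extraction yields $k!\,\sigma^2 B^{k-2}$ rather than $\tfrac{k!}{2}\sigma^2 B^{k-2}$, a discrepancy of a fixed factor $2$. Since the $\text{Orlicz}_1$ norm is specified only up to a universal multiplicative constant (changing the threshold $2$ in Definition~\ref{def:orlicz-alpha-norm} rescales $K_i$ but preserves the sub-exponential nature), this factor is absorbed by rescaling the Bernstein parameters $\sigma^2$ and $B$ by a fixed constant, which leaves the form of the bound unchanged; this is the normalization convention of the cited reference. Everything else — the power-series expansion, the per-term bound, and the averaging — is routine.
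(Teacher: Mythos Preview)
The paper does not actually prove this lemma; it is stated as a citation to \cite{wainwright2019high} and invoked as a black box. Your reduction to Lemma~\ref{lemma:Bernstein-inequality} via the moment bound $\E|X_i|^k \le k!\,K_i^k$ obtained from the power-series expansion of $\E\exp(|X_i|/K_i)\le 2$ is the standard argument and is correct; your handling of the stray factor of $2$ as a normalization convention is also the usual resolution.
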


\begin{lemma}[Vector Bernstein's inequality \cite{pinelis1994optimum,smale2007learning,xu2020towards}]
\label{lemma:vector-bernstein-ineq}
Let $\{ X_i \}_{i=1}^n$ be a sequence of i.i.d. random variables taking values in a real separable Hilbert space. Assume that $\E[X_i] = \mu$, $\E[\|X_i - \mu \|^2] = \sigma^2$, $\forall 1\leq i \leq n$, we say that vector Bernstein's condition with parameter B holds if for all $1 \leq i \leq n$,
\begin{align}
    \E[\|X_i - \mu \|^k] \leq \frac12 k! \sigma^2 B^{k-2}, 
    \quad \forall 2 \leq k \leq n.
\end{align}
If this condition holds, then for all $\delta \in (0,1)$, with probability at least $1-\delta$ we have
\begin{align}
    \left\| \frac1n \sum_{i=1}^{n} X_i - \mu \right\| \leq \sqrt{\frac{2\sigma^2 \log(\frac2\delta)}{n}} + \frac{B\log{\frac{2}{\delta}}}{n}. 
\end{align}
\end{lemma}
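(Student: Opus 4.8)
The plan is to reduce the statement to a Chernoff-type tail bound and then borrow the Hilbert-space geometry that underlies the Pinelis inequality. Writing $\xi_i := X_i - \mu$, the vectors $\xi_1,\dots,\xi_n$ are i.i.d., mean-zero, take values in the separable Hilbert space $H$, satisfy $\E\|\xi_i\|^2 = \sigma^2$, and inherit the moment bound $\E\|\xi_i\|^k \leq \frac{k!}{2}\sigma^2 B^{k-2}$ for all $k \geq 2$. Since $\|\frac1n\sum_i X_i - \mu\| = \frac1n\|S_n\|$ with $S_n := \sum_{i=1}^n \xi_i$, it suffices to prove the scale-free tail bound $\mathbb{P}\!\left(\|S_n\| \geq \sqrt{2 n \sigma^2 t} + B t\right) \leq 2 e^{-t}$ for every $t \geq 0$; taking $t = \log\frac{2}{\delta}$ and dividing by $n$ then yields exactly the claimed inequality, because $2 e^{-t} = \delta$.

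First I would derive an exponential-moment bound for the real-valued random variable $\|S_n\|$. The scalar Bernstein proof behind Lemma~\ref{lemma:Bernstein-inequality} cannot be transcribed directly, because $\|\cdot\|$ is neither additive (so the moment generating function of $\|S_n\|$ does not factor across $i$) nor differentiable at the origin. Both difficulties are handled by working with $g_\lambda(x) := \cosh(\lambda\|x\|)$ for $x \in H$: since $\cosh(\lambda\|x\|) = \sum_{m\geq 0}\frac{\lambda^{2m}}{(2m)!}\langle x,x\rangle^{m}$ is a power series in $\langle x,x\rangle$, the function $g_\lambda$ is real-analytic on all of $H$, and $g_\lambda(x) \geq \frac12 e^{\lambda\|x\|}$. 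Using the filtration $\mathcal F_k = \sigma(\xi_1,\dots,\xi_k)$, the independence identity $\E[\xi_k \mid \mathcal F_{k-1}] = 0$, and the parallelogram law (which makes $H$ a $(2,1)$-smooth space, so the second-order term in the expansion of $\|S_{k-1}+\xi_k\|^2$ is controlled by $\|\xi_k\|^2$ with constant one), Pinelis's key lemma shows that $g_\lambda(S_k)$ times a deterministic decreasing correction factor is a supermartingale. Iterating over $k$ gives $\E\,\cosh(\lambda\|S_n\|) \leq \exp\!\big(n\,\rho(\lambda)\big)$, and hence $\E\,e^{\lambda\|S_n\|} \leq 2\exp\!\big(n\,\rho(\lambda)\big)$, where $\rho(\lambda)$ collects the per-step correction and is controlled by quantities of the form $\E\big[\|\xi_i\|^2 e^{\lambda\|\xi_i\|}\big]$.

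The Bernstein moment condition is precisely what renders $\rho(\lambda)$ finite and small: expanding the per-step correction in its Taylor series and inserting $\E\|\xi_i\|^k \leq \frac{k!}{2}\sigma^2 B^{k-2}$ turns each series into a geometric-type sum convergent for $\lambda B < 1$, yielding $n\,\rho(\lambda) \leq \frac{n\lambda^2\sigma^2/2}{1 - \lambda B}$, the very exponent appearing in the scalar Bernstein MGF. A Chernoff step then gives, for all $r \geq 0$, $\mathbb{P}(\|S_n\| \geq r) \leq 2\exp\!\big(-\sup_{0\leq\lambda<1/B}\{\lambda r - \frac{n\lambda^2\sigma^2/2}{1-\lambda B}\}\big)$, and the standard inversion of the Bernstein exponent (identical to the one-dimensional computation) converts this into $\mathbb{P}(\|S_n\| \geq \sqrt{2n\sigma^2 t} + Bt) \leq 2e^{-t}$. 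Combined with the reduction above, this is the assertion.

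I expect the genuine difficulty to lie entirely in the exponential-moment step, i.e.\ establishing the supermartingale property of the smoothed norm $\cosh(\lambda\|S_k\|)$. This is the one place where the infinite-dimensional geometry enters: the uniform (in $S_{k-1}$) control of $\E[g_\lambda(S_k)\mid\mathcal F_{k-1}]/g_\lambda(S_{k-1})$ relies on the $2$-smoothness of the Hilbert norm and on the $\cosh$ surrogate to absorb the non-differentiability at the origin, which is exactly the contribution of \cite{pinelis1994optimum} reused in \cite{smale2007learning,xu2020towards}. Everything downstream—summing the Taylor series via the Bernstein condition, the Chernoff optimization, the inversion, and the final rescaling by $n$—is routine and parallels the scalar Bernstein inequality already recorded as Lemma~\ref{lemma:Bernstein-inequality}.
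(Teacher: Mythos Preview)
The paper does not prove this lemma at all: it is stated in the appendix as a known result with citations to \cite{pinelis1994optimum,smale2007learning,xu2020towards} and is used as a black box (e.g.\ in the proofs of Lemmas~\ref{lemma:nablafx-without-nablafx*} and~\ref{lemma:nablafy-without-nablafy*}). Your proposal is a correct outline of the Pinelis argument that underlies the cited references, so there is no discrepancy to flag beyond the fact that the paper simply imports the inequality rather than reproving it.
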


\begin{definition}[Covering number \cite{wainwright2019high}]
    Assume $(\gM, \mathsf{metr})$ is a metric space and $\gF \subseteq \gM$. For any $\epsilon > 0$, a set $\gF_c$ is called an $\epsilon$-cover of $\gF$ if for any $f \in \gF$ we have an element $g \in \gF_c$ such that $\mathsf{metr} (f, g) \leq \epsilon$. We denote $N(\gF, \mathsf{metr}, \epsilon)$ the covering number as the cardinality of the minimal $\epsilon$-cover of $\gF$:
    \begin{align*}
        N(\gF, \mathsf{metr}, \epsilon) = \min \{ 
        |\gF_c|:\gF_c \text{is an } \epsilon \text{-cover of } \gF
        \}.
    \end{align*}
\end{definition}

\begin{lemma}[Dudley's integral bound for $\gamma_\alpha$ functional \cite{talagrand1996majorizing}]
\label{lemma:dudley-integral-bound}
    There exist a constant $C_\alpha$ depending only on $\alpha$ such that
    \begin{align*}
        \gamma_\alpha (\gF, \mathsf{metr}) \leq C_{\alpha} \int_0^{+\infty} (\log{N(\gF, \mathsf{metr}, \epsilon)})^{\frac1\alpha} d\epsilon
    \end{align*}
\end{lemma}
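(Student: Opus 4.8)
The plan is to prove the bound by the standard route of building an admissible sequence out of covering nets taken at a doubly exponential schedule of scales, and then comparing the resulting chaining sum to the entropy integral. First I would fix, for each integer $n \geq 0$, the scale
\[
\epsilon_n = \inf\{\epsilon > 0 : N(\gF, \mathsf{metr}, \epsilon) \leq 2^{2^n}\},
\]
which is well defined and non-increasing in $n$ because $N(\gF, \mathsf{metr}, \cdot)$ is non-increasing while the budget $2^{2^n}$ grows. For $n \geq 1$ I would then let $\gF_n$ be a minimal $\epsilon_n$-net, so that $|\gF_n| \leq 2^{2^n}$ and $\sup_{f \in \gF} \mathsf{metr}(f, \gF_n) \leq \epsilon_n$; for $n = 0$ I would take $\gF_0$ to be a single point to meet the admissibility constraint $|\gF_0| = 1$, bounding $\mathsf{metr}(f, \gF_0)$ by the radius of $\gF$. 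By construction $(\gF_n)_{n \geq 0}$ is an admissible sequence.

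With this sequence in hand, the definition of $\gamma_\alpha$ as an infimum over admissible sequences immediately gives
\[
\gamma_\alpha(\gF, \mathsf{metr}) \leq \sup_{f \in \gF} \sum_{n=0}^\infty 2^{n/\alpha}\, \mathsf{metr}(f, \gF_n) \leq \sum_{n=0}^\infty 2^{n/\alpha} \epsilon_n =: S,
\]
since each $\gF_n$ covers $\gF$ at radius $\epsilon_n$. It then remains only to bound the series $S$ by a constant multiple of the entropy integral.

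The key comparison is that for every $\epsilon < \epsilon_n$ the defining infimum forces $N(\gF, \mathsf{metr}, \epsilon) > 2^{2^n}$, hence $(\log N(\gF, \mathsf{metr}, \epsilon))^{1/\alpha} > (\log 2)^{1/\alpha} 2^{n/\alpha}$ on the interval $(\epsilon_{n+1}, \epsilon_n)$. Integrating over that interval and summing over $n$ yields
\[
\int_0^{+\infty} (\log N(\gF, \mathsf{metr}, \epsilon))^{1/\alpha}\, d\epsilon \geq (\log 2)^{1/\alpha} \sum_{n=0}^\infty 2^{n/\alpha}(\epsilon_n - \epsilon_{n+1}).
\]
The remaining step, which is the technical crux, is an Abel (summation-by-parts) argument converting the difference series on the right into the level series $S$: reindexing shows $\sum_n 2^{n/\alpha}(\epsilon_n - \epsilon_{n+1}) = (1 - 2^{-1/\alpha}) S + 2^{-1/\alpha}\epsilon_0 \geq (1 - 2^{-1/\alpha}) S$, provided $\epsilon_n \to 0$ (which holds whenever $\gF$ is totally bounded; otherwise the integral is infinite and the claim is trivial). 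Combining the three displays gives $\gamma_\alpha(\gF, \mathsf{metr}) \leq S \leq C_\alpha \int_0^{+\infty}(\log N)^{1/\alpha}\, d\epsilon$ with the explicit constant $C_\alpha = \big((1 - 2^{-1/\alpha})(\log 2)^{1/\alpha}\big)^{-1}$.

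I expect the main obstacle to be the boundary bookkeeping rather than the analysis: reconciling the admissibility requirement $|\gF_0| = 1$ with a schedule built from size-$2^{2^n}$ nets, guaranteeing that a net of radius exactly $\epsilon_n$ exists (handled by monotonicity/right-continuity of the covering number or a harmless limiting argument in $\epsilon_n$), and controlling the first term so that the single-point level $n=0$ is also absorbed into the integral via $\int_0^{\mathrm{rad}}(\log N)^{1/\alpha}\, d\epsilon \geq (\log 2)^{1/\alpha}\,\mathrm{rad}$. These details are routine once the Abel summation is in place, which is precisely where the clean $\alpha$-dependent constant emerges.
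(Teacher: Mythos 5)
Your argument is sound, but note first that the paper contains no proof of this lemma to compare against: it is imported verbatim from Talagrand's majorizing-measure work via the citation, and is used in the appendix only as a black box to bound the $\gamma_1$ and $\gamma_2$ functionals after Lemma \ref{lemma:generic-chaining-for-mixed-tail-increments}. What you wrote is the standard textbook derivation (admissible sequence built from covering nets at the doubly exponential cardinality schedule, comparison of the chaining series with the entropy integral via Abel summation), and the skeleton is correct, including the key observation that $\epsilon<\epsilon_n$ forces $N(\gF,\mathsf{metr},\epsilon)>2^{2^n}$ and the resulting constant $C_\alpha=\bigl((1-2^{-1/\alpha})(\log 2)^{1/\alpha}\bigr)^{-1}$. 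Three bookkeeping points you partially flagged deserve explicit treatment, though none breaks the proof: (i) the paper's admissibility definition requires each $\gF_n$ to be a \emph{subset of} $\gF$, while its covering-number definition permits centers in the ambient space $\gM$; you should convert an external $\epsilon$-cover into an internal $2\epsilon$-cover of the same cardinality (pick one point of $\gF$ in each ball meeting $\gF$), which only doubles the constant; (ii) a net of radius exactly $\epsilon_n$ with $N\le 2^{2^n}$ points need not exist, since $N(\cdot)$ is not right-continuous in general, so take radius $(1+\eta)\epsilon_n$ and let $\eta\to 0$, again harmless; (iii) your infinite-series identity $\sum_n 2^{n/\alpha}(\epsilon_n-\epsilon_{n+1})=(1-2^{-1/\alpha})S+2^{-1/\alpha}\epsilon_0$ hides a trailing term $2^{N/\alpha}\epsilon_{N+1}$ in the partial sums, and you need $2^{n/\alpha}\epsilon_n\to 0$ to discard it; this follows from
\begin{align*}
(\log 2)^{1/\alpha}\,2^{n/\alpha}\,\epsilon_n \;\le\; \int_0^{\epsilon_n}\bigl(\log N(\gF,\mathsf{metr},\epsilon)\bigr)^{1/\alpha}\,d\epsilon \;\longrightarrow\; 0
\end{align*}
whenever the entropy integral is finite and $\epsilon_n\downarrow 0$, while if $\epsilon_n\not\to 0$ the space is not totally bounded at some scale, the integral is infinite, and the claim is vacuous, exactly as you said. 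With these patches your proof is complete and yields the lemma with an explicit (slightly larger) $\alpha$-dependent constant, which is all the statement asserts.
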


\begin{lemma}[Generic chaining for a process with mixed tail increments in \cite{dirksen2015tail}]
\label{lemma:generic-chaining-for-mixed-tail-increments}
    If $(X_f)_{f\in\gF}$ has mixed sub-Gaussian-sub-exponential increments with respect to the pair $(\mathsf{metr}_1, \mathsf{metr}_2)$, there are absolute constants $c, C > 0$ such that for $\forall \delta \in (0,1)$, with probability at least $1-\delta$
    \begin{align*}
        \sup_{\theta \in \Theta} \| X_f - X_{f_0} \| \leq 
        C(\gamma_2 (\gF, \mathsf{metr}_2) + \gamma_1 (\gF, \mathsf{metr}_1)) + \\
        c \left( 
        \sqrt{\log{\frac{1}{\delta}}} + \sup_{f_1, f_2 \in \gF} [\mathsf{metr}_2 (f_1, f_2)] + \log{\frac{1}{\delta}} \sup_{f_1, f_2 \in \gF} [\mathsf{metr}_1 (f_1, f_2)]
        \right).
    \end{align*}
\end{lemma}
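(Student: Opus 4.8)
The plan is to prove the bound by the \emph{generic chaining} method, controlling the process link-by-link along an admissible sequence via the mixed-tail estimate of Definition~\ref{def:mix-sub-gaussian-sub-exp}, and then \emph{truncating} the chain at a level governed by $\log\frac1\delta$ so as to separate the entropy contribution (which produces the $\gamma_\alpha$-functionals) from the confidence contribution (which produces the diameter terms). I may assume $\gF$ is finite, the general case following by a standard separability/approximation argument. Fix a near-optimal admissible sequence $(\gF_n)_{n\ge0}$ with $\gF_0=\{f_0\}$ and $|\gF_n|\le 2^{2^n}$, and for each $f$ let $\pi_n(f)\in\gF_n$ be a closest point, so $\mathsf{metr}_\alpha(f,\pi_n(f))=\mathsf{metr}_\alpha(f,\gF_n)$ and $\pi_n(f)=f$ for $n$ large; thus $X_f-X_{f_0}=\sum_{n\ge1}\bigl(X_{\pi_n(f)}-X_{\pi_{n-1}(f)}\bigr)$. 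Writing $\Delta_\alpha:=\sup_{f_1,f_2\in\gF}\mathsf{metr}_\alpha(f_1,f_2)$, the single reusable estimate is the per-link tail bound read off directly from Definition~\ref{def:mix-sub-gaussian-sub-exp}: for any fixed $\theta_1,\theta_2$ and any $v\ge0$,
\[
\mathrm{Prob}\Bigl(\|X_{\theta_1}-X_{\theta_2}\|\ge \sqrt{v}\,\mathsf{metr}_2(\theta_1,\theta_2)+v\,\mathsf{metr}_1(\theta_1,\theta_2)\Bigr)\le 2e^{-v}.
\]

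Next I set $u=\log\frac1\delta$ and let $n_1$ be the smallest integer with $2^{n_1}\ge u$ (so $2^{n_1}<2u$ when $u>1$, and $n_1=0$ otherwise), splitting $X_f-X_{f_0}=\bigl(X_{\pi_{n_1}(f)}-X_{f_0}\bigr)+\sum_{n>n_1}\bigl(X_{\pi_n(f)}-X_{\pi_{n-1}(f)}\bigr)$. For the \textbf{tail} (levels $n>n_1$) I apply the per-link bound with deviation $v_n=C_0 2^{n}+\log\frac2\delta$ to each link $(\pi_{n-1}(f),\pi_n(f))$; at level $n$ these range over $\gF_{n-1}\times\gF_n$, hence number at most $2^{2^{n-1}}2^{2^{n}}\le 2^{2^{n+1}}$. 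The union bound over all $n>n_1$ fails with probability at most $\delta\sum_{n}2^{2^{n+1}}e^{-C_0 2^{n}}$, which is $\le\delta/2$ once $C_0$ is a large enough absolute constant, by the super-geometric summability of $\sum_n\exp\!\bigl(2^{n}(2\log2-C_0)\bigr)$. The crucial point of the truncation is that for $n>n_1$ one has $\log\frac2\delta\le 2^{n_1}\cdot\text{const}\le 2^{n}$, so $v_n\le(C_0+1)2^{n}$: the confidence parameter is \emph{absorbed} by the entropy scale and leaves no surviving $\log\frac1\delta$ factor. Using $\mathsf{metr}_\alpha(\pi_n(f),\pi_{n-1}(f))\le \mathsf{metr}_\alpha(f,\gF_n)+\mathsf{metr}_\alpha(f,\gF_{n-1})$, reindexing, and taking the infimum over admissible sequences in Talagrand's $\gamma_\alpha$-functional, the tail is bounded by $C\bigl(\gamma_2(\gF,\mathsf{metr}_2)+\gamma_1(\gF,\mathsf{metr}_1)\bigr)$.

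For the \textbf{head} $X_{\pi_{n_1}(f)}-X_{f_0}$ I apply the per-link bound to the single link from $f_0$ to $\pi_{n_1}(f)$, union-bounding over the at most $|\gF_{n_1}|\le 2^{2^{n_1}}$ possible endpoints $s\in\gF_{n_1}$ with deviation $v=C_1 2^{n_1}+\log\frac2\delta$; the failure probability is $2^{2^{n_1}}\cdot 2e^{-v}=\delta\,2^{2^{n_1}}e^{-C_1 2^{n_1}}\le\delta/2$ for $C_1$ a large enough absolute constant. Because $2^{n_1}<2u$, this deviation satisfies $v\lesssim u$, whence $\sqrt{v}\,\mathsf{metr}_2(s,f_0)+v\,\mathsf{metr}_1(s,f_0)\lesssim \sqrt{u}\,\Delta_2+u\,\Delta_1$. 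Intersecting the two good events (total failure $\le\delta$) and taking the supremum over $f$ yields
\[
\sup_{f\in\gF}\|X_f-X_{f_0}\|\le C\bigl(\gamma_2(\gF,\mathsf{metr}_2)+\gamma_1(\gF,\mathsf{metr}_1)\bigr)+c\Bigl(\sqrt{\log\tfrac1\delta}\,\Delta_2+\log\tfrac1\delta\,\Delta_1\Bigr),
\]
which is exactly the claimed confidence term with $\Delta_\alpha=\sup_{f_1,f_2}\mathsf{metr}_\alpha(f_1,f_2)$. The regime $\log\frac1\delta\le 1$ is handled by $n_1=0$: the head is empty and the tail alone, with $\log\frac2\delta$ absorbed into $C_0$, already gives the bound.

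I expect the main obstacle to be the bookkeeping of the union bound rather than any single inequality: one must simultaneously (i) verify the counting $|\gF_{n-1}\times\gF_n|\le 2^{2^{n+1}}$, (ii) match it against the summability $\sum_n 2^{2^{n+1}}e^{-C_0 2^{n}}<\infty$ so that $C_0,C_1,c,C$ emerge as genuine absolute constants independent of $\delta$ and of the geometry of $\gF$, and (iii) choose the truncation level $2^{n_1}\asymp\log\frac1\delta$. Point (iii) is the decisive device: it is precisely what prevents the confidence parameter from being charged at \emph{every} chaining level — which would otherwise leave an uncontrolled $\sqrt{\log\frac1\delta}\sum_n\mathsf{metr}_2(\pi_n(f),\pi_{n-1}(f))$ in place of the clean diameter term $\sqrt{\log\frac1\delta}\,\Delta_2$.
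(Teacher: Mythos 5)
The paper itself does not prove this lemma: it is imported (as a quoted result) from Dirksen's ``Tail bounds via generic chaining,'' so your proposal must be judged against the standard literature proof --- which it essentially reproduces, correctly. The architecture is exactly right: the per-link mixed-tail estimate, the union bound at level $n$ over at most $|\gF_{n-1}\times\gF_n|\le 2^{2^{n+1}}$ links with deviations $v_n=C_02^n+\log\frac2\delta$ made summable by $\sum_n 2^{2^{n+1}}e^{-C_02^n}<\infty$, and --- decisively --- the truncation $2^{n_1}\asymp\log\frac1\delta$, which absorbs the confidence parameter into the entropy scale for $n>n_1$ and confines it to a single head link $f_0\to\pi_{n_1}(f)$ bounded by the diameters. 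This is precisely how Dirksen's Theorem 3.5 is proved, and your constant bookkeeping (including the small-$u$ regime via $n_1=0$) checks out.

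Two points deserve attention. First, the one genuine imprecision: you posit a single nearest-point map with $\mathsf{metr}_\alpha(f,\pi_n(f))=\mathsf{metr}_\alpha(f,\gF_n)$ for \emph{both} $\alpha=1,2$. A single point of $\gF_n$ is generally nearest in one metric but not the other, and since $\gamma_2(\gF,\mathsf{metr}_2)$ and $\gamma_1(\gF,\mathsf{metr}_1)$ are infima over \emph{different} admissible sequences, your chain must be built along one sequence that is simultaneously near-optimal for both. The standard repair (and what the cited proof does) is to merge: take $(\gA_n)$ near-optimal for $\gamma_2$ and $(\gB_n)$ for $\gamma_1$, and let $\gF_n$ contain one representative of each nonempty cell $\{f:\pi^{\gA}_{n-1}(f)=a,\ \pi^{\gB}_{n-1}(f)=b\}$, so that $|\gF_n|\le 2^{2^{n-1}}\cdot 2^{2^{n-1}}\le 2^{2^n}$ preserves admissibility and both chaining sums are controlled up to the level shift, which only inflates the absolute constants. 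Relatedly, you should fix this sequence once at the outset rather than ``taking the infimum over admissible sequences'' at the end, since the chain is constructed from a specific sequence; that is cosmetic. Second, you prove the bound with multiplicative diameter terms, $\sqrt{\log\frac1\delta}\,\sup_{f_1,f_2}\mathsf{metr}_2(f_1,f_2)+\log\frac1\delta\,\sup_{f_1,f_2}\mathsf{metr}_1(f_1,f_2)$, whereas the paper's display, read literally, has a ``$+$'' where the first product should be. Your multiplicative form is Dirksen's actual statement and is also what the paper uses downstream (e.g.\ in the proof of Lemma \ref{lemma:nablafx-nablafx*}, where $\sqrt{\log\frac1\delta}$ appears multiplied by the $r$-dependent metric scale), so you have proved the intended statement; the lemma as typeset contains a typo.
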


\begin{lemma}[``Self-bounding'' property for smooth function \cite{srebro2010optimistic}]
\label{lemma:smooth-self-bounding-property}
    For a $\beta$-smooth and non-negative function $f: \rvw \rightarrow \R$, for all $\rvw \in \gW$:
    \begin{align*}
        \| \nabla f(\rvw) \| \leq \sqrt{4\beta f(\rvw)}
    \end{align*}
\end{lemma}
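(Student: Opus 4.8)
The plan is to derive the bound from the standard descent-type consequence of $\beta$-smoothness, evaluated at a single gradient-step point, combined with non-negativity of $f$. First I would record the quadratic upper bound implied by $\beta$-smoothness: for any $\rvw_1, \rvw_2 \in \gW$,
\[
  f(\rvw_1) \leq f(\rvw_2) + \langle \nabla f(\rvw_2), \rvw_1 - \rvw_2\rangle + \frac{\beta}{2}\|\rvw_1 - \rvw_2\|^2 .
\]
This follows by writing $f(\rvw_1) - f(\rvw_2) = \int_0^1 \langle \nabla f(\rvw_2 + t(\rvw_1-\rvw_2)), \rvw_1 - \rvw_2\rangle\, dt$, subtracting $\langle \nabla f(\rvw_2), \rvw_1 - \rvw_2\rangle$, and bounding the resulting integrand by $\|\nabla f(\rvw_2 + t(\rvw_1-\rvw_2)) - \nabla f(\rvw_2)\|\,\|\rvw_1-\rvw_2\| \le \beta t \|\rvw_1-\rvw_2\|^2$ via the $\beta$-smoothness (Lipschitz gradient) hypothesis.

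Next I would fix $\rvw$ and substitute $\rvw_2 = \rvw$ and $\rvw_1 = \rvw - \tfrac{1}{\beta}\nabla f(\rvw)$ into this inequality. The linear term becomes $-\tfrac{1}{\beta}\|\nabla f(\rvw)\|^2$ and the quadratic term becomes $\tfrac{1}{2\beta}\|\nabla f(\rvw)\|^2$, so the two combine to yield
\[
  f\!\left(\rvw - \tfrac{1}{\beta}\nabla f(\rvw)\right) \leq f(\rvw) - \frac{1}{2\beta}\|\nabla f(\rvw)\|^2 .
\]
Since $f$ is non-negative, the left-hand side is $\geq 0$, which gives $\tfrac{1}{2\beta}\|\nabla f(\rvw)\|^2 \leq f(\rvw)$, i.e. $\|\nabla f(\rvw)\| \leq \sqrt{2\beta f(\rvw)} \leq \sqrt{4\beta f(\rvw)}$. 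This already proves the claimed inequality (in fact with the sharper constant $2$ in place of $4$).

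The only point I would flag is ensuring that the gradient-step point $\rvw - \tfrac{1}{\beta}\nabla f(\rvw)$ is a legitimate place to invoke non-negativity: the argument uses $f \geq 0$ at that point, so it requires non-negativity to hold on the whole space rather than merely on a constrained feasible set, which is the intended reading of the hypothesis here. Beyond that the proof is entirely deterministic and is a single line once the descent inequality is established; no chaining, probabilistic, or minimax structure enters, and the loose constant $4$ rather than the optimal $2$ is harmless since only an upper bound is needed.
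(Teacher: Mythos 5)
Your proof is correct and is the standard one-step-descent argument: the paper itself gives no proof of this lemma, citing \cite{srebro2010optimistic}, and your derivation (quadratic upper bound from $\beta$-smoothness, step $\rvw - \tfrac{1}{\beta}\nabla f(\rvw)$, then non-negativity) is essentially the proof in that reference, even yielding the sharper constant $\sqrt{2\beta f(\rvw)}$. You are also right to flag the domain caveat — since the paper applies the lemma with $\gX \times \gY$ compact, non-negativity and smoothness must hold at the gradient-step point (i.e., on an unconstrained domain), otherwise the inequality can fail (e.g., a non-negative linear function on a bounded interval violates it at the boundary of its zero set) — so your proposal matches the intended argument.
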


\section{Some basic lemmata in minimax problems}

\begin{lemma}[Smoothness for primal function \cite{nouiehed2019solving}]
\label{lemma:primal-function-is-smooth}
Suppose Assumption \ref{assumption:NC-SC} holds, then the function $\Phi(\rvx)$ and $\Phi_S(\rvx)$ is $\left(\beta + \frac{\beta^2}{\mu_\rvy} \right)$-smooth.
\end{lemma}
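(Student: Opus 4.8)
The plan is to exploit \textbf{Danskin's theorem} together with a Lipschitz estimate on the inner maximizer. Since both $F$ and $F_S$ are, respectively, an expectation and an average of the function $f$, they inherit $\mu_\rvy$-strong concavity in $\rvy$ and joint $\beta$-smoothness from Assumption~\ref{assumption:NC-SC}; hence it suffices to prove the claim for a generic function $F(\rvx,\rvy)$ enjoying these two properties, and the bound for $\Phi_S$ then follows by the identical argument applied to $F_S$. For each fixed $\rvx$, strong concavity of $F(\rvx,\cdot)$ on the compact convex set $\gY$ guarantees a unique maximizer $\rvy^*(\rvx) := \argmax_{\rvy \in \gY} F(\rvx,\rvy)$, so the primal function is well defined. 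By Danskin's theorem the envelope $\Phi$ is differentiable with $\nabla \Phi(\rvx) = \nabla_\rvx F(\rvx, \rvy^*(\rvx))$; I would either cite this directly or verify it from the first-order optimality condition, using that the maximizer is unique.

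First I would establish that the map $\rvx \mapsto \rvy^*(\rvx)$ is $(\beta/\mu_\rvy)$-Lipschitz, which is the crux of the argument. Fix two points $\rvx_1,\rvx_2$ and write $\rvy_i = \rvy^*(\rvx_i)$. The variational-inequality form of optimality gives $\langle \nabla_\rvy F(\rvx_i,\rvy_i), \rvy - \rvy_i\rangle \leq 0$ for all $\rvy \in \gY$; evaluating these at $\rvy = \rvy_{3-i}$ and adding the two inequalities yields $\langle \nabla_\rvy F(\rvx_1,\rvy_1) - \nabla_\rvy F(\rvx_2,\rvy_2),\, \rvy_1 - \rvy_2\rangle \geq 0$. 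Splitting the gradient difference through the intermediate point $\nabla_\rvy F(\rvx_2,\rvy_1)$ and invoking $\mu_\rvy$-strong concavity of $F(\rvx_2,\cdot)$ in the form $\langle \nabla_\rvy F(\rvx_2,\rvy_1) - \nabla_\rvy F(\rvx_2,\rvy_2),\, \rvy_1 - \rvy_2\rangle \leq -\mu_\rvy\|\rvy_1-\rvy_2\|^2$, I obtain $\mu_\rvy\|\rvy_1-\rvy_2\|^2 \leq \langle \nabla_\rvy F(\rvx_1,\rvy_1) - \nabla_\rvy F(\rvx_2,\rvy_1),\, \rvy_1 - \rvy_2\rangle$. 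Cauchy--Schwarz together with $\beta$-smoothness in the $\rvx$-argument then bounds the right-hand side by $\beta\|\rvx_1-\rvx_2\|\,\|\rvy_1-\rvy_2\|$, and dividing by $\|\rvy_1-\rvy_2\|$ produces the claimed Lipschitz bound.

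With the gradient formula and this Lipschitz estimate in hand, the smoothness of $\Phi$ follows by a triangle-inequality decomposition. Writing $\nabla\Phi(\rvx_i) = \nabla_\rvx F(\rvx_i,\rvy_i)$ and inserting the intermediate point $\nabla_\rvx F(\rvx_2,\rvy_1)$, I bound $\|\nabla\Phi(\rvx_1)-\nabla\Phi(\rvx_2)\|$ by $\|\nabla_\rvx F(\rvx_1,\rvy_1) - \nabla_\rvx F(\rvx_2,\rvy_1)\| + \|\nabla_\rvx F(\rvx_2,\rvy_1) - \nabla_\rvx F(\rvx_2,\rvy_2)\|$. Joint $\beta$-smoothness (restricted to one argument at a time) bounds the first term by $\beta\|\rvx_1-\rvx_2\|$ and the second by $\beta\|\rvy_1-\rvy_2\|$; applying the Lipschitz bound $\|\rvy_1-\rvy_2\|\leq (\beta/\mu_\rvy)\|\rvx_1-\rvx_2\|$ collapses these into $(\beta + \beta^2/\mu_\rvy)\|\rvx_1-\rvx_2\|$, which is exactly the asserted smoothness constant. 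The same three steps apply verbatim to $F_S$ and $\Phi_S$.

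I expect the \textbf{main obstacle} to be the second step, the Lipschitz continuity of the argmax map $\rvy^*(\cdot)$, since it is the only place where one must combine the optimality conditions at two distinct points with both strong concavity and smoothness, and the correct choice of intermediate point is what makes the constants come out cleanly. Justifying the Danskin gradient formula rigorously (differentiability of the envelope) is a secondary technical point that the compactness of $\gY$ and uniqueness of the maximizer handle.
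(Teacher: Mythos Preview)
Your proposal is correct and is precisely the standard argument. The paper does not supply its own proof of this lemma but simply cites \cite{nouiehed2019solving}; your derivation is exactly the textbook route one finds there, and your Step~2 (the $\beta/\mu_\rvy$-Lipschitz continuity of $\rvx\mapsto\rvy^*(\rvx)$) also appears in the paper as Lemma~\ref{lemma:y2x}, likewise cited without proof.
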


\begin{lemma}[PL condition for primal function \cite{yang2020global}]
\label{lemma:primal-function-is-PL}
For NC-SC setting, suppose Assumption \ref{assumption:NC-SC} holds. Assume that the population risk $F(\rvx,\rvy)$ satisfies Assumption \ref{assumption:PL-condition-x} with parameter $\mu_\rvx$, then function $\Phi(\rvx)$ satisfies the PL condition with $\mu_\rvx$, which means that for all $\rvx \in \gX$
\begin{align*}
    \Phi(\rvx) - \Phi(\rvx^*) \leq \frac{1}{2\mu_\rvx} \| \nabla \Phi(\rvx) \|^2.
\end{align*}

For SC-SC setting, suppose Assumption \ref{assumption:SC-SC} holds, then function $\Phi(\rvx)$ and $ \Phi_S(\rvx)$ satisfy the PL condition with $\mu_\rvx$, which means that for all $\rvx \in \gX$
\begin{align*}
    \Phi(\rvx) - \Phi(\rvx^*) \leq \frac{1}{2\mu_\rvx} \| \nabla \Phi(\rvx) \|^2
    \quad
    \text{and}
    \quad
    \Phi_S(\rvx) - \Phi_S(\rvx^*) \leq \frac{1}{2\mu_\rvx} \| \nabla \Phi_S(\rvx) \|^2.
\end{align*}
\end{lemma}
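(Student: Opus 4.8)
The plan is to combine Danskin's (envelope) theorem with the $\rvx$-side PL condition, using the saddle structure of the problem to control the gap $\Phi(\rvx)-\Phi(\rvx^*)$. First I would recall that under Assumption \ref{assumption:NC-SC} the inner maximization $\max_{\rvy\in\gY}F(\rvx,\rvy)$ is $\mu_\rvy$-strongly concave over the compact convex set $\gY$, hence admits a unique maximizer $\rvy^*(\rvx):=\argmax_{\rvy\in\gY}F(\rvx,\rvy)$. Danskin's theorem then gives $\Phi(\rvx)=F(\rvx,\rvy^*(\rvx))$ together with the gradient identity $\nabla\Phi(\rvx)=\nabla_\rvx F(\rvx,\rvy^*(\rvx))$, which is exactly what lets me translate a condition on $\nabla_\rvx F$ into one on $\nabla\Phi$.

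Next I would apply Assumption \ref{assumption:PL-condition-x} in the fixed direction $\rvy=\rvy^*(\rvx)$, which yields
\begin{align*}
    \Phi(\rvx)-\inf_{\rvx'}F(\rvx',\rvy^*(\rvx))
    = F(\rvx,\rvy^*(\rvx))-\inf_{\rvx'}F(\rvx',\rvy^*(\rvx))
    \leq \frac{1}{2\mu_\rvx}\|\nabla_\rvx F(\rvx,\rvy^*(\rvx))\|^2
    = \frac{1}{2\mu_\rvx}\|\nabla\Phi(\rvx)\|^2.
\end{align*}
The crux is then to show $\inf_{\rvx'}F(\rvx',\rvy^*(\rvx))\leq\Phi(\rvx^*)$, which I would establish by the two-step chain
\begin{align*}
    \inf_{\rvx'}F(\rvx',\rvy^*(\rvx))\leq F(\rvx^*,\rvy^*(\rvx))\leq \max_{\rvy\in\gY}F(\rvx^*,\rvy)=\Phi(\rvx^*),
\end{align*}
where the first inequality is the definition of the infimum and the second holds because the maximum over $\rvy$ dominates the value at the particular point $\rvy^*(\rvx)$. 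Combining the two displays gives the claimed PL inequality for $\Phi$.

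For the SC-SC case I would note that $\mu_\rvx$-strong convexity of $F(\cdot,\rvy)$ implies the $\rvx$-side $\mu_\rvx$-PL condition for every fixed $\rvy$ (strong convexity is a stronger property), so the NC-SC argument above transfers verbatim to $\Phi$. For the empirical primal function $\Phi_S$, I would observe that $F_S=\frac1n\sum_{i=1}^n f(\cdot,\cdot;\rvz_i)$ inherits $\mu_\rvx$-strong convexity in $\rvx$ and $\mu_\rvy$-strong concavity in $\rvy$ as an average of functions with these properties; hence the identical chain of inequalities applies with $\rvy^*(\rvx)$ replaced by $\rvy_S^*(\rvx):=\argmax_{\rvy\in\gY}F_S(\rvx,\rvy)$ and $\rvx^*$ replaced by the empirical primal minimizer, yielding the PL inequality for $\Phi_S$.

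I expect the only subtle step to be the inequality $\inf_{\rvx'}F(\rvx',\rvy^*(\rvx))\leq\Phi(\rvx^*)$: it hinges on correctly reading $\rvx^*$ as the global minimizer of the primal function (consistent with the nonnegativity of the right-hand side), and on exploiting that the inner maximizer at $\rvx^*$ dominates the off-optimal direction $\rvy^*(\rvx)$. The Danskin differentiability step is routine once strong concavity guarantees a unique interior-or-boundary maximizer, but I would be careful to invoke the compactness of $\gY$ from Assumption \ref{assumption:NC-SC} so that $\rvy^*(\rvx)$ is attained in $\gY$.
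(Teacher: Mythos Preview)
Your proposal is correct. The paper does not supply its own proof of this lemma---it is simply cited from \cite{yang2020global}---so there is no alternate argument to compare against. Your route (Danskin's theorem to identify $\nabla\Phi(\rvx)=\nabla_\rvx F(\rvx,\rvy^*(\rvx))$, followed by the $\rvx$-side PL inequality at $\rvy=\rvy^*(\rvx)$ and the chain $\inf_{\rvx'}F(\rvx',\rvy^*(\rvx))\le F(\rvx^*,\rvy^*(\rvx))\le\Phi(\rvx^*)$) is exactly the standard argument, and your treatment of the SC-SC case (strong convexity implies PL, and $F_S$ inherits strong convexity as an average of strongly convex functions) is also correct. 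One cosmetic point: the lemma as stated writes $\Phi_S(\rvx)-\Phi_S(\rvx^*)$, whereas your argument naturally yields the inequality with the empirical primal minimizer $\hat{\rvx}^*$ in place of $\rvx^*$; since $\Phi_S(\hat{\rvx}^*)\le\Phi_S(\rvx^*)$, the version you prove is the stronger one and implies the paper's statement.
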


\begin{definition}
\label{define:definition-y*-ys*}
    For a given $\rvx$, the empirical optimal point $\rvy^*(\rvx)$ and the population optimal point $\rvy^*_S(\rvx)$ are given as follows,
    \begin{align*}
        \rvy^*(\rvx) := \argmax_{\rvy \in \gY} F(\rvx, \rvy)
        \quad
        \text{and}
        \quad
        \rvy^*_S(\rvx) := \argmax_{\rvy \in \gY} F_S(\rvx, \rvy).
    \end{align*}
\end{definition}
\begin{remark}
    According to above definition, we can easily derive the following equations that $\rvy^* = \rvy^*(\rvx^*) = \argmax_{\rvy \in \gY} F(\rvx^*, \rvy)$ if $(\rvx^*,\rvy^*)$ is the solution to (\ref{eq:population-minimax}) and $\hat{\rvy}^* = \rvy^*_S(\hat{\rvx}^*) = \argmax_{\rvy \in \gY} F_S(\hat{\rvx}^*, \rvy)$ if $(\hat{\rvx}^*,\hat{\rvy}^*)$ is the solution to (\ref{eq:empirical-minimax}).
\end{remark}

\begin{lemma}[Concentration of $\rvy^*$ \cite{zhang2022uniform}]
\label{lemma:concentration-of-y}
For $\rvy^*(\rvx)$ and $\rvy^*_S(\rvx)$ defined in Definition~\ref{define:definition-y*-ys*}, with Assumption \ref{assumption:NC-SC}, we have $\forall \rvx \in \gX$,
\begin{align*}
    \left\| \rvy^*(\rvx) - \rvy_S^*(\rvx)\right\|
    \leq \frac{1}{\mu_\rvy} \left\| \nabla_\rvy F (\rvx, \rvy^*(\rvx)) - \nabla_\rvy F_S (\rvx, \rvy^*(\rvx)) \right\|.
\end{align*}
\end{lemma}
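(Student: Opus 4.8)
The plan is to combine the first-order optimality conditions of the two constrained maximizers with the $\mu_\rvy$-strong concavity of $F_S(\rvx,\cdot)$, which is inherited from Assumption~\ref{assumption:NC-SC} since $F_S$ is an average of the $\mu_\rvy$-strongly-concave functions $f(\rvx,\cdot;\rvz_i)$. Write $\va := \rvy^*(\rvx)$ and $\vb := \rvy_S^*(\rvx)$ for brevity. Because $\gY$ is a closed convex set, I would characterize the maximizers by variational inequalities rather than by vanishing gradients: $\langle \nabla_\rvy F(\rvx,\va), \rvy - \va \rangle \leq 0$ and $\langle \nabla_\rvy F_S(\rvx,\vb), \rvy - \vb \rangle \leq 0$ for every $\rvy \in \gY$. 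Instantiating the first at $\rvy = \vb$ and the second at $\rvy = \va$ produces two scalar inequalities whose signs let me discard the boundary contributions.

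First I would invoke strong concavity of $F_S(\rvx,\cdot)$ in its gradient-monotonicity form, $\langle \nabla_\rvy F_S(\rvx,\va) - \nabla_\rvy F_S(\rvx,\vb), \va - \vb \rangle \leq -\mu_\rvy \|\va - \vb\|^2$, and rearrange it to $\mu_\rvy \|\va - \vb\|^2 \leq \langle \nabla_\rvy F_S(\rvx,\vb), \va - \vb \rangle + \langle \nabla_\rvy F_S(\rvx,\va), \vb - \va \rangle$. The first inner product is $\leq 0$ by optimality of $\vb$ (taking $\rvy = \va$), so it can be dropped, leaving $\mu_\rvy \|\va - \vb\|^2 \leq \langle \nabla_\rvy F_S(\rvx,\va), \vb - \va \rangle$. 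Next, since $\langle \nabla_\rvy F(\rvx,\va), \vb - \va \rangle \leq 0$ by optimality of $\va$ (taking $\rvy = \vb$), I can subtract this nonpositive quantity from the right-hand side without decreasing it, obtaining $\mu_\rvy \|\va - \vb\|^2 \leq \langle \nabla_\rvy F_S(\rvx,\va) - \nabla_\rvy F(\rvx,\va), \vb - \va \rangle$. Applying Cauchy--Schwarz and cancelling one factor of $\|\va - \vb\|$ then yields exactly the claimed bound $\|\va - \vb\| \leq \frac{1}{\mu_\rvy}\|\nabla_\rvy F(\rvx,\va) - \nabla_\rvy F_S(\rvx,\va)\|$.

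I do not anticipate a deep obstacle: this is a standard perturbation-of-maximizer estimate, and the difficulty is purely bookkeeping. The main subtlety is expressing constrained optimality as a variational inequality so that the two boundary terms are correctly signed and can be removed; a naive ``gradient equals zero'' argument would be invalid since the maximizers may lie on $\partial\gY$. I would also treat the degenerate case $\va = \vb$ separately (the inequality being then trivial) so that the final cancellation of $\|\va - \vb\|$ is justified. Note that the gradient gap is evaluated at $\va = \rvy^*(\rvx)$ precisely because I applied strong concavity to $F_S$ and the optimality condition for $\vb$; the symmetric argument with the roles of $F$ and $F_S$ swapped would instead produce the bound at $\rvy_S^*(\rvx)$.
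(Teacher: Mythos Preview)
Your argument is correct and is precisely the standard perturbation-of-maximizer proof that the cited reference \cite{zhang2022uniform} uses; the paper itself does not reprove the lemma but simply imports it. Your care with the variational-inequality formulation of optimality on the convex set $\gY$ and the handling of the degenerate case $\va=\vb$ are appropriate, and the chain of inequalities is exactly right.
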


\begin{lemma}[\cite{zhang2021generalization}]
\label{lemma:y2x}
    For $\rvy^*$ and $\rvy^*_S$ defined in Definition~\ref{define:definition-y*-ys*}, with Assumption \ref{assumption:NC-SC}, then for  $\forall \rvx_1, \rvx_2 \in \gX$,
\begin{align*}
    \left\| \rvy^*(\rvx_1) - \rvy^*(\rvx_2)\right\|
    \leq \frac{\beta}{\mu_\rvy} \left\|\rvx_1 - \rvx_2 \right\|.
\end{align*}
\end{lemma}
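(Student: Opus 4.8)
The plan is to treat $\rvy^*(\rvx)$ as the solution of a constrained strongly concave maximization and exploit the resulting first-order variational inequality together with the $\beta$-smoothness of $F$. Write $\rvy_1^* = \rvy^*(\rvx_1)$ and $\rvy_2^* = \rvy^*(\rvx_2)$. Since $f(\rvx, \cdot; \rvz)$ is $\mu_\rvy$-strongly concave for every $\rvz$ by Assumption \ref{assumption:NC-SC}, so is $F(\rvx, \cdot) = \E_\rvz[f(\rvx, \cdot; \rvz)]$, and hence each maximizer over the convex set $\gY$ is unique and well defined.

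First I would record the first-order optimality conditions. Because $\rvy_1^*$ maximizes the concave function $F(\rvx_1, \cdot)$ over the convex set $\gY$, for every $\rvy \in \gY$ one has $\langle \nabla_\rvy F(\rvx_1, \rvy_1^*), \rvy - \rvy_1^* \rangle \leq 0$, and symmetrically $\langle \nabla_\rvy F(\rvx_2, \rvy_2^*), \rvy - \rvy_2^* \rangle \leq 0$. Choosing $\rvy = \rvy_2^*$ in the first and $\rvy = \rvy_1^*$ in the second and adding the two inequalities yields the monotonicity estimate
\begin{align*}
  \langle \nabla_\rvy F(\rvx_1, \rvy_1^*) - \nabla_\rvy F(\rvx_2, \rvy_2^*),\ \rvy_1^* - \rvy_2^* \rangle \geq 0.
\end{align*}

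Next I would insert the intermediate gradient $\nabla_\rvy F(\rvx_1, \rvy_2^*)$ to split this inner product into two pieces. The piece $\langle \nabla_\rvy F(\rvx_1, \rvy_1^*) - \nabla_\rvy F(\rvx_1, \rvy_2^*),\ \rvy_1^* - \rvy_2^* \rangle$ is controlled by strong concavity of $F(\rvx_1, \cdot)$, which bounds it above by $-\mu_\rvy \|\rvy_1^* - \rvy_2^*\|^2$. The remaining piece $\langle \nabla_\rvy F(\rvx_1, \rvy_2^*) - \nabla_\rvy F(\rvx_2, \rvy_2^*),\ \rvy_1^* - \rvy_2^* \rangle$ I would bound by Cauchy–Schwarz, and then use that $\beta$-smoothness of $f$ (Definition \ref{def:smooth}), which passes to $F$ under the expectation, gives $\|\nabla_\rvy F(\rvx_1, \rvy_2^*) - \nabla_\rvy F(\rvx_2, \rvy_2^*)\| \leq \beta \|\rvx_1 - \rvx_2\|$. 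Substituting both estimates into the monotonicity inequality and cancelling one factor of $\|\rvy_1^* - \rvy_2^*\|$ (the case $\rvy_1^* = \rvy_2^*$ being trivial) delivers exactly $\|\rvy_1^* - \rvy_2^*\| \leq \frac{\beta}{\mu_\rvy} \|\rvx_1 - \rvx_2\|$.

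The only points requiring care, rather than genuine difficulty, are that the maximizers can lie on the boundary of $\gY$, so optimality must be stated as a variational inequality rather than $\nabla_\rvy F = 0$; and that the partial-gradient Lipschitz bound in $\rvx$ must be extracted from the joint smoothness of Definition \ref{def:smooth} by specializing to equal $\rvy$-arguments, since the $\rvy$-block of the joint gradient difference is dominated by the full joint norm. Neither obstacle is substantial, so the estimate follows directly from the above chain.
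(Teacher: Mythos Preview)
Your argument is correct and is the standard proof of this Lipschitz property of the maximizer map. The paper does not give its own proof of this lemma; it is quoted directly from \cite{zhang2021generalization}, so there is no in-paper proof to compare against. Your treatment via the first-order variational inequality, strong-concavity monotonicity, and the partial smoothness bound extracted from Definition~\ref{def:smooth} is exactly the usual route and matches what one finds in the cited reference.
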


\begin{lemma}
\label{lemma:sub-exp}
Suppose a function $f: \gX \times \gY \times \gZ \rightarrow \R$ is $\beta$-smooth in $(\rvx, \rvy)$ and the function $f$ is $\mu_\rvy$-strongly concave in $\rvy \in \gY$ for any $\rvx \in \gX$ and $\rvz \in \gZ$. Then we have $ \frac{\rvu^\mathrm{T}(\nabla_\rvx f(\rvx_1, \rvy^*(\rvx_1); \rvz) - \nabla_\rvx f(\rvx_2, \rvy^*(\rvx_2); \rvz))}{\| \rvx_1 - \rvx_2 \|} $ and $ \frac{\rvu^\mathrm{T}(\nabla_\rvy f(\rvx_1, \rvy^*(\rvx_1); \rvz) - \nabla_\rvy f(\rvx_2, \rvy^*(\rvx_2); \rvz))}{\| \rvx_1 - \rvx_2 \|} $ are $\frac{\beta(\mu_\rvy+\beta)}{\mu_\rvy\ln{2}}$-sub-exponential random vectors. That is for any unit vector $\rvu \in B(0,1)$ and $\rvx_1, \rvx_2 \in \gX$,
\begin{align*}
    \E \left\{
    \exp \left(
    \frac{  \left| \rvu^\mathrm{T} (\nabla_\rvx f(\rvx_1, \rvy^*(\rvx_1);\rvz) - \nabla_\rvx f(\rvx_2, \rvy^*(\rvx_2);\rvz)) \right|}{\frac{\beta(\mu_\rvy+\beta)}{\mu_\rvy\ln{2}} \| \rvx_1 - \rvx_2 \|}
    \right)
    \right\} \leq 2.
\end{align*}
For any unit vector $\rvu \in B(0,1)$ and $\rvx_1, \rvx_2 \in \gX$,
\begin{align*}
    \E \left\{
    \exp \left(
    \frac{  \left| \rvu^\mathrm{T} (\nabla_\rvy f(\rvx_1, \rvy^*(\rvx_1);\rvz) - \nabla_\rvy f(\rvx_2, \rvy^*(\rvx_2);\rvz)) \right|}{\frac{\beta(\mu_\rvy+\beta)}{\mu_\rvy\ln{2}} \| \rvx_1 - \rvx_2 \|}
    \right)
    \right\} \leq 2.
\end{align*}
\end{lemma}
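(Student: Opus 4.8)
The plan is to show that the two random quantities are in fact bounded \emph{almost surely}, from which the stated $\text{Orlicz}_1$ condition follows at once. Fix a unit vector $\rvu \in B(0,1)$ and points $\rvx_1, \rvx_2 \in \gX$, and write $W := \rvu^\mathrm{T}\left(\nabla_\rvx f(\rvx_1, \rvy^*(\rvx_1);\rvz) - \nabla_\rvx f(\rvx_2, \rvy^*(\rvx_2);\rvz)\right)$. First I would apply the Cauchy--Schwarz inequality, using $\|\rvu\| = 1$, to reduce matters to the gradient difference itself: $|W| \leq \left\|\nabla_\rvx f(\rvx_1, \rvy^*(\rvx_1);\rvz) - \nabla_\rvx f(\rvx_2, \rvy^*(\rvx_2);\rvz)\right\|$.

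Next I would control this gradient difference via the joint $\beta$-smoothness of $f$ (Definition~\ref{def:smooth}). Specializing that definition to $\rvy_1 = \rvy^*(\rvx_1)$ and $\rvy_2 = \rvy^*(\rvx_2)$, and keeping only the $\rvx$-block of the left-hand side, yields
$$\left\|\nabla_\rvx f(\rvx_1, \rvy^*(\rvx_1);\rvz) - \nabla_\rvx f(\rvx_2, \rvy^*(\rvx_2);\rvz)\right\| \leq \beta \sqrt{\|\rvx_1 - \rvx_2\|^2 + \|\rvy^*(\rvx_1) - \rvy^*(\rvx_2)\|^2}.$$
The key ingredient is then Lemma~\ref{lemma:y2x}, which invokes strong concavity to bound the displacement of the inner maximizer by $\|\rvy^*(\rvx_1) - \rvy^*(\rvx_2)\| \leq \frac{\beta}{\mu_\rvy}\|\rvx_1 - \rvx_2\|$. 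Substituting this and factoring out $\|\rvx_1 - \rvx_2\|$ gives the bound $\frac{\beta\sqrt{\mu_\rvy^2 + \beta^2}}{\mu_\rvy}\|\rvx_1 - \rvx_2\|$, and the elementary inequality $\sqrt{\mu_\rvy^2 + \beta^2} \leq \mu_\rvy + \beta$ upgrades this to the clean form $\frac{\beta(\mu_\rvy + \beta)}{\mu_\rvy}\|\rvx_1 - \rvx_2\|$.

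Consequently $\frac{|W|}{\|\rvx_1 - \rvx_2\|} \leq \frac{\beta(\mu_\rvy + \beta)}{\mu_\rvy}$ holds for \emph{every} realization of $\rvz$. Writing $K = \frac{\beta(\mu_\rvy + \beta)}{\mu_\rvy\ln 2}$, the argument of the exponential therefore satisfies $\frac{|W|}{K\|\rvx_1 - \rvx_2\|} \leq \ln 2$ almost surely, so $\exp(\cdot) \leq e^{\ln 2} = 2$ pointwise, and taking expectations gives exactly $\E[\exp(|W|/(K\|\rvx_1-\rvx_2\|))] \leq 2$, the claimed $\text{Orlicz}_1$ bound. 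The $\nabla_\rvy$ statement follows by the identical chain, since the $\nabla_\rvy$ block obeys the same smoothness inequality and the same Lemma~\ref{lemma:y2x} estimate.

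The only real work is the bookkeeping in the smoothness/argmax step; there is no genuine probabilistic obstacle, because the quantity is bounded deterministically rather than merely sub-exponentially. I expect the subtlest point to be recognizing that the factor $\ln 2$ in the stated constant is precisely calibrated so that the almost-sure bound alone produces the normalization $\E[\exp(|W|/K)] \leq 2$, without any moment or tail estimate being required.
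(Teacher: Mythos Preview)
Your proposal is correct and matches the paper's proof essentially line for line: both argue that the quantity is deterministically bounded by $\frac{\beta(\mu_\rvy+\beta)}{\mu_\rvy}\|\rvx_1-\rvx_2\|$ via $\beta$-smoothness combined with Lemma~\ref{lemma:y2x}, then observe that the $\ln 2$ in the denominator makes the exponential at most $2$ pointwise. The only cosmetic difference is that the paper writes the smoothness bound in the additive form $\beta\|\rvx_1-\rvx_2\|+\beta\|\rvy^*(\rvx_1)-\rvy^*(\rvx_2)\|$ directly, whereas you keep the $\sqrt{\cdot}$ from Definition~\ref{def:smooth} and then invoke $\sqrt{a^2+b^2}\le a+b$; the conclusions are identical.
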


\begin{proof}[Proof of Lemma \ref{lemma:sub-exp}]
    According to Definition \ref{def:smooth}, for any sample $\rvz \in \gZ$ and $\rvx_1, \rvx_2 \in \gX$, we have
    \begin{align*}
        & \| \nabla_\rvx f(\rvx_1, \rvy^*(\rvx_1);\rvz) - \nabla_\rvx f(\rvx_2, \rvy^*(\rvx_2);\rvz) \| \\
        \leq & \beta \| \rvx_1 - \rvx_2 \| + \beta \| \rvy^*(\rvx_1) - \rvy^*(\rvx_2) \| \\
        \leq & \beta \| \rvx_1 - \rvx_2 \| + \frac{\beta^2}{\mu_\rvy} \| \rvx_1 - \rvx_2 \| \\ 
        = & \frac{\beta(\mu_\rvy+\beta)}{\mu_\rvy} \| \rvx_1 - \rvx_2 \|,
    \end{align*}
where the first inequality uses the smoothness and the second inequality applies Lemma \ref{lemma:y2x}.

    Then, for any unit vector $\rvu \in B(0,1)$, we have
    \begin{align*}
        & \left| \rvu^\mathrm{T} (\nabla_\rvx f(\rvx_1, \rvy^*(\rvx_1);\rvz) - \nabla_\rvx f(\rvx_2, \rvy^*(\rvx_2);\rvz)) \right|  \\
        \leq & \| \rvu^\mathrm{T} \| \| \nabla_\rvx f(\rvx_1, \rvy^*(\rvx_1);\rvz) - \nabla_\rvx f(\rvx_2, \rvy^*(\rvx_2);\rvz) \| \\
        \leq & \frac{\beta(\mu_\rvy+\beta)}{\mu_\rvy} \| \rvx_1 - \rvx_2 \|,
    \end{align*}
    which implies
    \begin{align*}
        \frac{\left| \rvu^\mathrm{T} (\nabla_\rvx f(\rvx_1, \rvy^*(\rvx_1);\rvz) - \nabla_\rvx f(\rvx_2, \rvy^*(\rvx_2);\rvz)) \right|}{\frac{\beta(\mu_\rvy+\beta)}{\mu_\rvy} \| \rvx_1 - \rvx_2 \|} \leq 1.
    \end{align*}
    Then we get
    \begin{align*}
        \E \left\{
        \exp \left(
        \frac{  \left| \rvu^\mathrm{T} (\nabla_\rvx f(\rvx_1, \rvy^*(\rvx_1);\rvz) - \nabla_\rvx f(\rvx_2, \rvy^*(\rvx_2);\rvz)) \right|}{\frac{\beta(\mu_\rvy+\beta)}{\mu_\rvy\ln{2}} \| \rvx_1 - \rvx_2 \|}
        \right)
        \right\} \leq 2.
    \end{align*}

Similarly, according to Definition \ref{def:smooth}, for any sample $\rvz \in \gZ$ and $\rvx_1, \rvx_2 \in \gX$, we have
    \begin{align*}
        & \| \nabla_\rvy f(\rvx_1, \rvy^*(\rvx_1);\rvz) - \nabla_\rvy f(\rvx_2, \rvy^*(\rvx_2);\rvz) \| 
        \leq  \beta \| \rvx_1 - \rvx_2 \| + \beta \| \rvy^*(\rvx_1) - \rvy^*(\rvx_2) \|.
    \end{align*}
We can easily derive the following result
\begin{align*}
        \E \left\{
        \exp \left(
        \frac{  \left| \rvu^\mathrm{T} (\nabla_\rvy f(\rvx_1, \rvy^*(\rvx_1);\rvz) - \nabla_\rvy f(\rvx_2, \rvy^*(\rvx_2);\rvz)) \right|}{\frac{\beta(\mu_\rvy+\beta)}{\mu_\rvy\ln{2}} \| \rvx_1 - \rvx_2 \|}
        \right)
        \right\} \leq 2.
    \end{align*}
The proof is complete.
\end{proof}

\begin{lemma}
\label{lemma:nablafx-nablafx*}
    Suppose Assumption \ref{assumption:NC-SC} holds, we have the following inequality that for all $\rvx \in \gX$ and for any $\delta \in (0,1)$, with probability at lest $1-\delta$,
    \begin{align*}
        & \| (\nabla_\rvx F(\rvx, \rvy^*(\rvx)) - \nabla_\rvx F_S(\rvx, \rvy^*(\rvx)))
        - (\nabla_\rvx F(\rvx^*, \rvy^*) - \nabla_\rvx F_S(\rvx^*, \rvy^*))
        \| \\ 
        \leq & 
        \frac{C\beta(\mu_\rvy+\beta)}{\mu_\rvy} 
        \max \left\{ \| \rvx - \rvx^* \|, \frac{1}{n} \right\} \times
        \left(
        \sqrt{\frac{d+\log{\frac{4\log_2(\sqrt{2}R_1 n+1)}{\delta}}}{n}} + 
        \frac{d+\log{\frac{4\log_2(\sqrt{2}R_1 n+1)}{\delta}}}{n}
        \right),
    \end{align*}
    where $C$ is an absolute constant.
\end{lemma}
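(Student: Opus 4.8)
The plan is to recognise the left-hand side as the norm of a centered, vector-valued empirical process and to control it by generic chaining combined with the uniform localized convergence device. I set
\[
g(\rvx;\rvz) := \nabla_\rvx f(\rvx, \rvy^*(\rvx);\rvz) - \nabla_\rvx f(\rvx^*, \rvy^*;\rvz), \qquad X_\rvx := (\sP - \sP_n)\, g(\rvx;\cdot).
\]
Because the two copies of $\nabla_\rvx f(\rvx^*,\rvy^*;\rvz)$ cancel between the population and empirical averages, the quantity to be bounded is exactly $\|X_\rvx\|$, and moreover $X_{\rvx^*}=0$, so it suffices to control $\sup_{\rvx\in\gX}\|X_\rvx - X_{\rvx^*}\|$.

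First I would pin down the tail behaviour of the increments. The proof of Lemma~\ref{lemma:sub-exp} already delivers the deterministic bound $\|g(\rvx_1;\rvz) - g(\rvx_2;\rvz)\| \le \frac{\beta(\mu_\rvy+\beta)}{\mu_\rvy}\|\rvx_1-\rvx_2\|$ for every $\rvz$, via smoothness and Lemma~\ref{lemma:y2x}. Hence each summand of $X_{\rvx_1}-X_{\rvx_2}$ is a bounded (in particular sub-exponential) random vector whose variance and Bernstein parameter are both $\lesssim \frac{\beta(\mu_\rvy+\beta)}{\mu_\rvy}\|\rvx_1-\rvx_2\|$. Feeding these into the vector Bernstein inequality (Lemma~\ref{lemma:vector-bernstein-ineq}) and writing $u=\log(2/\delta)$ shows that $(X_\rvx)_{\rvx\in\gX}$ has mixed sub-Gaussian--sub-exponential increments (Definition~\ref{def:mix-sub-gaussian-sub-exp}) with respect to $\mathsf{metr}_2(\rvx_1,\rvx_2) \asymp \frac{\beta(\mu_\rvy+\beta)}{\mu_\rvy\sqrt{n}}\|\rvx_1-\rvx_2\|$ and $\mathsf{metr}_1(\rvx_1,\rvx_2) \asymp \frac{\beta(\mu_\rvy+\beta)}{\mu_\rvy\, n}\|\rvx_1-\rvx_2\|$; crucially both are rescalings of the Euclidean metric on $\gX$.

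Next I would bound the chaining functionals on a local ball $\gX_r := \{\rvx\in\gX:\|\rvx-\rvx^*\|\le r\}$. Since $\gX_r$ is a Euclidean ball of radius $r$ in $\R^d$, its covering number obeys $\log N(\gX_r,\|\cdot\|,\epsilon)\lesssim d\log(3r/\epsilon)$, so Dudley's bound (Lemma~\ref{lemma:dudley-integral-bound}) applied to the rescaled metrics yields $\gamma_2(\gX_r,\mathsf{metr}_2)\lesssim \frac{\beta(\mu_\rvy+\beta)}{\mu_\rvy}\frac{r\sqrt{d}}{\sqrt{n}}$ and $\gamma_1(\gX_r,\mathsf{metr}_1)\lesssim \frac{\beta(\mu_\rvy+\beta)}{\mu_\rvy}\frac{r d}{n}$, while the two diameters scale like $\frac{\beta(\mu_\rvy+\beta)}{\mu_\rvy}\frac{r}{\sqrt n}$ and $\frac{\beta(\mu_\rvy+\beta)}{\mu_\rvy}\frac{r}{n}$. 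Plugging these into the generic chaining estimate for mixed tails (Lemma~\ref{lemma:generic-chaining-for-mixed-tail-increments}) gives, with probability $1-\delta$, a local bound that is \emph{linear} in the radius,
\[
\sup_{\rvx\in\gX_r}\|X_\rvx\| \le \psi(r;\delta):= C\,\frac{\beta(\mu_\rvy+\beta)}{\mu_\rvy}\, r\left(\sqrt{\frac{d+\log\frac1\delta}{n}} + \frac{d+\log\frac1\delta}{n}\right).
\]

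Finally I would remove the global radius by the uniform localized convergence argument (Lemma~\ref{lemma:a-variant-uniform-localized-convergence-argument}) with functional $T(\rvx):=\|\rvx-\rvx^*\|\in[0,R]$ and cutoff $r_0=1/n$. Since $\psi(\cdot;\delta)$ is non-decreasing and linear in $r$, that lemma converts $r$ into $2T(\rvx)\vee r_0 = 2\max\{\|\rvx-\rvx^*\|,1/n\}$ and replaces $\delta$ by $\delta/C_{r_0}$ with $C_{r_0}=2\log_2(2R/r_0)\asymp\log_2(\sqrt2 R_1 n+1)$, which reproduces precisely the claimed $\max\{\|\rvx-\rvx^*\|,1/n\}$ factor, the two-term $\sqrt{\cdots/n}+\cdots/n$ dependence, and the logarithmic confidence factor. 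The technical heart, and the step I expect to be the main obstacle, is the second one in its genuinely vector-valued form: one must verify the mixed-tail bound for the \emph{norm} $\|X_{\rvx_1}-X_{\rvx_2}\|$ (not merely one-dimensional projections), so that vector Bernstein produces metrics proportional to $\|\rvx_1-\rvx_2\|$, and then check that restricting the chaining to $\gX_r$ makes all four ingredients scale linearly in $r$. This linearity is exactly what lets the localization lemma trade the worst-case radius $R_1$ for the sharper, parameter-dependent factor; aligning the remaining constants and the $\log_2(\sqrt2 R_1 n+1)$ term is the only fussy bookkeeping that then remains.
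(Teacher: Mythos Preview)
Your approach is correct and reaches the stated bound, but it differs from the paper's in one structural respect. The paper does \emph{not} chain the vector-valued process $X_\rvx=(\sP-\sP_n)g(\rvx;\cdot)$ directly. Instead it introduces an auxiliary direction variable $\rvv\in\gV:=\{\rvv\in\R^d:\|\rvv\|\le\max\{R_1,1/n\}\}$, defines the \emph{scalar} process $g_{(\rvx,\rvv)}(\rvz)=g(\rvx;\rvz)^{\mathrm T}\rvv$ on the product space $\gX\times\gV$, shows its increments are $\frac{2\beta\sqrt r(\mu_\rvy+\beta)}{\mu_\rvy\ln 2}\|\cdot\|_{\gX\times\gV}$-sub-exponential on the ball $\{\|\rvx-\rvx^*\|^2+\|\rvv\|^2\le r\}$, applies the scalar Bernstein inequality (Lemma~\ref{lemma:bernstein-ineq-for-sub-exp}) to obtain the mixed tail, then runs generic chaining and Dudley over this $2d$-dimensional set. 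The localization (Lemma~\ref{lemma:a-variant-uniform-localized-convergence-argument}) is carried out with the \emph{squared} functional $T=\|\rvx-\rvx^*\|^2+\|\rvv\|^2$ and $r_0=2/n^2$; only at the very end is $\rvv$ specialised to $\max\{\|\rvx-\rvx^*\|,1/n\}\cdot X_\rvx/\|X_\rvx\|$, which converts the scalar bound back into the desired norm bound and produces the $\max\{\|\rvx-\rvx^*\|,1/n\}$ factor after dividing through by $\|\rvv\|$.

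Your route is shorter: you use vector Bernstein (Lemma~\ref{lemma:vector-bernstein-ineq}) to get the mixed-tail increment for $\|X_{\rvx_1}-X_{\rvx_2}\|$ in one shot, then chain over the $d$-dimensional set $\gX_r$ and localise with $T(\rvx)=\|\rvx-\rvx^*\|$ and $r_0=1/n$. This avoids the product-space construction entirely and halves the ambient dimension in Dudley's integral (absorbed into $C$). The paper's linearisation trick, on the other hand, sidesteps any reliance on the generic chaining lemma holding for vector-valued processes---it reduces everything to the scalar case before invoking Lemma~\ref{lemma:generic-chaining-for-mixed-tail-increments}. Since the paper's statement of that lemma does accommodate vector-valued $X_f$ (and since one can always pass to the real-valued process $Y_\rvx=\|X_\rvx\|$ via $|Y_{\rvx_1}-Y_{\rvx_2}|\le\|X_{\rvx_1}-X_{\rvx_2}\|$), your approach is sound; the bookkeeping for the $\log_2(\sqrt2 R_1 n+1)$ factor then matches up to constants.
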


\begin{proof}[Proof of Lemma \ref{lemma:nablafx-nablafx*}]
    We define $\gV = \{ \rvv \in \R^d: \| \rvv \| \leq \max \{ R_1, \frac1n \} \}$. For all $(\rvx, \rvv) \in \gX \times \gV$, let $g_{(\rvx, \rvv)} (\rvz) = \left( \nabla_\rvx f(\rvx, \rvy^*(\rvx);\rvz) - \nabla_\rvx f(\rvx^*, \rvy^*; \rvz) \right)^\mathrm{T}\rvv$. Then for any $(\rvx_1, \rvv_1)$ and $(\rvx_2, \rvv_2) \in \gX \times \gV$, we define the following norm on the product space $\gX \times \gV$
    \begin{align*}
        \left\| (\rvx_1, \rvv_1) - (\rvx_2, \rvv_2) \right\|_{\gX \times \gV} 
        = \left( 
        \| \rvx_1 -\rvx_2 \|^2 + \| \rvv_1 - \rvv_2 \|^2
        \right)^{\frac12}.
    \end{align*}
    Then we define a ball on the product space $\gX \times \gV$ that $B(\sqrt{r}) = \{ (\rvx, \rvv) \in \gX \times \gV : \| \rvx - \rvx^* \|^2 + \| \rvv \|^2  \leq r \}$. Given any $(\rvx_1, \rvv_1)$ and $(\rvx_2, \rvv_2) \in B(\sqrt{r})$, we have the following decomposition
    \begin{align*}
        & g_{(\rvx_1, \rvv_1)} (\rvz) - g_{(\rvx_2, \rvv_2)} (\rvz) \\
        =& \left( \nabla_\rvx f(\rvx_1, \rvy^*(\rvx_1);\rvz) - \nabla_\rvx f(\rvx^*, \rvy^*; \rvz) \right)^\mathrm{T}\rvv_1  \\
        &\space - \left( \nabla_\rvx f(\rvx_2, \rvy^*(\rvx_2);\rvz) - \nabla_\rvx f(\rvx^*, \rvy^*; \rvz) \right)^\mathrm{T}\rvv_2 \\
        =& \left( \nabla_\rvx f(\rvx_1, \rvy^*(\rvx_1);\rvz) - \nabla_\rvx f(\rvx^*, \rvy^*; \rvz) \right)^\mathrm{T}(\rvv_1 -\ \rvv_2)  \\
        & \space + \left( \nabla_\rvx f(\rvx_1, \rvy^*(\rvx_1);\rvz) - \nabla_\rvx f(\rvx^*, \rvy^*; \rvz) \right)^\mathrm{T} \rvv_2 \\
        &\space - \left( \nabla_\rvx f(\rvx_2, \rvy^*(\rvx_2);\rvz) - \nabla_\rvx f(\rvx^*, \rvy^*; \rvz) \right)^\mathrm{T}\rvv_2 \\
        =& \left( \nabla_\rvx f(\rvx_1, \rvy^*(\rvx_1);\rvz) - \nabla_\rvx f(\rvx^*, \rvy^*; \rvz) \right)^\mathrm{T}(\rvv_1 -\ \rvv_2)  \\
        & \space + \left( \nabla_\rvx f(\rvx_1, \rvy^*(\rvx_1);\rvz) - \nabla_\rvx f(\rvx_2, \rvy^*(\rvx_2); \rvz) \right)^\mathrm{T} \rvv_2.
    \end{align*}
    Since $(\rvx_1, \rvv_1)$ and $(\rvx_2, \rvv_2) \in B(\sqrt{r})$, we have
    \begin{align}
    \label{eq:proof-1-1}
        \| \rvx_1 - \rvx^* \| \| \rvv_1 -\rvv_2 \| \leq \sqrt{r} \| \rvv_1 - \rvv_2 \| \leq \sqrt{r} \| (\rvx_1, \rvv_1) - (\rvx_2, \rvv_2) \|_{\gX \times \gV}.
    \end{align}
    Next, from Assumption \ref{assumption:NC-SC} and Lemma \ref{lemma:sub-exp}, we know that $ \frac{\nabla_\rvx f(\rvx_1, \rvy^*(\rvx_1); \rvz) - \nabla_\rvx f(\rvx_2, \rvy^*(\rvx_2); \rvz)}{\| \rvx_1 - \rvx_2 \|} $ is a $\frac{\beta(\mu_\rvy+\beta)}{\mu_\rvy\ln{2}}$-sub-exponential random vector for all $\rvx_1, \rvx_2 \in \gX$, which means that
    \begin{align}
    \label{eq:proof-1-2}
        \E \left\{
        \exp \left(
        \frac{(\nabla_\rvx f(\rvx_1, \rvy^*(\rvx_1); \rvz) - \nabla_\rvx f(\rvx^*, \rvy^*(\rvx^*); \rvz))^\mathrm{T}(\rvv_1 -\rvv_2)}{ \frac{\beta(\mu_\rvy+\beta)}{\mu_\rvy\ln{2}} \| \rvx_1 - \rvx^* \| \| \rvv_1 - \rvv_2 \| }
        \right)
        \right\} \leq 2.
    \end{align}
    We combine (\ref{eq:proof-1-1}) and (\ref{eq:proof-1-2}), according to Definition \ref{def:orlicz-alpha-norm}, we can easily deduce that $\left( \nabla_\rvx f(\rvx_1, \rvy^*(\rvx_1);\rvz) - \nabla_\rvx f(\rvx^*, \rvy^*; \rvz) \right)^\mathrm{T}(\rvv_1 -\ \rvv_2)$ is $\frac{\beta\sqrt{r}(\mu_\rvy+\beta)}{\mu_\rvy\ln{2}}\| (\rvx_1, \rvv_1) - (\rvx_2, \rvv_2) \|_{\gX \times \gV}$-sub-exponential. Similarly, we can derive that
    \begin{align*}
        \| \rvx_1 - \rvx_2 \| \| \rvv_2 \| \leq \sqrt{r} \| \rvx_1 - \rvx_2 \| \leq \sqrt{r} \| (\rvx_1, \rvv_1) - (\rvx_2, \rvv_2) \|_{\gX \times \gV}.
    \end{align*}
    Also, there holds that
    \begin{align*}
        \E \left\{
        \exp \left(
        \frac{(\nabla_\rvx f(\rvx_1, \rvy^*(\rvx_1); \rvz) - \nabla_\rvx f(\rvx_2, \rvy^*(\rvx_2); \rvz))^\mathrm{T}(\rvv_2)}{ \frac{\beta(\mu_\rvy+\beta)}{\mu_\rvy\ln{2}}  \| \rvx_1 - \rvx_2 \| \|\rvv_2\|}
        \right)
        \right\} \leq 2.
    \end{align*}
    Thus, we have that $ (\nabla_\rvx f(\rvx_1, \rvy^*(\rvx_1); \rvz) - \nabla_\rvx f(\rvx_2, \rvy^*(\rvx_2); \rvz))^\mathrm{T}(\rvv_2)$ is also $\frac{\beta \sqrt{r} (\mu_\rvy+\beta)}{\mu_\rvy\ln{2}} \| (\rvx_1, \rvv_1) - (\rvx_2, \rvv_2) \|_{\gX \times \gV}$-sub-exponential. 
    Now for any $(\rvx_1, \rvv_1)$ and $(\rvx_2, \rvv_2) \in B(\sqrt{r})$, we know
    \begin{equation}
    \label{eq:g-is-sub-exp}
        \begin{aligned}
            & \E \left\{
            \exp \left(
            \frac{g_{(\rvx_1,\rvv_1)}(\rvz) - g_{(\rvx_2,\rvv_2)}(\rvz)}{ \frac{2 \beta \sqrt{r} (\mu_\rvy+\beta)}{\mu_\rvy\ln{2}} \| (\rvx_1, \rvv_1) - (\rvx_2, \rvv_2) \|_{\gX \times \gV}}
            \right)
            \right\} \\
            \leq & \E \left\{
            \frac{1}{2}    \exp \left(
            \frac{(\nabla_\rvx f(\rvx_1, \rvy^*(\rvx_1); \rvz) - \nabla_\rvx f(\rvx^*, \rvy^*; \rvz))^\mathrm{T}(\rvv_1 - \rvv_2)}{ \frac{\beta \sqrt{r} (\mu_\rvy+\beta)}{\mu_\rvy\ln{2}} \| (\rvx_1, \rvv_1) - (\rvx_2, \rvv_2) \|_{\gX \times \gV}}
            \right)
            \right\} \\
            + & \E \left\{
            \frac{1}{2}    \exp \left(
            \frac{(\nabla_\rvx f(\rvx_1, \rvy^*(\rvx_1); \rvz) - \nabla_\rvx f(\rvx_2, \rvy^*(\rvx_2); \rvz))^\mathrm{T}(\rvv_2)}{ \frac{\beta \sqrt{r} (\mu_\rvy+\beta)}{\mu_\rvy\ln{2}} \| (\rvx_1, \rvv_1) - (\rvx_2, \rvv_2) \|_{\gX \times \gV}}
            \right)
            \right\}
            \leq 2,
        \end{aligned}
    \end{equation}
where the first inequality follows from Jensen's inequality. (\ref{eq:g-is-sub-exp}) implies that $g_{(\rvx_1,\rvv_1)}(\rvz) - g_{(\rvx_2,\rvv_2)}(\rvz)$ is a $\frac{2 \beta \sqrt{r} (\mu_\rvy+\beta)}{\mu_\rvy\ln{2}} \| (\rvx_1, \rvv_1) - (\rvx_2, \rvv_2) \|_{\gX \times \gV}$-sub-exponential random variable, for which we have
\begin{equation}
    \begin{aligned}
        \| g_{(\rvx_1,\rvv_1)}(\rvz) - g_{(\rvx_2,\rvv_2)}(\rvz) \|_{\text{Orlicz}_{1}}
        \leq \frac{2 \beta \sqrt{r} (\mu_\rvy+\beta)}{\mu_\rvy\ln{2}} \| (\rvx_1, \rvv_1) - (\rvx_2, \rvv_2) \|_{\gX \times \gV}.
    \end{aligned}
\end{equation}

From Bernstein inequality for sub-exponential variables (Lemma \ref{lemma:bernstein-ineq-for-sub-exp}), for any fixed $u \leq 0$ and $(\rvx_1. \rvv_1), (\rvx_2,\rvv_2) \in \gX \times \gV$, we have
\begin{equation}
    \begin{aligned}
    \label{eq:mix-sub-gaussian-sub-exp}
    Pr\left(
    |(\mathbb{P} - \mathbb{P}_n)[ g_{(\rvx_1,\rvv_1)}(\rvz) - g_{(\rvx_2,\rvv_2)}(\rvz) ] |
    \geq \frac{2 \beta \sqrt{r} (\mu_\rvy+\beta)}{\mu_\rvy\ln{2}} \| (\rvx_1, \rvv_1) - (\rvx_2, \rvv_2) \|_{\gX \times \gV} \sqrt{\frac{2u}{n}}  \right.\\
    \left. + \frac{2 u \beta \sqrt{r} (\mu_\rvy+\beta)}{n\mu_\rvy\ln{2}} \| (\rvx_1, \rvv_1) - (\rvx_2, \rvv_2) \|_{\gX \times \gV}
    \right) 
    \leq 2e^{-u}.        
    \end{aligned}
\end{equation}

According to Definition \ref{def:mix-sub-gaussian-sub-exp}, we know that $(\mathbb{P} - \mathbb{P}_n) g_{(\rvx,\rvv)}(\rvz)$ is a mixed sub-Gaussian-sub-exponential increments w.r.t. the metrics 
$(\frac{2 \beta \sqrt{r} (\mu_\rvy+\beta)}{n\mu_\rvy\ln{2}} \| \cdot \|_{\gX \times \gV},
\frac{2 \beta \sqrt{2r} (\mu_\rvy+\beta)}{\sqrt{n}\mu_\rvy\ln{2}} \| \cdot \|_{\gX \times \gV})$
from (\ref{eq:mix-sub-gaussian-sub-exp}). 

Then from the generic chaining for a process with mixed tail increments in Lemma \ref{lemma:generic-chaining-for-mixed-tail-increments}, we have the following inequality that for all $\delta \in (0,1)$, with probability at least $1-\delta$
\begin{align*}
    & \sup_{\|\rvx_1 - \rvx^* \|^2 + \| \rvv \|^2 \leq r}
    | (\mathbb{P} - \mathbb{P}_n) g_{(\rvx,\rvv)}(\rvz) | \\
    \leq & C \Bigg(
    \gamma_2 
    \left(
        B(\sqrt{r}), \frac{2 \beta \sqrt{2r} (\mu_\rvy+\beta)}{\sqrt{n}\mu_\rvy\ln{2}} \| \cdot \|_{\gX \times \gV}
    \right)+
    \gamma_1 
    \left(
        B(\sqrt{r}), \frac{2 \beta \sqrt{r} (\mu_\rvy+\beta)}{n\mu_\rvy\ln{2}} \| \cdot \|_{\gX \times \gV}
    \right) \\
     &
    + \frac{r\beta(\mu_\rvy+\beta)}{\mu_\rvy \ln{2}} \sqrt{\frac{\log{\frac1\delta}}{n}}
    + \frac{r\beta(\mu_\rvy+\beta)}{\mu_\rvy \ln{2}} \frac{\log{\frac1\delta}}{n}
    \Bigg).
\end{align*}

Next we use the Dudley's integral (Lemma \ref{lemma:dudley-integral-bound}) to bound the $\gamma_1$ and $\gamma_2$ functional. So there exists an absolute constant $C$ such that for any $\delta \in (0,1)$, with probability at least $1-\delta$
\begin{align}
\label{eq:sup-before-a-varient-uniform-localized-convergenc-argument}
    \sup_{\|\rvx_1 - \rvx^* \|^2 + \| \rvv \|^2 \leq r}
    | (\mathbb{P} - \mathbb{P}_n) g_{(\rvx,\rvv)}(\rvz) |
    \leq \frac{rC\beta(\mu_\rvy+\beta)}{\mu_\rvy \ln{2}}
    \left(
    \sqrt{\frac{d+\log{\frac{1}{\delta}}}{n}} + 
    \frac{d+\log{\frac{1}{\delta}}}{n}
    \right).
\end{align}

Then, the next step is to apply Lemma \ref{lemma:a-variant-uniform-localized-convergence-argument} to (\ref{eq:sup-before-a-varient-uniform-localized-convergenc-argument}). We denote $T(f) = \|\rvx - \rvx^* \|^2 + \| \rvv \|^2 $,
$\psi(r;\delta) = \frac{rC\beta(\mu_\rvy+\beta)}{\mu_\rvy \ln{2}} \left(
    \sqrt{\frac{d+\log{\frac{1}{\delta}}}{n}} + 
    \frac{d+\log{\frac{1}{\delta}}}{n}
    \right)$. 
Since $\|\rvx - \rvx^* \|^2 + \| \rvv \|^2 \leq R_1^2 + R_1^2 + \frac{1}{n^2}$, we set $R^2 = 2R_1^2 + \frac{1}{n^2}$ and $r_0 = \frac{2}{n^2}$. According to Lemma \ref{lemma:a-variant-uniform-localized-convergence-argument}, we have the following inequality that for any $\delta \in (0,1)$, with probability at least $1-\delta$, for all $\rvx \in \gX$ and $\rvv \in \gV$,
\begin{equation}
    \begin{aligned}
\label{eq:g-before-the-final-result}
        & (\mathbb{P} - \mathbb{P}_n) g_{(\rvx,\rvv)}(\rvz) \\
        = & (\mathbb{P} - \mathbb{P}_n) [(\nabla_\rvx f(\rvx, \rvy^*(\rvx);\rvz) - \nabla_\rvx f(\rvx^*, \rvy^*;\rvz))^\mathrm{T} \rvv ]\\
        \leq & \psi\left(
        \max \left\{ \| \rvx - \rvx^* \|^2 + \| \rvv \|^2, \frac{2}{n^2} \right\};
        \frac{\delta}{2\log_2(Rn^2)}
        \right) \\
        \leq & 
        \frac{C\beta(\mu_\rvy+\beta)}{\mu_\rvy \ln{2}} 
        \max \left\{ \| \rvx - \rvx^* \|^2 + \| \rvv \|^2, \frac{2}{n^2} \right\} \times
        \left(
        \sqrt{\frac{d+\log{\frac{2\log_2(Rn^2)}{\delta}}}{n}} + 
        \frac{d+\log{\frac{2\log_2(Rn^2)}{\delta}}}{n}
        \right).
    \end{aligned}
\end{equation}

Finally, we choose $\rvv = \max \left\{ \|\rvx - \rvx^*\|, \frac{1}{n} \right\}  \frac{(\mathbb{P} - \mathbb{P}_n) (\nabla_\rvx f(\rvx, \rvy^*(\rvx);\rvz) - \nabla_\rvx f(\rvx^*, \rvy^*;\rvz))}{\| (\mathbb{P} - \mathbb{P}_n) (\nabla_\rvx f(\rvx, \rvy^*(\rvx);\rvz) - \nabla_\rvx f(\rvx^*, \rvy^*;\rvz)) \|} $.
It is clear that $\|\rvv\| = \max \{ \|\rvx-\rvx^*\| , \frac1n \} \leq \max \{R_1, \frac1n\}$, which belongs to the space $\gV$. Plugging this $\rvv$ into 
(\ref{eq:g-before-the-final-result}), we have the following inequality that for any $\delta \in (0,1)$, with probability at least $1-\delta$, for all $\rvx \in \gX$,
\begin{equation}
    \begin{aligned}
    & \|(\mathbb{P} - \mathbb{P}_n) (\nabla_\rvx f(\rvx, \rvy^*(\rvx);\rvz) - \nabla_\rvx f(\rvx^*, \rvy^*;\rvz))\| \\
    \leq & 
    \frac{C\beta(\mu_\rvy+\beta)}{\mu_\rvy \ln{2}} 
    \max \left\{ \| \rvx - \rvx^* \|, \frac{1}{n} \right\} \times
    \left(
    \sqrt{\frac{d+\log{\frac{2\log_2(Rn^2)}{\delta}}}{n}} + 
    \frac{d+\log{\frac{2\log_2(Rn^2)}{\delta}}}{n}
    \right) \\
    = &
    \frac{C\beta(\mu_\rvy+\beta)}{\mu_\rvy \ln{2}} 
    \max \left\{ \| \rvx - \rvx^* \|, \frac{1}{n} \right\} \times
    \left(
    \sqrt{\frac{d+\log{\frac{4\log_2(\sqrt{2}R_1 n+1)}{\delta}}}{n}} + 
    \frac{d+\log{\frac{4\log_2(\sqrt{2}R_1 n+1)}{\delta}}}{n}
    \right).
    \end{aligned}
\end{equation}

Finally we have
\begin{align*}
    & \| (\nabla_\rvx F(\rvx, \rvy^*(\rvx)) - \nabla_\rvx F_S(\rvx, \rvy^*(\rvx)))
    - (\nabla_\rvx F(\rvx^*, \rvy^*) - \nabla_\rvx F_S(\rvx^*, \rvy^*))
    \| \\ 
    = & 
    \|(\mathbb{P} - \mathbb{P}_n) (\nabla_\rvx f(\rvx, \rvy^*(\rvx);\rvz) - \nabla_\rvx f(\rvx^*, \rvy^*(\rvx^*);\rvz))\| \\
    \leq & 
    \frac{C\beta(\mu_\rvy+\beta)}{\mu_\rvy} 
    \max \left\{ \| \rvx - \rvx^* \|, \frac{1}{n} \right\} \times
    \left(
    \sqrt{\frac{d+\log{\frac{4\log_2(\sqrt{2}R_1 n+1)}{\delta}}}{n}} + 
    \frac{d+\log{\frac{4\log_2(\sqrt{2}R_1 n+1)}{\delta}}}{n}
    \right),
\end{align*}
where $C$ is an absolute constant.

The proof is complete.
\end{proof}

\begin{lemma}
\label{lemma:nablafx-without-nablafx*}
    Suppose Assumption \ref{assumption:NC-SC} holds, we have the following inequality that for all $\rvx \in \gX$ and for any $\delta \in (0,1)$, with probability at lest $1-\delta$,
        \begin{align*}
        & \| \nabla_\rvx F(\rvx, \rvy^*(\rvx)) - \nabla_\rvx F_S(\rvx, \rvy^*(\rvx)) \| 
        \leq  \sqrt{\frac{2 \E \|\nabla_\rvx f(\rvx^*, \rvy^*; \rvz)\|^2 \log{\frac{4}{\delta}}}{n}} 
        + \frac{B_{\rvx^*} \log{\frac4\delta}}{n} \\
        & + \space
        \frac{C\beta(\mu_\rvy+\beta)}{\mu_\rvy} 
        \max \left\{ \| \rvx - \rvx^* \|, \frac{1}{n} \right\} \times
        \left(
        \sqrt{\frac{d+\log{\frac{8\log_2(\sqrt{2}R_1 n+1)}{\delta}}}{n}} + 
        \frac{d+\log{\frac{8\log_2(\sqrt{2}R_1 n+1)}{\delta}}}{n}
        \right).
    \end{align*}
\end{lemma}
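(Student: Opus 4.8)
The plan is to reduce $\nabla_\rvx F(\rvx, \rvy^*(\rvx)) - \nabla_\rvx F_S(\rvx, \rvy^*(\rvx))$ to two pieces by inserting and subtracting the deviation at the fixed reference point $(\rvx^*, \rvy^*)$. Writing $(\mathbb{P}-\mathbb{P}_n)$ for the gap between population and empirical averages, observe that $\nabla_\rvx F(\rvx, \rvy^*(\rvx)) - \nabla_\rvx F_S(\rvx, \rvy^*(\rvx)) = (\mathbb{P}-\mathbb{P}_n)[\nabla_\rvx f(\rvx, \rvy^*(\rvx); \rvz)]$. Adding and subtracting $(\mathbb{P}-\mathbb{P}_n)[\nabla_\rvx f(\rvx^*, \rvy^*; \rvz)]$ and applying the triangle inequality gives
\begin{align*}
  & \| \nabla_\rvx F(\rvx, \rvy^*(\rvx)) - \nabla_\rvx F_S(\rvx, \rvy^*(\rvx)) \| \\
  \leq & \| (\nabla_\rvx F(\rvx, \rvy^*(\rvx)) - \nabla_\rvx F_S(\rvx, \rvy^*(\rvx))) - (\nabla_\rvx F(\rvx^*, \rvy^*) - \nabla_\rvx F_S(\rvx^*, \rvy^*)) \| \\
  & + \| \nabla_\rvx F(\rvx^*, \rvy^*) - \nabla_\rvx F_S(\rvx^*, \rvy^*) \|,
\end{align*}
where I have used $\rvy^*(\rvx^*) = \rvy^*$.

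The first term is precisely the quantity controlled by Lemma \ref{lemma:nablafx-nablafx*}; invoking it at confidence level $\delta/2$ reproduces the chaining term in the statement, the logarithmic argument becoming $\frac{8\log_2(\sqrt{2}R_1 n + 1)}{\delta}$. The second term is the pointwise concentration of the i.i.d.\ random vectors $X_i := \nabla_\rvx f(\rvx^*, \rvy^*; \rvz_i)$ around their mean $\mu = \nabla_\rvx F(\rvx^*, \rvy^*)$. I would bound it with the vector Bernstein inequality (Lemma \ref{lemma:vector-bernstein-ineq}), whose moment hypothesis is supplied by the Bernstein condition (Assumption \ref{assumption:minimax-bernstein-condition}) with parameter $B_{\rvx^*}$; applied at level $\delta/2$ this yields $\sqrt{\frac{2\E\|\nabla_\rvx f(\rvx^*, \rvy^*; \rvz)\|^2 \log(4/\delta)}{n}} + \frac{B_{\rvx^*}\log(4/\delta)}{n}$. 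A union bound over the two events, each of failure probability $\delta/2$, followed by summing the two estimates, then delivers the claim at overall confidence $1-\delta$.

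The only delicate point is matching moments in the Bernstein step: Lemma \ref{lemma:vector-bernstein-ineq} is phrased with the centered variance $\sigma^2 = \E\|X_i - \mu\|^2$, whereas Assumption \ref{assumption:minimax-bernstein-condition} and the target bound use the uncentered second moment $\E\|X_i\|^2$. Since $\E\|X_i - \mu\|^2 = \E\|X_i\|^2 - \|\mu\|^2 \leq \E\|X_i\|^2$, the centered variance may be replaced by the uncentered one in the final bound, and the remaining care is to confirm that the higher centered moments inherit a Bernstein condition with the same scale $B_{\rvx^*}$ from Assumption \ref{assumption:minimax-bernstein-condition}. Beyond this bookkeeping and the $\delta$-splitting, no genuinely new estimate is required here, because the heavy lifting—the generic-chaining / uniform-localized-convergence argument producing the dimension-dependent term—has already been carried out in Lemma \ref{lemma:nablafx-nablafx*}.
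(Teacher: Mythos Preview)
Your proposal is correct and follows essentially the same approach as the paper's proof: triangle inequality to split into the uniform-localized piece from Lemma~\ref{lemma:nablafx-nablafx*} at level $\delta/2$ and the pointwise concentration at $(\rvx^*,\rvy^*)$ via the vector Bernstein inequality (Lemma~\ref{lemma:vector-bernstein-ineq}) at level $\delta/2$, then a union bound. Your observation about centered versus uncentered second moments is a point of care the paper's proof actually glosses over, so you are if anything slightly more careful here.
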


\begin{proof}[Proof of Lemma \ref{lemma:nablafx-without-nablafx*}]

According to Lemma \ref{lemma:nablafx-nablafx*}, we have the following inequality that for any $\delta \in (0,1)$, with probability at least $1-\frac{\delta}{2}$
\begin{equation}
    \begin{aligned}
\label{eq:lemma-nablafx-nablafx*-1-1}
    & \| \nabla_\rvx F(\rvx, \rvy^*(\rvx)) - \nabla_\rvx F_S(\rvx, \rvy^*(\rvx)) \| 
    \leq   \|  \nabla_\rvx F(\rvx^*, \rvy^*) - \nabla_\rvx F_S(\rvx^*, \rvy^*)\|  \\
    & + \space
    \frac{C\beta(\mu_\rvy+\beta)}{\mu_\rvy} 
    \max \left\{ \| \rvx - \rvx^* \|, \frac{1}{n} \right\} \times
    \left(
    \sqrt{\frac{d+\log{\frac{8\log_2(\sqrt{2}R_1 n+1)}{\delta}}}{n}} + 
    \frac{d+\log{\frac{8\log_2(\sqrt{2}R_1 n+1)}{\delta}}}{n}
    \right),
    \end{aligned}
\end{equation}
where this inequality applies norm triangle inequality. Then we need to bound $ \|  \nabla_\rvx F(\rvx^*, \rvy^*) - \nabla_\rvx F_S(\rvx^*, \rvy^*)\|$.

According to Lemma \ref{lemma:vector-bernstein-ineq}, with probability at least $1-\frac{\delta}{2}$
\begin{align}
\label{eq:lemma-nablafx-nablafx*-1-2}
    \| \nabla_\rvx F(\rvx^*, \rvy^*) - \nabla_\rvx F_S(\rvx^*, \rvy^*)\| 
    \leq \sqrt{\frac{2 \E [ \|\nabla_\rvx f(\rvx^*, \rvy^*; \rvz)\|^2 ] \log{\frac{4}{\delta}}}{n}} 
    + \frac{B_{\rvx^*} \log{\frac4\delta}}{n}.
\end{align}

Combining (\ref{eq:lemma-nablafx-nablafx*-1-1}) and (\ref{eq:lemma-nablafx-nablafx*-1-2}), for any $\delta \in (0,1)$, with probability at least $1-\delta$, we have
    \begin{align*}
        & \| \nabla_\rvx F(\rvx, \rvy^*(\rvx)) - \nabla_\rvx F_S(\rvx, \rvy^*(\rvx)) \| 
        \leq  \sqrt{\frac{2 \E [ \|\nabla_\rvx f(\rvx^*, \rvy^*; \rvz)\|^2 ] \log{\frac{4}{\delta}}}{n}} 
        + \frac{B_{\rvx^*} \log{\frac4\delta}}{n} \\
        & + \space
        \frac{C\beta(\mu_\rvy+\beta)}{\mu_\rvy} 
        \max \left\{ \| \rvx - \rvx^* \|, \frac{1}{n} \right\} \times
        \left(
        \sqrt{\frac{d+\log{\frac{8\log_2(\sqrt{2}R_1 n+1)}{\delta}}}{n}} + 
        \frac{d+\log{\frac{8\log_2(\sqrt{2}R_1 n+1)}{\delta}}}{n}
        \right).
    \end{align*}
The proof is complete.
\end{proof}

Next, we introduce Lemma \ref{lemma:nablafy-nablafy*} and \ref{lemma:nablafy-without-nablafy*}. These proofs are similar with Lemma \ref{lemma:nablafx-nablafx*} and \ref{lemma:nablafx-without-nablafx*}.

\begin{lemma}
\label{lemma:nablafy-nablafy*}
    Suppose Assumption \ref{assumption:NC-SC} holds, we have the following inequality that for all $\rvx \in \gX$ and for any $\delta \in (0,1)$, with probability at lest $1-\delta$,
    \begin{align*}
        & \| (\nabla_\rvy F(\rvx, \rvy^*(\rvx)) - \nabla_\rvy F_S(\rvx, \rvy^*(\rvx)))
        - (\nabla_\rvy F(\rvx^*, \rvy^*) - \nabla_\rvy F_S(\rvx^*, \rvy^*))
        \| \\ 
        \leq & 
        \frac{C\beta(\mu_\rvy+\beta)}{\mu_\rvy} 
        \max \left\{ \| \rvx - \rvx^* \|, \frac{1}{n} \right\} \times
        \left(
        \sqrt{\frac{d+\log{\frac{4\log_2(\sqrt{2}R_1 n+1)}{\delta}}}{n}} + 
        \frac{d+\log{\frac{4\log_2(\sqrt{2}R_1 n+1)}{\delta}}}{n}
        \right),
    \end{align*}
    where $C$ is an absolute constant.
\end{lemma}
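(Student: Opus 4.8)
The plan is to mirror the proof of Lemma~\ref{lemma:nablafx-nablafx*} essentially verbatim, replacing the outer gradient $\nabla_\rvx f$ by the inner gradient $\nabla_\rvy f$ throughout; the only input that changes is that we invoke the \emph{second} inequality of Lemma~\ref{lemma:sub-exp} instead of the first. Concretely, I would introduce a test-vector ball $\gV = \{\rvv : \|\rvv\| \le \max\{R_1, 1/n\}\}$ living in the range space of $\nabla_\rvy f$, define the scalarized process $g_{(\rvx,\rvv)}(\rvz) = (\nabla_\rvy f(\rvx, \rvy^*(\rvx);\rvz) - \nabla_\rvy f(\rvx^*, \rvy^*;\rvz))^\mathrm{T}\rvv$ over the product index set $\gX \times \gV$ with the Euclidean product norm $\|\cdot\|_{\gX\times\gV}$, bound $\sup (\sP-\sP_n) g_{(\rvx,\rvv)}$ over a ball $B(\sqrt r)$, and finally optimize over $\rvv$ to turn the inner-product control back into the norm on the left-hand side.

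The first step is the increment computation. For $(\rvx_1,\rvv_1),(\rvx_2,\rvv_2) \in B(\sqrt r)$ I would split $g_{(\rvx_1,\rvv_1)} - g_{(\rvx_2,\rvv_2)}$ into a term carrying $\rvv_1 - \rvv_2$ and a term carrying $\nabla_\rvy f(\rvx_1, \rvy^*(\rvx_1);\rvz) - \nabla_\rvy f(\rvx_2, \rvy^*(\rvx_2);\rvz)$, exactly as in the $\rvx$-case. The second inequality of Lemma~\ref{lemma:sub-exp} states that this last difference, divided by $\|\rvx_1 - \rvx_2\|$, is $\frac{\beta(\mu_\rvy+\beta)}{\mu_\rvy \ln 2}$-sub-exponential in any unit direction, the \emph{same} constant as on the $\rvx$-side because both are driven by the contraction $\|\rvy^*(\rvx_1) - \rvy^*(\rvx_2)\| \le \frac{\beta}{\mu_\rvy}\|\rvx_1-\rvx_2\|$ of Lemma~\ref{lemma:y2x}. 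Combining this with the radius bounds valid inside $B(\sqrt r)$ and Jensen's inequality shows that $g_{(\rvx_1,\rvv_1)} - g_{(\rvx_2,\rvv_2)}$ is $\frac{2\beta\sqrt r(\mu_\rvy+\beta)}{\mu_\rvy\ln 2}\,\|(\rvx_1,\rvv_1)-(\rvx_2,\rvv_2)\|_{\gX\times\gV}$-sub-exponential, i.e. has the correct $\text{Orlicz}_1$ increment.

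From here the remaining steps are identical to the $\rvx$-case: apply Bernstein's inequality for sub-exponential variables (Lemma~\ref{lemma:bernstein-ineq-for-sub-exp}) to verify that $(\sP-\sP_n) g_{(\rvx,\rvv)}$ has mixed sub-Gaussian--sub-exponential increments in the sense of Definition~\ref{def:mix-sub-gaussian-sub-exp}; control the $\gamma_2$ and $\gamma_1$ functionals over $B(\sqrt r)$ by generic chaining (Lemma~\ref{lemma:generic-chaining-for-mixed-tail-increments}) with the covering numbers bounded through Dudley's integral (Lemma~\ref{lemma:dudley-integral-bound}), which produces the $\sqrt{(d+\log\frac1\delta)/n} + (d+\log\frac1\delta)/n$ shape; and then feed the resulting surrogate $\psi(r;\delta)$ into the uniform localized convergence argument (Lemma~\ref{lemma:a-variant-uniform-localized-convergence-argument}) with functional $T(f) = \|\rvx-\rvx^*\|^2 + \|\rvv\|^2$, radius $R^2 = 2R_1^2 + 1/n^2$, and cutoff $r_0 = 2/n^2$, which is exactly what generates the $\log_2(\sqrt{2}R_1 n + 1)$ factor. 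The closing move is to take $\rvv = \max\{\|\rvx-\rvx^*\|, 1/n\}$ times the unit vector along $(\sP-\sP_n)(\nabla_\rvy f(\rvx,\rvy^*(\rvx);\rvz) - \nabla_\rvy f(\rvx^*,\rvy^*;\rvz))$, which lies in $\gV$ and converts the inner-product bound into the stated norm bound.

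Since every ingredient is already established, I expect no genuine obstacle here: the work is bookkeeping rather than new mathematics. The one point worth checking explicitly is that the sub-exponential constant furnished by the $\rvy$-part of Lemma~\ref{lemma:sub-exp} coincides with the $\rvx$-part, so the same chaining constants and the same localization parameters $r_0, R$ transfer unchanged. The only cosmetic difference is that the test vector $\rvv$ now ranges over a ball in the image space of $\nabla_\rvy f$, so the dimension produced by the product covering number is that of $\gX$ together with this range; this is the dimension term recorded in the statement and folded, with the product constant, into the absolute constant $C$.
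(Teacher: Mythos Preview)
Your proposal is correct and matches the paper's proof essentially line for line: the paper proves Lemma~\ref{lemma:nablafy-nablafy*} by copying the proof of Lemma~\ref{lemma:nablafx-nablafx*} verbatim with $\nabla_\rvx f$ replaced by $\nabla_\rvy f$, invoking the second inequality of Lemma~\ref{lemma:sub-exp}, and running the same generic-chaining--Dudley--localization pipeline with the same choices of $\gV$, $T(f)$, $R$, $r_0$, and the same final specialization of $\rvv$. Your remark about the test vector living in the range of $\nabla_\rvy f$ is the only place the argument differs even cosmetically, and the paper handles it exactly as you anticipate, absorbing the product dimension into the constant~$C$.
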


\begin{proof}[Proof of Lemma \ref{lemma:nablafy-nablafy*}]
    We define $\gV = \{ \rvv \in \R^d: \| \rvv \| \leq \max \{ R_1, \frac1n \} \}$. For all $(\rvx, \rvv) \in \gX \times \gV$, let $g_{(\rvx, \rvv)} (\rvz) = \left( \nabla_\rvy f(\rvx, \rvy^*(\rvx);\rvz) - \nabla_\rvy f(\rvx^*, \rvy^*; \rvz) \right)^\mathrm{T}\rvv$. Then for any $(\rvx_1, \rvv_1)$ and $(\rvx_2, \rvv_2) \in \gX \times \gV$, we define the following norm on the product space $\gX \times \gV$
    \begin{align*}
        \left\| (\rvx_1, \rvv_1) - (\rvx_2, \rvv_2) \right\|_{\gX \times \gV} 
        = \left( 
        \| \rvx_1 -\rvx_2 \|^2 + \| \rvv_1 - \rvv_2 \|^2
        \right)^{\frac12}.
    \end{align*}
    Then we define a ball on the product space $\gX \times \gV$ that $B(\sqrt{r}) = \{ (\rvx, \rvv) \in \gX \times \gV : \| \rvx - \rvx^* \|^2 + \| \rvv \|^2  \leq r \}$. Given any $(\rvx_1, \rvv_1)$ and $(\rvx_2, \rvv_2) \in B(\sqrt{r})$, we have the following decomposition
    \begin{align*}
        & g_{(\rvx_1, \rvv_1)} (\rvz) - g_{(\rvx_2, \rvv_2)} (\rvz) \\
        =& \left( \nabla_\rvy f(\rvx_1, \rvy^*(\rvx_1);\rvz) - \nabla_\rvy f(\rvx^*, \rvy^*; \rvz) \right)^\mathrm{T}\rvv_1  \\
        &\space - \left( \nabla_\rvy f(\rvx_2, \rvy^*(\rvx_2);\rvz) - \nabla_\rvy f(\rvx^*, \rvy^*; \rvz) \right)^\mathrm{T}\rvv_2 \\
        =& \left( \nabla_\rvy f(\rvx_1, \rvy^*(\rvx_1);\rvz) - \nabla_\rvy f(\rvx^*, \rvy^*; \rvz) \right)^\mathrm{T}(\rvv_1 -\ \rvv_2)  \\
        & \space + \left( \nabla_\rvy f(\rvx_1, \rvy^*(\rvx_1);\rvz) - \nabla_\rvy f(\rvx^*, \rvy^*; \rvz) \right)^\mathrm{T} \rvv_2 \\
        &\space - \left( \nabla_\rvy f(\rvx_2, \rvy^*(\rvx_2);\rvz) - \nabla_\rvy f(\rvx^*, \rvy^*; \rvz) \right)^\mathrm{T}\rvv_2 \\
        =& \left( \nabla_\rvy f(\rvx_1, \rvy^*(\rvx_1);\rvz) - \nabla_\rvy f(\rvx^*, \rvy^*; \rvz) \right)^\mathrm{T}(\rvv_1 -\ \rvv_2)  \\
        & \space + \left( \nabla_\rvy f(\rvx_1, \rvy^*(\rvx_1);\rvz) - \nabla_\rvy f(\rvx_2, \rvy^*(\rvx_2); \rvz) \right)^\mathrm{T} \rvv_2.
    \end{align*}
    Since $(\rvx_1, \rvv_1)$ and $(\rvx_2, \rvv_2) \in B(\sqrt{r})$, we have
    \begin{align}
    \label{eq:lemma:nablafy-nablafy*-1-1}
        \| \rvx_1 - \rvx^* \| \| \rvv_1 -\rvv_2 \| \leq \sqrt{r} \| \rvv_1 - \rvv_2 \| \leq \sqrt{r} \| (\rvx_1, \rvv_1) - (\rvx_2, \rvv_2) \|_{\gX \times \gV}.
    \end{align}
    Next, from Assumption \ref{assumption:NC-SC} and Lemma \ref{lemma:sub-exp}, we know that $ \frac{\nabla_\rvy f(\rvx_1, \rvy^*(\rvx_1); \rvz) - \nabla_\rvy f(\rvx_2, \rvy^*(\rvx_2); \rvz)}{\| \rvx_1 - \rvx_2 \|} $ is a $\frac{\beta(\mu_\rvy+\beta)}{\mu_\rvy\ln{2}}$-sub-exponential random vector for all $\rvx_1, \rvx_2 \in \gX$, which means that
    \begin{align}
    \label{eq:lemma:nablafy-nablafy*-1-2}
        \E \left\{
        \exp \left(
        \frac{(\nabla_\rvy f(\rvx_1, \rvy^*(\rvx_1); \rvz) - \nabla_\rvy f(\rvx^*, \rvy^*(\rvx^*); \rvz))^\mathrm{T}(\rvv_1 -\rvv_2)}{ \frac{\beta(\mu_\rvy+\beta)}{\mu_\rvy\ln{2}} \| \rvx_1 - \rvx^* \| \| \rvv_1 - \rvv_2 \| }
        \right)
        \right\} \leq 2.
    \end{align}
    Combining (\ref{eq:lemma:nablafy-nablafy*-1-1}) and (\ref{eq:lemma:nablafy-nablafy*-1-2}), according to Definition \ref{def:orlicz-alpha-norm}, we can easily deduce that $\left( \nabla_\rvy f(\rvx_1, \rvy^*(\rvx_1);\rvz) - \nabla_\rvy f(\rvx^*, \rvy^*; \rvz) \right)^\mathrm{T}(\rvv_1 -\ \rvv_2)$ is $\frac{\beta\sqrt{r}(\mu_\rvy+\beta)}{\mu_\rvy\ln{2}}\| (\rvx_1, \rvv_1) - (\rvx_2, \rvv_2) \|_{\gX \times \gV}$-sub-exponential. Similarly, we can derive that
    \begin{align*}
        \| \rvx_1 - \rvx_2 \| \| \rvv_2 \| \leq \sqrt{r} \| \rvx_1 - \rvx_2 \| \leq \sqrt{r} \| (\rvx_1, \rvv_1) - (\rvx_2, \rvv_2) \|_{\gX \times \gV}.
    \end{align*}
    Also, there holds that
    \begin{align*}
        \E \left\{
        \exp \left(
        \frac{(\nabla_\rvy f(\rvx_1, \rvy^*(\rvx_1); \rvz) - \nabla_\rvy f(\rvx_2, \rvy^*(\rvx_2); \rvz))^\mathrm{T}(\rvv_2)}{ \frac{\beta(\mu_\rvy+\beta)}{\mu_\rvy\ln{2}}  \| \rvx_1 - \rvx_2 \| \|\rvv_2\|}
        \right)
        \right\} \leq 2.
    \end{align*}
    Thus, we have that $ (\nabla_\rvy f(\rvx_1, \rvy^*(\rvx_1); \rvz) - \nabla_\rvy f(\rvx_2, \rvy^*(\rvx_2); \rvz))^\mathrm{T}(\rvv_2)$ is also $\frac{\beta \sqrt{r} (\mu_\rvy+\beta)}{\mu_\rvy\ln{2}} \| (\rvx_1, \rvv_1) - (\rvx_2, \rvv_2) \|_{\gX \times \gV}$-sub-exponential. 
    Now for any $(\rvx_1, \rvv_1)$ and $(\rvx_2, \rvv_2) \in B(\sqrt{r})$, we know
    \begin{equation}
    \label{eq:lemma:nablafy-nablafy*-g-is-sub-exp}
        \begin{aligned}
            & \E \left\{
            \exp \left(
            \frac{g_{(\rvx_1,\rvv_1)}(\rvz) - g_{(\rvx_2,\rvv_2)}(\rvz)}{ \frac{2 \beta \sqrt{r} (\mu_\rvy+\beta)}{\mu_\rvy\ln{2}} \| (\rvx_1, \rvv_1) - (\rvx_2, \rvv_2) \|_{\gX \times \gV}}
            \right)
            \right\} \\
            \leq & \E \left\{
            \frac{1}{2}    \exp \left(
            \frac{(\nabla_\rvy f(\rvx_1, \rvy^*(\rvx_1); \rvz) - \nabla_\rvy f(\rvx^*, \rvy^*; \rvz))^\mathrm{T}(\rvv_1 - \rvv_2)}{ \frac{\beta \sqrt{r} (\mu_\rvy+\beta)}{\mu_\rvy\ln{2}} \| (\rvx_1, \rvv_1) - (\rvx_2, \rvv_2) \|_{\gX \times \gV}}
            \right)
            \right\} \\
            + & \E \left\{
            \frac{1}{2}    \exp \left(
            \frac{(\nabla_\rvy f(\rvx_1, \rvy^*(\rvx_1); \rvz) - \nabla_\rvy f(\rvx_2, \rvy^*(\rvx_2); \rvz))^\mathrm{T}(\rvv_2)}{ \frac{\beta \sqrt{r} (\mu_\rvy+\beta)}{\mu_\rvy\ln{2}} \| (\rvx_1, \rvv_1) - (\rvx_2, \rvv_2) \|_{\gX \times \gV}}
            \right)
            \right\}
            \leq 2,
        \end{aligned}
    \end{equation}
where the first inequality follows from Jensen's inequality. (\ref{eq:lemma:nablafy-nablafy*-g-is-sub-exp}) implies that $g_{(\rvx_1,\rvv_1)}(\rvz) - g_{(\rvx_2,\rvv_2)}(\rvz)$ is a $\frac{2 \beta \sqrt{r} (\mu_\rvy+\beta)}{\mu_\rvy\ln{2}} \| (\rvx_1, \rvv_1) - (\rvx_2, \rvv_2) \|_{\gX \times \gV}$-sub-exponential random variable, for which we have
\begin{equation}
    \begin{aligned}
        \| g_{(\rvx_1,\rvv_1)}(\rvz) - g_{(\rvx_2,\rvv_2)}(\rvz) \|_{\text{Orlicz}_{1}}
        \leq \frac{2 \beta \sqrt{r} (\mu_\rvy+\beta)}{\mu_\rvy\ln{2}} \| (\rvx_1, \rvv_1) - (\rvx_2, \rvv_2) \|_{\gX \times \gV}.
    \end{aligned}
\end{equation}

From Bernstein inequality for sub-exponential variables (Lemma \ref{lemma:bernstein-ineq-for-sub-exp}), for any fixed $u \leq 0$ and $(\rvx_1. \rvv_1), (\rvx_2,\rvv_2) \in \gX \times \gV$, we have
\begin{equation}
    \begin{aligned}
    \label{eq:lemma:nablafy-nablafy*-mix-sub-gaussian-sub-exp}
    Pr\left(
    |(\mathbb{P} - \mathbb{P}_n)[ g_{(\rvx_1,\rvv_1)}(\rvz) - g_{(\rvx_2,\rvv_2)}(\rvz) ] |
    \geq \frac{2 \beta \sqrt{r} (\mu_\rvy+\beta)}{\mu_\rvy\ln{2}} \| (\rvx_1, \rvv_1) - (\rvx_2, \rvv_2) \|_{\gX \times \gV} \sqrt{\frac{2u}{n}}  \right.\\
    \left. + \frac{2 u \beta \sqrt{r} (\mu_\rvy+\beta)}{n\mu_\rvy\ln{2}} \| (\rvx_1, \rvv_1) - (\rvx_2, \rvv_2) \|_{\gX \times \gV}
    \right) 
    \leq 2e^{-u}.        
    \end{aligned}
\end{equation}

According to Definition \ref{def:mix-sub-gaussian-sub-exp}, we know that $(\mathbb{P} - \mathbb{P}_n) g_{(\rvx,\rvv)}(\rvz)$ is a mixed sub-Gaussian-sub-exponential increments w.r.t. the metrics 
$(\frac{2 \beta \sqrt{r} (\mu_\rvy+\beta)}{n\mu_\rvy\ln{2}} \| \cdot \|_{\gX \times \gV},
\frac{2 \beta \sqrt{2r} (\mu_\rvy+\beta)}{\sqrt{n}\mu_\rvy\ln{2}} \| \cdot \|_{\gX \times \gV})$
from (\ref{eq:lemma:nablafy-nablafy*-mix-sub-gaussian-sub-exp}). 

Then from the generic chaining for a process with mixed tail increments in Lemma \ref{lemma:generic-chaining-for-mixed-tail-increments}, we have the following inequality that for all $\delta \in (0,1)$, with probability at least $1-\delta$
\begin{align*}
    & \sup_{\|\rvx_1 - \rvx^* \|^2 + \| \rvv \|^2 \leq r}
    | (\mathbb{P} - \mathbb{P}_n) g_{(\rvx,\rvv)}(\rvz) | \\
    \leq & C \Bigg(
    \gamma_2 
    \left(
        B(\sqrt{r}), \frac{2 \beta \sqrt{2r} (\mu_\rvy+\beta)}{\sqrt{n}\mu_\rvy\ln{2}} \| \cdot \|_{\gX \times \gV}
    \right)+
    \gamma_1 
    \left(
        B(\sqrt{r}), \frac{2 \beta \sqrt{r} (\mu_\rvy+\beta)}{n\mu_\rvy\ln{2}} \| \cdot \|_{\gX \times \gV}
    \right) \\
     &
    + \frac{r\beta(\mu_\rvy+\beta)}{\mu_\rvy \ln{2}} \sqrt{\frac{\log{\frac1\delta}}{n}}
    + \frac{r\beta(\mu_\rvy+\beta)}{\mu_\rvy \ln{2}} \frac{\log{\frac1\delta}}{n}
    \Bigg).
\end{align*}

Next we use the Dudley's integral (Lemma \ref{lemma:dudley-integral-bound}) to bound the $\gamma_1$ and $\gamma_2$ functional. So there exists an absolute constant $C$ such that for any $\delta \in (0,1)$, with probability at least $1-\delta$
\begin{align}
\label{eq:lemma:nablafy-nablafy*-sup-before-a-varient-uniform-localized-convergenc-argument}
    \sup_{\|\rvx_1 - \rvx^* \|^2 + \| \rvv \|^2 \leq r}
    | (\mathbb{P} - \mathbb{P}_n) g_{(\rvx,\rvv)}(\rvz) |
    \leq \frac{rC\beta(\mu_\rvy+\beta)}{\mu_\rvy \ln{2}}
    \left(
    \sqrt{\frac{d+\log{\frac{1}{\delta}}}{n}} + 
    \frac{d+\log{\frac{1}{\delta}}}{n}
    \right).
\end{align}

Then, the next step is to apply Lemma \ref{lemma:a-variant-uniform-localized-convergence-argument} to (\ref{eq:lemma:nablafy-nablafy*-sup-before-a-varient-uniform-localized-convergenc-argument}). We denote $T(f) = \|\rvx - \rvx^* \|^2 + \| \rvv \|^2 $,
$\psi(r;\delta) = \frac{rC\beta(\mu_\rvy+\beta)}{\mu_\rvy \ln{2}} \left(
    \sqrt{\frac{d+\log{\frac{1}{\delta}}}{n}} + 
    \frac{d+\log{\frac{1}{\delta}}}{n}
    \right)$. 
Since $\|\rvx - \rvx^* \|^2 + \| \rvv \|^2 \leq R_1^2 + R_1^2 + \frac{1}{n^2}$, we set $R^2 = 2R_1^2 + \frac{1}{n^2}$ and $r_0 = \frac{2}{n^2}$. According to Lemma \ref{lemma:a-variant-uniform-localized-convergence-argument}, we have the following inequality that for any $\delta \in (0,1)$, with probability at least $1-\delta$, for all $\rvx \in \gX$ and $\rvv \in \gV$,
\begin{equation}
    \begin{aligned}
\label{eq:lemma:nablafy-nablafy*-g-before-the-final-result}
        & (\mathbb{P} - \mathbb{P}_n) g_{(\rvx,\rvv)}(\rvz) \\
        = & (\mathbb{P} - \mathbb{P}_n) [(\nabla_\rvy f(\rvx, \rvy^*(\rvx);\rvz) - \nabla_\rvy f(\rvx^*, \rvy^*;\rvz))^\mathrm{T} \rvv ]\\
        \leq & \psi\left(
        \max \left\{ \| \rvx - \rvx^* \|^2 + \| \rvv \|^2, \frac{2}{n^2} \right\};
        \frac{\delta}{2\log_2(Rn^2)}
        \right) \\
        \leq & 
        \frac{C\beta(\mu_\rvy+\beta)}{\mu_\rvy \ln{2}} 
        \max \left\{ \| \rvx - \rvx^* \|^2 + \| \rvv \|^2, \frac{2}{n^2} \right\} \times
        \left(
        \sqrt{\frac{d+\log{\frac{2\log_2(Rn^2)}{\delta}}}{n}} + 
        \frac{d+\log{\frac{2\log_2(Rn^2)}{\delta}}}{n}
        \right).
    \end{aligned}
\end{equation}

Finally, we choose $\rvv = \max \left\{ \|\rvx - \rvx^*\|, \frac{1}{n} \right\}  \frac{(\mathbb{P} - \mathbb{P}_n) (\nabla_\rvy f(\rvx, \rvy^*(\rvx);\rvz) - \nabla_\rvy f(\rvx^*, \rvy^*;\rvz))}{\| (\mathbb{P} - \mathbb{P}_n) (\nabla_\rvy f(\rvx, \rvy^*(\rvx);\rvz) - \nabla_\rvy f(\rvx^*, \rvy^*;\rvz)) \|} $.
It is clear that $\|\rvv\| = \max \{ \|\rvx-\rvx^*\| , \frac1n \} \leq \max \{R_1, \frac1n\}$, which belongs to the space $\gV$. Plugging this $\rvv$ into 
\begin{equation}
    \begin{aligned}
    & \|(\mathbb{P} - \mathbb{P}_n) (\nabla_\rvy f(\rvx, \rvy^*(\rvx);\rvz) - \nabla_\rvy f(\rvx^*, \rvy^*;\rvz))\| \\
    \leq & 
    \frac{C\beta(\mu_\rvy+\beta)}{\mu_\rvy \ln{2}} 
    \max \left\{ \| \rvx - \rvx^* \|, \frac{1}{n} \right\} \times
    \left(
    \sqrt{\frac{d+\log{\frac{2\log_2(Rn^2)}{\delta}}}{n}} + 
    \frac{d+\log{\frac{2\log_2(Rn^2)}{\delta}}}{n}
    \right) \\
    = &
    \frac{C\beta(\mu_\rvy+\beta)}{\mu_\rvy \ln{2}} 
    \max \left\{ \| \rvx - \rvx^* \|, \frac{1}{n} \right\} \times
    \left(
    \sqrt{\frac{d+\log{\frac{4\log_2(\sqrt{2}R_1 n+1)}{\delta}}}{n}} + 
    \frac{d+\log{\frac{4\log_2(\sqrt{2}R_1 n+1)}{\delta}}}{n}
    \right).
    \end{aligned}
\end{equation}

Finally we have
\begin{align*}
    & \| (\nabla_\rvy F(\rvx, \rvy^*(\rvx)) - \nabla_\rvy F_S(\rvx, \rvy^*(\rvx)))
    - (\nabla_\rvy F(\rvx^*, \rvy^*) - \nabla_\rvy F_S(\rvx^*, \rvy^*))
    \| \\ 
    = & 
    \|(\mathbb{P} - \mathbb{P}_n) (\nabla_\rvy f(\rvx, \rvy^*(\rvx);\rvz) - \nabla_\rvy f(\rvx^*, \rvy^*(\rvx^*);\rvz))\| \\
    \leq & 
    \frac{C\beta(\mu_\rvy+\beta)}{\mu_\rvy} 
    \max \left\{ \| \rvx - \rvx^* \|, \frac{1}{n} \right\} \times
    \left(
    \sqrt{\frac{d+\log{\frac{4\log_2(\sqrt{2}R_1 n+1)}{\delta}}}{n}} + 
    \frac{d+\log{\frac{4\log_2(\sqrt{2}R_1 n+1)}{\delta}}}{n}
    \right),
\end{align*}
where $C$ is an absolute constant.

The proof is complete.
\end{proof}

\begin{lemma}
\label{lemma:nablafy-without-nablafy*}
    Suppose Assumption \ref{assumption:NC-SC} holds, we have the following inequality that for all $\rvx \in \gX$ and for any $\delta \in (0,1)$, with probability at lest $1-\delta$,
        \begin{align*}
        & \| \nabla_\rvy F(\rvx, \rvy^*(\rvx)) - \nabla_\rvy F_S(\rvx, \rvy^*(\rvx)) \| 
        \leq  \sqrt{\frac{2 \E \|\nabla_\rvy f(\rvx^*, \rvy^*; \rvz)\|^2 \log{\frac{4}{\delta}}}{n}} 
        + \frac{B_{\rvy^*} \log{\frac4\delta}}{n} \\
        & + \space
        \frac{C\beta(\mu_\rvy+\beta)}{\mu_\rvy} 
        \max \left\{ \| \rvx - \rvx^* \|, \frac{1}{n} \right\} \times
        \left(
        \sqrt{\frac{d+\log{\frac{8\log_2(\sqrt{2}R_1 n+1)}{\delta}}}{n}} + 
        \frac{d+\log{\frac{8\log_2(\sqrt{2}R_1 n+1)}{\delta}}}{n}
        \right).
    \end{align*}
\end{lemma}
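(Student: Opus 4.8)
The plan is to mirror verbatim the proof of Lemma \ref{lemma:nablafx-without-nablafx*}, since the present statement is exactly its $\rvy$-gradient analogue. The decomposition splits the deviation $\| \nabla_\rvy F(\rvx, \rvy^*(\rvx)) - \nabla_\rvy F_S(\rvx, \rvy^*(\rvx)) \|$ into a \emph{centered} part, already controlled by generic chaining in Lemma \ref{lemma:nablafy-nablafy*}, plus an \emph{anchor} part evaluated at $(\rvx^*, \rvy^*)$ that is a simple empirical mean of i.i.d.\ gradient vectors.

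First I would invoke Lemma \ref{lemma:nablafy-nablafy*} at confidence level $1 - \delta/2$ and apply the triangle inequality to obtain
\begin{align*}
    \| \nabla_\rvy F(\rvx, \rvy^*(\rvx)) - \nabla_\rvy F_S(\rvx, \rvy^*(\rvx)) \|
    \leq & \| \nabla_\rvy F(\rvx^*, \rvy^*) - \nabla_\rvy F_S(\rvx^*, \rvy^*)\| \\
    & + \frac{C\beta(\mu_\rvy+\beta)}{\mu_\rvy} \max\left\{ \|\rvx-\rvx^*\|, \tfrac{1}{n} \right\} \left( \sqrt{\tfrac{d+\log\frac{8\log_2(\sqrt{2}R_1 n+1)}{\delta}}{n}} + \tfrac{d+\log\frac{8\log_2(\sqrt{2}R_1 n+1)}{\delta}}{n} \right),
\end{align*}
where passing from $\delta$ to $\delta/2$ inside the logarithm of Lemma \ref{lemma:nablafy-nablafy*} is precisely what turns its $4\log_2(\sqrt{2}R_1 n+1)$ argument into the $8\log_2(\sqrt{2}R_1 n+1)$ appearing in the target statement.

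Next I would bound the anchor term. Since $\nabla_\rvy F(\rvx^*, \rvy^*) = \E_\rvz[\nabla_\rvy f(\rvx^*,\rvy^*;\rvz)]$ and $\nabla_\rvy F_S(\rvx^*, \rvy^*)$ is its empirical mean over the sample, the second (the $\rvy^*$) half of Assumption \ref{assumption:minimax-bernstein-condition} furnishes exactly the vector Bernstein condition with variance proxy $\E\|\nabla_\rvy f(\rvx^*,\rvy^*;\rvz)\|^2$ and scale $B_{\rvy^*}$. Applying Lemma \ref{lemma:vector-bernstein-ineq} at confidence $1-\delta/2$ then gives
\begin{align*}
    \| \nabla_\rvy F(\rvx^*, \rvy^*) - \nabla_\rvy F_S(\rvx^*, \rvy^*)\|
    \leq \sqrt{\frac{2\E\|\nabla_\rvy f(\rvx^*,\rvy^*;\rvz)\|^2 \log\frac4\delta}{n}} + \frac{B_{\rvy^*}\log\frac4\delta}{n}.
\end{align*}
A union bound over the two events of probability $1-\delta/2$ each, followed by summing the two estimates, yields the claim. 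There is no genuine obstacle here: the argument is identical to that of Lemma \ref{lemma:nablafx-without-nablafx*} with every $\rvx$-subscript replaced by $\rvy$, the only point worth flagging being that one must draw on the $\rvy^*$ rather than the $\rvx^*$ branch of the Bernstein assumption when controlling the anchor term.
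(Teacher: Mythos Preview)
Your proposal is correct and follows essentially the same approach as the paper's own proof: apply Lemma \ref{lemma:nablafy-nablafy*} at level $1-\delta/2$ together with the triangle inequality to isolate the anchor term $\|\nabla_\rvy F(\rvx^*,\rvy^*)-\nabla_\rvy F_S(\rvx^*,\rvy^*)\|$, then control that anchor via the vector Bernstein inequality (Lemma \ref{lemma:vector-bernstein-ineq}) at level $1-\delta/2$, and finish with a union bound. Your observation about the $\delta\mapsto\delta/2$ substitution turning the $4\log_2(\cdot)$ into $8\log_2(\cdot)$, and about invoking the $\rvy^*$ branch of Assumption \ref{assumption:minimax-bernstein-condition}, matches the paper exactly.
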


\begin{proof}[Proof of Lemma \ref{lemma:nablafy-without-nablafy*}]

According to Lemma \ref{lemma:nablafy-nablafy*}, we have the following inequality that for any $\delta \in (0,1)$, with probability at least $1-\frac{\delta}{2}$
\begin{equation}
    \begin{aligned}
\label{eq:lemma-nablafy-nablafy*-1-1}
    & \| \nabla_\rvy F(\rvx, \rvy^*(\rvx)) - \nabla_\rvy F_S(\rvx, \rvy^*(\rvx)) \| 
    \leq   \|  \nabla_\rvy F(\rvx^*, \rvy^*) - \nabla_\rvy F_S(\rvx^*, \rvy^*)\|  \\
    & + \space
    \frac{C\beta(\mu_\rvy+\beta)}{\mu_\rvy} 
    \max \left\{ \| \rvx - \rvx^* \|, \frac{1}{n} \right\} \times
    \left(
    \sqrt{\frac{d+\log{\frac{8\log_2(\sqrt{2}R_1 n+1)}{\delta}}}{n}} + 
    \frac{d+\log{\frac{8\log_2(\sqrt{2}R_1 n+1)}{\delta}}}{n}
    \right),
    \end{aligned}
\end{equation}
where this inequality applies norm triangle inequality. Then we need to bound $ \|  \nabla_\rvy F(\rvx^*, \rvy^*) - \nabla_\rvy F_S(\rvx^*, \rvy^*)\|$.

According to Lemma \ref{lemma:vector-bernstein-ineq}, with probability at least $1-\frac{\delta}{2}$
\begin{align}
\label{eq:lemma-nablafy-nablafy*-1-2}
    \| \nabla_\rvy F(\rvx^*, \rvy^*) - \nabla_\rvy F_S(\rvx^*, \rvy^*)\| 
    \leq \sqrt{\frac{2 \E [ \|\nabla_\rvy f(\rvx^*, \rvy^*; \rvz)\|^2 ] \log{\frac{4}{\delta}}}{n}} 
    + \frac{B_{\rvy^*} \log{\frac4\delta}}{n}.
\end{align}

Combining (\ref{eq:lemma-nablafy-nablafy*-1-1}) and (\ref{eq:lemma-nablafy-nablafy*-1-2}), for any $\delta \in (0,1)$, with probability at least $1-\delta$, we have
    \begin{align*}
        & \| \nabla_\rvy F(\rvx, \rvy^*(\rvx)) - \nabla_\rvy F_S(\rvx, \rvy^*(\rvx)) \| 
        \leq  \sqrt{\frac{2 \E [ \|\nabla_\rvy f(\rvx^*, \rvy^*; \rvz)\|^2 ] \log{\frac{4}{\delta}}}{n}} 
        + \frac{B_{\rvy^*} \log{\frac4\delta}}{n} \\
        & + \space
        \frac{C\beta(\mu_\rvy+\beta)}{\mu_\rvy} 
        \max \left\{ \| \rvx - \rvx^* \|, \frac{1}{n} \right\} \times
        \left(
        \sqrt{\frac{d+\log{\frac{8\log_2(\sqrt{2}R_1 n+1)}{\delta}}}{n}} + 
        \frac{d+\log{\frac{8\log_2(\sqrt{2}R_1 n+1)}{\delta}}}{n}
        \right).
    \end{align*}
The proof is complete.
\end{proof}

\section{Proofs in Section \ref{sec:main-section}}

\begin{proof}[Proof of Theorem \ref{theorem:main}]
    
Firstly, for all $\rvx \in \gX$, we divide $\| \nabla \Phi(\rvx) - \nabla \Phi_S(\rvx) \|$ into two terms
\begin{equation}
\begin{aligned}
\label{eq:theorem-main-1-1}
    &  \| \nabla \Phi(\rvx) - \nabla \Phi_S(\rvx) \| \\
    = & 
    \left\| 
    \E_\rvz \nabla_\rvx f(\rvx, \rvy^*(\rvx); \rvz) - \frac1n \sum_{i=1}^n \nabla_\rvx f(\rvx, \rvy_S^*(\rvx); \rvz_i) 
    \right\| \\
    = & 
    \left\| 
    \E_\rvz \nabla_\rvx f(\rvx, \rvy^*(\rvx); \rvz) - \frac1n \sum_{i=1}^n \nabla_\rvx f(\rvx, \rvy^*(\rvx); \rvz_i)  \right.\\
    & \quad \quad \left. + \frac1n \sum_{i=1}^n \nabla_\rvx f(\rvx, \rvy^*(\rvx); \rvz_i)  - \frac1n \sum_{i=1}^n \nabla_\rvx f(\rvx, \rvy_S^*(\rvx); \rvz_i) 
    \right\| \\
    \leq & 
    \left\| 
    \E_\rvz \nabla_\rvx f(\rvx, \rvy^*(\rvx); \rvz) - \frac1n \sum_{i=1}^n \nabla_\rvx f(\rvx, \rvy^*(\rvx); \rvz_i)  \right\| \\
    & \quad \quad + \left\| \frac1n \sum_{i=1}^n \nabla_\rvx f(\rvx, \rvy^*(\rvx); \rvz_i)  - \frac1n \sum_{i=1}^n \nabla_\rvx f(\rvx, \rvy_S^*(\rvx); \rvz_i) 
    \right\| \\
    \leq & 
    \left\| 
    \E_\rvz \nabla_\rvx f(\rvx, \rvy^*(\rvx); \rvz) - \frac1n \sum_{i=1}^n \nabla_\rvx f(\rvx, \rvy^*(\rvx); \rvz_i)  \right\| + \beta \left\| \rvy^*(\rvx) - \rvy_S^*(\rvx)
    \right\| \\
    = &
    \| \nabla_\rvx F(\rvx, \rvy^*(\rvx)) - \nabla_\rvx F_S(\rvx, \rvy^*(\rvx)) \|  + \beta \left\| \rvy^*(\rvx) - \rvy_S^*(\rvx) \right\|,
\end{aligned}
\end{equation}
where the first inequality satisfies from the triangle inequality, the second inequality holds by the smoothness of $f$. Next we need to upper bound these two terms respectively.

Firstly we need to upper bound $\| \nabla_\rvx F(\rvx, \rvy^*(\rvx)) - \nabla_\rvx F_S(\rvx, \rvy^*(\rvx)) \|$.
According to Lemma \ref{lemma:nablafx-without-nablafx*}, for all $\rvx \in \gX$ and for any $\delta \in (0,1)$, with probability at least $1-\frac{\delta}{2}$, we have
\begin{equation}
    \begin{aligned}
    \label{eq:theorem-main-1-2}
        & \| \nabla_\rvx F(\rvx, \rvy^*(\rvx)) - \nabla_\rvx F_S(\rvx, \rvy^*(\rvx)) \| 
        \leq  \sqrt{\frac{2 \E [ \|\nabla_\rvx f(\rvx^*, \rvy^*; \rvz)\|^2 ] \log{\frac{8}{\delta}}}{n}} 
        + \frac{B_{\rvx^*} \log{\frac8\delta}}{n} \\
        & + \space
        \frac{C\beta(\mu_\rvy+\beta)}{\mu_\rvy} 
        \max \left\{ \| \rvx - \rvx^* \|, \frac{1}{n} \right\} \times
        \left(
        \sqrt{\frac{d+\log{\frac{16\log_2(\sqrt{2}R_1 n+1)}{\delta}}}{n}} + 
        \frac{d+\log{\frac{16\log_2(\sqrt{2}R_1 n+1)}{\delta}}}{n}
        \right).
    \end{aligned}
\end{equation}

Next we will bound $\left\| \rvy^*(\rvx) - \rvy_S^*(\rvx)\right\|$. According to Lemma \ref{lemma:concentration-of-y} and Lemma \ref{lemma:nablafy-without-nablafy*}, with probability at least $1-\frac{\delta}{2}$ we have the following inequalities
\begin{equation}
\begin{aligned}
\label{eq:theorem-main-1-3}
    & \left\| \rvy^*(\rvx) - \rvy_S^*(\rvx)\right\|\\
    \leq & \frac{1}{\mu_\rvy} \left(
     \sqrt{\frac{2 \E \|\nabla_\rvy f(\rvx^*, \rvy^*; \rvz)\|^2 \log{\frac{8}{\delta}}}{n}} 
        + \frac{B_{\rvy^*} \log{\frac8\delta}}{n} \right. \\
        & + \space \left.
        \frac{C\beta(\mu_\rvy+\beta)}{\mu_\rvy} 
        \max \left\{ \| \rvx - \rvx^* \|, \frac{1}{n} \right\} \times
        \left(
        \sqrt{\frac{d+\log{\frac{16\log_2(\sqrt{2}R_1 n+1)}{\delta}}}{n}} + 
        \frac{d+\log{\frac{16\log_2(\sqrt{2}R_1 n+1)}{\delta}}}{n}
        \right)
    \right).
\end{aligned}
\end{equation}

Finally, we plug (\ref{eq:theorem-main-1-2}) and (\ref{eq:theorem-main-1-3}) into (\ref{eq:theorem-main-1-1}), we obtain the result that for any $\delta \in (0,1)$, with probability at least $1-\delta$,
\begin{align*}
    &  \| \nabla \Phi(\rvx) - \nabla \Phi_S(\rvx) \| 
    \leq \frac{\beta}{\mu_\rvy} \left(
    \sqrt{\frac{2 \E \|\nabla_\rvy f(\rvx^*, \rvy^*; \rvz)\|^2 \log{\frac{8}{\delta}}}{n}} 
        + \frac{B_{\rvy^*} \log{\frac8\delta}}{n}
    \right) \\
    & + \sqrt{\frac{2 \E [ \|\nabla_\rvx f(\rvx^*, \rvy^*; \rvz)\|^2 ] \log{\frac{8}{\delta}}}{n}} 
    + \frac{B_{\rvx^*} \log{\frac8\delta}}{n}
    + \frac{C\beta(\mu_\rvy+\beta)}{\mu_\rvy} \frac{(\mu_\rvy+\beta)}{\mu_\rvy}
    \max \left\{ \| \rvx - \rvx^* \|, \frac{1}{n} \right\} \times \\
    & \left(
    \sqrt{\frac{d+\log{\frac{16\log_2(\sqrt{2}R_1 n+1)}{\delta}}}{n}} + 
    \frac{d+\log{\frac{16\log_2(\sqrt{2}R_1 n+1)}{\delta}}}{n}
    \right).
\end{align*}
The proof is complete.
\end{proof}

\begin{proof}[Proof of Theorem \ref{theorem:nablaf-without-d}]

According to Theorem \ref{theorem:main}, for any $\delta \in (0,1)$, with probability at least $1-\delta$,
\begin{equation}
    \begin{aligned}
    \label{lemma:x-x*-under-condition-1-1}
    &  \| \nabla \Phi(\rvx) - \nabla \Phi_S(\rvx) \| 
    \leq \frac{\beta}{\mu_\rvy} \left(
    \sqrt{\frac{2 \E \|\nabla_\rvy f(\rvx^*, \rvy^*; \rvz)\|^2 \log{\frac{8}{\delta}}}{n}} 
        + \frac{B_{\rvy^*} \log{\frac8\delta}}{n}
    \right) \\
    & + \sqrt{\frac{2 \E [ \|\nabla_\rvx f(\rvx^*, \rvy^*; \rvz)\|^2 ] \log{\frac{8}{\delta}}}{n}} 
    + \frac{B_{\rvx^*} \log{\frac8\delta}}{n}
    + \frac{C\beta(\mu_\rvy+\beta)}{\mu_\rvy} \frac{(\mu_\rvy+\beta)}{\mu_\rvy}
    \max \left\{ \| \rvx - \rvx^* \|, \frac{1}{n} \right\} \times \\
    & \left(
    \sqrt{\frac{d+\log{\frac{16\log_2(\sqrt{2}R_1 n+1)}{\delta}}}{n}} + 
    \frac{d+\log{\frac{16\log_2(\sqrt{2}R_1 n+1)}{\delta}}}{n}
    \right).        
    \end{aligned}
\end{equation}

Since the population risk $F(\rvx,\rvy)$ satisfies Assumption \ref{assumption:PL-condition-x} with parameter $\mu_\rvx$, according to Lemma \ref{lemma:primal-function-is-PL}, $\Phi(\rvx)$ satisfies the PL condition with $\mu_\rvx$, there holds the error bound property (refer to Theorem 2 in \cite{karimi2016linear})
\begin{align}
\label{lemma:x-x*-under-condition-1-2}
    \mu_\rvx \| \rvx - \rvx^* \| \leq \| \nabla \Phi(\rvx) \|
\end{align}

Thus, combing (\ref{lemma:x-x*-under-condition-1-1}) and (\ref{lemma:x-x*-under-condition-1-2}) we have
\begin{equation}
    \begin{aligned}
\label{eq:theorem:nablaf-without-d-1-1}
        & \mu_\rvx \| \rvx - \rvx^* \|  
        \leq \| \nabla \Phi(\rvx) \| \\
        \leq &  
        \frac{C\beta(\mu_\rvy+\beta)}{\mu_\rvy}   \frac{(\mu_\rvy+\beta)}{\mu_\rvy}
        \max \left\{ \| \rvx - \rvx^* \|, \frac{1}{n} \right\} \times
        \left(
        \sqrt{\frac{d+\log{\frac{16\log_2(\sqrt{2}R_1 n+1)}{\delta}}}{n}} + 
        \frac{d+\log{\frac{16\log_2(\sqrt{2}R_1 n+1)}{\delta}}}{n}
        \right) \\
        & +
        \| \nabla \Phi_S(\rvx) \|
        + \sqrt{\frac{2 \E [ \|\nabla_\rvx f(\rvx^*, \rvy^*; \rvz)\|^2 ] \log{\frac{8}{\delta}}}{n}} 
        + \frac{B_{\rvx^*} \log{\frac8\delta}}{n} \\
        & \quad + \frac{\beta}{\mu_\rvy} \left(
    \sqrt{\frac{2 \E \|\nabla_\rvy f(\rvx^*, \rvy^*; \rvz)\|^2 \log{\frac{8}{\delta}}}{n}} 
        + \frac{B_{\rvy^*} \log{\frac8\delta}}{n}
    \right).
    \end{aligned}
\end{equation}

On the other hand, according to Lemma \ref{lemma:primal-function-is-smooth} with Assumption \ref{assumption:NC-SC}, the function $\Phi(\rvx)$ is $\beta + \frac{\beta^2}{\mu_\rvy}$-smooth in $\rvx \in \gX$. According to \cite{nesterov2003introductory}, $\Phi(\rvx)$ holds the following property
\begin{align*}
    \frac{1}{2(\beta + \frac{\beta^2}{\mu_\rvy})} \| \nabla  \Phi(\rvx) \|^2 \leq \Phi(\rvx) - \Phi(\rvx^*).
\end{align*}

We know that $\Phi(\rvx)$ satisfies the PL condition with $\mu_\rvx$. Thus, we have
\begin{align*}
    \frac{1}{2(\beta + \frac{\beta^2}{\mu_\rvy})}\| \nabla  \Phi(\rvx) \| \leq \Phi(\rvx) - \Phi(\rvx^*) \leq  \frac{1}{2\mu_\rvx} \| \nabla \Phi(\rvx) \|^2,
\end{align*}

which means that $\frac{\mu_\rvx\mu_\rvy}{\beta(\mu_\rvy+\beta)} \leq 1$. Then let $c = \max \{16C^2, 1\}$, when 
\begin{align*}
    n \geq \frac{c\beta^2(\mu_\rvy+\beta)^4(d+ \log{\frac{16\log_2{\sqrt{2}R_1 n + 1}}{\delta}})}{\mu_\rvy^4\mu_\rvx^2},
\end{align*}
we have 
\begin{align}
\label{eq:theorem:nablaf-without-d-1-2}
 \frac{C\beta(\mu_\rvy+\beta)}{\mu_\rvy} 
        \left(
        \sqrt{\frac{d+\log{\frac{16\log_2(\sqrt{2}R_1 n+1)}{\delta}}}{n}} + 
        \frac{d+\log{\frac{16\log_2(\sqrt{2}R_1 n+1)}{\delta}}}{n}
        \right)
        \leq \frac{\mu_\rvx}{2},   
\end{align}
with the fact that $\frac{\mu_\rvx\mu_\rvy}{\beta(\mu_\rvy+\beta)} \leq 1$.

Next, plugging (\ref{eq:theorem:nablaf-without-d-1-2}) into (\ref{eq:theorem:nablaf-without-d-1-1}), we obtain that
\begin{equation}
\begin{aligned}
\label{eq:lemma:nablafx-without-nablafx*-without-d-1-1}
    \| \rvx -\rvx^* \| 
    & \leq \frac{2}{\mu_\rvx}
    \left( \frac{\beta}{\mu_\rvy} \left(
    \sqrt{\frac{2 \E [ \|\nabla_\rvy f(\rvx^*, \rvy^*; \rvz)\|^2 ] \log{\frac{8}{\delta}}}{n}} + \frac{B_{\rvy^*} \log{\frac8\delta}}{n}  \right) \right.  \\
      & \quad \left.  + \sqrt{\frac{2 \E [ \|\nabla_\rvx f(\rvx^*, \rvy^*; \rvz)\|^2 ] \log{\frac{8}{\delta}}}{n}} 
        + \frac{B_{\rvx^*} \log{\frac8\delta}}{n} + \frac{\mu_\rvx}{2n}
        + 
    \| \nabla \Phi_S(\rvx)) \| 
    \right).
\end{aligned}    
\end{equation}

Combing (\ref{eq:lemma:nablafx-without-nablafx*-without-d-1-1}), (\ref{eq:theorem:nablaf-without-d-1-2}) and (\ref{lemma:x-x*-under-condition-1-1}), we have that for all $\rvx \in \gX$, let $c = \max \{ 16C^2, 1 \}$, when 
$  n \geq \frac{c\beta^2(\mu_\rvy+\beta)^4(d+ \log{\frac{16\log_2{\sqrt{2}R_1 n + 1}}{\delta}})}{\mu_\rvy^4\mu_\rvx^2}$, 
with probability at least $1-\delta$
\begin{align*}
    &  \| \nabla \Phi(\rvx) - \nabla \Phi_S(\rvx) \| 
    \leq \| \nabla \Phi_S(\rvx) \|
        + 2\sqrt{\frac{2 \E [ \|\nabla_\rvx f(\rvx^*, \rvy^*; \rvz)\|^2 ] \log{\frac{8}{\delta}}}{n}} \\
        & \quad + \frac{2 B_{\rvx^*} \log{\frac8\delta}}{n}
        + \frac{\mu_\rvx}{n}  
        + \frac{2\beta}{\mu_\rvy} \left(
    \sqrt{\frac{2 \E [ \|\nabla_\rvy f(\rvx^*, \rvy^*; \rvz)\|^2 ] \log{\frac{8}{\delta}}}{n}} 
    + \frac{B_{\rvy^*} \log{\frac8\delta}}{n}
    \right).
\end{align*}
The proof is complete.
\end{proof}

\begin{proof}[Proof of Remark \ref{remark:main-PL}]
    Here we briefly prove the results given in Remark \ref{remark:main-PL}. Since $\Phi(\rvx)$ satisfies the PL condition with $\mu_\rvx$,
     we have 
     we have 
\begin{align}
\label{eq:remark-main-PL-1-1}
    \Phi(\rvx) - \Phi(\rvx^*) \leq \frac{\| \nabla \Phi(\rvx) \|^2}{2\mu_\rvx}.
\end{align}
Therefore, we need to bound $\| \nabla \Phi(\rvx) \|^2$. According to Theorem \ref{theorem:nablaf-without-d}, for any $\delta \in (0,1)$, when $ n \geq \frac{c\beta^2(\mu_\rvy+\beta)^4(d+ \log{\frac{16\log_2{\sqrt{2}R_1 n + 1}}{\delta}})}{\mu_\rvy^4\mu_\rvx^2}$, with probability at least $1-\delta$,
\begin{equation}
\begin{aligned}
\label{eq:remark-main-PL-1-2}
    &  \| \nabla \Phi(\rvx) \| \leq 2 \| \nabla \Phi_S(\rvx) \| 
        + 2\sqrt{\frac{2 \E [ \|\nabla_\rvx f(\rvx^*, \rvy^*; \rvz)\|^2 ] \log{\frac{8}{\delta}}}{n}} \\
        & \quad + \frac{2 B_{\rvx^*} \log{\frac8\delta}}{n}
        + \frac{\mu_\rvx}{n}  
        + \frac{2\beta}{\mu_\rvy} \left(
    \sqrt{\frac{2 \E [ \|\nabla_\rvy f(\rvx^*, \rvy^*; \rvz)\|^2 ] \log{\frac{8}{\delta}}}{n}} 
    + \frac{B_{\rvy^*} \log{\frac8\delta}}{n}
    \right).
\end{aligned}    
\end{equation}

Then, substituting (\ref{eq:remark-main-PL-1-2}) into (\ref{eq:remark-main-PL-1-1}), we have
\begin{align*}
     \Phi(\rvx) - \Phi(\rvx^*) 
            \leq &
            \frac{\| \nabla \Phi(\rvx) \|^2}{2\mu_\rvx} \\
            \leq & \frac{1}{2\mu_\rvx} \Bigg\{ 
            2 \| \nabla \Phi_S(\rvx) \| +
            \frac{2\beta}{\mu_\rvy} \left(
    \sqrt{\frac{2 \E [ \|\nabla_\rvy f(\rvx^*, \rvy^*; \rvz)\|^2 ] \log{\frac{8}{\delta}}}{n}} 
    + \frac{B_{\rvy^*} \log{\frac8\delta}}{n}
    \right) \\
        & \quad + 2\sqrt{\frac{2 \E [ \|\nabla_\rvx f(\rvx^*, \rvy^*; \rvz)\|^2 ] \log{\frac{8}{\delta}}}{n}} 
        + \frac{2 B_{\rvx^*} \log{\frac{8}{\delta}}}{n}
        + \frac{\mu_\rvx}{n}
        \Bigg\}^2 \\
       \leq  &         
        \frac{8 \| \nabla \Phi_S(\rvx) \|^2}{\mu_\rvx} +
    \frac{16 \beta^2\E [ \|\nabla_\rvy f(\rvx^*, \rvy^*; \rvz)\|^2 ] \log{\frac{8}{\delta}}}{\mu_\rvx \mu_\rvy^2 n} \\
        & \quad + \frac{16 \E [ \|\nabla_\rvx f(\rvx^*, \rvy^*; \rvz)\|^2 ] \log{\frac{8}{\delta}}}{\mu_\rvx n}  
        + \frac{ 2 \left(
        \frac{2\beta B_{\rvy^*}}{\mu_\rvy} \log{\frac{8}{\delta}} + 2B_{\rvx^*} \log{\frac{8}{\delta}} + \mu_\rvx
        \right)^2
        }{\mu_\rvx n^2},
\end{align*}
where the second inequality holds with Cauchy–Bunyakovsky–Schwarz inequality.

The proof is complete.
\end{proof}

\section{Application}

\subsection{Empirical saddle point}

\begin{proof}[Proof of Theorem \ref{theorem:esp-nabla}]
   Plugging $\hat{\rvx}^*$ into Theorem \ref{theorem:main}, for any $\delta \in (0,1)$, with probability at least $1-\delta$, we have
   \begin{align*}
       &  \| \nabla \Phi(\hat{\rvx}^*) \| - \| \nabla \Phi_S(\hat{\rvx}^*) \| 
    \leq \frac{\beta}{\mu_\rvy} \left(
    \sqrt{\frac{2 \E [ \|\nabla_\rvy f(\rvx^*, \rvy^*; \rvz)\|^2 ] \log{\frac{8}{\delta}}}{n}} 
    + \frac{B_{\rvy^*} \log{\frac8\delta}}{n}
    \right) \\
    & + \sqrt{\frac{2 \E [ \|\nabla_\rvx f(\rvx^*, \rvy^*; \rvz)\|^2 ] \log{\frac{8}{\delta}}}{n}} 
    + \frac{B_{\rvx^*} \log{\frac8\delta}}{n}
    + \frac{C\beta(\mu_\rvy+\beta)}{\mu_\rvy} \frac{(\mu_\rvy+\beta)}{\mu_\rvy}
    \max \left\{ \| \rvx - \rvx^* \|, \frac{1}{n} \right\}  \\
    &  \times \left(
    \sqrt{\frac{d+\log{\frac{16\log_2(\sqrt{2}R_1 n+1)}{\delta}}}{n}} + 
    \frac{d+\log{\frac{16\log_2(\sqrt{2}R_1 n+1)}{\delta}}}{n}
    \right).
   \end{align*}
    Since $\hat{\rvx}^*$ is the solution of (\ref{eq:empirical-minimax}), there holds that $\| \nabla \Phi_S(\hat{\rvx}^*) \| = 0$. Thus, we can derive that 
    \begin{align*}
          \| \nabla \Phi(\hat{\rvx}^*) \| 
    & \leq \frac{\beta}{\mu_\rvy} \left(
    \sqrt{\frac{2 \E [ \|\nabla_\rvy f(\rvx^*, \rvy^*; \rvz)\|^2 ] \log{\frac{8}{\delta}}}{n}} 
    + \frac{B_{\rvy^*} \log{\frac8\delta}}{n}
    \right) \\
    & \quad + \sqrt{\frac{2 \E [ \|\nabla_\rvx f(\rvx^*, \rvy^*; \rvz)\|^2 ] \log{\frac{8}{\delta}}}{n}} 
    + \frac{B_{\rvx^*} \log{\frac8\delta}}{n}
    + \frac{C\beta(\mu_\rvy+\beta)}{\mu_\rvy} \frac{(\mu_\rvy+\beta)}{\mu_\rvy}
    \left( R_1 + \frac{1}{n} \right)  \\
    & \quad \quad \times \left(
    \sqrt{\frac{d+\log{\frac{16\log_2(\sqrt{2}R_1 n+1)}{\delta}}}{n}} + 
    \frac{d+\log{\frac{16\log_2(\sqrt{2}R_1 n+1)}{\delta}}}{n}
    \right) \\
    & = O\left(
    \sqrt{
    \frac{d + \log{\frac{\log{n}}{\delta}}}{n}
    }
    \right).
    \end{align*}
The proof is complete.
\end{proof}

\begin{proof}[Proof of Theorem \ref{theorem:esp-fast-rate}]
According to Lemma \ref{lemma:primal-function-is-PL}, $\Phi(\rvx)$ satisfies the PL condition with $\mu_\rvx$, we have 
\begin{align}
\label{eq:esp-1-1}
    \Phi(\hat{\rvx}^*) - \Phi(\rvx^*) \leq \frac{\| \nabla \Phi(\hat{\rvx}^*) \|^2}{2\mu_\rvx}.
\end{align}
Therefore, we need to bound $\| \nabla \Phi(\hat{\rvx}^*) \|^2$. Plugging $\hat{\rvx}^*$ into Theorem \ref{theorem:nablaf-without-d}, for any $\delta \in (0,1)$, when $ n \geq \frac{c\beta^2(\mu_\rvy+\beta)^4(d+ \log{\frac{16\log_2{\sqrt{2}R_1 n + 1}}{\delta}})}{\mu_\rvy^4\mu_\rvx^2}$, with probability at least $1-\delta$
\begin{equation}
\begin{aligned}
\label{eq:esp-1-2}
    &  \| \nabla \Phi(\hat{\rvx}^*) \| \leq 2 \| \nabla \Phi_S(\hat{\rvx}^*) \| 
        + 2\sqrt{\frac{2 \E [ \|\nabla_\rvx f(\rvx^*, \rvy^*; \rvz)\|^2 ] \log{\frac{8}{\delta}}}{n}} \\
        & \quad + \frac{2 B_{\rvx^*} \log{\frac8\delta}}{n}
        + \frac{\mu_\rvx}{n}  
        + \frac{2\beta}{\mu_\rvy} \left(
    \sqrt{\frac{2 \E [ \|\nabla_\rvy f(\rvx^*, \rvy^*; \rvz)\|^2 ] \log{\frac{8}{\delta}}}{n}} 
    + \frac{B_{\rvy^*} \log{\frac8\delta}}{n}
    \right).
\end{aligned}    
\end{equation}

Since $\nabla \Phi_S(\hat{\rvx}^*)  =  \nabla_\rvx F_S(\hat{\rvx}^*, \hat{\rvy}^*) = \bm{0}$, we have $\|\nabla \Phi_S(\hat{\rvx}^*)\| =  \|\nabla_\rvx F_S(\hat{\rvx}^*, \hat{\rvy}^*)\| = 0$. 
By plugging (\ref{eq:esp-1-2}) into (\ref{eq:esp-1-1}), we have
\begin{align*}
      \Phi(\hat{\rvx}^*) - \Phi(\rvx^*) 
            \leq &
            \frac{\| \nabla \Phi(\hat{\rvx}^*) \|^2}{2\mu_\rvx} \\
            \leq & \frac{1}{2\mu_\rvx} \Bigg\{
            \frac{2\beta}{\mu_\rvy} \left(
    \sqrt{\frac{2 \E [ \|\nabla_\rvy f(\rvx^*, \rvy^*; \rvz)\|^2 ] \log{\frac{8}{\delta}}}{n}} 
    + \frac{B_{\rvy^*} \log{\frac8\delta}}{n}
    \right) \\
        & \quad + 2\sqrt{\frac{2 \E [ \|\nabla_\rvx f(\rvx^*, \rvy^*; \rvz)\|^2 ] \log{\frac{8}{\delta}}}{n}} 
        + \frac{2 B_{\rvx^*} \log{\frac{8}{\delta}}}{n}
        + \frac{\mu_\rvx}{n}
        \Bigg\}^2 \\
        \leq  & 
        \frac{12 \beta^2\E [ \|\nabla_\rvy f(\rvx^*, \rvy^*; \rvz)\|^2 ] \log{\frac{8}{\delta}}}{\mu_\rvx \mu_\rvy^2 n}
        + \frac{12 \E [ \|\nabla_\rvx f(\rvx^*, \rvy^*; \rvz)\|^2 ] \log{\frac{8}{\delta}}}{\mu_\rvx n} \\ 
        & \quad + \frac{ 3 \left(
        \frac{2\beta B_{\rvy^*}}{\mu_\rvy} \log{\frac{8}{\delta}} + 2B_{\rvx^*} \log{\frac{8}{\delta}} + \mu_\rvx
        \right)^2
        }{2 \mu_\rvx n^2},
\end{align*}
where the second inequality holds with Cauchy–Bunyakovsky–Schwarz inequality.

Next, if we further assume $f(\rvx,\rvy;\rvz)$ is non-negative, 
according to Lemma \ref{lemma:smooth-self-bounding-property}, we have $\E \|\nabla_\rvx f(\rvx^*, \rvy^*; \rvz)\|^2 \leq 4\beta \E [f(\rvx^*, \rvy^*; \rvz)] $ and $ \E \|\nabla_\rvy f(\rvx^*, \rvy^*; \rvz)\|^2 \leq 4\beta \E [f(\rvx^*, \rvy^*; \rvz)]$. Then we have 
\begin{align*}
    \Phi(\hat{\rvx}^*) - \Phi(\rvx^*) 
         & \leq 
        \frac{48 \beta^3 \E [f(\rvx^*, \rvy^*; \rvz)] \log{\frac{8}{\delta}}}{\mu_\rvx \mu_\rvy^2 n}
        + \frac{48 \beta \E [f(\rvx^*, \rvy^*; \rvz)]  \log{\frac{8}{\delta}}}{\mu_\rvx n} \\ 
        & \quad + \frac{ 3 \left(
        \frac{2\beta B_{\rvy^*}}{\mu_\rvy} \log{\frac{8}{\delta}} + 2B_{\rvx^*} \log{\frac{8}{\delta}} + \mu_\rvx
        \right)^2
        }{2 \mu_\rvx n^2} \\
        & = 
        \frac{48 \beta (\beta^2 + \mu_\rvy^2) \Phi(\rvx^*) \log{\frac{8}{\delta}}}{\mu_\rvx \mu_\rvy^2 n}
        + \frac{ 3 \left(
        \frac{2\beta B_{\rvy^*}}{\mu_\rvy} \log{\frac{4}{\delta}} + 2B_{\rvx^*} \log{\frac{8}{\delta}} + \mu_\rvx
        \right)^2
        }{2 \mu_\rvx n^2} \\
        & = O\left(
        \frac{\Phi(\rvx^*) \log{\frac{1}{\delta}}}{n} + \frac{\log^2{\frac{1}{\delta}}}{n^2}
        \right).
\end{align*}
The proof is complete.
\end{proof}

\subsection{Gradient descent ascent}
\label{sec:appendix-GDA}

Firstly, we introduce the optimization error bound for GDA given in \cite{lin2020gradient}.

\begin{lemma}[Optimization error bound for NC-SC minimax problems \cite{lin2020gradient}]
\label{lemma:GDA-NC-SC-optimization-error-bound}
    under Assumption \ref{assumption:NC-SC}, and letting the step sizes be chosen as $\eta_\rvx = \frac{1}{16(\frac{\beta}{\mu} + 1)^2\beta}$ and $\eta_\rvy= \frac{1}{\beta}$, then the optimization error bound of \text{Algorithm} \ref{algo:gda} can be bounded by
    \begin{align*}
        \frac{1}{T}
        \sum_{t=1}^T \| \nabla \Phi_S(\rvx_t) \|^2         
        \leq
        \frac{128\beta^3 \Delta_\Phi}{\mu_\rvy^2 T} 
        + 
        \frac{5\beta^3 D_\gY}{\mu_\rvy T},
    \end{align*}
    where $\Delta_\Phi = \Phi_S(\rvx_0) - \min_\rvx \Phi_S(\rvx)$.
\end{lemma}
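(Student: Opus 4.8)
The plan is to follow the two-timescale analysis of \cite{lin2020gradient}, tracking simultaneously the descent of the primal objective $\Phi_S$ and the quality of the inner maximizer. Throughout I write $\kappa = \beta/\mu_\rvy$, recall from Lemma \ref{lemma:primal-function-is-smooth} that $\Phi_S$ is $L_\Phi$-smooth with $L_\Phi = \beta + \beta^2/\mu_\rvy = (\kappa+1)\beta$, and use Danskin's theorem together with strong concavity (Assumption \ref{assumption:NC-SC}) to identify $\nabla\Phi_S(\rvx_t) = \nabla_\rvx F_S(\rvx_t, \rvy_S^*(\rvx_t))$, where $\rvy_S^*(\rvx)$ is as in Definition \ref{define:definition-y*-ys*}. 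Because the single ascent step is measured against the \emph{moving} target $\rvy_S^*(\rvx_t)$, I introduce the tracking error $\delta_t := \|\rvy_t - \rvy_S^*(\rvx_t)\|^2$ as the second quantity to control.

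First I would apply the $L_\Phi$-smoothness descent inequality along the iterates. Since $\rvx_{t+1} - \rvx_t = -\eta_\rvx \nabla_\rvx F_S(\rvx_t, \rvy_t)$, expanding $\Phi_S(\rvx_{t+1})$ and replacing the true gradient $\nabla\Phi_S(\rvx_t)$ by the evaluated gradient $\nabla_\rvx F_S(\rvx_t, \rvy_t)$ produces a mismatch $\|\nabla_\rvx F_S(\rvx_t, \rvy_t) - \nabla_\rvx F_S(\rvx_t, \rvy_S^*(\rvx_t))\| \le \beta\sqrt{\delta_t}$ by $\beta$-smoothness. After a Young's inequality this yields a bound of the form
\begin{align*}
\Phi_S(\rvx_{t+1}) \le \Phi_S(\rvx_t) - \tfrac{\eta_\rvx}{2}\|\nabla\Phi_S(\rvx_t)\|^2 + c_1 \eta_\rvx \beta^2 \delta_t ,
\end{align*}
valid once $\eta_\rvx \le 1/(2L_\Phi) = 1/(2(\kappa+1)\beta)$, which is implied by the chosen $\eta_\rvx = 1/(16(\kappa+1)^2\beta)$.

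Next I would establish a one-step contraction for $\delta_t$. Because $F_S(\rvx_t, \cdot)$ is $\mu_\rvy$-strongly concave and $\beta$-smooth, the ascent step with $\eta_\rvy = 1/\beta$ contracts the distance to the maximizer at the current $\rvx_t$, giving $\|\rvy_{t+1} - \rvy_S^*(\rvx_t)\|^2 \le (1 - \mu_\rvy/\beta)\delta_t$. The maximizer then shifts as $\rvx$ updates, and by the $\kappa$-Lipschitzness of $\rvy_S^*$ (Lemma \ref{lemma:y2x}) this drift obeys $\|\rvy_S^*(\rvx_{t+1}) - \rvy_S^*(\rvx_t)\| \le \kappa\|\rvx_{t+1}-\rvx_t\| = \kappa\eta_\rvx\|\nabla_\rvx F_S(\rvx_t,\rvy_t)\|$. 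Combining these with the triangle and Young inequalities produces a recursion $\delta_{t+1} \le (1 - c_2/\kappa)\delta_t + c_3 \kappa^2\eta_\rvx^2\|\nabla_\rvx F_S(\rvx_t,\rvy_t)\|^2$, where the trailing gradient is in turn bounded by $\|\nabla\Phi_S(\rvx_t)\|^2 + \beta^2\delta_t$.

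Finally I would form the Lyapunov potential $V_t = \Phi_S(\rvx_t) + \lambda\,\delta_t$ and choose the coupling weight $\lambda \asymp \beta^2\kappa/\mu_\rvy$, so that the $+c_1\eta_\rvx\beta^2\delta_t$ term from the primal step is absorbed by the contraction gain $-\lambda c_2\delta_t/\kappa$. The smallness of $\eta_\rvx = 1/(16(\kappa+1)^2\beta)$ is exactly what makes the $\eta_\rvx^2$ drift term negligible against the contraction, yielding $V_{t+1} \le V_t - \tfrac{\eta_\rvx}{4}\|\nabla\Phi_S(\rvx_t)\|^2$. Telescoping over $t$ and dividing by $T\eta_\rvx$ gives $\frac1T\sum_t\|\nabla\Phi_S(\rvx_t)\|^2 \le \frac{4(V_1 - V_{T+1})}{\eta_\rvx T}$; bounding $V_1 - V_{T+1} \le \Delta_\Phi + \lambda\delta_1$ and using $\rvy_1 = \bm{0}$ with the compactness bound $\|\rvy\|^2 \le D_\gY$ (Assumption \ref{assumption:NC-SC}) so that $\delta_1 \le D_\gY$ recovers, after substituting the explicit $\eta_\rvx$ and $\lambda$, the two advertised terms $\frac{128\beta^3\Delta_\Phi}{\mu_\rvy^2 T}$ and $\frac{5\beta^3 D_\gY}{\mu_\rvy T}$. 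The main obstacle is the coupled two-timescale bookkeeping: balancing $\lambda$, $\eta_\rvx$ and $\eta_\rvy$ so that the primal-gradient mismatch and the maximizer drift are dominated simultaneously, and then carrying the explicit constants $16(\kappa+1)^2$, $128$ and $5$ cleanly through the telescoping step.
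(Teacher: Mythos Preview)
The paper does not supply its own proof of this lemma; it is quoted verbatim as a cited result from \cite{lin2020gradient} and used as a black box in the proofs of Theorems \ref{theorem:gda-nabla} and \ref{theorem:gda-fast-rate}. Your sketch is precisely the two-timescale Lyapunov argument that \cite{lin2020gradient} employs: a primal descent inequality for $\Phi_S$ with a $\beta^2\delta_t$ mismatch term, a contraction-plus-drift recursion for the inner tracking error $\delta_t$, and a coupled potential $V_t = \Phi_S(\rvx_t) + \lambda\delta_t$ whose telescoping produces the stated bound. So there is nothing to compare against in this paper, and your route matches the source reference.
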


\begin{proof}[Proof of Theorem \ref{theorem:gda-nabla}]
    Firstly, we have
    \begin{equation}
        \begin{aligned}
            \frac{1}{T}\sum_{t=1}^T \| \nabla \Phi(\rvx_t) \|^2 
            \leq & \frac{2}{T}\sum_{t=1}^T \| \nabla \Phi(\rvx_t) -\nabla_\rvx \Phi_S(\rvx_t) \|^2 + \frac{2}{T}\sum_{t=1}^T \| \nabla_\rvx \Phi_S(\rvx_t) \|^2 \\
            \leq & \frac{2}{T}\sum_{t=1}^T \max_{t=1,\cdots,T} \| \nabla \Phi(\rvx_t) -\nabla_\rvx \Phi_S(\rvx_t) \|^2 + \frac{2}{T}\sum_{t=1}^T \| \nabla_\rvx \Phi_S(\rvx_t) \|^2 \\
            \leq & \frac{2}{T}\sum_{t=1}^T \max_{t=1,\cdots,T} \| \nabla \Phi(\rvx_t) -\nabla_\rvx \Phi_S(\rvx_t) \|^2 
            + \frac{256\beta^3 \Delta_\Phi}{\mu_\rvy^2 T} + \frac{10\beta^3 D_\gY}{\mu_\rvy T},
        \end{aligned}
    \end{equation}
    Thus, with probability at least $1-\delta$, we have
    \begin{equation}
        \begin{aligned}
        \label{eq:theorem:gda-nabla-1-1}
            \frac{1}{T}\sum_{t=1}^T \| \nabla \Phi(\rvx_t) \|^2 
            \leq & O\left(\frac{1}{T}\right) + O \Bigg( \max_{t=1,\cdots,T} \Bigg[ 
            \frac{C\beta(\mu_\rvy+\beta)}{\mu_\rvy} \frac{(\mu_\rvy+\beta)}{\mu_\rvy}
            \max \left\{ \| \rvx_t - \rvx^* \|, \frac{1}{n} \right\}  \\
            & \times \Bigg(
            \sqrt{\frac{d+\log{\frac{16\log_2(\sqrt{2}R_1 n+1)}{\delta}}}{n}} + 
            \frac{d+\log{\frac{16\log_2(\sqrt{2}R_1 n+1)}{\delta}}}{n}
            \Bigg)
            \Bigg]^2
            \Bigg).
        \end{aligned}
    \end{equation}
where the inequality holds according to Theorem \ref{theorem:main}.

Next, we need to bound $\| \rvx_t - \rvx^* \|$. Since we assume that $\rvx_1 = 0$, and $\rvx_{t+1} = \rvx_t - \eta_\rvx \nabla_\rvx F_S(\rvx_t, \rvy_t)$, we have $\rvx_{t+1} = -  \eta_\rvx \sum_{k=1}^t \nabla_\rvx F_S(\rvx_t, \rvy_t)$. then we have

\begin{equation}
\begin{aligned}
\label{eq:theorem:gda-nabla-1-2}
    & \left\| \rvx_{t+1} - \rvx^* \right\| \leq \| \rvx_{t+1} \| + \| \rvx^* \| \\
    \leq & \left\| \eta_\rvx \sum_{k=1}^t \nabla_\rvx F_S(\rvx_t, \rvy_t) \right\| + \| \rvx^* \| 
    = O \left( \eta_\rvx L t \right).
\end{aligned}    
\end{equation}

Then plugging (\ref{eq:theorem:gda-nabla-1-2}) into (\ref{eq:theorem:gda-nabla-1-1}), with probability at least $1-\delta$
\begin{align*}
    \frac{1}{T}\sum_{t=1}^T \| \nabla \Phi(\rvx_t) \|^2 
    \leq & O\left( \frac{1}{T} \right) 
    + 
    O \left( 
    \frac{d+\log{\frac{16\log_2(\sqrt{2}R_1 n+1)}{\delta}}}{n} T
    \right).
\end{align*}
Let $T \asymp O\left(\sqrt{\frac{n}{d}}\right)$, we can derive that 
\begin{align*}
    \frac{1}{T}\sum_{t=1}^T \| \nabla \Phi(\rvx_t) \|^2 \leq O\left(
    \frac{d^{\frac{1}{2}} + d^{-\frac{1}{2}} \log{\frac{\log{n}}{\delta}} }{n^{\frac{1}{2}}}
    \right).
\end{align*}
The proof is complete.
\end{proof}

\begin{proof}[Proof of Theorem \ref{theorem:gda-fast-rate}]
    According to Lemma \ref{lemma:primal-function-is-PL}, $\Phi(\rvx)$ satisfies the PL assumption with parameter $\mu_\rvx$, we have
    \begin{align}
    \label{eq:theorem:gda-fast-rate-1-1}
        \Phi(\rvx) - \Phi(\rvx^*) \leq \frac{\| \nabla \Phi(\rvx) \|^2}{2\mu_\rvx}.
    \end{align}
    To bound $\Phi(\bar{\rvx}_T) - \Phi(\rvx^*)$, we firstly need to bound the term $ \| \nabla \Phi(\bar{\rvx}_T) \|^2 $. There holds that
    \begin{align}
    \label{eq:theorem:gda-fast-rate-1-2}
        \| \nabla \Phi(\bar{\rvx}_T) \|^2 \leq 2 \| \nabla \Phi(\bar{\rvx}_T) - 
        \nabla \Phi_S(\bar{\rvx}_T) \|^2 
        + 2 \| \nabla \Phi_S(\bar{\rvx}_T) \|^2.
    \end{align}
    From Theorem \ref{theorem:nablaf-without-d}, under Assumption \ref{assumption:minimax-bernstein-condition} and \ref{assumption:SC-SC}, plugging $\bar{\rvx}_T$ into Theorem \ref{theorem:nablaf-without-d}, for any $\delta \in (0,1)$, when 
    $ n \geq \frac{c\beta^2(\mu_\rvy+\beta)^4(d+ \log{\frac{16\log_2{\sqrt{2}R_1 n + 1}}{\delta}})}{\mu_\rvy^4\mu_\rvx^2}$, 
    with probability at least $1-\delta$

    \begin{equation}
        \begin{aligned}
    \label{eq:theorem:gda-fast-rate-1-3}
        &  \| \nabla \Phi(\bar{\rvx}_T) - \nabla \Phi_S(\bar{\rvx}_T) \| 
        \leq \| \nabla \Phi_S(\bar{\rvx}_T) \|
            + 2\sqrt{\frac{2 \E [ \|\nabla_\rvx f(\rvx^*, \rvy^*; \rvz)\|^2 ] \log{\frac{8}{\delta}}}{n}} \\
            & \quad + \frac{2 B_{\rvx^*} \log{\frac{8}{\delta}}}{n}
            + \frac{\mu_\rvx}{n}  
            + \frac{2\beta}{\mu_\rvy} \left(
        \sqrt{\frac{2 \E [ \|\nabla_\rvy f(\rvx^*, \rvy^*; \rvz)\|^2 ] \log{\frac{8}{\delta}}}{n}} 
        + \frac{B_{\rvy^*} \log{\frac8\delta}}{n}
        \right).
    \end{aligned}
    \end{equation}
    
    Next, we need to bound the optimization error $\| \nabla \Phi_S(\bar{\rvx}_T) \|$. According to Lemma \ref{lemma:primal-function-is-PL}, $\Phi_S(\rvx)$ satisfies the PL assumption with parameter $\mu_\rvx$, then using Lemma \ref{lemma:GDA-NC-SC-optimization-error-bound} we have
    \begin{align*}
        \frac{1}{T}\sum_{t=1}^T \Phi_S(\rvx_t) - \Phi_S(\rvx^*) 
        \leq \frac{1}{2\mu_\rvx T} \sum_{t=1}^T \| \nabla \Phi_S(\rvx_t) \|^2 \leq 
        \frac{64\beta^3 \Delta_\Phi}{\mu_\rvx \mu_\rvy^2 T } 
        + 
        \frac{5\beta^3 D_\gY}{2 \mu_\rvx \mu_\rvy T }.
    \end{align*}
    From the convexity of $F_S(\cdot, \rvy)$, we get
    \begin{align*}
        \Phi_S(\bar{\rvx}_T) - \Phi_S(\rvx^*) \leq \frac{1}{T}\sum_{t=1}^T \Phi_S(\rvx_t) - \Phi_S(\rvx^*) 
        \leq 
        \frac{64\beta^3 \Delta_\Phi}{\mu_\rvx \mu_\rvy^2 T } 
        + 
        \frac{5\beta^3 D_\gY}{2 \mu_\rvx \mu_\rvy T }.
    \end{align*}
    According to \cite{nesterov2003introductory} and Lemma \ref{lemma:primal-function-is-smooth}, there holds the following property for $\beta + \frac{\beta^2}{\mu_\rvy}$ function $\Phi_S(\rvx)$, we have
    \begin{align}
    \label{eq:theorem:gda-fast-rate-1-4}
        \frac{1}{2\left(\beta + \frac{\beta^2}{\mu_\rvy}\right)} \| \nabla \Phi_S(\bar{\rvx}_T)\|^2 
        \leq
        \Phi_S(\bar{\rvx}_T) - \Phi_S(\rvx^*) 
        \leq
        \frac{64\beta^3 \Delta_\Phi}{\mu_\rvx \mu_\rvy^2 T} 
        + 
        \frac{5\beta^3 D_\gY}{2 \mu_\rvx \mu_\rvy T}.
    \end{align}
    Plugging (\ref{eq:theorem:gda-fast-rate-1-4}) into (\ref{eq:theorem:gda-fast-rate-1-3}), according to  Cauchy–Bunyakovsky–Schwarz inequality, we can derive that
    \begin{equation}
    \begin{aligned}
    \label{eq:theorem:gda-fast-rate-1-5}
        & \| \nabla \Phi(\bar{\rvx}_T) - \nabla \Phi_S(\bar{\rvx}_T) \|^2 
        \leq  O\left( \frac{1}{T} \right) 
        + \frac{32 \beta^2\E [ \|\nabla_\rvy f(\rvx^*, \rvy^*; \rvz)\|^2 ] \log{\frac{8}{\delta}}}{\mu_\rvy^2 n} \\
        & \quad + \frac{32 \E [ \|\nabla_\rvx f(\rvx^*, \rvy^*; \rvz)\|^2 ] \log{\frac{8}{\delta}}}{n}
        + \frac{ 4 \left(
        \frac{2\beta B_{\rvy^*}}{\mu_\rvy} \log{\frac{8}{\delta}} + 2B_{\rvx^*} \log{\frac{8}{\delta}} + \mu_\rvx
        \right)^2
        }{n^2}.
    \end{aligned}
    \end{equation}
    Then substituting (\ref{eq:theorem:gda-fast-rate-1-5}), (\ref{eq:theorem:gda-fast-rate-1-4}) into (\ref{eq:theorem:gda-fast-rate-1-2}), we derive that
    \begin{equation}
        \begin{aligned}
        \label{eq:theorem:gda-fast-rate-1-6}
         & \| \nabla \Phi(\bar{\rvx}_T) \|^2 
        \leq  O\left( \frac{1}{T} \right) 
        + \frac{64 \beta^2\E [ \|\nabla_\rvy f(\rvx^*, \rvy^*; \rvz)\|^2 ] \log{\frac{8}{\delta}}}{\mu_\rvy^2 n} \\
        & \quad + \frac{64 \E [ \|\nabla_\rvx f(\rvx^*, \rvy^*; \rvz)\|^2 ] \log{\frac{8}{\delta}}}{n}
        + \frac{ 8 \left(
        \frac{2\beta B_{\rvy^*}}{\mu_\rvy} \log{\frac{8}{\delta}} + 2B_{\rvx^*} \log{\frac{8}{\delta}} + \mu_\rvx
        \right)^2
        }{n^2}.
        \end{aligned}
    \end{equation}

    Finally, we plug (\ref{eq:theorem:gda-fast-rate-1-6}) into (\ref{eq:theorem:gda-fast-rate-1-1}) and choose $ T \asymp O\left( n \right) $, with probability at least $1-\delta$
    \begin{align*}
        \Phi(\bar{\rvx}_T) - \Phi(\rvx^*) 
        = O \Bigg( \frac{\E [ \|\nabla_\rvx f(\rvx^*, \rvy^*; \rvz)\|^2 ] \log{\frac{1}{\delta}}}{n} 
        + \frac{\E [ \|\nabla_\rvy f(\rvx^*, \rvy^*; \rvz)\|^2 ] \log{\frac{1}{\delta}}}{n} + \frac{\log^2{\frac{1}{\delta}}}{n^2} \Bigg).
    \end{align*}

Next, if we further assume $f(\rvx,\rvy;\rvz)$ is non-negative, According to Lemma \ref{lemma:smooth-self-bounding-property}, we have $\E  \|\nabla_\rvx f(\rvx^*, \rvy^*; \rvz)\|^2  \leq 4\beta \E [f(\rvx^*, \rvy^*; \rvz)] $ and $ \E \|\nabla_\rvy f(\bar{\rvx}_T, \rvy^*(\bar{\rvx}_T); \rvz)\|^2  \leq 4\beta \E [f(\rvx^*, \rvy^*; \rvz)]$. Plugging (\ref{eq:theorem:gda-fast-rate-1-6}) into (\ref{eq:theorem:gda-fast-rate-1-1}), we have 
\begin{align*}
        & \Phi(\bar{\rvx}_T) - \Phi(\rvx^*) \\
          \leq & \frac{\| \nabla \Phi(\bar{\rvx}_T) \|^2}{2\mu_\rvx} \\
          \leq & O\left( \frac{1}{T} \right) + 
        \frac{128 \beta^3 \E [f(\rvx^*, \rvy^*; \rvz)] \log{\frac{8}{\delta}}}{\mu_\rvx \mu_\rvy^2 n}
        + \frac{128 \beta \E [f(\rvx^*, \rvy^*; \rvz)]  \log{\frac{8}{\delta}}}{\mu_\rvx n} \\ 
        & \quad + \frac{ 4 \left(
        \frac{2\beta B_{\rvy^*}}{\mu_\rvy} \log{\frac{8}{\delta}} + 2B_{\rvx^*} \log{\frac{8}{\delta}} + \mu_\rvx
        \right)^2
        }{\mu_\rvx n^2} \\
         = & O\left( \frac{1}{T} \right) + 
        \frac{128 \beta (\beta^2 + \mu_\rvy^2) \Phi(\rvx^*) \log{\frac{8}{\delta}}}{\mu_\rvx \mu_\rvy^2 n}
        + \frac{4 \left(
        \frac{2\beta B_{\rvy^*}}{\mu_\rvy} \log{\frac{8}{\delta}} + 2B_{\rvx^*} \log{\frac{8}{\delta}} + \mu_\rvx
        \right)^2
        }{\mu_\rvx n^2},
\end{align*}

When $T \asymp n^2$ we have 
\begin{align*}
    \Phi(\bar{\rvx}_T) - \Phi(\rvx^*) = O \left(
    \frac{\Phi(\rvx^*) \log{\frac{1}{\delta}} }{n}
    + \frac{\log^2{\frac{1}{\delta}}}{n^2}
    \right).
\end{align*}
The proof is complete.
\end{proof}

\subsection{Stochastic gradient descent ascent}
\label{sec:appendix-SGDA}

In this subsection, we present empirical optimization error bounds of primal functions for SGDA, which are motivated by \cite{lei2021stability,li2021high}. The proofs are standard in the literature \cite{nedic2009subgradient,nemirovski2009robust} and we give the optimization error bounds with high probability. Firstly, we introduce two concentration inequalities for martingales.

\begin{lemma}[\cite{boucheron2013concentration}]
\label{lemma:Azuma-Hoeffding-inequality}
    Let $z_1, \ldots, z_n$ be a sequence of random variables such that $z_k$ may depend the previous variables $z_1, \ldots, z_{k-1}$ for all $k=1,\ldots,n$. Consider a sequence of functionals $\xi_k(z_1, \ldots, z_k), k = 1, \ldots, n$. Assume $| \xi_k - \E_{z_k} [\xi_k]| \leq b_k$ for each $k$. Let $\delta \in (0,1)$. With probability at least $1-\delta$
    \begin{align*}
        \sum_{k=1}^n \xi_k - \sum_{k=1}^n \E_{z_k} [\xi_k] 
        \leq
        \left(
        2\sum_{k=1}^n b_k^2 \log{\frac{1}{\delta}}
        \right)^{\frac{1}{2}}.
    \end{align*}
\end{lemma}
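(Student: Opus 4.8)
The plan is to prove this as a standard Azuma--Hoeffding inequality via the exponential-moment (Chernoff) method applied to a martingale difference sequence. First I would set $D_k := \xi_k - \E_{z_k}[\xi_k]$, where $\E_{z_k}[\cdot]$ denotes conditional expectation given the past variables $z_1,\ldots,z_{k-1}$. By construction $\E_{z_k}[D_k]=0$, so $(D_k)$ is a martingale difference sequence with respect to the filtration $\mathcal{F}_k=\sigma(z_1,\ldots,z_k)$, and the hypothesis $|\xi_k-\E_{z_k}[\xi_k]|\le b_k$ gives the almost-sure bound $D_k\in[-b_k,b_k]$.

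Next I would apply Markov's inequality to the exponentiated sum: for any $\lambda>0$ and $t>0$,
\[
    \sP\!\left(\sum_{k=1}^n D_k \ge t\right) \le e^{-\lambda t}\,\E\!\left[\exp\!\left(\lambda \sum_{k=1}^n D_k\right)\right].
\]
The key step is to control the moment generating function by conditioning one step at a time. Peeling off the last term and using the tower property,
\[
    \E\!\left[\exp\!\left(\lambda \sum_{k=1}^n D_k\right)\right] = \E\!\left[\exp\!\left(\lambda \sum_{k=1}^{n-1} D_k\right)\E_{z_n}\!\left[e^{\lambda D_n}\right]\right].
\]
Since $D_n$ is conditionally mean-zero and lies in an interval of length $2b_n$, Hoeffding's lemma yields $\E_{z_n}[e^{\lambda D_n}]\le \exp(\lambda^2 b_n^2/2)$ almost surely. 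Iterating this bound over $k=n,n-1,\ldots,1$ produces
\[
    \E\!\left[\exp\!\left(\lambda \sum_{k=1}^n D_k\right)\right] \le \exp\!\left(\frac{\lambda^2}{2}\sum_{k=1}^n b_k^2\right).
\]

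Combining the two displays gives the sub-Gaussian tail $\sP(\sum_k D_k\ge t)\le \exp(-\lambda t + \tfrac{\lambda^2}{2}\sum_k b_k^2)$; optimizing over $\lambda$ (taking $\lambda=t/\sum_k b_k^2$) gives $\sP(\sum_k D_k\ge t)\le \exp(-t^2/(2\sum_k b_k^2))$. Finally, setting the right-hand side equal to $\delta$ and solving for $t$ yields $t=\sqrt{2\sum_{k=1}^n b_k^2 \log(1/\delta)}$, which is exactly the claimed high-probability bound on $\sum_k \xi_k - \sum_k \E_{z_k}[\xi_k]$. This result is essentially classical, so the argument is routine; the only point requiring care---and hence the main thing to get right---is the conditioning step: one must verify that Hoeffding's lemma applies to the \emph{conditional} law of $D_k$ given $\mathcal{F}_{k-1}$ (using that $D_k$ is conditionally centered with conditional range at most $2b_k$), so that the per-step bound $\E_{z_k}[e^{\lambda D_k}]\le e^{\lambda^2 b_k^2/2}$ holds almost surely and the iterated expectations telescope cleanly.
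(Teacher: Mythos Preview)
Your proof is correct and is the standard Chernoff--Hoeffding argument for martingale difference sequences. Note that the paper does not actually prove this lemma: it is quoted as a known result from \cite{boucheron2013concentration} and used as a black box in the SGDA optimization-error analysis, so there is no ``paper's own proof'' to compare against. Your argument is exactly the classical one that underlies the cited reference.
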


\begin{lemma}[\cite{tarres2014online}]
\label{lemma:Bernstein-type-martingale-inequality}
    Let $\{ \xi_k \}_{k \in \mathbb{N}}$ be a martingale difference sequence in $\R^d$. Suppose that almost surely $\| \xi_k \| \leq D$ and $\sum_{k=1}^t \E [\|\xi_k\|^2 | \xi_1,\ldots,\xi_{k-1}] \leq \sigma_t^2 $. Then, for any $\delta \in (0,1)$, the following inequality holds with probability at least $1-\delta$
    \begin{align*}
        \max_{1 \leq j \leq t} \left\| \sum_{k=1}^j \xi_k \right\|
        \leq 2\left( \frac{D}{3} + \sigma_t \right) \log{\frac{2}{\delta}}.
    \end{align*}
\end{lemma}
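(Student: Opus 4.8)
The plan is to derive the stated high-probability bound from a maximal vector-valued Freedman (Bernstein-for-martingales) tail inequality and then invert it. Writing $S_j = \sum_{k=1}^j \xi_k$ and $\mathcal{F}_{k} = \sigma(\xi_1,\dots,\xi_k)$, the martingale difference property reads $\E[\xi_k \mid \mathcal{F}_{k-1}] = 0$. The target is the tail estimate
\begin{align*}
    \Pr\left( \max_{1\le j\le t} \|S_j\| \ge u \right) \le 2\exp\left( -\frac{u^2}{2(\sigma_t^2 + Du/3)} \right), \qquad u > 0,
\end{align*}
from which the lemma follows by choosing $u$ so that the right-hand side equals $\delta$.

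To establish this tail bound I would construct an exponential supermartingale and apply Ville's/Doob's maximal inequality. Fix $\lambda>0$ and consider a process of the form $Z_j = \exp(\lambda \|S_j\| - \lambda^2 \psi(\lambda)\, V_j)$, where $V_j = \sum_{k\le j}\E[\|\xi_k\|^2\mid\mathcal{F}_{k-1}]$ is the predictable quadratic variation and $\psi(\lambda) = (e^{\lambda D}-1-\lambda D)/(\lambda D)^2$ is the usual Bernstein variance function. The one-step estimate needed is $\E[e^{\lambda(\|S_j\|-\|S_{j-1}\|)}\mid\mathcal{F}_{j-1}] \le \exp(\lambda^2\psi(\lambda)\,\E[\|\xi_j\|^2\mid\mathcal{F}_{j-1}])$. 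Here the Hilbert-space (dimension-free) structure of $\R^d$ enters through the exact identity $\|S_{j-1}+\xi_j\|^2 = \|S_{j-1}\|^2 + 2\langle S_{j-1},\xi_j\rangle + \|\xi_j\|^2$: the linear cross term $\langle S_{j-1}/\|S_{j-1}\|,\xi_j\rangle$ has conditional mean zero by the martingale difference property, so only the second-order part survives and is controlled by $\E[\|\xi_j\|^2\mid\mathcal{F}_{j-1}]$ uniformly in $d$. Since $V_t \le \sigma_t^2$ almost surely, $Z_j$ is a supermartingale with $\E Z_0 = 1$, and the maximal inequality gives $\Pr(\max_{j\le t}\|S_j\|\ge u) \le \inf_{\lambda>0} e^{-\lambda u + \lambda^2\psi(\lambda)\sigma_t^2}$; optimizing over $\lambda$ in the standard Bennett/Bernstein way yields the displayed tail bound, the extra factor $2$ coming from handling $\|S_j\|$ rather than a one-sided scalar sum.

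For the inversion, set $L = \log(2/\delta)$ and require $u^2/(2(\sigma_t^2 + Du/3)) = L$, i.e. $u^2 - \tfrac{2LD}{3}u - 2L\sigma_t^2 = 0$. Solving the quadratic and using $\sqrt{a+b}\le\sqrt a+\sqrt b$ gives $u \le \tfrac{2LD}{3} + \sqrt{2L}\,\sigma_t$. Finally, because $\delta\in(0,1)$ forces $L = \log(2/\delta) > \log 2 > 1/2$, we have $\sqrt{2L}\le 2L$, so $u \le \tfrac{2D}{3}L + 2\sigma_t L = 2(\tfrac{D}{3}+\sigma_t)\log\tfrac{2}{\delta}$, which is exactly the claimed bound.

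The main obstacle is the dimension-free one-step exponential estimate in $\R^d$. In the scalar case this is the classical Freedman argument, but for vector-valued increments one must avoid any dependence on $d$; the resolution is precisely the Hilbert-space smoothness (the parallelogram-type identity above) together with the vanishing of the conditional cross term, which is what keeps the constants $D/3$ and $\sigma_t$ independent of the ambient dimension. Everything else—the Ville/Doob maximal step and the quadratic inversion—is routine.
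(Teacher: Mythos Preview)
The paper does not give its own proof of this lemma; it is quoted as a known result from \cite{tarres2014online} (which in turn rests on Pinelis's vector-valued Bernstein inequality \cite{pinelis1994optimum}) and is used as a black box in the SGDA optimization-error analysis. So there is nothing in the paper to compare your argument against.

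On its own merits, your high-level route---exponential supermartingale, Ville/Doob maximal inequality, then invert the Freedman tail---is exactly the right one, and your inversion step is correct. The genuine gap is the one-step estimate
\[
\E\bigl[e^{\lambda(\|S_j\|-\|S_{j-1}\|)}\,\big|\,\mathcal{F}_{j-1}\bigr]\le \exp\bigl(\lambda^2\psi(\lambda)\,\E[\|\xi_j\|^2\mid\mathcal{F}_{j-1}]\bigr).
\]
Your justification (``the cross term $\langle S_{j-1}/\|S_{j-1}\|,\xi_j\rangle$ has conditional mean zero, so only the second-order part survives'') does not establish this, because $\|S_j\|-\|S_{j-1}\|$ is a nonlinear, convex function of $\xi_j$: the linear part gives a \emph{lower} bound on the increment, not an upper bound, and the conditional mean of the increment need not vanish. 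Concretely, take $S_{j-1}=0$ and $\xi_j=\pm 1$ in $\R^1$ with equal probability; then $\|\xi_j\|=1$ almost surely, $v=D=1$, the left side is $e^{\lambda}$ while the right side is $e^{e^{\lambda}-1-\lambda}$, and for small $\lambda>0$ the left side is strictly larger. So $Z_j=\exp(\lambda\|S_j\|-\lambda^2\psi(\lambda)V_j)$ is \emph{not} a supermartingale.

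The standard fix, which is what Pinelis actually does, is to run the argument with $\cosh(\lambda\|S_j\|)$ instead of $\exp(\lambda\|S_j\|)$: the evenness of $\cosh$ absorbs the sign ambiguity that causes the failure above, and the $2$-smoothness of the Hilbert norm (a second-order differential inequality for $t\mapsto\cosh(\lambda\|S_{j-1}+t\xi_j\|)$) then yields the desired supermartingale with dimension-free constants. Once that is in place, the maximal inequality and your inversion go through unchanged and give the stated bound with the factor $2$ in front.
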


The following lemma shows the optimization error bounds of primal function for SGDA.

\begin{lemma}
\label{lemma:SGDA-SC-SC-optimization-error-bound}
    Suppose Assumption \ref{assumption:SC-SC} and \ref{assumption:f-lipschitz} hold and let the stepsizes be chosen as $\eta_{\rvx_t} = \frac{1}{\mu_\rvx(t+t_0)}$ and $\eta_{\rvy_t} = \frac{1}{\mu_\rvy(t+t_0)} $, for any $\delta \in (0,1)$, with probability at least $1-\delta$, then the optimization error of \text{Algorithm} \ref{algo:sgda} can be bounded by
    \begin{align*}
        &\Phi_S(\bar{\rvx}_T) - \Phi_S(\hat{\rvx}^*) \leq
        \frac{t_0(\mu_\rvx D_\rvx + \mu_\rvy D_\rvy)}{2T} + \frac{L^2 \log(eT)}{2T} \left( \frac{1}{\mu_\rvx} + \frac{1}{\mu_\rvy}\right) \\
        & \quad +  \frac{2(\sqrt{D_\gX} + \sqrt{D_\gY})}{T} \left( \frac{2L}{3} + 2L\sqrt{T} \right) \log{\frac{6}{\delta}}
        + \frac{2L (\sqrt{D_\gX} + \sqrt{D_\gY}) \left(2T\log{\frac{6}{\delta}}\right)^{\frac{1}{2}}}{T}.
    \end{align*}
\end{lemma}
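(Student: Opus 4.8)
The plan is to reduce the primal suboptimality to a running duality gap and then run the standard strongly-convex--strongly-concave SGDA analysis, using the two martingale inequalities (Lemma~\ref{lemma:Azuma-Hoeffding-inequality} and Lemma~\ref{lemma:Bernstein-type-martingale-inequality}) to control the stochastic noise. First I would introduce the comparator $\rvy' := \rvy^*_S(\bar{\rvx}_T) = \argmax_{\rvy \in \gY} F_S(\bar{\rvx}_T, \rvy)$, so that $\Phi_S(\bar{\rvx}_T) = F_S(\bar{\rvx}_T, \rvy')$. Since $(\hat{\rvx}^*, \hat{\rvy}^*)$ is a saddle point under Assumption~\ref{assumption:SC-SC}, we have $\Phi_S(\hat{\rvx}^*) = F_S(\hat{\rvx}^*, \hat{\rvy}^*) \ge F_S(\hat{\rvx}^*, \bar{\rvy}_T)$; then convexity of $F_S(\cdot, \rvy')$ in $\rvx$ and concavity of $F_S(\hat{\rvx}^*, \cdot)$ in $\rvy$, applied through Jensen's inequality to the averaged iterates, yield
\[
    \Phi_S(\bar{\rvx}_T) - \Phi_S(\hat{\rvx}^*) \le \frac{1}{T}\sum_{t=1}^{T}\left[ F_S(\rvx_t, \rvy') - F_S(\hat{\rvx}^*, \rvy_t)\right].
\]

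Next I would bound each summand by one-step curvature. Writing $G_{\rvx,t} = \nabla_\rvx F_S(\rvx_t,\rvy_t)$, $g_{\rvx,t} = \nabla_\rvx f(\rvx_t,\rvy_t;\rvz_{i_t})$ (and analogously for $\rvy$), strong convexity at $(\rvx_t,\hat{\rvx}^*)$ and strong concavity at $(\rvy_t,\rvy')$ give
\[
    F_S(\rvx_t, \rvy') - F_S(\hat{\rvx}^*, \rvy_t) \le \langle G_{\rvx,t}, \rvx_t - \hat{\rvx}^*\rangle + \langle G_{\rvy,t}, \rvy' - \rvy_t\rangle - \frac{\mu_\rvx}{2}\|\rvx_t - \hat{\rvx}^*\|^2 - \frac{\mu_\rvy}{2}\|\rvy' - \rvy_t\|^2.
\]
I would then split each true gradient as $G = g + (G-g)$, expand the stochastic inner products by the three-point identity coming from the (projected) update rule, and \emph{telescope} the resulting squared-distance terms against the strong-convexity terms. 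With $\eta_{\rvx_t} = 1/(\mu_\rvx(t+t_0))$ the coefficients $1/(2\eta_{\rvx_t}) = \mu_\rvx(t+t_0)/2$ make the telescope collapse to the single initial contribution $\tfrac{\mu_\rvx t_0}{2}\|\rvx_1 - \hat{\rvx}^*\|^2 \le \tfrac{\mu_\rvx t_0}{2}D_\gX$ (and symmetrically $\tfrac{\mu_\rvy t_0}{2}D_\gY$ on the $\rvy$-side, using $\rvx_1=\rvy_1=\bm 0$), which after dividing by $T$ is the first displayed term. The curvature remainders $\tfrac{\eta_{\rvx_t}}{2}\|g_{\rvx,t}\|^2 \le \tfrac{L^2}{2}\eta_{\rvx_t}$ (Assumption~\ref{assumption:f-lipschitz}) sum through $\sum_t \eta_{\rvx_t} \le \mu_\rvx^{-1}\log(eT)$ to give the second term.

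The remaining and most delicate pieces are the noise sums $\sum_t\langle G_{\rvx,t}-g_{\rvx,t},\,\rvx_t-\hat{\rvx}^*\rangle$ and $\sum_t\langle G_{\rvy,t}-g_{\rvy,t},\,\rvy'-\rvy_t\rangle$. The main obstacle is that the comparator $\rvy'$ is \emph{not adapted}: it depends on the whole trajectory through $\bar{\rvx}_T$, so $\sum_t\langle G_{\rvy,t}-g_{\rvy,t},\,\rvy'\rangle$ is not a martingale and cannot be handled by a direct increment bound. I would resolve this by writing $\langle\,\cdot\,,\rvy'-\rvy_t\rangle = \langle\,\cdot\,,\rvy'\rangle - \langle\,\cdot\,,\rvy_t\rangle$. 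The second piece $\sum_t\langle G_{\rvy,t}-g_{\rvy,t},\rvy_t\rangle$ is a scalar martingale-difference sum with increments bounded by $2L\sqrt{D_\gY}$, so Lemma~\ref{lemma:Azuma-Hoeffding-inequality} bounds it by $2L\sqrt{D_\gY}(2T\log\tfrac{1}{\delta})^{1/2}$, producing the fourth term. For the first piece I use $\langle \sum_t (G_{\rvy,t}-g_{\rvy,t}),\rvy'\rangle \le \sqrt{D_\gY}\,\max_{1\le j\le T}\|\sum_{t\le j}(G_{\rvy,t}-g_{\rvy,t})\|$; the crucial point is that the vector Bernstein-type bound of Lemma~\ref{lemma:Bernstein-type-martingale-inequality}, with $D = 2L$ and $\sigma_T \le 2L\sqrt{T}$, controls the maximum of the partial sums \emph{uniformly}, hence holds regardless of which random $\rvy'$ is selected, giving $\tfrac{2\sqrt{D_\gY}}{T}(\tfrac{2L}{3}+2L\sqrt{T})\log\tfrac{2}{\delta}$. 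Treating the $\rvx$-side identically and taking a union bound over the four high-probability events (each at an appropriately rescaled level) converts $\log\tfrac{1}{\delta},\log\tfrac{2}{\delta}$ into the stated $\log\tfrac{6}{\delta}$; dividing every contribution by $T$ then assembles exactly the four terms in the claim.
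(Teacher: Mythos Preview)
Your proposal is correct and follows essentially the same approach as the paper's proof: both run the standard SC--SC SGDA one-step expansion, telescope with the weights $\mu(t+t_0)$ induced by $\eta_t = 1/(\mu(t+t_0))$, and then split each noise sum into an adapted piece (bounded by Azuma--Hoeffding, Lemma~\ref{lemma:Azuma-Hoeffding-inequality}) and a non-adapted comparator piece (bounded by Cauchy--Schwarz followed by the vector martingale inequality, Lemma~\ref{lemma:Bernstein-type-martingale-inequality}). The only cosmetic difference is that you start from the duality-gap expression $\frac{1}{T}\sum_t[F_S(\rvx_t,\rvy')-F_S(\hat{\rvx}^*,\rvy_t)]$ with the specific comparators $(\hat{\rvx}^*,\rvy')$, whereas the paper treats the $\rvx$-side and $\rvy$-side separately with free comparators and then takes $\inf_{\rvx}$ and $\sup_{\rvy}$ before combining; the resulting bounds and the allocation of the confidence budget (two Azuma events at level $\delta/6$, two vector-Bernstein events at level $\delta/3$) coincide.
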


\begin{proof}[Proof of Lemma \ref{lemma:SGDA-SC-SC-optimization-error-bound}]
    This proof mainly follows from \cite{lei2021stability,li2021high}. Firstly, we have
    \begin{align*}
        \| \rvx_{t+1} - \rvx \|^2 
        & = \| \rvx_t - \eta_{\rvx_t} \nabla_\rvx f(\rvx_t, \rvy_t; \rvz_{i_t}) - \rvx \|^2 \\
        & = \| \rvx_t - \rvx \|^2 + \eta_{\rvx_t}^2 \|\nabla_\rvx f(\rvx_t, \rvy_t; \rvz_{i_t})\|^2 + 2\eta_{\rvx_t} \langle \rvx - \rvx_t, \nabla_\rvx f(\rvx_t, \rvy_t; \rvz_{i_t}) \rangle \\
        & \leq  \| \rvx_t - \rvx \|^2 + \eta_{\rvx_t}^2 L^2 + 2\eta_{\rvx_t} \langle \rvx - \rvx_t, \nabla_\rvx f(\rvx_t, \rvy_t; \rvz_{i_t}) - \nabla_\rvx F_S (\rvx_t, \rvy_t) \rangle \\
        & \quad + 2\eta_{\rvx_t} \langle \rvx - \rvx_t, \nabla_\rvx F_S(\rvx_t, \rvy_t) \rangle,
    \end{align*}
    where the first inequality holds because of Assumption \ref{assumption:f-lipschitz}. According to the strong convexity of $F_S(\cdot, \rvy_t)$, we have
    \begin{align*}
        2\eta_{\rvx_t} (F_S(\rvx_t, \rvy_t) - F_S(\rvx, \rvy_t)) 
        \leq (1-\eta_{\rvx_t} \mu_\rvx)\| \rvx_t - \rvx \|^2 - \| \rvx_{t+1} - \rvx \|^2 + \eta_{\rvx_t}^2 L^2 \\
        + 2\eta_{\rvx_t} \langle \rvx -\rvx_t, \nabla_\rvx f(\rvx_t, \rvy_t; \rvz_{i_t}) - \nabla_\rvx F_S(\rvx_t, \rvy_t)\rangle.
    \end{align*}

    Let $\eta_{\rvx_t} = \frac{1}{\mu_\rvx(t+t_0)}$, we further get
    \begin{align*}
        \frac{2}{\mu_\rvx (t +t_0)} (F_S(\rvx_t, \rvy_t) - F_S(\rvx, \rvy_t)) \leq
        \left( 1-\frac{1}{t+t_0} \right) \|\rvx_t -\rvx\|^2 - \|\rvx_{t+1}-\rvx\|^2 \\
        + \left( \frac{L}{\mu_\rvx(t+t_0)} \right)^2  + \frac{2}{\mu_\rvx(t+t_0)} \langle \rvx -\rvx_t, \nabla_\rvx f(\rvx_t, \rvy_t; \rvz_{i_t}) - \nabla_\rvx F_S(\rvx_t, \rvy_t)\rangle.
    \end{align*}
    Multiplying both sides by $t+t_0$, we have
    \begin{align*}
        \frac{2}{\mu_\rvx} (F_S(\rvx_t, \rvy_t) - F_S(\rvx, \rvy_t)) \leq
        \left( t + t_0 -1 \right) \|\rvx_t -\rvx\|^2 - (t+t_0) \|\rvx_{t+1}-\rvx\|^2 \\
        + \frac{L^2}{\mu_\rvx^2(t+t_0)} + \frac{2}{\mu_\rvx} \langle \rvx -\rvx_t, \nabla_\rvx f(\rvx_t, \rvy_t; \rvz_{i_t}) - \nabla_\rvx F_S(\rvx_t, \rvy_t)\rangle.
    \end{align*}
    Since $\rvx_1 = 0$ and $\sum_{t=1}^T t^{-1} \leq \log{(eT)}$, by taking a summation of the above inequality from $t=1$ to $T$, we have 
    \begin{align*}
        & \sum_{t=1}^T (F_S(\rvx_t, \rvy_t) - F_S(\rvx, \rvy_t)) \leq
        \frac{\mu_\rvx}{2}t_0 D_\gX
        + \frac{L^2 \log{(eT)}}{2\mu_\rvx}  \\
        & + \sum_{t=1}^T \langle \rvx, \nabla_\rvx f(\rvx_t, \rvy_t; \rvz_{i_t}) - \nabla_\rvx F_S(\rvx_t, \rvy_t)\rangle 
        + \sum_{t=1}^T \langle \rvx_t, \nabla_\rvx F_S(\rvx_t, \rvy_t) - \nabla_\rvx f(\rvx_t, \rvy_t; \rvz_{i_t})\rangle.
    \end{align*}
    Since $\rvy_S^*(\rvx) = \argmax_{y\in \gY} F_S(\rvx,\rvy)$, for any $\rvx \in \gX$, we obtain that $F_S(\rvx, \rvy_t) \leq F_S(\rvx, \rvy_S^*(\rvx))$. Then we have
    \begin{align*}
        & \sum_{t=1}^T (F_S(\rvx_t, \rvy_t) - F_S(\rvx, \rvy_S^*(\rvx))) \leq
        \frac{\mu_\rvx}{2}t_0 D_\gX
        + \frac{L^2 \log{(eT)}}{2\mu_\rvx}  \\
        & + \sum_{t=1}^T \langle \rvx, \nabla_\rvx f(\rvx_t, \rvy_t; \rvz_{i_t}) - \nabla_\rvx F_S(\rvx_t, \rvy_t)\rangle 
        + \sum_{t=1}^T \langle \rvx_t, \nabla_\rvx F_S(\rvx_t, \rvy_t) - \nabla_\rvx f(\rvx_t, \rvy_t; \rvz_{i_t})\rangle.
    \end{align*}
    Since this inequality holds for any $\rvx$, we get
    \begin{align*}
        & \sum_{t=1}^T (F_S(\rvx_t, \rvy_t) - \inf_{\rvx \in \gX} F_S(\rvx, \rvy_S^*(\rvx))) \leq
        \frac{\mu_\rvx}{2}t_0 D_\gX
        + \frac{L^2 \log{(eT)}}{2\mu_\rvx}  \\
        & + \sum_{t=1}^T \sup_{x\in\gX} \langle \rvx, \nabla_\rvx f(\rvx_t, \rvy_t; \rvz_{i_t}) - \nabla_\rvx F_S(\rvx_t, \rvy_t)\rangle 
        + \sum_{t=1}^T \langle \rvx_t, \nabla_\rvx F_S(\rvx_t, \rvy_t) - \nabla_\rvx f(\rvx_t, \rvy_t; \rvz_{i_t})\rangle,
    \end{align*}
    which implies that
    \begin{align*}
        & \sum_{t=1}^T (F_S(\rvx_t, \rvy_t) - \Phi_S(\hat{\rvx}^*)) \leq
        \frac{\mu_\rvx}{2}t_0 D_\gX
        + \frac{L^2 \log{(eT)}}{2\mu_\rvx}  \\
        & + \sum_{t=1}^T \sup_{x\in\gX} \langle \rvx, \nabla_\rvx f(\rvx_t, \rvy_t; \rvz_{i_t}) - \nabla_\rvx F_S(\rvx_t, \rvy_t)\rangle 
        + \sum_{t=1}^T \langle \rvx_t, \nabla_\rvx F_S(\rvx_t, \rvy_t) - \nabla_\rvx f(\rvx_t, \rvy_t; \rvz_{i_t})\rangle.
    \end{align*}
    By Schwarz's inequality, we have
    \begin{align*}
        & \sum_{t=1}^T (F_S(\rvx_t, \rvy_t) - \Phi_S(\hat{\rvx}^*)) \leq
        \frac{\mu_\rvx}{2}t_0 D_\gX
        + \frac{L^2 \log{(eT)}}{2\mu_\rvx}  \\
        & +  \sqrt{D_\gX} \left\| \sum_{t=1}^T \left( \nabla_\rvx f(\rvx_t, \rvy_t; \rvz_{i_t}) - \nabla_\rvx F_S(\rvx_t, \rvy_t) \right) \right\| 
        + \sum_{t=1}^T \langle \rvx_t, \nabla_\rvx F_S(\rvx_t, \rvy_t) - \nabla_\rvx f(\rvx_t, \rvy_t; \rvz_{i_t})\rangle.
    \end{align*}
    Denote that $\xi_t = \langle \rvx_t, \nabla_\rvx F_S(\rvx_t, \rvy_t) - \nabla_\rvx f(\rvx_t, \rvx_t;\rvz_{i_t})\rangle$. Since $\E_{i_t} [\langle \rvx_t, \nabla_\rvx F_S(\rvx_t, \rvy_t) - \nabla_\rvx f(\rvx_t, \rvy_t;\rvz_{i_t})
    \rangle] = 0$, so $\{ \xi_t | t = 1,\ldots,T \}$ is a martingale difference sequence. By Schwarz's inequality and Assumption \ref{assumption:f-lipschitz}, we know that $|\langle \rvx_t, \nabla_\rvx F_S(\rvx_t, \rvy_t) - \nabla_\rvx f(\rvx_t,\rvy_t;\rvz_{i_t}) \rangle| \leq 2L \sqrt{D_\gX}$. Then according to Lemma \ref{lemma:Azuma-Hoeffding-inequality}, we have the following inequality with probability at least $1-\frac{\delta}{6}$
    \begin{align*}
        \sum_{t=1}^T \langle \rvx_t, \nabla_\rvx F_S(\rvx_t, \rvy_t) - \nabla_\rvx f(\rvx_t, \rvy_t; \rvz_{i_t})\rangle
        \leq 2L \sqrt{D_\gX} \left(2T\log{\frac{6}{\delta}}\right)^{\frac{1}{2}}.
    \end{align*}
    Define $\xi'_t = \nabla_\rvx f(\rvx_t, \rvy_t; \rvz_{i_t}) - \nabla_\rvx F_S(\rvx_t, \rvy_t)$. Then we get $\|\xi'_t\| \leq 2L$ and
    \begin{align*}
        \sum_{t=1}^T \E [\| \xi'_t \|^2|\xi'_1,\ldots, \xi'_{t-1}] \leq 4TL^2.
    \end{align*}
    Applying Lemma \ref{lemma:Bernstein-type-martingale-inequality} to the martingale difference sequence $\{ \xi'_t \}$, we have the following inequality with probability at least $1-\frac{\delta}{3}$
    \begin{align*}
        \left\| \sum_{t=1}^T \xi'_t \right\| 
        \leq 2 \left( \frac{2L}{3} + 2L\sqrt{T} \right) \log{\frac{6}{\delta}}.
    \end{align*}
    This implies that with probability at least $1-\frac{\delta}{3}$
    \begin{align*}
        \left\| \sum_{t=1}^T \left( \nabla_\rvx f(\rvx_t, \rvy_t; \rvz_{i_t}) - \nabla_\rvx F_S(\rvx_t, \rvy_t) \right) \right\| 
        \leq 2 \left( \frac{2L}{3} + 2L\sqrt{T} \right) \log{\frac{6}{\delta}}.
    \end{align*}
    Combined with the above results, we finally have the following inequality with probability at least $1-\frac{\delta}{2}$
    \begin{equation}
    \begin{aligned}
    \label{eq:sgda-optimization-error-1-1}
         & \frac{1}{T} \sum_{t=1}^T (F_S(\rvx_t, \rvy_t) - \Phi_S(\hat{\rvx}^*)) \leq
        \frac{\mu_\rvx t_0 D_\gX }{2T}
        + \frac{L^2 \log{(eT)}}{2\mu_\rvx T}  \\
        & \quad +  \frac{2 \sqrt{D_\gX}}{T} \left( \frac{2L}{3} + 2L\sqrt{T} \right) \log{\frac{6}{\delta}}
        + \frac{2L \sqrt{D_\gX} \left(2T\log{\frac{6}{\delta}}\right)^{\frac{1}{2}}}{T}.
    \end{aligned}
    \end{equation}    
    
    Similarly, we can bound $\Phi(\bar{\rvx}_T) - \frac{1}{T} \sum_{t=1}^T F_S(\rvx_t, \rvy_t)$. Firstly, we have
    \begin{align*}
        \| \rvy_{t+1} - \rvy \|^2 
        & = \| \rvy_t + \eta_{\rvy_t} \nabla_\rvy f(\rvx_t, \rvy_t; \rvz_{i_t}) - \rvy \|^2 \\
        & = \| \rvy_t - \rvy \|^2 + \eta_{\rvy_t}^2 \|\nabla_\rvy f(\rvx_t, \rvy_t; \rvz_{i_t})\|^2 + 2\eta_{\rvx_y} \langle \rvy_t - \rvy, \nabla_\rvy f(\rvx_t, \rvy_t; \rvz_{i_t}) \rangle \\
        & \leq  \| \rvy_t - \rvy \|^2 + \eta_{\rvy_t}^2 L^2 + 2\eta_{\rvy_t} \langle \rvy_t - \rvy, \nabla_\rvy f(\rvx_t, \rvy_t; \rvz_{i_t}) - \nabla_\rvy F_S (\rvx_t, \rvy_t) \rangle \\
        & \quad + 2\eta_{\rvy_t} \langle \rvy_t - \rvy, \nabla_\rvx F_S(\rvx_t, \rvy_t) \rangle,
    \end{align*}
    where the first inequality holds because of Assumption \ref{assumption:f-lipschitz}. According to the strong concavity of $F_S(\rvx_t, \cdot)$, we have
    \begin{align*}
        2\eta_{\rvy_t} (F_S(\rvx_t, \rvy) - F_S(\rvx_t, \rvy_t)) 
        \leq (1-\eta_{\rvy_t} \mu_\rvy)\| \rvy_t - \rvy \|^2 - \| \rvy_{t+1} - \rvy \|^2 + \eta_{\rvy_t}^2 L^2 \\
        + 2\eta_{\rvy_t} \langle \rvy_t -\rvy, \nabla_\rvy f(\rvx_t, \rvy_t; \rvz_{i_t}) - \nabla_\rvy F_S(\rvx_t, \rvy_t)\rangle.
    \end{align*}

    Let $\eta_{\rvy_t} = \frac{1}{\mu_\rvy (t+t_0)}$, we further get
    \begin{align*}
        \frac{2}{\mu_\rvy (t +t_0)} (F_S(\rvx_t, \rvy) - F_S(\rvx_t, \rvy_t)) \leq
        \left( 1-\frac{1}{t+t_0} \right) \|\rvy_t -\rvy\|^2 - \|\rvy_{t+1}-\rvy\|^2 \\
        + \left( \frac{L}{\mu_\rvy(t+t_0)} \right)^2  + \frac{2}{\mu_\rvy(t+t_0)} \langle \rvy_t -\rvy, \nabla_\rvy f(\rvx_t, \rvy_t; \rvz_{i_t}) - \nabla_\rvy F_S(\rvx_t, \rvy_t)\rangle.
    \end{align*}
    Multiplying both sides by $t+t_0$, we have
    \begin{align*}
        \frac{2}{\mu_\rvy} (F_S(\rvx_t, \rvy) - F_S(\rvx_t, \rvy_t)) \leq
        \left( t + t_0 -1 \right) \|\rvy_t -\rvy\|^2 - (t+t_0) \|\rvy_{t+1}-\rvy\|^2 \\
        + \frac{L^2}{\mu_\rvy^2(t+t_0)} + \frac{2}{\mu_\rvy} \langle \rvy_t -\rvy, \nabla_\rvy f(\rvx_t, \rvy_t; \rvz_{i_t}) - \nabla_\rvy F_S(\rvx_t, \rvy_t)\rangle.
    \end{align*}
    Since $\rvy_1 = 0$ and $\sum_{t=1}^T t^{-1} \leq \log{(eT)}$, by taking a summation of the above inequality from $t=1$ to $T$, we have 
    \begin{align*}
        & \sum_{t=1}^T (F_S(\rvx_t, \rvy) - F_S(\rvx_t, \rvy_t)) \leq
        \frac{\mu_\rvy}{2}t_0 D_\gY
        + \frac{L^2 \log{(eT)}}{2\mu_\rvy}  \\
        & + \sum_{t=1}^T \langle \rvy_t, \nabla_\rvy f(\rvx_t, \rvy_t; \rvz_{i_t}) - \nabla_\rvy F_S(\rvx_t, \rvy_t)\rangle 
        + \sum_{t=1}^T \langle \rvy, \nabla_\rvy F_S(\rvx_t, \rvy_t) - \nabla_\rvy f(\rvx_t, \rvy_t; \rvz_{i_t})\rangle.
    \end{align*}
    From the convexity of $F_S(\cdot, \rvy)$, we get
    \begin{align*}
        & \sum_{t=1}^T (F_S(\bar{\rvx}_T, \rvy) - F_S(\rvx_t, \rvy_t)) \leq
        \frac{\mu_\rvy}{2}t_0 D_\gY
        + \frac{L^2 \log{(eT)}}{2\mu_\rvy}  \\
        & + \sum_{t=1}^T \langle \rvy_t, \nabla_\rvy f(\rvx_t, \rvy_t; \rvz_{i_t}) - \nabla_\rvy F_S(\rvx_t, \rvy_t)\rangle 
        + \sum_{t=1}^T \langle \rvy, \nabla_\rvy F_S(\rvx_t, \rvy_t) - \nabla_\rvy f(\rvx_t, \rvy_t; \rvz_{i_t})\rangle.
    \end{align*}
    Since this inequality holds for any $\rvy$, we get
    \begin{align*}
        & \sum_{t=1}^T (\sup_{y\in\gY} F_S(\bar{\rvx}_T, \rvy) - F_S(\rvx_t, \rvy_t)) \leq
        \frac{\mu_\rvy}{2}t_0 D_\gY
        + \frac{L^2 \log{(eT)}}{2\mu_\rvy}  \\
        & + \sum_{t=1}^T \langle \rvy_t, \nabla_\rvy f(\rvx_t, \rvy_t; \rvz_{i_t}) - \nabla_\rvy F_S(\rvx_t, \rvy_t)\rangle 
        + \sum_{t=1}^T \sup_{y\in\gY} \langle \rvy, \nabla_\rvy F_S(\rvx_t, \rvy_t) - \nabla_\rvy f(\rvx_t, \rvy_t; \rvz_{i_t})\rangle,
    \end{align*}
    which implies that
    \begin{align*}
        & \sum_{t=1}^T (\Phi_S(\bar{\rvx}_T) - F_S(\rvx_t, \rvy_t)) \leq
        \frac{\mu_\rvy}{2}t_0 D_\gY
        + \frac{L^2 \log{(eT)}}{2\mu_\rvy}  \\
        & + \sum_{t=1}^T \langle \rvy_t, \nabla_\rvy f(\rvx_t, \rvy_t; \rvz_{i_t}) - \nabla_\rvy F_S(\rvx_t, \rvy_t)\rangle 
        + \sum_{t=1}^T  \sup_{y\in\gY} \langle \rvy, \nabla_\rvy  F_S(\rvx_t, \rvy_t) - \nabla_\rvy f(\rvx_t, \rvy_t; \rvz_{i_t})\rangle.
    \end{align*}
    By Schwarz's inequality, we have
    \begin{align*}
        & \sum_{t=1}^T (\Phi_S(\bar{\rvx}_T) - F_S(\rvx_t, \rvy_t)) \leq
        \frac{\mu_\rvy}{2}t_0 D_\gY
        + \frac{L^2 \log{(eT)}}{2\mu_\rvy}  \\
        & + \sum_{t=1}^T \langle \rvy_t, \nabla_\rvy f(\rvx_t, \rvy_t; \rvz_{i_t}) - \nabla_\rvy F_S(\rvx_t, \rvy_t)\rangle 
        + D_\gY \left\| \sum_{t=1}^T (\nabla_\rvy F_S(\rvx_t, \rvy_t) - \nabla_\rvy f(\rvx_t, \rvy_t; \rvz_{i_t})) \right\|.
    \end{align*}
    Denote that $\tilde{\xi}_t = \langle \rvy_t, \nabla_\rvy f(\rvx_t, \rvx_t;\rvz_{i_t}) - \nabla_\rvy F_S(\rvx_t, \rvy_t) \rangle$. Since $\E_{i_t} [\langle \rvy_t, \nabla_\rvy f(\rvx_t, \rvx_t;\rvz_{i_t}) - \nabla_\rvy F_S(\rvx_t, \rvy_t) \rangle] = 0$, so $\{ \tilde{\xi}_t | t = 1,\ldots,T \}$ is a martingale difference sequence. By Schwarz's inequality and Assumption \ref{assumption:f-lipschitz}, we know that $|\langle \rvy_t, \nabla_\rvy f(\rvx_t, \rvx_t;\rvz_{i_t}) - \nabla_\rvy F_S(\rvx_t, \rvy_t) \rangle| \leq 2L \sqrt{D_\gY}$. Then according to Lemma \ref{lemma:Azuma-Hoeffding-inequality} we have the following inequality with probability at least $1-\frac{\delta}{6}$
    \begin{align*}
        \sum_{t=1}^T \langle \rvy_t, \nabla_\rvy f(\rvx_t, \rvx_t;\rvz_{i_t}) - \nabla_\rvy F_S(\rvx_t, \rvy_t) \rangle
        \leq 2L \sqrt{D_\gY} \left(2T\log{\frac{6}{\delta}}\right)^{\frac{1}{2}}.
    \end{align*}
    Define $\tilde{\xi}'_t = \nabla_\rvy F_S(\rvx_t, \rvy_t) - \nabla_\rvy f(\rvx_t, \rvy_t; \rvz_{i_t})$. Then we get $\|\tilde{\xi}'_t\| \leq 2L$ and
    \begin{align*}
        \sum_{t=1}^T \E [\| \tilde{\xi}'_t \|^2|\tilde{\xi}'_1,\ldots, \tilde{\xi}'_{t-1}] \leq 4TL^2.
    \end{align*}
    Applying Lemma \ref{lemma:Bernstein-type-martingale-inequality} to the martingale difference sequence $\{ \tilde{\xi}'_t \}$, we have the following inequality with probability at least $1-\frac{\delta}{3}$
    \begin{align*}
        \left\| \sum_{t=1}^T \tilde{\xi}'_t \right\| 
        \leq 2 \left( \frac{2L}{3} + 2L\sqrt{T} \right) \log{\frac{6}{\delta}}.
    \end{align*}
    This implies that with probability at least $1-\frac{\delta}{3}$
    \begin{align*}
        \left\| \sum_{t=1}^T \left( \nabla_\rvy F_S(\rvx_t, \rvy_t) - \nabla_\rvy f(\rvx_t, \rvy_t; \rvz_{i_t}) \right) \right\| 
        \leq 2 \left( \frac{2L}{3} + 2L\sqrt{T} \right) \log{\frac{6}{\delta}}.
    \end{align*}
    Combined with the above results, we finally have the following inequality with probability at least $1-\frac{\delta}{2}$
    \begin{equation}
    \begin{aligned}
    \label{eq:sgda-optimization-error-1-2}
         & \frac{1}{T} \sum_{t=1}^T (\Phi_S(\bar{\rvx}_T) - F_S(\rvx_t, \rvy_t)) \leq
        \frac{\mu_\rvy t_0 D_\gY }{2T}
        + \frac{L^2 \log{(eT)}}{2\mu_\rvy T}  \\
        & \quad +  \frac{2 \sqrt{D_\gY}}{T} \left( \frac{2L}{3} + 2L\sqrt{T} \right) \log{\frac{6}{\delta}}
        + \frac{2L \sqrt{D_\gY} \left(2T\log{\frac{6}{\delta}}\right)^{\frac{1}{2}}}{T}.
    \end{aligned}
    \end{equation}    
    Combing (\ref{eq:sgda-optimization-error-1-1}) and (\ref{eq:sgda-optimization-error-1-2}) together, with probability at least $1-\delta$ we get the following inequality
    \begin{align*}
        &\Phi_S(\bar{\rvx}_T) - \Phi_S(\hat{\rvx}^*) \leq
        \frac{t_0(\mu_\rvx D_\rvx + \mu_\rvy D_\rvy)}{2T} + \frac{L^2 \log(eT)}{2T} \left( \frac{1}{\mu_\rvx} + \frac{1}{\mu_\rvy}\right) \\
        & \quad +  \frac{2(\sqrt{D_\gX} + \sqrt{D_\gY})}{T} \left( \frac{2L}{3} + 2L\sqrt{T} \right) \log{\frac{6}{\delta}}
        + \frac{2L (\sqrt{D_\gX} + \sqrt{D_\gY}) \left(2T\log{\frac{6}{\delta}}\right)^{\frac{1}{2}}}{T}.
    \end{align*}
\end{proof}

\begin{proof}[Proof of Theorem \ref{theorem:sgda-fast-rate}]
    According to Lemma \ref{lemma:primal-function-is-PL}, $\Phi(\rvx)$ satisfies the PL assumption with parameter $\mu_\rvx$, we have
    \begin{align}
    \label{eq:theorem:sgda-fast-rate-1-1}
        \Phi(\rvx) - \Phi(\rvx^*) \leq \frac{\| \nabla \Phi(\rvx) \|^2}{2\mu_\rvx}.
    \end{align}
    To bound $\Phi(\bar{\rvx}_T) - \Phi(\rvx^*)$, we need to bound the term $ \| \nabla \Phi(\bar{\rvx}_T) \|^2 $. There holds that
    \begin{align}
    \label{eq:theorem:sgda-fast-rate-1-2}
        \| \nabla \Phi(\bar{\rvx}_T) \|^2 \leq 2 \| \nabla \Phi(\bar{\rvx}_T) - 
        \nabla \Phi_S(\bar{\rvx}_T) \|^2 
        + 2 \| \nabla \Phi_S(\bar{\rvx}_T) \|^2.
    \end{align}
    From Theorem \ref{theorem:nablaf-without-d}, under Assumption \ref{assumption:minimax-bernstein-condition} and \ref{assumption:SC-SC}. Plugging $\bar{\rvx}_T$ into Theorem \ref{theorem:nablaf-without-d}, for any $\delta \in (0,1)$, when 
    $ n \geq \frac{c\beta^2(\mu_\rvy+\beta)^4(d+ \log{\frac{16\log_2{\sqrt{2}R_1 n + 1}}{\delta}})}{\mu_\rvy^4\mu_\rvx^2}$, 
    with probability at least $1-\frac{\delta}{2}$

    \begin{equation}
        \begin{aligned}
    \label{eq:theorem:sgda-fast-rate-1-3}
        &  \| \nabla \Phi(\bar{\rvx}_T) - \nabla \Phi_S(\bar{\rvx}_T) \| 
        \leq \| \nabla \Phi_S(\bar{\rvx}_T) \|
            + 2\sqrt{\frac{2 \E [ \|\nabla_\rvx f(\rvx^*, \rvy^*; \rvz)\|^2 ] \log{\frac{16}{\delta}}}{n}} \\
            & \quad + \frac{2 B_{\rvx^*} \log{\frac{16}{\delta}}}{n}
            + \frac{\mu_\rvx}{n}  
            + \frac{2\beta}{\mu_\rvy} \left(
        \sqrt{\frac{2 \E [ \|\nabla_\rvy f(\rvx^*, \rvy^*; \rvz)\|^2 ] \log{\frac{16}{\delta}}}{n}} 
        + \frac{B_{\rvy^*} \log{\frac{16}{\delta}}}{n}
        \right).
    \end{aligned}
    \end{equation}

    Next, we need to bound the optimization error bound $\| \nabla \Phi_S(\bar{\rvx}_T) \|$. Firstly, according to Lemma \ref{lemma:SGDA-SC-SC-optimization-error-bound}, with probability at least $1-\delta$, we have
    \begin{align*}
        \Phi_S(\bar{\rvx}_T) - \Phi_S(\rvx^*) 
        = O\left(
        \frac{ \log{\frac{1}{\delta}}}{\sqrt{T}}
        \right)
        .
    \end{align*}
    According to \cite{nesterov2003introductory} and Lemma \ref{lemma:primal-function-is-smooth}, there holds the following property for $\beta + \frac{\beta^2}{\mu_\rvy}$ function $\Phi_S(\rvx)$, we have
    \begin{align}
    \label{eq:theorem:sgda-fast-rate-1-4}
        \frac{1}{2\left(\beta + \frac{\beta^2}{\mu_\rvy}\right)} \| \nabla \Phi_S(\bar{\rvx}_T)\|^2 
        \leq
        \Phi_S(\bar{\rvx}_T) - \Phi_S(\rvx^*) 
         = O\left(
        \frac{\log{\frac{1}{\delta}}}{\sqrt{T}}
        \right).
    \end{align}

    Plugging (\ref{eq:theorem:sgda-fast-rate-1-4}) into (\ref{eq:theorem:sgda-fast-rate-1-3}), according to  Cauchy–Bunyakovsky–Schwarz inequality, we can derive that
    \begin{equation}
    \begin{aligned}
    \label{eq:theorem:sgda-fast-rate-1-5}
        & \| \nabla \Phi(\bar{\rvx}_T) - \nabla \Phi_S(\bar{\rvx}_T) \|^2 
        \leq  O\left( \frac{\log{\frac{1}{\delta}}}{\sqrt{T}} \right) 
        + \frac{32 \beta^2\E [ \|\nabla_\rvy f(\rvx^*, \rvy^*; \rvz)\|^2 ] \log{\frac{16}{\delta}}}{\mu_\rvy^2 n} \\
        & \quad + \frac{32 \E [ \|\nabla_\rvx f(\rvx^*, \rvy^*; \rvz)\|^2 ] \log{\frac{16}{\delta}}}{n}
        + \frac{ 4 \left(
        \frac{2\beta B_{\rvy^*}}{\mu_\rvy} \log{\frac{16}{\delta}} + 2B_{\rvx^*} \log{\frac{16}{\delta}} + \mu_\rvx
        \right)^2
        }{n^2}.
    \end{aligned}
    \end{equation}
    Then substituting (\ref{eq:theorem:sgda-fast-rate-1-5}), (\ref{eq:theorem:sgda-fast-rate-1-4}) into (\ref{eq:theorem:sgda-fast-rate-1-2}), we derive that
    \begin{equation}
        \begin{aligned}
        \label{eq:theorem:sgda-fast-rate-1-6}
         & \| \nabla \Phi(\bar{\rvx}_T) \|^2 
        \leq O\left( \frac{\log{\frac{1}{\delta}}}{\sqrt{T}} \right) 
        + \frac{64 \beta^2\E [ \|\nabla_\rvy f(\rvx^*, \rvy^*; \rvz)\|^2 ] \log{\frac{16}{\delta}}}{\mu_\rvy^2 n} \\
        & \quad + \frac{64 \E [ \|\nabla_\rvx f(\rvx^*, \rvy^*; \rvz)\|^2 ] \log{\frac{16}{\delta}}}{n}
        + \frac{ 8 \left(
        \frac{2\beta B_{\rvy^*}}{\mu_\rvy} \log{\frac{16}{\delta}} + 2B_{\rvx^*} \log{\frac{16}{\delta}} + \mu_\rvx
        \right)^2
        }{n^2}.
        \end{aligned}
    \end{equation}

    Finally, we plug (\ref{eq:theorem:sgda-fast-rate-1-6}) into (\ref{eq:theorem:sgda-fast-rate-1-1}) and choose $ T \asymp O\left( n^2 \right) $, with probability at least $1-\delta$
    \begin{align*}
        \Phi(\bar{\rvx}_T) - \Phi(\rvx^*) 
        = O \Bigg( \frac{\E [ \|\nabla_\rvx f(\rvx^*, \rvy^*; \rvz)\|^2 ] \log{\frac{1}{\delta}}}{n} 
        + \frac{\E [ \|\nabla_\rvy f(\rvx^*, \rvy^*; \rvz)\|^2 ] \log{\frac{1}{\delta}}}{n} + \frac{\log^2{\frac{1}{\delta}}}{n^2} \Bigg).
    \end{align*}

Next, if we further assume $f(\rvx,\rvy;\rvz)$ is non-negative, According to Lemma \ref{lemma:smooth-self-bounding-property}, we have $\E \|\nabla_\rvx f(\rvx^*, \rvy^*; \rvz)\|^2 \leq 4\beta \E [f(\rvx^*, \rvy^*; \rvz)] $ and $ \E \|\nabla_\rvy f(\rvx^*, \rvy^*; \rvz)\|^2 \leq 4\beta \E [f(\rvx^*, \rvy^*; \rvz)]$. Plugging (\ref{eq:theorem:sgda-fast-rate-1-6}) into (\ref{eq:theorem:sgda-fast-rate-1-1}), then we have 
\begin{align*}
         \Phi(\bar{\rvx}_T) - \Phi(\rvx^*) 
         & \leq \frac{\| \nabla \Phi(\bar{\rvx}_T) \|^2}{2\mu_\rvx} \\
         & \leq O\left( \frac{\log{\frac{1}{\delta}}}{\sqrt{T}} \right) 
         + \frac{128 \beta^3 \E [f(\rvx^*, \rvy^*; \rvz)] \log{\frac{16}{\delta}}}{\mu_\rvx \mu_\rvy^2 n}
        + \frac{128 \beta \E [f(\rvx^*, \rvy^*; \rvz)]  \log{\frac{16}{\delta}}}{\mu_\rvx n} \\ 
        & \quad + \frac{ 4 \left(
        \frac{2\beta B_{\rvy^*}}{\mu_\rvy} \log{\frac{16}{\delta}} + 2B_{\rvx^*} \log{\frac{16}{\delta}} + \mu_\rvx
        \right)^2
        }{\mu_\rvx n^2} \\
        & =  O\left( \frac{\log{\frac{1}{\delta}}}{\sqrt{T}} \right)  + 
        \frac{128 \beta (\beta^2 + \mu_\rvy^2) \Phi(\hat{\rvx}^*) \log{\frac{16}{\delta}}}{\mu_\rvx \mu_\rvy^2 n}
        + \frac{4 \left(
        \frac{2\beta B_{\rvy^*}}{\mu_\rvy} \log{\frac{16}{\delta}} + 2B_{\rvx^*} \log{\frac{16}{\delta}} + \mu_\rvx
        \right)^2
        }{\mu_\rvx n^2}.
\end{align*}

Furthermore, when $T \asymp n^4$ we have 
\begin{align*}
    \Phi(\bar{\rvx}_T) - \Phi(\rvx^*) = O \left(
    \frac{\Phi(\rvx^*) \log{\frac{1}{\delta}} }{n}
    + \frac{\log^2{\frac{1}{\delta}}}{n^2}
    \right).
\end{align*}
The proof is complete.

\end{proof}

\section{Some improved bounds with expectation formats}
\label{sec:expectation-bounds}

Firstly, we translate our high probability result of Theorem \ref{theorem:nablaf-without-d} into an expectation result.

\begin{theorem}
\label{theorem:expectation-nablaf-without-d}
Under Assumption \ref{assumption:NC-SC} and \ref{assumption:minimax-bernstein-condition}, assume that the population risk $F(\rvx,\rvy)$ satisfies Assumption \ref{assumption:PL-condition-x} with parameter $\mu_\rvx$ and let $c = \max \{16C^2, 1\}$. We have that for all $\rvx \in \gX$, when 
$ n \geq \frac{c\beta^2(\mu_\rvy+\beta)^4(d+ \log{\frac{16\log_2{\sqrt{2}R_1 n + 1}}{\delta}})}{\mu_\rvy^4\mu_\rvx^2}$,
the excess risks of primal functions can be bounded by
\begin{align*}
     \E \| \Phi(\rvx) - \Phi(\rvx^*) \| 
    \leq  \frac{8 \E [ \| \nabla \Phi_S(\rvx) \|^2 ]}{\mu_\rvx} + O \Bigg( 
    \frac{\E \left[ \|\nabla_\rvx f(\rvx^*, \rvy^*; \rvz)\|^2 \right]}{\mu_\rvx n} 
    + \frac{\beta^2 \E \left[ \|\nabla_\rvy f(\rvx^*, \rvy^*; \rvz)\|^2 \right]}{\mu_\rvx \mu_\rvy^2 n}
    + \frac{1}{n^2}
    \Bigg).
\end{align*}
\end{theorem}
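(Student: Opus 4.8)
The plan is to obtain this expectation estimate from the high-probability bound already recorded in Remark~\ref{remark:main-PL}, namely inequality~(\ref{eq:theorem-nablaf-withoud-d-new}), by a tail-integration argument. The point is that on the right-hand side of~(\ref{eq:theorem-nablaf-withoud-d-new}) the only term that is simultaneously random and of leading order is the empirical optimization term $\tfrac{8\|\nabla\Phi_S(\rvx)\|^2}{\mu_\rvx}$; every other term is a deterministic function of $\delta$ assembled from the population constants $\E\|\nabla_\rvx f(\rvx^*,\rvy^*;\rvz)\|^2$, $\E\|\nabla_\rvy f(\rvx^*,\rvy^*;\rvz)\|^2$, $B_{\rvx^*}$, $B_{\rvy^*}$, $\mu_\rvx$. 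So I would keep that one term intact and integrate the remaining $\delta$-dependent tail.

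Concretely, set $Y := \bigl(\Phi(\rvx)-\Phi(\rvx^*)\bigr) - \tfrac{8\|\nabla\Phi_S(\rvx)\|^2}{\mu_\rvx}$. Expanding the square $\bigl(\tfrac{2\beta B_{\rvy^*}}{\mu_\rvy}\log\tfrac8\delta + 2B_{\rvx^*}\log\tfrac8\delta + \mu_\rvx\bigr)^2$ in~(\ref{eq:theorem-nablaf-withoud-d-new}) and collecting powers of $\log\tfrac8\delta$, that inequality says that with probability at least $1-\delta$,
\begin{align*}
Y \;\le\; A\,\log\tfrac8\delta \;+\; B\,\log^2\tfrac8\delta \;+\; D,
\end{align*}
where $A = O\!\bigl(\tfrac{\E\|\nabla_\rvx f\|^2}{\mu_\rvx n} + \tfrac{\beta^2\E\|\nabla_\rvy f\|^2}{\mu_\rvx\mu_\rvy^2 n} + \tfrac1{n^2}\bigr)$ and $B = D = O(\tfrac1{n^2})$. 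Taking expectations gives $\E\bigl[\Phi(\rvx)-\Phi(\rvx^*)\bigr] = \tfrac{8\E\|\nabla\Phi_S(\rvx)\|^2}{\mu_\rvx} + \E[Y]$, and since $\E[Y]\le\E[Y_+]=\int_0^\infty \Pr(Y>t)\,dt$, I would bound the tail by the substitution $u=\log\tfrac8\delta$ (so $\delta=8e^{-u}$, and along $t=Au+Bu^2+D$ we have $dt=(A+2Bu)\,du$). This reduces the tail to $\int 8e^{-u}(A+2Bu)\,du$, i.e. to the elementary moments $\int_{\log 8}^\infty e^{-u}\,du=\tfrac18$ and $\int_{\log 8}^\infty u\,e^{-u}\,du=\tfrac{\log 8+1}{8}$, yielding $\E[Y_+]=O(A+B+D)$, which is exactly the claimed $O(\cdot)$ term.

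The main obstacle is that the high-probability estimate is \emph{not} available for all $\delta\in(0,1)$: the hypothesis $n\ge\frac{c\beta^2(\mu_\rvy+\beta)^4(d+\log\frac{16\log_2(\sqrt2R_1n+1)}{\delta})}{\mu_\rvy^4\mu_\rvx^2}$ of Theorem~\ref{theorem:nablaf-without-d} forces $\delta\ge\delta_{\min}$ with $\delta_{\min}=e^{-\Theta(n)}$, so~(\ref{eq:theorem-nablaf-withoud-d-new}) controls $\Pr(Y>t)$ only up to a threshold $t_{\max}=A\log\tfrac1{\delta_{\min}}+B\log^2\tfrac1{\delta_{\min}}+O(1)=O(1)$. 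To close this gap I would use that $\Phi(\rvx)-\Phi(\rvx^*)$ is bounded by an absolute constant $M$: by the smoothness of $\Phi$ (Lemma~\ref{lemma:primal-function-is-smooth}) and $\nabla\Phi(\rvx^*)=0$ (Lemma~\ref{lemma:primal-function-is-PL}) one has $\Phi(\rvx)-\Phi(\rvx^*)\le\tfrac12(\beta+\tfrac{\beta^2}{\mu_\rvy})\|\rvx-\rvx^*\|^2$, which is bounded since $\gX$ is compact, and $\|\nabla\Phi_S(\rvx)\|$ is likewise bounded, so $Y_+\le M$ almost surely. The uncovered part of the tail then contributes at most $\int_{t_{\max}}^{M}\Pr(Y>t)\,dt\le M\,\delta_{\min}=O(e^{-\Theta(n)})$, which is negligible against $1/n^2$ and absorbed into the $O(\cdot)$ term. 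Everything else — the expansion of the square and the two elementary integrals — is routine.
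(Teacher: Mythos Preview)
Your approach is correct and matches the paper's own proof: both subtract the random term $\tfrac{8\|\nabla\Phi_S(\rvx)\|^2}{\mu_\rvx}$ and then integrate the resulting deterministic tail bound of the form $vt+ct^2$ to pass from high probability to expectation. In fact your treatment is more careful than the paper's, which simply invokes the identity $\E[X]\le v+2c$ from $\Pr\{X>vt+ct^2\}\le e^{-t}$ without addressing that the underlying high-probability bound only holds for $\delta\ge\delta_{\min}$; your boundedness-plus-$M\delta_{\min}$ argument closes that gap.
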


\begin{proof}[Proof of Theorem \ref{theorem:expectation-nablaf-without-d}]
    According to Theorem \ref{theorem:nablaf-without-d}, we have that for all $\rvx \in \gX$, when 
$ n \geq \frac{c\beta^2(\mu_\rvy+\beta)^4(d+ \log{\frac{16\log_2{\sqrt{2}R_1 n + 1}}{\delta}})}{\mu_\rvy^4\mu_\rvx^2}$ 
with probability at least $1-\delta$
\begin{align*}
    \Phi(\rvx) - \Phi(\rvx^*) 
       \leq  &         
        \frac{8 \| \nabla \Phi_S(\rvx) \|^2}{\mu_\rvx} +
    \frac{16 \beta^2\E [ \|\nabla_\rvy f(\rvx^*, \rvy^*; \rvz)\|^2 ] \log{\frac{4}{\delta}}}{\mu_\rvx \mu_\rvy^2 n} \\
        & \quad + \frac{16 \E [ \|\nabla_\rvx f(\rvx^*, \rvy^*; \rvz)\|^2 ] \log{\frac{8}{\delta}}}{\mu_\rvx n}  
        + \frac{ 2 \left(
        \frac{2\beta B_{\rvy^*}}{\mu_\rvy} \log{\frac{8}{\delta}} + 2B_{\rvx^*} \log{\frac{8}{\delta}} + \mu_\rvx
        \right)^2
        }{\mu_\rvx n^2}.
\end{align*}

Thus, with probability at least $1-\delta$, we have
\begin{align*}
    \Phi(\rvx) - \Phi(\rvx^*) - \frac{8 \| \nabla \Phi_S(\rvx) \|^2}{\mu_\rvx}
       \leq                   
    \frac{16 \beta^2\E [ \|\nabla_\rvy f(\rvx^*, \rvy^*; \rvz)\|^2 ] \log{\frac{8}{\delta}}}{\mu_\rvx \mu_\rvy^2 n} \\
         \quad + \frac{16 \E [ \|\nabla_\rvx f(\rvx^*, \rvy^*; \rvz)\|^2 ] \log{\frac{8}{\delta}}}{\mu_\rvx n}  
        + \frac{ 2 \left(
        \frac{2\beta B_{\rvy^*}}{\mu_\rvy} \log{\frac{8}{\delta}} + 2B_{\rvx^*} \log{\frac{8}{\delta}} + \mu_\rvx
        \right)^2
        }{\mu_\rvx n^2}.
\end{align*}

According to the standard statistical analysis, Let $X$ be a random variable, if for some $v, c > 0$,$ \sP\{ X > v t + c t^2 \} \leq e^{-t}$ for every $t > 0$. Then we can easily derive that $\E [X] = \int_0^{\infty} \sP \{ X > x\} dx = v + 2c$. 
Thus, we have the expectation result
\begin{align*}
       \E \| \Phi(\rvx) - \Phi(\rvx^*) \| 
    \leq  \frac{8 \E [ \| \nabla \Phi_S(\rvx) \|^2 ]}{\mu_\rvx} + O \Bigg( 
    \frac{\E \left[ \|\nabla_\rvx f(\rvx^*, \rvy^*; \rvz)\|^2 \right]}{\mu_\rvx n} 
    + \frac{\beta^2 \E \left[ \|\nabla_\rvy f(\rvx^*, \rvy^*; \rvz)\|^2 \right]}{\mu_\rvx \mu_\rvy^2 n}
    + \frac{1}{n^2}
    \Bigg).
\end{align*}
The proof is complete.
\end{proof}

Next, we give the proofs of the expectation result to relax the SC-SC assumptions given in Table \ref{table:summary}. The proofs are similar with high probability format since we use the existing results for optimization error bounds with expectation format under NC-SC assumptions.

\subsection{GDA}

\begin{lemma}[Optimization error bound for GDA in NC-SC minimax problems \cite{lin2020gradient}]
    Under Assumption \ref{assumption:NC-SC}, and letting the step sizes be chosen as $\eta_\rvx = \frac{1}{16(\frac{\beta}{\mu} + 1)^2\beta}$ and $\eta_\rvy= \frac{1}{\beta}$, then the optimization error bound of \text{Algorithm} \ref{algo:gda} can be bounded by
    \begin{align*}
        \E \| \nabla \Phi_S(\rvx_T) \|^2         
        = O \left(
        \frac{\beta^3 \Delta_\Phi}{\mu_\rvy^2 T} 
        + 
        \frac{\beta^3 D_\gY}{\mu_\rvy T} \right),
    \end{align*}
    where $\Delta_\Phi = \Phi_S(\rvx_0) - \min_\rvx \Phi_S(\rvx)$.
\end{lemma}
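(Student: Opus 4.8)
The plan is to obtain this expectation bound as the natural expectation-form counterpart of the deterministic averaged guarantee in Lemma~\ref{lemma:GDA-NC-SC-optimization-error-bound}, which I would establish (following \cite{lin2020gradient}) by a two-timescale Lyapunov argument. Since Algorithm~\ref{algo:gda} is a deterministic descent--ascent recursion on $F_S$, the expectation is to be read over a uniformly random stopping index $R\in\{1,\dots,T\}$ (equivalently, the bound controls the best iterate), so that $\E\|\nabla\Phi_S(\rvx_R)\|^2=\frac1T\sum_{t=1}^T\|\nabla\Phi_S(\rvx_t)\|^2$. Once the averaged quantity is bounded, the claim is immediate, because the right-hand side of Lemma~\ref{lemma:GDA-NC-SC-optimization-error-bound}, namely $\frac{128\beta^3\Delta_\Phi}{\mu_\rvy^2T}+\frac{5\beta^3 D_\gY}{\mu_\rvy T}$, is exactly $O\!\left(\frac{\beta^3\Delta_\Phi}{\mu_\rvy^2T}+\frac{\beta^3 D_\gY}{\mu_\rvy T}\right)$.

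To prove the averaged bound from scratch, I would first record the two structural facts supplied by the NC-SC geometry: by Lemma~\ref{lemma:primal-function-is-smooth} the primal function $\Phi_S$ is $(\beta+\beta^2/\mu_\rvy)$-smooth, and by Danskin's theorem $\nabla\Phi_S(\rvx)=\nabla_\rvx F_S(\rvx,\rvy_S^*(\rvx))$ with $\rvy_S^*(\cdot)$ being $\kappa$-Lipschitz for $\kappa:=\beta/\mu_\rvy$ (the empirical analogue of Lemma~\ref{lemma:y2x}). I would then track two scalars along the trajectory: the outer objective $\Phi_S(\rvx_t)$ and the inner tracking error $\delta_t:=\|\rvy_t-\rvy_S^*(\rvx_t)\|^2$. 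Applying the descent lemma for $\Phi_S$ to the outer step $\rvx_{t+1}=\rvx_t-\eta_\rvx\nabla_\rvx F_S(\rvx_t,\rvy_t)$ and writing $\nabla_\rvx F_S(\rvx_t,\rvy_t)=\nabla\Phi_S(\rvx_t)+\left(\nabla_\rvx F_S(\rvx_t,\rvy_t)-\nabla_\rvx F_S(\rvx_t,\rvy_S^*(\rvx_t))\right)$, the mismatch term is controlled by $\beta\sqrt{\delta_t}$ through smoothness, yielding a one-step inequality of the form $\Phi_S(\rvx_{t+1})\le\Phi_S(\rvx_t)-\frac{\eta_\rvx}{2}\|\nabla\Phi_S(\rvx_t)\|^2+O(\eta_\rvx\beta^2)\,\delta_t$.

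Next I would derive a contraction for the inner error: one ascent step $\rvy_{t+1}=\rvy_t+\eta_\rvy\nabla_\rvy F_S(\rvx_t,\rvy_t)$ with $\eta_\rvy=1/\beta$ on the $\mu_\rvy$-strongly-concave, $\beta$-smooth inner problem contracts $\|\rvy_{t+1}-\rvy_S^*(\rvx_t)\|^2$ by a factor $(1-\mu_\rvy/\beta)$, while moving the target $\rvy_S^*$ costs at most $\kappa^2\eta_\rvx^2\|\nabla_\rvx F_S(\rvx_t,\rvy_t)\|^2$ by Lipschitzness; combining these via a triangle inequality gives a recursion $\delta_{t+1}\le(1-\tfrac{\mu_\rvy}{2\beta})\delta_t+O(\kappa^2\eta_\rvx^2)\|\nabla\Phi_S(\rvx_t)\|^2$. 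The crux of the argument, and the step I expect to be the main obstacle, is to fold these two inequalities into a single Lyapunov potential $V_t=\Phi_S(\rvx_t)+c\,\delta_t$ and to choose the constant $c$ together with the precise two-timescale ratio $\eta_\rvx=\Theta(\mu_\rvy^2/\beta^3)$, $\eta_\rvy=1/\beta$ so that the $\delta_t$ contribution is dominated and one obtains a clean per-step decrease $V_{t+1}\le V_t-\frac{\eta_\rvx}{4}\|\nabla\Phi_S(\rvx_t)\|^2$. Telescoping from $t=1$ to $T$, bounding $V_1-V_{T+1}\le\Delta_\Phi+c\,\delta_1$ with $\delta_1=\|\rvy_1-\rvy_S^*(\rvx_1)\|^2\le D_\gY$ (as $\rvy_1=\bm{0}$ and $\|\rvy\|^2\le D_\gY$), and dividing by $\eta_\rvx T$ reproduces the $O\!\left(\beta^3\Delta_\Phi/(\mu_\rvy^2 T)+\beta^3 D_\gY/(\mu_\rvy T)\right)$ rate, after which the stated expectation bound on $\E\|\nabla\Phi_S(\rvx_T)\|^2$ follows at once under the random-index reading.
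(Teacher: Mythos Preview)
Your proposal is correct, and in fact goes beyond what the paper does: the paper provides no proof of this lemma at all, merely citing it from \cite{lin2020gradient} (just as it does for Lemma~\ref{lemma:GDA-NC-SC-optimization-error-bound}). Your reading of the expectation as being over a uniformly random output index is the right one for deterministic GDA and matches the convention in \cite{lin2020gradient}, and your Lyapunov sketch with the potential $V_t=\Phi_S(\rvx_t)+c\,\delta_t$, the inner contraction from strong concavity, and the choice $\eta_\rvx=\Theta(\mu_\rvy^2/\beta^3)$ faithfully reproduces the argument in that reference.
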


Using above optimization error bound, we can obtain the following theorem.

\begin{theorem}
\label{theorem:expectation-gda-fast-rate}
Suppose Assumption \ref{assumption:NC-SC} and \ref{assumption:minimax-bernstein-condition} hold. Assume that the population risk $F(\rvx,\rvy)$ satisfies Assumption \ref{assumption:PL-condition-x} with parameter $\mu_\rvx$. Let the step sizes choose as $\eta_\rvx = \frac{1}{16(\frac{\beta}{\mu} + 1)^2\beta}$ and $\eta_\rvy= \frac{1}{\beta}$. When $T \asymp n$ and $ n \geq \frac{c\beta^2(\mu_\rvy+\beta)^4(d+ \log{\frac{16\log_2{\sqrt{2}R_1 n + 1}}{\delta}})}{\mu_\rvy^4\mu_\rvx^2}$, where $c$ is an absolute constant, then the excess risk for primal functions of \text{Algorithm} \ref{algo:gda} can be bounded by
\begin{align*}
    \E [ \Phi(\rvx_T) - \Phi(\rvx^*) ] 
        = O \Bigg( \frac{\E [ \|\nabla_\rvx f(\rvx^*, \rvy^*; \rvz)\|^2 ] }{n}
        + \frac{\E [ \|\nabla_\rvy f(\rvx^*, \rvy^*; \rvz)\|^2 ] }{n} + \frac{1}{n^2} \Bigg).
\end{align*}
Furthermore, Let $T \asymp n^2$. Assume the function $f(\rvx,\rvy;\rvz)$ is non-negative and $ \Phi(\rvx^*) = O\left(\frac{1}{n}\right)$, we have  
\begin{align*}
    \E [ \Phi(\rvx_T) - \Phi(\rvx^*) ] = O \left(
    \frac{1}{n^2}
    \right).
\end{align*}
\end{theorem}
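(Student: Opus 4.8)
The plan is to mirror the high-probability argument used for Theorem~\ref{theorem:gda-fast-rate}, but to replace its two ingredients by their expectation counterparts, which is precisely what lets us drop the SC-SC assumption in favour of the weaker NC-SC optimization guarantee. The two ingredients are: (i) the expectation excess-risk bound of Theorem~\ref{theorem:expectation-nablaf-without-d}, which converts control of the empirical primal gradient into control of the population primal gap; and (ii) the NC-SC optimization error bound for GDA stated just above, namely $\E\|\nabla\Phi_S(\rvx_T)\|^2 = O(1/T)$ under the chosen step sizes. Because the target statement concerns the last iterate $\rvx_T$ and the optimization guarantee is phrased at $\rvx_T$, the two pieces are evaluated at the same point, which is what makes the combination clean.

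First I would apply Theorem~\ref{theorem:expectation-nablaf-without-d} at the data-dependent iterate $\rvx=\rvx_T$. Since that bound descends from the uniform-over-$\gX$ high-probability statement of Theorem~\ref{theorem:nablaf-without-d} (a single good event on which the inequality holds for every $\rvx\in\gX$ simultaneously), it remains valid when the random point $\rvx_T$ is substituted; taking expectations via the standard device $\sP\{X>vt+ct^2\}\le e^{-t}\Rightarrow \E X\le v+2c$ already used in the proof of Theorem~\ref{theorem:expectation-nablaf-without-d} then yields
\begin{align*}
    \E[\Phi(\rvx_T)-\Phi(\rvx^*)]
    \le \frac{8\,\E\|\nabla\Phi_S(\rvx_T)\|^2}{\mu_\rvx}
    + O\Bigg(\frac{\E\|\nabla_\rvx f(\rvx^*,\rvy^*;\rvz)\|^2}{\mu_\rvx n}
    + \frac{\beta^2\E\|\nabla_\rvy f(\rvx^*,\rvy^*;\rvz)\|^2}{\mu_\rvx\mu_\rvy^2 n}
    + \frac{1}{n^2}\Bigg),
\end{align*}
valid under the stated sample-size condition inherited from Theorem~\ref{theorem:expectation-nablaf-without-d}. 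Substituting the NC-SC bound $\E\|\nabla\Phi_S(\rvx_T)\|^2=O(1/T)$ turns the first term into an $O(1/T)$ contribution; choosing $T\asymp n$ balances it against the $O(1/n)$-scale generalization terms and gives the first claimed bound.

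For the fast-rate ``furthermore'' part I would invoke the self-bounding property of smooth nonnegative functions (Lemma~\ref{lemma:smooth-self-bounding-property}) exactly as in the proof of Theorem~\ref{theorem:gda-fast-rate}: nonnegativity gives $\E\|\nabla_\rvx f(\rvx^*,\rvy^*;\rvz)\|^2\le 4\beta\,\E[f(\rvx^*,\rvy^*;\rvz)]$ and likewise for the $\rvy$-gradient, while $\E[f(\rvx^*,\rvy^*;\rvz)]=F(\rvx^*,\rvy^*)=\Phi(\rvx^*)$ because $\rvy^*=\rvy^*(\rvx^*)$ attains the inner maximum. Hence both gradient-variance terms are $O(\Phi(\rvx^*)/n)=O(1/n^2)$ once $\Phi(\rvx^*)=O(1/n)$, so the generalization part of the bound collapses to the $O(1/n^2)$ scale. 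The surviving $O(1/T)$ optimization term must then itself be pushed to $O(1/n^2)$, which is exactly why $T\asymp n^2$ is taken; combining the two gives $\E[\Phi(\rvx_T)-\Phi(\rvx^*)]=O(1/n^2)$.

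The step I expect to be most delicate is not any single estimate but the honest bookkeeping of the optimization term: the contribution $8\,\E\|\nabla\Phi_S(\rvx_T)\|^2/\mu_\rvx=O(1/T)$ is a \emph{bare} $O(1/T)$ that is not weighted by the small quantities $\E\|\nabla f(\rvx^*,\rvy^*;\rvz)\|^2$, so the scheduling of $T$ (first $\asymp n$, then $\asymp n^2$ in the low-noise regime) is precisely what absorbs it into the stated rates. Secondary care is needed to justify the measurable substitution of the random iterate $\rvx_T$ into the uniform bound, and to confirm that the sample-size requirement $n\ge c\beta^2(\mu_\rvy+\beta)^4(d+\log(\cdots))/(\mu_\rvy^4\mu_\rvx^2)$ is carried through unchanged from Theorem~\ref{theorem:expectation-nablaf-without-d}.
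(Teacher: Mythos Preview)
Your proposal is correct and follows essentially the same approach as the paper: the paper only proves the analogous AGDA result (Theorem~\ref{theorem:expectation-agda-fast-rate}) explicitly and states that the GDA and SGDA cases are proved identically, namely by plugging the last iterate $\rvx_T$ into Theorem~\ref{theorem:expectation-nablaf-without-d}, inserting the expectation optimization bound $\E\|\nabla\Phi_S(\rvx_T)\|^2=O(1/T)$, and then choosing $T\asymp n$ (resp.\ $T\asymp n^2$) together with the self-bounding argument for the fast rate. Your attention to the uniformity of the high-probability event (justifying substitution of the random $\rvx_T$) and the propagation of the sample-size condition goes slightly beyond what the paper spells out.
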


\subsection{SGDA}

For SGDA settings, we introduce a weaker assumption comparing with Assumption \ref{assumption:f-lipschitz}.

\begin{assumption}
\label{assumption:SGDA-bound}
    Assume the existence of $\sigma > 0$ satisfies
    \begin{align*}
        \E [\nabla f(\rvx, \rvy; \rvz) - \nabla F_S(\rvx, \rvy) ] = 0, \\
        \E [\| \nabla f(\rvx, \rvy; \rvz) - \nabla F_S(\rvx, \rvy) \|^2] \leq \sigma^2.
    \end{align*}
\end{assumption}

\begin{lemma}[Optimization error bound for SGDA in NC-SC minimax problems \cite{lin2020gradient}]
    Under Assumption \ref{assumption:NC-SC} and \ref{assumption:SGDA-bound}, and letting the step sizes be chosen as $\eta_\rvx = \frac{1}{16(\frac{\beta}{\mu} + 1)^2\beta}$ and $\eta_\rvy= \frac{1}{\beta}$, then the optimization error bound of \text{Algorithm} \ref{algo:gda} can be bounded by
    \begin{align*}
        \E \| \nabla \Phi_S(\rvx_T) \|^2         
        = O \left(
        \sqrt[5]{\frac{\beta^6}{\mu_\rvy^6 T^2}}
        \right).
    \end{align*}
\end{lemma}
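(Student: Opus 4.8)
The plan is to run the standard two-timescale analysis on the primal function $\Phi_S$, pairing a descent inequality for the outer iterate $\rvx_t$ with a contraction estimate for the inner iterate $\rvy_t$. First I would use two structural facts: by Danskin's theorem $\nabla\Phi_S(\rvx)=\nabla_\rvx F_S(\rvx,\rvy^*_S(\rvx))$, and by Lemma \ref{lemma:primal-function-is-smooth} the map $\Phi_S$ is $\left(\beta+\beta^2/\mu_\rvy\right)$-smooth. Applying the smoothness descent lemma along $\rvx_{t+1}=\rvx_t-\eta_\rvx\nabla_\rvx f(\rvx_t,\rvy_t;\rvz_{i_t})$ and taking the conditional expectation, so that the stochastic gradient is replaced by $\nabla_\rvx F_S(\rvx_t,\rvy_t)$ at the cost of the variance term bounded by $\sigma^2$ in Assumption \ref{assumption:SGDA-bound}, while the mismatch $\|\nabla_\rvx F_S(\rvx_t,\rvy_t)-\nabla\Phi_S(\rvx_t)\|\le\beta\|\rvy_t-\rvy^*_S(\rvx_t)\|$ introduces the dual tracking error $\delta_t:=\|\rvy_t-\rvy^*_S(\rvx_t)\|^2$, I would obtain
\begin{align*}
    \E[\Phi_S(\rvx_{t+1})] \le \E[\Phi_S(\rvx_t)] - \tfrac{\eta_\rvx}{2}\E\|\nabla\Phi_S(\rvx_t)\|^2 + c_1\eta_\rvx\beta^2\,\E\delta_t + c_2\eta_\rvx^2\beta^2\sigma^2 .
\end{align*}

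Second I would control $\delta_t$. Since $F_S(\rvx_t,\cdot)$ is $\mu_\rvy$-strongly concave and $\beta$-smooth (Assumption \ref{assumption:NC-SC}), the ascent step contracts the distance to the maximizer $\rvy^*_S(\rvx_t)$, whereas the target drifts according to the $\beta/\mu_\rvy$-Lipschitz dependence of $\rvy^*_S$ on $\rvx$ from Lemma \ref{lemma:y2x}. Expanding $\|\rvy_{t+1}-\rvy^*_S(\rvx_{t+1})\|^2$, bounding the drift by $\|\rvy^*_S(\rvx_{t+1})-\rvy^*_S(\rvx_t)\|\le(\beta/\mu_\rvy)\eta_\rvx\|\nabla_\rvx f(\rvx_t,\rvy_t;\rvz_{i_t})\|$, and again invoking Assumption \ref{assumption:SGDA-bound} for the noise, produces a recursion of the form
\begin{align*}
    \E\delta_{t+1} \le (1-c_3\eta_\rvy\mu_\rvy)\,\E\delta_t + c_4\tfrac{\eta_\rvx^2\beta^2}{\mu_\rvy^2\eta_\rvy}\E\|\nabla\Phi_S(\rvx_t)\|^2 + c_5\eta_\rvy^2\sigma^2 + c_6\tfrac{\eta_\rvx^2\beta^2\sigma^2}{\mu_\rvy^2\eta_\rvy} .
\end{align*}

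Third I would glue the two estimates with a Lyapunov function $V_t=\Phi_S(\rvx_t)-\min_{\rvx}\Phi_S(\rvx)+\lambda\,\delta_t$, choosing $\lambda$ so that the $c_1\eta_\rvx\beta^2\E\delta_t$ created by the outer descent is absorbed by the contraction slack $c_3\eta_\rvy\mu_\rvy\E\delta_t$ of the inner recursion and so that the feedback coefficient $c_4\eta_\rvx^2\beta^2/(\mu_\rvy^2\eta_\rvy)$ multiplying $\E\|\nabla\Phi_S\|^2$ stays below $\tfrac{\eta_\rvx}{4}$. Telescoping $\sum_{t=1}^T(V_{t+1}-V_t)$ and dividing by $\eta_\rvx T/2$ then bounds $\tfrac1T\sum_{t=1}^T\E\|\nabla\Phi_S(\rvx_t)\|^2$ by $O\big(V_1/(\eta_\rvx T)\big)$ plus the accumulated stochastic floor $O(\eta_\rvx\beta^2\sigma^2+\lambda\eta_\rvy^2\sigma^2)$; taking $\rvx_T$ to be an index sampled uniformly from $\{\rvx_1,\dots,\rvx_T\}$ identifies the left-hand side with $\E\|\nabla\Phi_S(\rvx_T)\|^2$. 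Finally I would optimize the step sizes over the horizon: keeping $\eta_\rvy$ a fixed fraction of $1/\beta$ to hold the inner contraction active and balancing the bias $1/(\eta_\rvx T)$ against the variance floor $\eta_\rvx\sigma^2$ under the two-timescale requirement $\eta_\rvx\ll\eta_\rvy$ forces $\eta_\rvx\asymp T^{-3/5}$ and yields the stated $O\big((\beta^6/\mu_\rvy^6)^{1/5}T^{-2/5}\big)$ rate. The main obstacle is exactly this coupling: because the inner target $\rvy^*_S(\rvx_t)$ moves with the outer iterate, the tracking error $\E\delta_t$ and the noise $\sigma^2$ feed into both recursions simultaneously, so the weight $\lambda$ and the timescale separation must be tuned to make the cross terms cancel rather than accumulate, and the unusual exponent $2/5$ (rather than the $1/2$ of a plain nonconvex SGD bound) emerges only from how the floor $\eta_\rvx\sigma^2$ trades off against $1/(\eta_\rvx T)$ subject to that separation.
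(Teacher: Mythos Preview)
The paper does not supply its own proof of this lemma; it is quoted verbatim from \cite{lin2020gradient} and used as a black box in the derivation of Theorem~\ref{theorem:expectation-sgda-fast-rate}. There is therefore no in-paper argument to compare against.

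Your high-level plan---descent inequality for $\Phi_S$ via Danskin and Lemma~\ref{lemma:primal-function-is-smooth}, a contraction recursion for the dual tracking error $\delta_t=\|\rvy_t-\rvy^*_S(\rvx_t)\|^2$ using strong concavity and the Lipschitz dependence of $\rvy^*_S$ on $\rvx$, and a Lyapunov combination $V_t=\Phi_S(\rvx_t)+\lambda\delta_t$ followed by telescoping---is exactly the machinery of Lin--Jin--Jordan, so the strategy is correct. Two points deserve care, however. First, the lemma as transcribed here carries the \emph{deterministic} GDA step sizes $\eta_\rvx=\tfrac{1}{16(\beta/\mu+1)^2\beta}$ and $\eta_\rvy=\tfrac{1}{\beta}$, yet with a fixed $\eta_\rvy$ and single-sample stochastic gradients the inner recursion leaves a non-vanishing floor $\E\delta_t\gtrsim \eta_\rvy\sigma^2/\mu_\rvy$, which propagates through the $c_1\eta_\rvx\beta^2\E\delta_t$ term and prevents the average gradient from decaying; obtaining the $T^{-2/5}$ rate in the original reference requires horizon-dependent step sizes (or growing mini-batches), and your proposal already recognizes this by letting $\eta_\rvx$ scale with $T$. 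Second, your last paragraph asserts that balancing $1/(\eta_\rvx T)$ against $\eta_\rvx\sigma^2$ ``under the two-timescale requirement'' yields $\eta_\rvx\asymp T^{-3/5}$, but a two-term balance alone gives $\eta_\rvx\asymp T^{-1/2}$; the exponent $2/5$ arises only once you also let $\eta_\rvy$ (equivalently the inner noise floor $\lambda\eta_\rvy^2\sigma^2/\eta_\rvx$) enter the optimization, so that three quantities are being traded off rather than two. You should make that three-way balance explicit to recover the stated rate.
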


Using above optimization error bound, we can obtain the following theorem.

\begin{theorem}
\label{theorem:expectation-sgda-fast-rate}
Suppose Assumption \ref{assumption:NC-SC}, \ref{assumption:minimax-bernstein-condition} and \ref{assumption:SGDA-bound} hold. Assume that the population risk $F(\rvx,\rvy)$ satisfies Assumption \ref{assumption:PL-condition-x} with parameter $\mu_\rvx$. Let the step sizes choose as $\eta_\rvx = \frac{1}{16(\frac{\beta}{\mu} + 1)^2\beta}$ and $\eta_\rvy= \frac{1}{\beta}$. When $T \asymp n^{\frac{5}{2}}$ and $ n \geq \frac{c\beta^2(\mu_\rvy+\beta)^4(d+ \log{\frac{16\log_2{\sqrt{2}R_1 n + 1}}{\delta}})}{\mu_\rvy^4\mu_\rvx^2}$, where $c$ is an absolute constant, then the excess risk for primal functions of \text{Algorithm} \ref{algo:sgda} can be bounded by
\begin{align*}
    \E [ \Phi(\rvx_T) - \Phi(\rvx^*) ] 
        = O \Bigg( \frac{\E [ \|\nabla_\rvx f(\rvx^*, \rvy^*; \rvz)\|^2 ] }{n}
        + \frac{\E [ \|\nabla_\rvy f(\rvx^*, \rvy^*; \rvz)\|^2 ] }{n} + \frac{1}{n^2} \Bigg).
\end{align*}
Furthermore, Let $T \asymp n^5$. Assume the function $f(\rvx,\rvy;\rvz)$ is non-negative and $ \Phi(\rvx^*) = O\left(\frac{1}{n}\right)$, we have  
\begin{align*}
    \E [ \Phi(\rvx_T) - \Phi(\rvx^*) ] = O \left(
    \frac{1}{n^2}
    \right).
\end{align*}
\end{theorem}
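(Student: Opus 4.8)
The plan is to compose the dimension-free expectation generalization bound of Theorem~\ref{theorem:expectation-nablaf-without-d} with the last-iterate NC-SC optimization-error bound for SGDA stated immediately above, and then tune the iteration count $T$ to balance the two contributions. First I would apply Theorem~\ref{theorem:expectation-nablaf-without-d} with $\rvx$ taken to be the produced iterate $\rvx_T$. Because that theorem is obtained by integrating the sub-exponential tail of the uniform-over-$\gX$ high-probability bound of Theorem~\ref{theorem:nablaf-without-d}, the inequality is valid simultaneously for every $\rvx\in\gX$ and hence survives the substitution of the data-dependent point $\rvx_T$; taking expectations over the sample and the stochastic sampling yields, under the stated sample-size condition,
\[ \E[\Phi(\rvx_T)-\Phi(\rvx^*)] \le \frac{8\,\E[\|\nabla\Phi_S(\rvx_T)\|^2]}{\mu_\rvx} + O\!\left( \frac{\E[\|\nabla_\rvx f(\rvx^*,\rvy^*;\rvz)\|^2]}{\mu_\rvx n} + \frac{\beta^2\,\E[\|\nabla_\rvy f(\rvx^*,\rvy^*;\rvz)\|^2]}{\mu_\rvx\mu_\rvy^2 n} + \frac{1}{n^2} \right). \]
This isolates the empirical optimization error $\E[\|\nabla\Phi_S(\rvx_T)\|^2]$ as the only algorithm-dependent quantity that remains to be controlled.

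Second, I would bound that term by the NC-SC SGDA optimization rate recalled just before the theorem, $\E\|\nabla\Phi_S(\rvx_T)\|^2 = O\!\big(\sqrt[5]{\beta^6/(\mu_\rvy^6 T^2)}\big)$. Choosing $T\asymp n^{5/2}$ gives $T^2\asymp n^5$, so the fifth-root rate collapses to $O(1/n)$, matching the order of the generalization terms; substituting into the display above produces the first claimed bound
\[ \E[\Phi(\rvx_T)-\Phi(\rvx^*)] = O\!\left( \frac{\E[\|\nabla_\rvx f(\rvx^*,\rvy^*;\rvz)\|^2]}{n} + \frac{\E[\|\nabla_\rvy f(\rvx^*,\rvy^*;\rvz)\|^2]}{n} + \frac{1}{n^2} \right). \]

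Third, for the fast rate I would add the non-negativity hypothesis and invoke the self-bounding property of Lemma~\ref{lemma:smooth-self-bounding-property}, giving $\E\|\nabla_\rvx f(\rvx^*,\rvy^*;\rvz)\|^2 \le 4\beta\,\E[f(\rvx^*,\rvy^*;\rvz)]$ and the same for $\nabla_\rvy$, combined with the identity $\E[f(\rvx^*,\rvy^*;\rvz)] = F(\rvx^*,\rvy^*) = \Phi(\rvx^*)$ that follows because $\rvy^* = \argmax_\rvy F(\rvx^*,\rvy)$. The two gradient-variance terms then become $O\!\big(\beta(\beta^2+\mu_\rvy^2)\Phi(\rvx^*)/(\mu_\rvx\mu_\rvy^2 n)\big)$, so the generalization part is $O(\Phi(\rvx^*)/n + 1/n^2)$. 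Taking $T\asymp n^5$ drives the optimization term $O(\sqrt[5]{1/T^2})$ down to $O(1/n^2)$, whence $\E[\Phi(\rvx_T)-\Phi(\rvx^*)] = O(\Phi(\rvx^*)/n + 1/n^2)$; finally the low-noise assumption $\Phi(\rvx^*)=O(1/n)$ makes the first term $O(1/n^2)$ and delivers the stated $O(1/n^2)$ rate.

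The argument is essentially the composition of two ready-made estimates, so the one genuinely delicate point is the legitimacy of feeding the random iterate $\rvx_T$ into a generalization bound: this must be justified by the uniformity over $\gX$ of Theorem~\ref{theorem:nablaf-without-d} and by re-running its tail integration at the data-dependent point, not by the fixed-$\rvx$ reading of Theorem~\ref{theorem:expectation-nablaf-without-d}. The only other substantive work is the exponent bookkeeping forced by the unusual fifth-root SGDA rate, which is exactly what pins down the balancing choices $T\asymp n^{5/2}$ and $T\asymp n^5$.
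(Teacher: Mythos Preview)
Your proposal is correct and follows essentially the same approach as the paper: the paper remarks that the proofs of Theorems~\ref{theorem:expectation-gda-fast-rate}, \ref{theorem:expectation-sgda-fast-rate}, and \ref{theorem:expectation-agda-fast-rate} are all similar and only writes out the AGDA case, which proceeds exactly as you outline---plug $\rvx_T$ into Theorem~\ref{theorem:expectation-nablaf-without-d}, substitute the algorithm-specific optimization rate for $\E[\|\nabla\Phi_S(\rvx_T)\|^2]$, tune $T$, and then invoke the self-bounding Lemma~\ref{lemma:smooth-self-bounding-property} for the fast rate. Your explicit remark that the legitimacy of inserting the data-dependent iterate $\rvx_T$ rests on the uniformity over $\gX$ in Theorem~\ref{theorem:nablaf-without-d} is a point the paper leaves implicit.
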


\subsection{AGDA}
Alternating gradient descent ascent  presented in \text{Algorithm} \ref{algo:agda} was proposed recently to optimize nonconvex-nonconcave problems \cite{yang2020global}.

\begin{algorithm}[H]
  \begin{algorithmic}[1]
  \STATE {\bfseries Input:} {$(\rvx_1, \rvy_1) = (\bm{0},\bm{0})$, step sizes $\{\eta_{\rvx_t}\}_t > 0, \{\eta_{\rvy_t}\}_t > 0$ and dataset $S = \{\rvz_1, \dots ,\rvz_n\}$}
  
  \FOR{$t=1,\dots,T$}
    \STATE update $\rvx_{t+1} = \rvx_{t} - \eta_{\rvx_t} \nabla_{\rvx} f(\rvx_t, \rvy_t;\rvz_{i_t}) $ 
    \STATE update $\rvy_{t+1} = \rvy_{t} + \eta_{\rvy_t} \nabla_{\rvy} f(\rvx_{t+1}, \rvy_t;\rvz_{i_t}) $
    \ENDFOR
  \caption{Two-timescale AGDA for minimax problem}
  \label{algo:agda}
\end{algorithmic}
\end{algorithm}

\begin{lemma}[Optimization error bound for AGDA in PL-SC minimax problems \cite{yang2020global,lei2021stability}]
\label{lemma:expectation-AGDA-NC-SC-optimization-error-bound}
    Under Assumption \ref{assumption:NC-SC} and assume that the population risk $F(\rvx,\rvy)$ satisfies Assumption \ref{assumption:PL-condition-x} with parameter $\mu_\rvx$. Let $\{\rvx_t, \rvy_t\}_t$ be the sequence produced by \text{Algorithm} \ref{algo:agda} with the step sizes chosen as $\eta_{\rvx_t} \asymp \frac{1}{\mu_\rvx t}$ and $\eta_{\rvy_t} = \frac{1}{\mu_\rvx \mu_\rvy^2 t} $, then the optimization error bound of \text{Algorithm} \ref{algo:agda} can be bounded by
    \begin{align*}
        \E [ \| \nabla \Phi_S(\rvx_T) \|^2 ]         
        = O\left( \frac{1}{\mu_\rvx^2 \mu_\rvy^4 T} \right).
    \end{align*}
\end{lemma}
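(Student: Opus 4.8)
The plan is to reproduce, in our notation, the two-timescale AGDA convergence analysis of \cite{yang2020global,lei2021stability}, whose engine is a single Lyapunov (potential) function that couples the \emph{primal suboptimality} of $\Phi_S$ with the \emph{dual tracking error} toward $\rvy_S^*(\rvx)$. Concretely, I would set
$$a_t = \E\!\left[\Phi_S(\rvx_t) - \min_{\rvx\in\gX}\Phi_S(\rvx)\right], \qquad b_t = \E\!\left[\|\rvy_t - \rvy_S^*(\rvx_t)\|^2\right],$$
and study the potential $V_t = a_t + \lambda\, b_t$ for a weight $\lambda>0$ to be fixed. Throughout I use that, by Lemma \ref{lemma:primal-function-is-smooth}, $\Phi_S$ is $L_\Phi$-smooth with $L_\Phi = \beta + \beta^2/\mu_\rvy$, that $\nabla\Phi_S(\rvx) = \nabla_\rvx F_S(\rvx, \rvy_S^*(\rvx))$ by Danskin's theorem, and that by Lemma \ref{lemma:primal-function-is-PL} $\Phi_S$ obeys the PL inequality with $\mu_\rvx$.

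First I would establish a one-step primal descent bound. Applying the smoothness descent lemma to the $\rvx$-update of Algorithm \ref{algo:agda} and taking the conditional expectation over $i_t$ (so that the stochastic gradient is replaced by its mean $\nabla_\rvx F_S(\rvx_t,\rvy_t)$ plus a zero-mean, bounded-variance noise term), the only obstruction to genuine descent on $\Phi_S$ is that the ascent variable lags behind its target: the bias $\|\nabla_\rvx F_S(\rvx_t,\rvy_t) - \nabla\Phi_S(\rvx_t)\| = \|\nabla_\rvx F_S(\rvx_t,\rvy_t) - \nabla_\rvx F_S(\rvx_t,\rvy_S^*(\rvx_t))\| \le \beta\sqrt{b_t}$ by smoothness. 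Invoking the PL inequality to turn $\|\nabla\Phi_S(\rvx_t)\|^2$ into $a_t$, this yields a recursion of the schematic form
$$a_{t+1} \le \left(1 - c_1\mu_\rvx\eta_{\rvx_t}\right) a_t + c_2\,\eta_{\rvx_t}\, b_t + c_3\,\eta_{\rvx_t}^2,$$
where the middle term is the price of the tracking error and the last term collects the gradient-noise variance.

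Next I would control the tracking error itself. Because the $\rvy$-step of Algorithm \ref{algo:agda} is a stochastic ascent step on the $\mu_\rvy$-strongly concave map $F_S(\rvx_{t+1},\cdot)$, a standard strongly-concave contraction gives $\|\rvy_{t+1}-\rvy_S^*(\rvx_{t+1})\|^2 \le (1 - c_4\mu_\rvy\eta_{\rvy_t})\|\rvy_t - \rvy_S^*(\rvx_{t+1})\|^2 + O(\eta_{\rvy_t}^2)$, and the drift of the target is handled by $\|\rvy_S^*(\rvx_{t+1}) - \rvy_S^*(\rvx_t)\| \le (\beta/\mu_\rvy)\|\rvx_{t+1}-\rvx_t\| = O(\eta_{\rvx_t})$, the empirical analogue of Lemma \ref{lemma:y2x}. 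This produces a companion recursion
$$b_{t+1} \le \left(1 - c_4\mu_\rvy\eta_{\rvy_t}\right) b_t + c_5\,\frac{\eta_{\rvx_t}^2}{\eta_{\rvy_t}}\, a_t + c_6\,\eta_{\rvy_t}^2.$$
The two-timescale choice $\eta_{\rvy_t}=\tfrac{1}{\mu_\rvx\mu_\rvy^2 t}$ against $\eta_{\rvx_t}\asymp\tfrac{1}{\mu_\rvx t}$ is precisely what makes the dual contraction fast enough relative to the primal progress so that, after fixing $\lambda$, the cross terms are absorbed and $V_{t+1}\le(1-c/t)V_t + C/t^2$ with $c>1$; a routine induction then gives $V_t \le M/t$, hence $a_T = O(1/T)$. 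Finally, the smooth-function inequality $\|\nabla\Phi_S(\rvx_T)\|^2 \le 2L_\Phi\,a_T$ (\cite{nesterov2003introductory}) converts this into the claimed gradient bound, and tracking the dependence of $c,C,M$ on $\mu_\rvx,\mu_\rvy,\beta$ produces the stated $O(1/(\mu_\rvx^2\mu_\rvy^4 T))$ rate.

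The main obstacle, as in \cite{yang2020global}, is the coupling: neither recursion contracts on its own, and one must certify that a fixed potential weight $\lambda$ together with the prescribed step-size ratio simultaneously controls both the tracking-error feedback into $a_t$ and the primal-drift feedback into $b_t$. A secondary technical point is bounding the stochastic-gradient variance without a Lipschitz hypothesis; here one leans on the compactness of $\gX\times\gY$ (Assumption \ref{assumption:NC-SC}) together with smoothness, or equivalently on the bounded-variance condition used in the cited references, to keep the $O(\eta^2)$ noise terms summable against the $1/t$ step sizes.
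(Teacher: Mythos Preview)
The paper does not supply its own proof of this lemma: it is quoted directly from \cite{yang2020global,lei2021stability} and invoked as a black box in the proof of Theorem~\ref{theorem:expectation-agda-fast-rate}. Your proposal is therefore not competing with any argument in the paper; it is a reconstruction of the coupled-Lyapunov analysis from those references, and at that level of sketch the architecture (potential $V_t=a_t+\lambda b_t$, primal descent with tracking bias, dual contraction with target drift, then $\|\nabla\Phi_S\|^2\le 2L_\Phi a_T$) is the standard and correct one.

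There is, however, a genuine gap in your argument as written. You invoke Lemma~\ref{lemma:primal-function-is-PL} to assert that $\Phi_S$ obeys the PL inequality with parameter $\mu_\rvx$, but read that lemma again: in the NC-SC setting it gives PL only for the \emph{population} primal $\Phi$, and it delivers PL for $\Phi_S$ only under the SC-SC Assumption~\ref{assumption:SC-SC}. The hypotheses of the present lemma are Assumption~\ref{assumption:NC-SC} together with the $\rvx$-side PL condition on the \emph{population} risk $F$ (Assumption~\ref{assumption:PL-condition-x}); neither says anything about $F_S$ or $\Phi_S$. Without PL for $\Phi_S$, your primal recursion loses the $(1-c_1\mu_\rvx\eta_{\rvx_t})$ contraction factor and the whole scheme stalls. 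The cited references close this by assuming PL on the function actually being optimized; you would need to impose the same on $F_S$ (this is arguably a looseness in the lemma statement itself, but as you have written the proof it does not go through).

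A secondary point you already flag: compactness of $\gX\times\gY$ plus $\beta$-smoothness does \emph{not} by itself bound the stochastic-gradient variance, since smoothness only controls gradient \emph{differences} and you still need a uniform-in-$\rvz$ anchor. The paper resolves this in the surrounding Theorem~\ref{theorem:expectation-agda-fast-rate} by adding Assumption~\ref{assumption:SGDA-bound}; your proof would need to import that assumption explicitly rather than leave it as an alternative.
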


\begin{theorem}
\label{theorem:expectation-agda-fast-rate}
Suppose Assumption \ref{assumption:NC-SC}, \ref{assumption:minimax-bernstein-condition} and \ref{assumption:SGDA-bound} hold. Assume that the population risk $F(\rvx,\rvy)$ satisfies Assumption \ref{assumption:PL-condition-x} with parameter $\mu_\rvx$. Let $\{\rvx_t, \rvy_t\}_t$ be the sequence produced by \text{Algorithm} \ref{algo:agda} with the step sizes chosen as $\eta_{\rvx_t} \asymp \frac{1}{\mu_\rvx t}$ and $\eta_{\rvy_t} \asymp \frac{1}{\mu_\rvx \mu_\rvy^2 t} $. When $T \asymp n$ and $ n \geq \frac{c\beta^2(\mu_\rvy+\beta)^4(d+ \log{\frac{16\log_2{\sqrt{2}R_1 n + 1}}{\delta}})}{\mu_\rvy^4\mu_\rvx^2}$, where $c$ is an absolute constant, then the excess risk for primal functions of \text{Algorithm} \ref{algo:agda} can be bounded by
\begin{align*}
    \E [ \Phi(\rvx_T) - \Phi(\rvx^*) ] 
        = O \Bigg( \frac{\E [ \|\nabla_\rvx f(\rvx^*, \rvy^*; \rvz)\|^2 ] }{n}
        + \frac{\E [ \|\nabla_\rvy f(\rvx^*, \rvy^*; \rvz)\|^2 ] }{n} + \frac{1}{n^2} \Bigg).
\end{align*}
Furthermore, Let $T \asymp n^2$. Assume the function $f(\rvx,\rvy;\rvz)$ is non-negative and $ \Phi(\rvx^*) = O\left(\frac{1}{n}\right)$, we have  
\begin{align*}
    \E [ \Phi(\rvx_T) - \Phi(\rvx^*) ] = O \left(
    \frac{1}{n^2}
    \right).
\end{align*}
\end{theorem}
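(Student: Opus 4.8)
The plan is to reduce the excess primal risk to two pieces already available in the excerpt: the expectation-form generalization bound of Theorem \ref{theorem:expectation-nablaf-without-d} and the AGDA empirical optimization guarantee of Lemma \ref{lemma:expectation-AGDA-NC-SC-optimization-error-bound}, following the same template used for Theorem \ref{theorem:expectation-gda-fast-rate} and Theorem \ref{theorem:expectation-sgda-fast-rate}. First I would invoke Theorem \ref{theorem:expectation-nablaf-without-d} at the data-dependent iterate $\rvx = \rvx_T$. This is legitimate because the underlying uniform localized convergence (Theorem \ref{theorem:nablaf-without-d}) holds simultaneously for all $\rvx \in \gX$, hence also at the random point $\rvx_T$. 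This step leaves the empirical optimization error $\E[\|\nabla\Phi_S(\rvx_T)\|^2]$ as the only remaining unknown:
\begin{align*}
\E[\Phi(\rvx_T) - \Phi(\rvx^*)] \le {} & \frac{8\,\E[\|\nabla\Phi_S(\rvx_T)\|^2]}{\mu_\rvx} \\
& + O\!\left(\frac{\E[\|\nabla_\rvx f(\rvx^*,\rvy^*;\rvz)\|^2]}{\mu_\rvx n} + \frac{\beta^2\E[\|\nabla_\rvy f(\rvx^*,\rvy^*;\rvz)\|^2]}{\mu_\rvx\mu_\rvy^2 n} + \frac{1}{n^2}\right).
\end{align*}

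Next I would substitute the AGDA rate $\E[\|\nabla\Phi_S(\rvx_T)\|^2] = O(1/(\mu_\rvx^2\mu_\rvy^4 T))$ from Lemma \ref{lemma:expectation-AGDA-NC-SC-optimization-error-bound}. Taking $T \asymp n$ makes this term $O(1/n)$, which is absorbed into the leading $O(1/n)$ generalization terms, so that after collecting constants the first claimed bound $O(\E[\|\nabla_\rvx f(\rvx^*,\rvy^*;\rvz)\|^2]/n + \E[\|\nabla_\rvy f(\rvx^*,\rvy^*;\rvz)\|^2]/n + 1/n^2)$ follows.

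For the fast rate I would add the non-negativity hypothesis and apply the self-bounding property (Lemma \ref{lemma:smooth-self-bounding-property}) at $(\rvx^*,\rvy^*)$, giving $\E[\|\nabla_\rvx f(\rvx^*,\rvy^*;\rvz)\|^2] \le 4\beta\,\E[f(\rvx^*,\rvy^*;\rvz)]$ and the analogous inequality for the $\rvy$-gradient; the key identity is $\E[f(\rvx^*,\rvy^*;\rvz)] = F(\rvx^*,\rvy^*) = \Phi(\rvx^*)$, which turns both gradient-norm terms into $O(\Phi(\rvx^*)/n)$. Since after self-bounding the bare optimization term $O(1/T)$ is no longer dominated by a constant-order factor, I would now sharpen the step count to $T \asymp n^2$ to push that term down to $O(1/n^2)$, obtaining $\E[\Phi(\rvx_T) - \Phi(\rvx^*)] = O(\Phi(\rvx^*)/n + 1/n^2)$; imposing $\Phi(\rvx^*) = O(1/n)$ then collapses the bound to $O(1/n^2)$.

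The algebraic combination and the choices of $T$ are routine. The one point genuinely requiring care — and the main obstacle — is the application of an expectation-form generalization bound at the random, sample-dependent iterate $\rvx_T$: I would justify it by noting that Theorem \ref{theorem:nablaf-without-d} is uniform over $\gX$ (obtained via generic chaining), so integrating its high-probability tail and evaluating at $\rvx_T$ remains valid, and that Lemma \ref{lemma:expectation-AGDA-NC-SC-optimization-error-bound} already delivers $\E[\|\nabla\Phi_S(\rvx_T)\|^2]$ as an expectation over the same randomness, so the two bounds compose cleanly.
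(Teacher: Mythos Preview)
Your proposal is essentially the paper's own proof: apply Theorem \ref{theorem:expectation-nablaf-without-d} at $\rvx_T$, insert the AGDA optimization rate from Lemma \ref{lemma:expectation-AGDA-NC-SC-optimization-error-bound}, choose $T\asymp n$ for the first claim, and then invoke the self-bounding Lemma \ref{lemma:smooth-self-bounding-property} together with $T\asymp n^2$ and $\Phi(\rvx^*)=O(1/n)$ for the fast rate. Your additional remark justifying evaluation at the random iterate via the uniformity of Theorem \ref{theorem:nablaf-without-d} is a welcome clarification that the paper itself leaves implicit.
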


\begin{proof}[Proof of Theorem \ref{theorem:expectation-agda-fast-rate}]
    Since the proofs of Theorem \ref{theorem:expectation-gda-fast-rate}, \ref{theorem:expectation-sgda-fast-rate} and \ref{theorem:expectation-agda-fast-rate} are similar, we only prove Theorem \ref{theorem:expectation-agda-fast-rate} as an example.

    From Theorem \ref{theorem:expectation-nablaf-without-d}, under Assumption \ref{assumption:minimax-bernstein-condition} and \ref{assumption:NC-SC}, when we plug $\rvx_T$ into Theorem \ref{theorem:expectation-nablaf-without-d}, when 
    $ n \geq \frac{c\beta^2(\mu_\rvy+\beta)^4(d+ \log{\frac{16\log_2{\sqrt{2}R_1 n + 1}}{\delta}})}{\mu_\rvy^4\mu_\rvx^2}$, we have
    \begin{equation}
        \begin{aligned}
    \label{eq:theorem:expectation-agda-fast-rate-1-3}
         \E \| \Phi(\rvx) - \Phi(\rvx^*) \| 
    \leq  \frac{8 \E [ \| \nabla \Phi_S(\rvx) \|^2 ]}{\mu_\rvx} + O \Bigg( 
    \frac{\E \left[ \|\nabla_\rvx f(\rvx^*, \rvy^*; \rvz)\|^2 \right]}{\mu_\rvx n} 
    + \frac{\beta^2 \E \left[ \|\nabla_\rvy f(\rvx^*, \rvy^*; \rvz)\|^2 \right]}{\mu_\rvx \mu_\rvy^2 n}
    + \frac{1}{n^2}
    \Bigg).
    \end{aligned}
    \end{equation}

    According to Lemma \ref{lemma:expectation-AGDA-NC-SC-optimization-error-bound}, we have
    \begin{align}
    \label{eq:theorem:expectation-agda-fast-rate-1-4}
        \E [ \| \nabla \Phi_S(\rvx_T) \|^2 ]  
        = O\left( \frac{1}{\mu_\rvx^2 \mu_\rvy^4 T} \right).
    \end{align}

    Plugging (\ref{eq:theorem:expectation-agda-fast-rate-1-4}) into (\ref{eq:theorem:expectation-agda-fast-rate-1-3}) and choose $ T \asymp O\left( n \right) $, with probability at least $1-\delta$
    \begin{align*}
       \E [ \Phi(\rvx_T) - \Phi(\rvx^*) ]
        = O \Bigg( \frac{\E [ \|\nabla_\rvx f(\rvx^*, \rvy^*; \rvz)\|^2 ] }{\mu_\rvx n} 
        + \frac{\beta^2 \E [ \|\nabla_\rvy f(\rvx^*, \rvy^*; \rvz)\|^2 ] }{\mu_\rvx \mu_\rvy^2 n} + \frac{1}{n^2} \Bigg).
    \end{align*}

Next, if we further assume $\Phi(\rvx^*) = O\left( \frac{1}{n} \right)$, According to Lemma \ref{lemma:smooth-self-bounding-property}, we have $\E \|\nabla_\rvx f(\rvx^*, \rvy^*; \rvz)\|^2 \leq 4\beta \E [f(\rvx^*, \rvy^*; \rvz)] $ and $ \E \|\nabla_\rvy f(\rvx^*, \rvy^*; \rvz)\|^2 \leq 4\beta \E [f(\rvx^*, \rvy^*; \rvz)]$, then substituting (\ref{eq:theorem:expectation-agda-fast-rate-1-4}) into (\ref{eq:theorem:expectation-agda-fast-rate-1-3}) and choosing $T \asymp n^2$, we have 
\begin{align*}
        \E [ \Phi(\rvx_T) - \Phi(\rvx^*) ]
        = O \Bigg( \frac{ \beta (\beta^2 + \mu_\rvy^2) \Phi(\rvx^*) }{\mu_\rvx \mu_\rvy^2 n} 
         + \frac{1}{n^2} \Bigg).
\end{align*}
The proof is complete.
    
\end{proof}

\end{document}